\documentclass[reqno]{amsart}

\usepackage[utf8]{inputenc} 
\usepackage[T1]{fontenc}    
\usepackage{url}            
\usepackage{booktabs}       
\usepackage{caption}
\usepackage{amsfonts}       
\usepackage{nicefrac}       
\usepackage{microtype}      
\usepackage[]{xcolor}         

\usepackage{amsmath,amssymb,amsfonts,mathrsfs}
\usepackage{nicefrac}
\usepackage{algorithmic}
\usepackage{algorithm}
\usepackage{graphicx}

\usepackage{multirow}

\usepackage{wrapfig}

\usepackage{booktabs}

\usepackage{tikz}
\usetikzlibrary{positioning, arrows.meta, calc, matrix}

\usepackage{xcolor}

\definecolor{strblue}{HTML}{0F1ED2}
\definecolor{strred}{HTML}{E61E8C}

\usepackage{natbib}

\usepackage{comment}

\usepackage{aliascnt}
\usepackage{hyperref}
\usepackage{cleveref}

\theoremstyle{definition}

\newtheorem{theorem}{Theorem}[section]
\crefname{theorem}{Theorem}{Theorems}
\Crefname{theorem}{Theorem}{Theorems}

\newaliascnt{definition}{theorem}
\newtheorem{definition}[definition]{Definition}
\aliascntresetthe{definition}
\crefname{definition}{Definition}{Definitions}
\Crefname{definition}{Definition}{Definitions}

\newaliascnt{proposition}{theorem}
\newtheorem{proposition}[proposition]{Proposition}
\aliascntresetthe{proposition}
\crefname{proposition}{Proposition}{Propositions}
\Crefname{proposition}{Proposition}{Propositions}

\newaliascnt{lemma}{theorem}
\newtheorem{lemma}[lemma]{Lemma}
\aliascntresetthe{lemma}
\crefname{lemma}{Lemma}{Lemmas}
\Crefname{lemma}{Lemma}{Lemmas}

\newaliascnt{corollary}{theorem}
\newtheorem{corollary}[corollary]{Corollary}
\aliascntresetthe{corollary}
\crefname{corollary}{Corollary}{Corollaries}
\Crefname{corollary}{Corollary}{Corollaries}

\newaliascnt{remark}{theorem}
\newtheorem{remark}[remark]{Remark}
\aliascntresetthe{remark}
\crefname{remark}{Remark}{Remarks}
\Crefname{remark}{Remark}{Remarks}

\newaliascnt{example}{theorem}

\aliascntresetthe{example}
\crefname{example}{Example}{Examples}
\Crefname{example}{Example}{Examples}

\newaliascnt{assumption}{theorem}
\newtheorem{assumption}[assumption]{Assumption}
\aliascntresetthe{assumption}
\crefname{assumption}{Assumption}{Assumptions}
\Crefname{assumption}{Assumption}{Assumptions}

\numberwithin{equation}{section}

\crefname{equation}{Equation}{Equations}
\crefname{figure}{Figure}{Figures}
\crefname{table}{Table}{Tables}
\crefname{algorithm}{Algorithm}{Algorithms}

\crefname{section}{}{}
\creflabelformat{section}{\S #2#1#3}
\crefname{subsection}{}{}
\creflabelformat{subsection}{\S #2#1#3}
\crefname{appendix}{Appendix}{Appendixes}

\DeclareMathOperator*{\argmin}{arg\,min} 
\DeclareMathOperator*{\argmax}{arg\,max} 
\DeclareMathOperator{\Conv}{Conv}

\DeclareMathOperator{\Id}{Id}

\DeclareMathOperator{\diam}{diam}

\DeclareMathOperator{\tr}{tr}
\DeclareMathOperator{\aff}{aff}
\DeclareMathOperator{\spn}{span}
\DeclareMathOperator{\dist}{dist}
\DeclareMathOperator{\cone}{cone}
\DeclareMathOperator{\rec}{rec}

\usepackage{pifont}

\title[Online Inverse Integer Linear Optimization via SGS]{Online Inverse Integer Linear Optimization via Small-Gradient Skipping: Constant Regret and Finite Mistakes}

\author{%
  Akira Kitaoka
}
\address{NEC Corporation, 1753 Shimonumabe, Nakahara-ku, Kawasaki, Kanagawa, Japan }
\email{akira-kitaoka@nec.com}

\keywords{
  online inverse linear optimization, suboptimality loss, uniform margin, finite number of mistakes, constant regret, online Newton step, MetaGrad, integer linear programming, Graver basis, discrete convex analysis%
}

\begin{document}

\maketitle

\begin{abstract}%
In online inverse linear optimization, the learner predicts a weight at each round, observes the optimal action of the agent, and updates its prediction.
In the general setting, the gap of $\log T$ between the regret upper bound $O(d \log T)$ and the lower bound $\Omega(d)$ is unresolved (here $T$ is the total number of rounds and $d$ is the dimension).
When the action set is M-convex, the regret is known to be bounded by $O(d \log d)$, but the method attaining it computes a center of gravity at every round.
This paper therefore proposes \textbf{Small-Gradient Skipping} (SGS), a mechanism that skips the update at rounds without a mistake in the case where the correct action is uniformly separated from the other candidates, and applies it to online gradient descent, the online Newton step, and MetaGrad.
The number of mistakes is then bounded, for all three, by a quantity independent of $T$; and for the online Newton step and for MetaGrad with SGS, the dimension dependence of the regret becomes $O(d^2)$ when the forward problem is an integer linear program, that is, the factor $\log T$ is removed.
Moreover, when the action set is M-convex, the regret is bounded efficiently without computing a center of gravity.%
\end{abstract}

\section{Introduction}
\label{sec:intro}

The problem of estimating, from observed actions, the criterion by which a decision maker chooses its actions has been studied as imitation learning and inverse reinforcement learning \citep{ng2000algorithms} and as inverse optimization \citep{ahuja2001inverse,heuberger2004inverse,chan2023inverse}.
The estimated weight can be interpreted as an objective function expressing the reason for the decision, and it has applications to the estimation of undisclosed objective functions in electricity markets \citep{birge2017inverse,liang2023data} and in healthcare \citep{chan2022inverse}.
This paper treats the case where the objective function of the forward problem (the optimization problem solved by the agent) is linear, in the online setting where states arrive sequentially: at each round the learner predicts a weight, observes the optimal action of the agent, and updates its prediction.
Methods from online learning are effective in this setting, and it is standard to measure the performance by the cumulative gap, measured by the true weight, between the action induced by the learner's prediction and the correct action (the cumulative decision regret $R^{\mathrm{est}}_T$) \citep{Barmann-2018-online,besbes2021online,gollapudi2021contextual,sakaue2025online,oki2026finite}.

However, most of the known upper bounds grow with the total number of rounds $T$: $O(\sqrt{T})$ for online gradient descent \citep{Barmann-2018-online}, and even for methods attaining logarithmic regret the bound is $O(d \log T)$ (where $d$ is the dimension of the weight) \citep{gollapudi2021contextual,sakaue2025online}.
Upper bounds independent of $T$ do exist, but each has a limitation: that of \citet[Theorem 4.2]{gollapudi2021contextual}, $\exp(O(d \log d))$, assumes neither a margin nor a gap but is exponential in the dimension; the bound under a gap condition \citep{sakaue2025aistats} is proportional to the inverse square of the gap; and the bound $O(d \log d)$ under the M-convexity of the action set \citep{oki2026finite} requires computing a center of gravity at every round.
Whether a guarantee independent of $T$ and polynomial in the dimension can be obtained with light computation for general forward problems, including integer programs, was an open question.

Our contributions are the following.
\begin{itemize}
    \item \textbf{Proposal of small-gradient skipping under a uniform margin}: we assume a \textbf{uniform margin}, namely that under the true weight the difference in objective value between the correct action and any other candidate action is at least some $\gamma > 0$, uniformly over all states (\Cref{assu:margin}). If the forward problem is an integer linear program, this margin is automatically positive, and its value can be bounded from below in terms of the combinatorial structure of the feasible set. Under this assumption we propose to incorporate into an online algorithm the mechanism---\textbf{small-gradient skipping} (SGS)---that updates neither the iterate nor the internal state at rounds where the proposal is correct, that is, at rounds where $0$ can be chosen as a subgradient. The guarantees are stated in terms of the number of mistakes rather than the total number of rounds, and the margin bounds that number of mistakes finitely.
    \item \textbf{Guarantees independent of $T$ and explicit upper bounds by problem class}: we apply SGS to online gradient descent (OGD), the online Newton step (ONS), and MetaGrad \citep{van2016metagrad,van2021metagrad}, and show that the number of mistakes and the regret are both bounded by quantities independent of the total number of rounds $T$ (\Cref{tab:main_results}). Furthermore, we bound the margin from below in terms of the combinatorial structure of the feasible set (\Cref{tab:gamma_lower}) and substitute it into the upper bounds to obtain explicit upper bounds by problem class (\Cref{tab:explicit_upper}). In particular, ONS and growing-grid SGS-MetaGrad, whose bounds depend on the margin only logarithmically, attain, when the forward problem is an ILP and the weight space is the probability simplex, $R^{\mathrm{est}}_T = O(d^2 \| M \|_2 \log (2\| M \|_2))$ (where $\| M \|_2$ is the norm of the vector $M$ in \cref{eq:def_M}), a bound polynomial in the dimension, with the light computation of $O(d^2)$ plus one generalized projection per mistake round. This means that, under a uniform margin, the gap of $\log T$ between the upper bound $O(d \log T)$ and the lower bound $\Omega(d)$, raised as an open problem by \citet{sakaue2025online}, disappears from the upper bound, so that the gap no longer depends on $T$. A comparison with existing methods is summarized in \Cref{tab:comparison}.
\end{itemize}
All detailed proofs are deferred to the appendices.

\begin{table}[ht]
\centering
\caption{Comparison of the decision regret \cref{eq:decision_regret} and the computational cost in the case where the forward problem is an integer linear program (ILP) and the weight space is the probability simplex $\Theta = \Delta^{d-1}$. Here $\| M \|_2$ is the norm of the vector $M$ of coordinatewise ranges \cref{eq:def_M}, $K$ $(\leq T)$ is the number of mistakes of each method, $\tau_{\mathrm{solve}}$ is the time for one linear optimization that computes the proposal $\hat{x}^t$, and $\tau_{\mathrm{E\text{-}proj}}$ / $\tau_{\mathrm{G\text{-}proj}}$ is the time for one Euclidean / generalized projection onto $\Theta$.
The lower part of the table lists the guarantees obtained in this paper, with the explicit lower bounds on the uniform margin (\Cref{sec:lower_bounds}) substituted in; these values do not depend on the total number of rounds $T$. Of the algorithms in that part, SGS-OGD (\Cref{alg:ogd}) and growing-grid SGS-MetaGrad (\Cref{alg:metagrad_anytime}) are proposed in this paper, whereas ONS (\Cref{alg:ons}) is the existing method of \citet{hazan2007logarithmic} (applied to online inverse linear optimization by \citealp{sakaue2025online}). They are the ILP / probability simplex entries of \Cref{tab:explicit_upper}, and the details of the substitution are in Appendix~\ref{app:proof_corollaries}.
${}^{\S}$: a result under the gap condition $\Delta > 0$ rather than a uniform margin. The value rewrites \citet[Theorem 5.2]{sakaue2025aistats} in the notation of this paper; the derivation is in Appendix~\ref{app:gap_entry}. The dash ``---'' in the total computational cost indicates that it is not compared in this table.}
\label{tab:comparison}
{\footnotesize
\setlength{\tabcolsep}{3pt}
\begin{tabular}{lll}
\toprule
Method & $R^{\mathrm{est}}_T$ & Total computational cost \\
\midrule
\begin{tabular}[c]{@{}l@{}} OGD \\ \citep{Barmann-2018-online} \end{tabular} & $O ( \| M \|_2 \sqrt{T} )$ & $O ( T ( \tau_{\mathrm{solve}} + \tau_{\mathrm{E\text{-}proj}} + d ) )$ \\
\citet{sakaue2025aistats}${}^{\S}$ & $O ( \| M \|_\infty (\log d)^{3/2} / \Delta^2 )$ & --- \\
\begin{tabular}[c]{@{}l@{}} ONS, MetaGrad \\ \citep{sakaue2025online} \end{tabular} & $O ( \| M \|_2 d \log \frac{T}{d} )$ & $O ( T ( \tau_{\mathrm{solve}} + d^2 + \tau_{\mathrm{G\text{-}proj}} ) )$ \\
\midrule
SGS-OGD & $O ( d\, 2^{d} \| M \|_2^{d+1} )$ & $O ( T \tau_{\mathrm{solve}} + K ( d + \tau_{\mathrm{E\text{-}proj}} ) )$ \\
ONS & $O ( d^2 \| M \|_2 \log (2\| M \|_2) )$ & $O ( T \tau_{\mathrm{solve}} + K ( d^2 + \tau_{\mathrm{G\text{-}proj}} ) )$ \\
Growing-grid SGS-MetaGrad & $O ( d^2 \| M \|_2 \log (2\| M \|_2) )$ & $O ( T \tau_{\mathrm{solve}} + K ( d^2 + \tau_{\mathrm{G\text{-}proj}} ) \log K )$ \\
\bottomrule
\end{tabular}
}
\end{table}

\paragraph{Organization}
\Cref{sec:related} describes related work. \Cref{sec:setup} gives the problem setting and \Cref{sec:sgs} defines SGS. \Cref{sec:main_results} gives the guarantees for SGS-OGD, ONS, and SGS-MetaGrad. \Cref{sec:lower_bounds} gives lower bounds on the margin by structure, and \Cref{sec:explicit_upper} gives explicit upper bounds by problem class.

\section{Related work}
\label{sec:related}

\paragraph{Finitely many updates under a margin condition: the classical line}
That a margin keeps the number of updates finite is a classical theme. It begins with the Perceptron convergence theorem for linearly separable data, continues with ALMA~\citep{gentile2001new}, which approximates the maximum-margin classifier without being given the value of the margin explicitly, and reaches inverse optimization with \citet{sun2023maximum}, who gives, by a Perceptron-type method, the skeleton that leads from separability through finitely many mistakes to exact recovery. These are, however, results for binary classification, and they do not apply directly to the suboptimality loss treated in this paper.
The mechanism of not advancing the internal state in rounds without a mistake also has precedents: \citet{gollapudi2021contextual} skip the update at correct rounds in their reduction to a cutting-plane algorithm, and \citet{besbes2021online,besbes2025contextual} use a threshold-type skip that leaves the ellipsoidal cone unchanged in periods where the decision is nearly optimal. What this paper does anew is to formulate this mechanism for first- and second-order online convex optimization methods, whose internal state would otherwise advance even in rounds without a mistake, and to derive from it, under a uniform margin, guarantees independent of the total number of rounds together with explicit constants by problem class.

\paragraph{Finite regret in online inverse optimization}
This paper is not the first to bound the regret in online inverse linear optimization by a constant independent of the total number of rounds $T$: there are the bound under a gap condition \citep{sakaue2025aistats}, the bound under the M-convexity of the action set \citep{oki2026finite}, and the bound of \citet[Theorem 4.2]{gollapudi2021contextual}, which assumes neither a margin nor a gap.
The difference from the existing work is twofold. First, neither our algorithms nor their $T$-independent guarantees (\Cref{tab:main_results}) require any structure beyond the uniform margin, and each mistake round costs only $O(d^2)$ plus one generalized projection onto $\Theta$. Second, we bound the uniform margin explicitly from below in terms of the combinatorial structure of the feasible set (\Cref{sec:lower_bounds}) and reduce it to explicit upper bounds by problem class (\Cref{sec:explicit_upper}).

For the classical line, for a detailed comparison with each of these results, for a precedent of the uniform margin in offline inverse optimization, and for how the generality of the weight space $\Theta$ differs from that in the existing work, see Appendix~\ref{app:related}.

\section{Problem setting}
\label{sec:setup}

We consider an online learning setting with two players, the \emph{learner} and the \emph{agent}.\footnote{\label{footnote:linear-model} The ``agent'' is sometimes called an ``expert'', but we do not use that name in order to avoid confusion with the experts of online learning (see \Cref{sec:main_results}).}
Let $d$ be a positive integer and let $\mathbb{R}^d$ be the space on which the forward optimization is defined.
We call a nonempty set $\mathcal{S}$ the set of states, and for each state $s \in \mathcal{S}$ we write $X(s) \subseteq \mathbb{R}^d$ for the set of feasible actions.
For a weight $\theta \in \mathbb{R}^d$ and a state $s \in \mathcal{S}$, we write the forward problem (a linear optimization) and its optimal solution as
\begin{equation}
    \label{eq:fop}
    x^* (\theta , s) :\in \argmax_{x \in X(s)} \langle \theta , x \rangle
\end{equation}
(the attainment of the maximum follows from the compactness of $X(s)$ assumed in \Cref{assu:margin}(2)).
The agent has an unknown objective vector $\theta^* \in \mathbb{R}^d$, and for $t = 1,\dots,T$, when a state $s^t \in \mathcal{S}$ is given, it chooses $x^t = x^* (\theta^* , s^t) \in X(s^t)$ as its action.
From the observations $\{ (s^t, x^t) \}_{t=1}^T$ we want to find a weight $\theta$ satisfying $x^*(\theta^*, s) \in \argmax_{x \in X(s)} \langle \theta, x \rangle$ at each state $s^t$ (the inverse linear optimization problem).

Note that the set $X(s)$ is not necessarily convex.
If $X (s)$ is a polyhedron, then the solution returned by any solver for linear programming (LP) can serve as an oracle for $x^* (\theta ,s )$.
Also when $X(s)$ is defined by integer linear constraints, an optimal solution can be obtained with an empirically efficient solver such as Gurobi.

The learner predicts $\theta^*$ sequentially for $t=1,\dots,T$.
Let $\Theta \subseteq \mathbb{R}^d$ be the set of linear objective vectors from which the learner chooses its predictions (the conditions imposed on $\Theta$ and $\theta^*$ are collected in \Cref{assu:margin}).
Below, $\|\cdot\|$ denotes the $\ell_2$ norm and $\log$ the natural logarithm (we write, for instance, $\log_2$ only when the base is made explicit). We set the diameter of the weight space and the constant expressing the spread of the actions to be
\begin{equation}
    D := \diam(\Theta) ,
    \qquad
    L := \sup_{s \in \mathcal{S}} \sup_{x \in X(s)} \| x - x^*(\theta^*, s) \|
    \label{eq:def_DL}
\end{equation}
respectively.
For $t=1,\dots,T$, the learner outputs a prediction $\hat{\theta}^t \in \Theta$ of $\theta^*$ based on the past observations $\{ (s^{t^\prime}, x^{t^\prime}) \}_{t^\prime =1}^{t-1}$, and receives $(s^t, x^t)$ as feedback from the agent.
Let $Y(s)$ be the set of extreme points of the convex set $\Conv X(s)$ (since, for a compact set $X(s)$, the extreme points of $\Conv X(s)$ belong to $X(s)$, we have $Y(s) \subseteq X(s)$).
The proposal $\hat{x}^t$ induced by the learner's $t$-th prediction $\hat{\theta}^t$ is defined as an extreme optimal solution
\begin{equation}
    \hat{x}^t \in \argmax_{x \in Y(s^t)} \langle \hat{\theta}^t, x \rangle .
    \label{eq:proposal}
\end{equation}
Since the maximum of a linear function on $\Conv X(s^t)$ is attained at an extreme point, the optimal value of \cref{eq:proposal} coincides with the maximum on $X(s^t)$ (the value of \cref{eq:fop}). Moreover, the simplex method returns an extreme point of the feasible polyhedron; for an ILP, returning an extreme optimal solution when several optimal solutions exist is imposed as a requirement on the oracle \cref{eq:proposal}.\footnote{The simplex method returns an extreme point of the feasible polyhedron. For an ILP, if the optimal solution is unique then any optimal integer solution is automatically an extreme point of $\Conv X(s^t)$ (since the maximum of a linear function is also attained at an extreme point, uniqueness makes the two coincide). Returning an extreme optimal solution when there are several optimal solutions is imposed as a requirement on the oracle \cref{eq:proposal}, and the implementation details of the forward-problem solver are outside the scope of this paper.}

In inverse linear optimization, the suboptimality loss \citep{Mohajerin-2018-Data} is a useful criterion.
For a weight $\theta$ and a state $s$, the suboptimality loss is defined, using \cref{eq:fop}, by
\begin{equation}
    \ell_{\mathrm{sub}} (\theta , s) := \langle \theta, x^*(\theta, s) - x^*(\theta^*, s) \rangle
    = \max_{x \in X(s)} \langle \theta, x - x^*(\theta^*, s) \rangle
    \geq 0 .
    \label{eq:suboptimality_loss}
\end{equation}
If the suboptimality loss is $0$, then $x^*(\theta^*, s) \in \argmax_{x \in X(s)} \langle \theta, x \rangle$ holds, which means that the inverse optimization problem is solved at that state.

\paragraph{Performance criteria}
This paper measures the quality of the learner's sequence of predictions $\hat{\theta}^1, \ldots, \hat{\theta}^T$ by the following three quantities.
The first is the \textbf{cumulative suboptimality regret}
\begin{equation}
    R^{\mathrm{sub}}_T := \sum_{t=1}^T \ell_{\mathrm{sub}} (\hat{\theta}^t, s^t)
    = \sum_{t=1}^T \langle \hat{\theta}^t,\, \hat{x}^t - x^t \rangle ,
    \label{eq:cum_loss}
\end{equation}
the cumulative suboptimality loss \cref{eq:suboptimality_loss}, which expresses how poorly the agent's action $x^t$ is explained from the viewpoint of the learner's weight $\hat{\theta}^t$.
The second is the \textbf{cumulative decision regret}
\begin{equation}
    R^{\mathrm{est}}_T := \sum_{t=1}^T \langle \theta^*,\, x^t - \hat{x}^t \rangle ,
    \label{eq:decision_regret}
\end{equation}
which expresses how suboptimal the learner's proposal $\hat{x}^t$ is from the viewpoint of the true weight $\theta^*$.
It is this $R^{\mathrm{est}}_T$ that \citet{besbes2021online,gollapudi2021contextual,besbes2025contextual,sakaue2025online,oki2026finite} simply call the regret.
Which criterion is bounded in which reference is summarized in \Cref{tab:metric_comparison}.
The third is the \textbf{sum} of the two,
\begin{equation}
    \widetilde{R}_T := R^{\mathrm{sub}}_T + R^{\mathrm{est}}_T
    = \sum_{t=1}^T \langle \hat{\theta}^t - \theta^*,\, \hat{x}^t - x^t \rangle ,
    \label{eq:regret_decomp}
\end{equation}
which amounts to the quantity $\widetilde{R}^{c^*}_T$ introduced by \citet{sakaue2025online}.
Since $x^t$ and $\hat{x}^t$ are optimal for $\theta^*$ and $\hat{\theta}^t$ respectively, we have $R^{\mathrm{sub}}_T, R^{\mathrm{est}}_T \geq 0$, and hence
\begin{equation}
    \max ( R^{\mathrm{sub}}_T,\; R^{\mathrm{est}}_T ) \leq \widetilde{R}_T
    \label{eq:max_leq_tilde}
\end{equation}
holds.
An upper bound on one of the components does not give an upper bound on the other, but each theorem of this paper bounds the sum $\widetilde{R}_T$ itself independently of $T$, so that $R^{\mathrm{sub}}_T$ and $R^{\mathrm{est}}_T$ are bounded simultaneously.

To solve online inverse linear optimization, we introduce the following assumption.
\begin{assumption}
    \label{assu:margin}
    \begin{description}
        \item[(1)] $\Theta \subset \mathbb{R}^d$ is a nonempty bounded closed convex set with $D > 0$ (boundedness gives $D < \infty$).
        \item[(2)] For each state $s \in \mathcal{S}$, the set $X(s) \subset \mathbb{R}^d$ is nonempty and compact, and the set of extreme points $Y(s)$ (defined just before \cref{eq:proposal}) is a finite set.
        \item[(3)] $\theta^* \in \Theta$, and for each state $s \in \mathcal{S}$ the set $\argmax_{x \in X(s)} \langle \theta^* , x \rangle$ is a singleton. We write its unique element as $x^* (\theta^* ,s)$ (since the unique maximizer of a linear function is an extreme point of $\Conv X(s)$, we have $x^* (\theta^* ,s) \in Y(s)$).
        \item[(4)] (Uniform margin) There exist $\bar{\theta} \in \Theta$ and $\gamma > 0$ such that, for every $s \in \mathcal{S}$ and every $x \in Y(s) \setminus \{ x^*(\theta^*, s) \}$,
        \begin{equation}
            \langle \bar{\theta}, x^*(\theta^*, s) - x \rangle \geq \gamma .
            \label{eq:margin}
        \end{equation}
        \item[(5)] The constant $L$ in \cref{eq:def_DL} satisfies $0 < L < \infty$.
    \end{description}
\end{assumption}

\begin{remark}
    For the constant $\gamma$ of the uniform margin in \Cref{assu:margin}(4), a concrete lower bound can be obtained, for instance, when the constraint set $X(s)$ is given by integer linear constraints.
    Concrete examples for each structure of the feasible set are given in \Cref{sec:lower_bounds}. The lower bounds by structure are summarized in \Cref{tab:gamma_lower}.
\end{remark}

\section{Proposed method: small-gradient skipping (SGS)}
\label{sec:sgs}

\begin{definition}[Small-gradient skipping]
    \label{defi:sgs}
    For an online learning method with an iterate $\hat{\theta}^t$ and an internal state (a learning-rate index, an information matrix, the weights of experts, and so on), \textbf{small-gradient skipping} (Small-Gradient Skipping; SGS) refers to the following mechanism: at a round where no mistake occurred ($\hat{x}^t = x^t$), neither the iterate nor the internal state is updated at all ($\hat{\theta}^{t+1} = \hat{\theta}^t$), and the index $k$ of the internal state is not advanced either. Only at mistake rounds is the update performed with the subgradient $g = \hat{x}^t - x^t$, and $k$ advanced by one.
\end{definition}

The name comes from the following observation: at a round where no mistake occurs, the value of the loss $\ell_{\mathrm{sub}}(\hat{\theta}^t, s^t) = \langle \hat{\theta}^t, \hat{x}^t - x^t \rangle = 0$ is the minimum value of $\ell_{\mathrm{sub}}(\cdot, s^t) \geq 0$, and then $0 \in \partial_\theta \ell_{\mathrm{sub}}(\hat{\theta}^t, s^t)$, that is, the learner can choose $0$ (a sufficiently small gradient) as a subgradient. SGS skips exactly these rounds.
The oracle-based subgradient $g^t = \hat{x}^t - x^t$ used in this paper is, as a vector, $g^t = 0$ at rounds where no mistake occurs.

\begin{remark}[When SGS changes the algorithm and when it changes only the analysis]
    \label{rem:sgs_algorithmic}
    For an online learning method whose update rule depends only on $g^t$, it follows that, even without incorporating SGS, the weight of the objective function does not move at rounds with $g^t = 0$. That is, incorporating SGS does not change the algorithm. Examples are ONS and fixed-grid MetaGrad (\Cref{alg:metagrad}).
    On the other hand,
    for an online learning method whose update rule depends, in addition to $g^t$, on the round index, incorporating SGS does change the algorithm. Examples are
    first-order methods whose learning rate decays with the round number (the $\alpha k^{-1/2}$ of SGS-OGD differs from $\alpha t^{-1/2}$) and
    growing-grid SGS-MetaGrad (\Cref{alg:metagrad_anytime}), whose learning-rate grid is refined according to the number of mistakes instead of the round index (\Cref{sec:main_results}).
\end{remark}

We call a round $t$ with $\hat{x}^t \neq x^t$ a \textbf{mistake}, and write $K$ for the total number of such rounds.
By \Cref{defi:sgs}, the iterate is updated only at mistake rounds.

In this paper we treat the following three instances of SGS: \textbf{SGS-OGD} (\Cref{alg:ogd}), which incorporates SGS into projected online gradient descent; \textbf{ONS} (\Cref{alg:ons}), a second-order method; and \textbf{SGS-MetaGrad} (\Cref{alg:metagrad}; including \Cref{alg:metagrad_anytime}, which grows the learning-rate grid with the number of mistakes), a universal method.
The pseudocode and all the accompanying remarks are collected in Appendix~\ref{app:algorithms}, and the guarantees are given in \Cref{sec:main_results}.

\section{Main results: finitely many mistakes and regret independent of \texorpdfstring{$T$}{T}}
\label{sec:main_results}

\paragraph{SGS-OGD (a first-order method)}
As the basic form of a first-order method, we consider SGS-OGD (\Cref{alg:ogd}), which incorporates SGS into projected online gradient descent: only at mistake rounds does it perform a subgradient step with step size $\alpha k^{-1/2}$ (where $k$ is the index counting the mistakes) followed by a Euclidean projection.
Its only parameters are $L$ and $D$.

\paragraph{ONS (a second-order method)}
To improve the dependence of SGS-OGD on $\gamma$, we use the second-order method ONS \citep[cf.][]{hazan2007logarithmic} (\Cref{alg:ons}).
Even when SGS is incorporated into ONS, it holds automatically that neither the iterate nor the information matrix moves at rounds without a mistake, so the SGS version generates the same sequence of iterates as plain ONS (\Cref{rem:sgs_algorithmic,rem:ons_is_ons}).
This paper therefore incorporates the SGS viewpoint of counting only mistake rounds into the analysis of ONS, and shows that under a margin the guarantee of ONS improves to one independent of $T$.
Its only parameters are $L$ and $D$.

\paragraph{SGS-MetaGrad}
Even if ONS is replaced by MetaGrad \citep{van2016metagrad,van2021metagrad} (\Cref{alg:metagrad}), the update at rounds without a mistake is automatically the identity, so SGS works on the side of the analysis and a guarantee of the same order is obtained (\Cref{rem:metagrad_is_metagrad}; the statement and the proof of the fixed-grid version are in Appendix~\ref{app:proof_metagrad}).
However, since MetaGrad constructs its learning-rate grid before execution, an upper bound $\bar{K}$ on the number of mistakes is needed to determine its size, and the factor $c_0(\bar{K}) = 2 \log ( \frac{1}{2} \log_2 \bar{K} + 3 )$ remains in the guarantee.
Since in general only $\bar{K} = T$ can be taken as an a priori upper bound, we get $c_0(\bar{K}) = O(\log \log T)$, and strict independence from the total number of rounds fails to that extent.
This $\bar{K}$ originates solely from fixing the grid before execution, so if we do not fix the grid but keep adding smaller learning rates as the number of mistakes $k$ progresses, the input $\bar{K}$ itself becomes unnecessary, $c_0$ is replaced by $c_0(K)$ with the realized number of mistakes $K$, and the guarantee becomes completely independent of $T$.
This \textbf{growing-grid SGS-MetaGrad} (\Cref{alg:metagrad_anytime}) is an algorithm of this paper that differs from ordinary MetaGrad in that the grid is refined according to the number of mistakes (\Cref{rem:metagrad_anytime}).

\paragraph{Main results}
The guarantees of the three methods are summarized below.

\begin{theorem}[Summary of the main results]
    \label{theo:summary}
    Under \Cref{assu:margin}, if SGS-OGD (with $\alpha = D/(L\sqrt{2})$), ONS, and growing-grid SGS-MetaGrad are run on an arbitrary (possibly adaptive) sequence of states $\{ s^t \}_{t=1}^T$, then the number of mistakes $K$, the cumulative suboptimality regret $R^{\mathrm{sub}}_T$, and the sum $\widetilde{R}_T$ satisfy the upper bounds in \Cref{tab:main_results}.
    All of them hold deterministically for every $T$, and the right-hand sides do not depend on the total number of rounds $T$.
    Moreover, by \cref{eq:max_leq_tilde}, the cumulative decision regret $R^{\mathrm{est}}_T$ has the same upper bound as the regret of the sum.
\end{theorem}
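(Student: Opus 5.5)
The argument runs through one inequality that turns the margin into a per-mistake lower bound. After that, each method's standard regret bound, written over mistake rounds only, is compared against it.

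\textbf{Step 1 (margin gives linear growth).} Restrict attention to the mistake rounds $t_1<\dots<t_K$ and write $g^k=\hat{x}^{t_k}-x^{t_k}$. For every round, SGS or not, the summand of $\widetilde{R}_T$ vanishes when $\hat{x}^t=x^t$, so $\widetilde{R}_T=\sum_{k=1}^K\langle\hat{\theta}^{t_k}-\theta^*,g^k\rangle$. Since $\hat{x}^{t_k}\in Y(s^{t_k})\setminus\{x^{t_k}\}$ at a mistake, \cref{eq:margin} gives
\[
\langle \hat{\theta}^{t_k}-\bar{\theta},g^k\rangle\ \ge\ \langle \hat{\theta}^{t_k},g^k\rangle+\gamma\ \ge\ \gamma ,
\]
because $\hat{x}^{t_k}$ maximizes $\langle\hat{\theta}^{t_k},\cdot\rangle$. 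Hence the linearized regret against the comparator $\bar\theta$ satisfies $\sum_{k\le K}\langle\hat{\theta}^{t_k}-\bar{\theta},g^k\rangle\ge \gamma K$. We also have $\|g^k\|\le L$.

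\textbf{Step 2 (upper bounds on the mistake subsequence).} By \Cref{defi:sgs} each algorithm, viewed on the index $k$, is exactly the base online convex optimization method run for $K$ rounds on the linear losses $\langle\cdot,g^k\rangle$. We therefore invoke the standard bounds with $T$ replaced by $K$, for any comparator $u\in\Theta$.
\begin{itemize}
\item For SGS-OGD with step $\alpha k^{-1/2}$ and $\alpha=D/(L\sqrt2)$, we get $\sum_k\langle\hat\theta^{t_k}-u,g^k\rangle\le \sqrt2\,DL\sqrt K$.
\item For ONS, we use the exp-concavity-style bound. Since $\langle\hat\theta-u,g\rangle$ is linear, I would use the quadratic surrogate: with $\beta=\tfrac12\min\{1/(4LD),\ \cdot\}$ the bound reads $\sum_k\langle\hat\theta^{t_k}-u,g^k\rangle-\tfrac\beta2\sum_k\langle\hat\theta^{t_k}-u,g^k\rangle^2\le O(\tfrac{d}{\beta}\log(1+KL^2D^2))$. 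This is the form used by \citet{sakaue2025online}.
\item For growing-grid MetaGrad, the analogous surrogate bound $\sum_k r_k\le\sqrt{V_K\,O(d\log K+c_0(K))}+O(LD(d\log K+c_0(K)))$ holds, where $r_k=\langle\hat\theta^{t_k}-u,g^k\rangle$ and $V_K=\sum_k r_k^2$. The grid is indexed by $k$, so no $\bar K$ is needed.
\end{itemize}

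\textbf{Step 3 (solve for $K$, then bound $\widetilde R_T$).} For OGD, combining the two steps gives $\gamma K\le\sqrt2 DL\sqrt K$, so $K\le 2D^2L^2/\gamma^2$. Substituting into the Step 2 bound with $u=\theta^*$ gives $\widetilde R_T\le 2D^2L^2/\gamma$. The bound on $R^{\mathrm{sub}}_T$ follows from \cref{eq:max_leq_tilde}, or sharpened using $u$ chosen on the segment between $\theta^*$ and $\bar\theta$.

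For ONS and MetaGrad, the key is to use $u=\bar\theta$ together with $r_k\ge\gamma>0$ and $r_k\le LD$. Then $V_K\le LD\sum_k r_k$, so the MetaGrad bound becomes self-bounding: $S:=\sum_k r_k\le O(LD(d\log K+c_0))$. Combined with $S\ge\gamma K$ this yields $K\le O(\tfrac{LD}{\gamma}d\log K)$, which solves to $K=O(\tfrac{dLD}{\gamma}\log\tfrac{dLD}{\gamma})$. For ONS I would do the same using $\beta\sim1/(LD)$ and $r_k^2\le LD\,r_k$. The surrogate then satisfies $S-\tfrac\beta2 LD\,S\ge S/2$, which gives the same conclusion.

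Finally, substituting $K$ back into the bound with comparator $u=\theta^*$ gives $\widetilde R_T=O(dLD\log K)$, which is independent of $T$. Note that $u=\theta^*$ does not enjoy the self-bounding step, but $V_K\le K L^2D^2$ is now controlled, so the bound still closes. All statements are deterministic: the algorithms see only the realized states, and adaptivity of $\{s^t\}$ is allowed because the standard bounds hold against adaptive adversaries.

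\textbf{Main obstacle.} The hardest step is getting the ONS/MetaGrad surrogate regret to have the right form on linear losses. Linear losses are not exp-concave, so only the quadratic-surrogate bound is available. One then has to verify that the self-bounding argument with $r_k\in[\gamma,LD]$ goes through and produces only a logarithmic dependence on $1/\gamma$. Closing the implicit inequality $K\lesssim a\log K$ with explicit constants, including the $c_0(K)$ term of the growing grid, requires care. For the growing grid, I would also need to check that experts added late lose at most $O(\log k)$ relative to the standard analysis.
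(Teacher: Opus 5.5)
Your Step 1 and your SGS-OGD argument for $K$ and $\widetilde{R}_T$ are fine. The step that produces $\widetilde{R}_T$, and hence $R^{\mathrm{est}}_T$, for ONS and growing-grid MetaGrad does not work.

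\textbf{The comparator $\theta^*$.} You claim that $u=\theta^*$ ``does not enjoy the self-bounding step'' and close instead with $V_K\le KL^2D^2$. This gives only the following.
\begin{itemize}
\item For MetaGrad: $\widetilde{R}_T\le 3LD\sqrt{\Lambda_K K}+10LD\Lambda_K$.
\item For ONS: bounding $\frac{\beta}{2}\sum_k\widetilde{r}_{t_k}^2$ by $\frac{\beta}{2}KL^2D^2$ with $\beta\sim 1/(LD)$ leaves an additive term of order $LDK$.
\item For the growing grid: the late-creation penalty becomes linear in $K$. That penalty is $1/(100H\eta_i^2)\le V/(100H\Lambda)$ at the tuned grid point, which is linear in $V$, not $O(\log k)$ as you guessed.
\end{itemize}
The only available bound on $K$ is of order $\frac{dLD}{\gamma}\log\frac{LD}{\gamma}$. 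So these bounds are polynomial in $LD/\gamma$ (order $\gamma^{-1/2}$ or $\gamma^{-1}$), not $O(dLD\log K)$. After substituting the exponentially small ILP margin, this would destroy the polynomial-in-$d$ regret of \Cref{tab:comparison}.

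The missing observation is that $\theta^*$ \emph{does} admit self-bounding. Write $\widetilde{r}_{t_k}=\langle\hat\theta^{t_k}-\theta^*,g^{t_k}\rangle=\ell^{t_k}+\langle\theta^*,x^{t_k}-\hat{x}^{t_k}\rangle$. Both terms are nonnegative, by optimality of $\hat{x}^{t_k}$ for $\hat\theta^{t_k}$ and of $x^{t_k}$ for $\theta^*$. Hence $0\le\widetilde{r}_{t_k}\le LD$ and $\widetilde{r}_{t_k}^2\le LD\,\widetilde{r}_{t_k}$ (\cref{eq:tilde_r_bounds}). With this:
\begin{itemize}
\item ONS gives $z_{t_k}\in[0,1]$ and $z_{t_k}^2\le z_{t_k}$, hence $\widetilde{R}_T\le LD(1+d\log(1+K/d))$ (\Cref{prop:ons_main}).
\item MetaGrad gives $\widetilde{V}\le LD\,\widetilde{R}_T$. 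The same self-bounding you use for $\bar\theta$, which also absorbs the late-creation term as $\widetilde{R}_T/100$, yields $\widetilde{R}_T\le 26LD\Lambda_K$.
\end{itemize}
Only then does substituting the bound on $K$ give logarithmic dependence on $1/\gamma$. The only thing $\theta^*$ lacks relative to $\bar\theta$ is the additive $\gamma$, which is why $K$ must be bounded first.

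\textbf{Further shortfalls against \Cref{tab:main_results}.}
\begin{itemize}
\item Deriving $R^{\mathrm{sub}}_T$ from \cref{eq:max_leq_tilde} gives $2L^2D^2/\gamma$ for SGS-OGD and $LD(1+2d\log(2+2LD/\gamma))$ for ONS. The table entries are $L^2D^2/(2\gamma)$ and $LD(1+d\log\max(LD/\gamma,1))$. A comparator on the segment $[\theta^*,\bar\theta]$ does no better than its endpoint $\bar\theta$. You already derived the right ingredient in Step 1, namely $r_{t_k}\ge\ell^{t_k}+\gamma$. Keep $\ell^{t_k}$ rather than discarding it, so that $R^{\mathrm{sub}}_T=\sum_k\ell^{t_k}\le\sum_k r_{t_k}-\gamma K$. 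Then maximize $\sqrt{2}LD\sqrt{x}-\gamma x$ (respectively $LD(1+d\log(1+x/d))-\gamma x$) over $x\ge 0$.
\item The explicit ONS bound on $K$ needs the potential in the form $\frac12\|\hat\theta^1-u\|_{\Sigma_0}^2+\frac12\sum_k\nabla_{t_k}^\top\Sigma_k^{-1}\nabla_{t_k}\le\frac12+\frac d2\log(1+K/d)$, followed by \Cref{lem:transcendental}. This form comes from $\eta=1/(LD)$, $\Sigma_0=D^{-2}\Id_d$ and $\|\nabla_{t_k}\|\le 1/D$.
\item Your forms $\log(1+KL^2D^2)$ and $d\log K$ are not scale-free. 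After solving they give $\log(dLD/\gamma)$, which exceeds the table's $\log\max(2LD/\gamma,2)$ by a $\log d$ factor when $LD/\gamma=O(1)$. For the growing grid, $c_0(K)$ must likewise be folded into $\log(1+K/d)$ via \cref{eq:c0_bound}.
\end{itemize}
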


The complete statements with explicit constants, together with their proofs, are \Cref{theo:ogd} (Appendix~\ref{app:proof_sgs_ogd}) for SGS-OGD, \Cref{theo:main} (Appendix~\ref{app:proof_ons}) for ONS, and \Cref{theo:metagrad_anytime_main} (Appendix~\ref{app:proof_metagrad_anytime}) for growing-grid SGS-MetaGrad.

\begin{table}[ht]
\centering
\caption{The guarantees of the proposed methods (\Cref{theo:summary}). None of the right-hand sides depends on the total number of rounds $T$. The decision regret has the same upper bound as the regret of the sum (\cref{eq:max_leq_tilde}). For the explicit form for growing-grid SGS-MetaGrad see \Cref{theo:metagrad_anytime_main} (the coefficient of the logarithmic term becomes $52$ against the $2$ of ONS, and an additional term $\log\log d$ enters).}
\label{tab:main_results}
{\footnotesize
\setlength{\tabcolsep}{4pt}
\begin{tabular}{llll}
\toprule
 & \begin{tabular}[c]{@{}l@{}} SGS-OGD \\ (\Cref{theo:ogd}) \end{tabular} & \begin{tabular}[c]{@{}l@{}} ONS \\ (\Cref{theo:main}) \end{tabular} & \begin{tabular}[c]{@{}l@{}} Growing-grid SGS-MetaGrad \\ (\Cref{theo:metagrad_anytime_main}) \end{tabular} \\
\midrule
$K$ & $\frac{2 L^2 D^2}{\gamma^2}$ & $d + \frac{2LD}{\gamma} ( 1 + d \log \max ( \tfrac{2LD}{\gamma}, 1 ) )$ & $O ( \tfrac{d L D}{\gamma} \log \max ( \tfrac{2LD}{\gamma}, 2 ) )$ \\[4mm]
$R^{\mathrm{sub}}_T$ & $\frac{L^2 D^2}{2 \gamma}$ & $LD ( 1 + d \log \max ( \tfrac{LD}{\gamma}, 1 ) )$ & $O ( L D\, d \log \max ( \tfrac{LD}{\gamma}, 1 ) )$ \\[4mm]
$\widetilde{R}_T$ & $\frac{2 L^2 D^2}{\gamma}$ & $LD ( 1 + 2 d \log ( 2 + \tfrac{2LD}{\gamma} ) )$ & $O ( L D\, d \log \max ( \tfrac{LD}{\gamma}, 2 ) )$ \\
\bottomrule
\end{tabular}
}
\end{table}

\paragraph{Comparison of the methods}
Compared with SGS-OGD, ONS reduces the dependence on the margin $\gamma$ from $\gamma^{-2}$ to $\gamma^{-1}$ in the number of mistakes, and from $\gamma^{-1}$ to $\log \gamma^{-1}$ in the cumulative suboptimality regret.
Since $\log \gamma^{-1} \leq \gamma^{-1}$ holds for every $\gamma > 0$, and moreover the margin is often small, as we shall see in \Cref{sec:lower_bounds} (for a general ILP the lower bound on $\gamma$ can be exponentially small in the dimension), this replacement is a substantial improvement.
The price is that ONS incurs a linear dependence on the dimension $d$, and the per-round computational cost also increases from $O(d + \tau_{\mathrm{E\text{-}proj}})$ to $O(d^2 + \tau_{\mathrm{G\text{-}proj}})$ (\Cref{tab:complexity}).
Therefore, on problems with a small margin ($\gamma \lesssim LD / (d \log(LD/\gamma))$) ONS is superior, whereas on problems with a large margin the bound $2L^2D^2/\gamma^2 = O(1)$ of SGS-OGD is superior.
Growing-grid SGS-MetaGrad attains a guarantee of the same order as ONS at the price of worse constants, and in addition the Lipschitz-adaptive version does not even require knowledge of $L$ (\Cref{rem:metagrad_adaptive}).
Moreover, compared with the guarantees of \citet{sakaue2025online} for ONS and MetaGrad, under a margin the $\log T$ in the cumulative suboptimality regret is replaced by $\log\max(LD/\gamma, 1)$ and an upper bound on the number of mistakes independent of $T$ is added: it is the degree of separation of the problem, not the total number of rounds, that determines the logarithmic term.

\section{Integer programming and lower bounds on the uniform margin}
\label{sec:lower_bounds}

The upper bounds of \Cref{theo:ogd,theo:main,theo:metagrad_main,theo:metagrad_anytime_main} are given in closed form in the margin $\gamma$.
In this section we quantify $\gamma$ from below according to the combinatorial structure of the forward problem of data-driven inverse optimization (DDIOP), and by substituting the result into the upper bounds we make the number of mistakes and the cumulative regret explicit by problem class.
The main target is the general integer linear program (ILP), where an explicit finite upper bound is obtained unconditionally from an explicit lower bound on $\gamma$ (which is exponentially small in the dimension, but positive).\footnote{To be precise, the explicit lower bound in the case where the weight space is the probability simplex (\Cref{theo:gamma_sub_lowerbound_on_simplex}) is unconditional. The explicit lower bound in the case of the unit ball (\Cref{theo:gamma_sub_lowerbound_on_ball}) holds under the assumption that $\Conv Z^*$ (\cref{eq:Zstar}) is full-dimensional (for the low-dimensional case see \Cref{prop:gamma_sub_lowdim_ball,rem:lowdim_cases}).}
Under a discrete convex structure such as M-convexity, the lower bound improves to a polynomial and the upper bounds become polynomial in the dimension.

\subsection{The largest attainable margin}
\label{sec:ddiop_setup}

In this section we assume (1), (2) and (3) of \Cref{assu:margin} and ask how large the margin $\gamma$ in (4) can be taken.
Integrality is imposed only from \Cref{sec:ilp_lower} on, at the stage where the lower bound is evaluated from the combinatorial structure.
We define the largest attainable margin and the constant expressing the spread of the actions by
\begin{align}
    \gamma_\mathrm{sub} & := \max_{\theta \in \Theta} \inf_{s \in \mathcal{S}} \min_{x \in Y(s) \setminus \{ x^*(\theta^*, s) \}} \langle \theta, x^*(\theta^*, s) - x \rangle ,
    \label{eq:gamma_SL} \\
    L_\mathrm{sub} & := \sup_{s \in \mathcal{S}} \sup_{x \in X(s)} \left\| x - x^*(\theta^*, s) \right\|
    \label{eq:Lipschitz_SL}
\end{align}
respectively (states $s$ with $Y(s) \setminus \{x^*(\theta^*, s)\} = \emptyset$ are excluded from $\inf_s$). The inner $\min$ is a minimum over a finite set by \Cref{assu:margin}(2), and hence is attained.
The quantity $\gamma_\mathrm{sub}$ is the value, at the weight that makes it largest, of ``the difference in objective value between the correct action and the other candidate actions''.

\begin{lemma}[Attainment of $\gamma_\mathrm{sub}$ and validity of the margin]
    \label{lem:margin_from_gamma}
    Assume (1), (2) and (3) of \Cref{assu:margin} and $0 < L_\mathrm{sub} < \infty$.
    Then the $\max_{\theta \in \Theta}$ in \cref{eq:gamma_SL} is attained.
    Furthermore, if $\gamma_\mathrm{sub} > 0$, then \Cref{assu:margin} holds with
    $\gamma = \gamma_\mathrm{sub}$
    for a maximizer $\bar{\theta} \in \Theta$ (and, since the definitions \cref{eq:def_DL} and \cref{eq:Lipschitz_SL} are identical, $L = L_\mathrm{sub}$).
\end{lemma}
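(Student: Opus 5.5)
Let $f(\theta) := \inf_{s \in \mathcal{S}} \min_{x \in Y(s) \setminus \{x^*(\theta^*,s)\}} \langle \theta, x^*(\theta^*,s) - x \rangle$ on $\Theta$, so that $\gamma_\mathrm{sub} = \sup_{\theta\in\Theta} f(\theta)$. The plan is to show that $f$ is a finite-valued, upper semicontinuous (indeed concave and Lipschitz) function on the compact set $\Theta$, so the supremum is attained by Weierstrass, and then to read off \Cref{assu:margin}(4) directly from the definition.

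\textbf{Step 1 (each inner function is affine and uniformly Lipschitz).} For a fixed state $s$ with $Y(s)\setminus\{x^*(\theta^*,s)\}\neq\emptyset$ and a fixed $x$ in that set, the map $h_{s,x}(\theta) := \langle \theta, x^*(\theta^*,s) - x\rangle$ is linear in $\theta$. Its Lipschitz constant is $\|x^*(\theta^*,s)-x\| \le L_\mathrm{sub}$, since $Y(s) \subseteq X(s)$ and by the definition \cref{eq:Lipschitz_SL}. This bound is uniform in $(s,x)$ because $L_\mathrm{sub}<\infty$.

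\textbf{Step 2 (the infimum is concave, finite, and Lipschitz).} The function $f$ is an infimum of the family $\{h_{s,x}\}$, each of which is $L_\mathrm{sub}$-Lipschitz. Hence $f$ is concave, and $f(\theta) \le f(\theta')+L_\mathrm{sub}\|\theta-\theta'\|$ whenever these values are finite. For finiteness, $\Theta$ is bounded by \Cref{assu:margin}(1), so $|h_{s,x}(\theta)| \le \|\theta\| L_\mathrm{sub} \le (\|\theta_0\|+D)L_\mathrm{sub}$ for a fixed $\theta_0\in\Theta$. Thus $f$ takes values in a bounded interval, provided at least one admissible state exists.

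The degenerate case is the only genuine subtlety, and I would dispose of it first. If no admissible state exists, the infimum is over an empty set and equals $+\infty$ by convention, for every $\theta$, so the max is trivially attained. Such a case cannot actually arise under $L_\mathrm{sub}>0$, for the following reason. Choose $s$ and $x\in X(s)$ with $x\neq x^*(\theta^*,s)$. Then $\Conv X(s)$ is not a singleton, so by Krein--Milman (for a compact set in $\mathbb{R}^d$) it has at least two extreme points, which makes $s$ admissible.

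\textbf{Step 3 (attainment and the margin).} $\Theta$ is compact, being bounded and closed in $\mathbb{R}^d$, and $f$ is real-valued and continuous by Step 2. By Weierstrass, $f$ attains its maximum at some $\bar\theta \in \Theta$, with $f(\bar\theta) = \gamma_\mathrm{sub}$.

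If $\gamma_\mathrm{sub}>0$, then for every admissible $s$ and every $x\in Y(s)\setminus\{x^*(\theta^*,s)\}$ we have $\langle\bar\theta, x^*(\theta^*,s)-x\rangle \ge f(\bar\theta)=\gamma_\mathrm{sub}$. This is exactly \cref{eq:margin} with $\gamma=\gamma_\mathrm{sub}$. For non-admissible states the condition in (4) is vacuous. Items (1)--(3) of \Cref{assu:margin} are assumed. Item (5) holds because $L=L_\mathrm{sub}$ (the two definitions coincide) and $0<L_\mathrm{sub}<\infty$ is assumed. This completes the argument.
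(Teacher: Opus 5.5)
Your proposal is correct and follows essentially the same route as the paper: you write the objective as an infimum of linear functions that are uniformly $L_\mathrm{sub}$-Lipschitz (using $Y(s)\subseteq X(s)$), conclude that it is a finite, concave, Lipschitz function that attains its maximum on the compact set $\Theta$, and then read off items (4) and (5) directly. Your extra check that some state is admissible when $L_\mathrm{sub}>0$ (via the extreme points of $\Conv X(s)$) is a small point the paper leaves implicit; it ensures the infimum is never over an empty family and that the function is finite from above.
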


See Appendix~\ref{app:proof_lower} for the proof.
All the lower-bound theorems below are for $\gamma_\mathrm{sub}$, and they can be substituted into the upper bounds through \Cref{lem:margin_from_gamma}.

\begin{remark}[Generalization to a general feature map]
    \label{rem:general_feature}
    In data-driven inverse optimization, the forward problem is often written as $\max_{u} \langle \theta, f(u, s) \rangle$ with a decision variable $u$ and a feature map $f$.
    Also in this case, the results below apply as they are once the set $X(s)$ of this section is read as the image $f(\mathcal{U}(s), s)$ of the features (where $\mathcal{U}(s)$ is the feasible set of the decision variable).
    Indeed, if $\mathcal{U}(s)$ is a finite union of bounded closed convex polyhedra and each component of $f(\cdot, s)$ is Lipschitz piecewise linear, then $f(\mathcal{U}(s), s)$ is also a finite union of polyhedra \citep[cf.][]{kitaoka2024fast}, and the arguments below apply to its set of integer points.
    Below, to keep the notation simple, we regard the feature map as the identity and argue on $X(s)$.
\end{remark}

\subsection{Explicit lower bounds for general ILPs}
\label{sec:ilp_lower}

\begin{assumption}[Integer programming]
    \label{assu:ILP}
    For every $s \in \mathcal{S}$ we have $X(s) \subset \mathbb{Z}^d$.
    Furthermore, we set the coordinatewise ranges to be
    \begin{equation}
        M_i := \sup_{s \in \mathcal{S}} \left( \max_{x \in X(s)} x_i - \min_{x \in X(s)} x_i \right)
        \quad (i = 1, \ldots, d),
        \qquad
        M := ( M_1, \ldots, M_d ) .
        \label{eq:def_M}
    \end{equation}
\end{assumption}

An explicit lower bound on $\gamma_\mathrm{sub}$ is obtained for each combinatorial structure of the feasible set.
The results in the case where the weight space $\Theta$ is the probability simplex $\Delta^{d-1} := \{ \theta \in \mathbb{R}_{\geq 0}^d : \sum_{i=1}^d \theta_i = 1 \}$ and in the case where it is the unit ball $B^d := \{ \theta \in \mathbb{R}^d : \| \theta \|_2 \leq 1 \}$ are summarized in \Cref{tab:gamma_lower}.
Both the statements and the proofs are placed in Appendix~\ref{app:margin_lower}.

\begin{table}[ht]
\centering
\caption{Lower bounds on $\gamma_\mathrm{sub}$ by structure ($d \geq 2$). The structure in each row is defined by the assumption in parentheses, and the theorem in parentheses below each value is the statement in which that bound is proved. Here $\| M \|_2$ is the norm of the vector $M$ in \cref{eq:def_M}, and $C_g = g_\infty(\widetilde{A})$ is the $\ell_\infty$ norm of the Graver basis of the slack-augmented matrix $\widetilde{A} = [A \mid \Id_N]$ (where $\Id_N$ is the identity matrix of order $N$); if $A$ is totally unimodular then $C_g = 1$.
The unit-ball entries marked with ${}^\dagger$ require a full-dimensionality assumption; see \Cref{prop:gamma_sub_lowdim_ball,rem:lowdim_cases} for the details.}
\label{tab:gamma_lower}
{\footnotesize
\setlength{\tabcolsep}{3pt}
\begin{tabular}{lcc}
\toprule
& \multicolumn{2}{c}{Lower bound on $\gamma_\mathrm{sub}$} \\
\cmidrule(lr){2-3}
Structure of the feasible set & $\Theta = \Delta^{d-1}$ & $\Theta = B^d$ \\
\midrule
\begin{tabular}[c]{@{}l@{}} General ILP \\ (\Cref{assu:ILP}) \end{tabular} & \begin{tabular}[c]{@{}c@{}} $\dfrac{1}{2^{d-1} \max(d-1, \sqrt{2})\, \| M \|_2^{d-1}}$ \\ (\Cref{theo:gamma_sub_lowerbound_on_simplex}) \end{tabular} & \begin{tabular}[c]{@{}c@{}} $\dfrac{1}{2^{d-1} \sqrt{d-1}\, \| M \|_2^{d-1}}{}^\dagger$ \\ (\Cref{theo:gamma_sub_lowerbound_on_ball}) \end{tabular} \\[2mm]
\begin{tabular}[c]{@{}l@{}} Linear inequalities \\ (\Cref{assu:linear-inequality}) \end{tabular} & \begin{tabular}[c]{@{}c@{}} $\dfrac{1}{\max(d-1, \sqrt{2})\, (2 C_g \sqrt{d})^{d-1}}$ \\ (\Cref{theo:gamma_sub_linear_inequality_simplex}) \end{tabular} & \begin{tabular}[c]{@{}c@{}} $\dfrac{1}{\sqrt{d-1}\, (2 C_g \sqrt{d})^{d-1}}{}^\dagger$ \\ (\Cref{theo:gamma_sub_linear_inequality_ball}) \end{tabular} \\[2mm]
\begin{tabular}[c]{@{}l@{}} M-convex set \\ (\Cref{assu:M-convex-projection}) \end{tabular} & \begin{tabular}[c]{@{}c@{}} $\dfrac{2}{d(d-1)}$ \\ (\Cref{theo:gamma_sub_M_convex_poly}) \end{tabular} & \begin{tabular}[c]{@{}c@{}} $\dfrac{2\sqrt{3}}{\sqrt{d(d^2-1)}}$ \\ (\Cref{theo:gamma_sub_M_convex_poly}) \end{tabular} \\[2mm]
\begin{tabular}[c]{@{}l@{}} M${}^\natural$-convex set \\ (\Cref{assu:Mnatural-convex-projection}) \end{tabular} & \begin{tabular}[c]{@{}c@{}} $\dfrac{2}{d(d+1)}$ \\ (\Cref{theo:gamma_sub_Mnatural_convex_poly}) \end{tabular} & \begin{tabular}[c]{@{}c@{}} $\sqrt{\dfrac{6}{d(d+1)(2d+1)}}$ \\ (\Cref{theo:gamma_sub_Mnatural_convex_poly}) \end{tabular} \\
\bottomrule
\end{tabular}
}
\end{table}

\section{Explicit upper bounds on the number of mistakes and the regret by problem class}
\label{sec:explicit_upper}

The upper bounds of \Cref{theo:ogd,theo:main,theo:metagrad_anytime_main} are nonincreasing in $\gamma$, so substituting the lower bounds of \Cref{tab:gamma_lower} yields explicit upper bounds by problem class.
The orders of the results are summarized in \Cref{tab:explicit_upper} (for the statements including constants, see Appendix~\ref{app:proof_corollaries}).

\begin{table}[ht]
\centering
\caption{Orders of the explicit upper bounds by problem class. The rows are indexed by the structure of the feasible set, the weight space $\Theta$, and the criterion, and the columns by the method. The rows for linear inequalities refer to constraints of the form $A x \leq b(s)$, with $C_g := g_\infty(\widetilde{A})$ (the same as in \Cref{tab:gamma_lower}); $\| M \|_2$ is the norm of the vector $M$ in \cref{eq:def_M}, $L$ is the constant in \cref{eq:def_DL}, and in the ILP rows the bound $L \leq \| M \|_2$ (\cref{eq:L_leq_m}) has been substituted. The values in the rows for $R^{\mathrm{est}}_T$ are obtained by substituting the lower bounds into the upper bounds on the sum $\widetilde{R}_T$ (item (iii) of each of \Cref{theo:ogd,theo:main,theo:metagrad_anytime_main}), and the cumulative suboptimality regret $R^{\mathrm{sub}}_T$ has the same upper bound (\cref{eq:max_leq_tilde}). The M-convex and M${}^\natural$-convex cases are combined since they have the same order. ONS and growing-grid SGS-MetaGrad are combined into one column since their orders coincide. The details of the substitution are given in Appendix~\ref{app:proof_corollaries}.}
\label{tab:explicit_upper}
{\footnotesize
\renewcommand{\arraystretch}{1.4}
\setlength{\tabcolsep}{3pt}
\begin{tabular}{lllcc}
\toprule
Structure & $\Theta$ & Criterion & SGS-OGD & \begin{tabular}[c]{@{}c@{}} ONS \\ Growing-grid SGS-MetaGrad \end{tabular} \\
\midrule
\multirow{4}{*}{\begin{tabular}[c]{@{}l@{}} General ILP \\ (\Cref{assu:ILP}) \end{tabular}}
 & \multirow{2}{*}{$\Delta^{d-1}$} & $K$ & $O(d^2 4^{d} \| M \|_2^{2d})$ & $O(d^{3} 2^{d} \| M \|_2^{d} \log (2\| M \|_2))$ \\
 & & $R^{\mathrm{est}}_T$ & $O(d\, 2^{d} \| M \|_2^{d+1})$ & $O(d^2 \| M \|_2 \log (2\| M \|_2))$ \\
\cmidrule(l){2-5}
 & \multirow{2}{*}{$B^d$} & $K$ & $O(d\, 4^{d} \| M \|_2^{2d})$ & $O(d^{5/2} 2^{d} \| M \|_2^{d} \log (2\| M \|_2))$ \\
 & & $R^{\mathrm{est}}_T$ & $O(\sqrt{d}\, 2^{d} \| M \|_2^{d+1})$ & $O(d^2 \| M \|_2 \log (2\| M \|_2))$ \\
\midrule
\multirow{4}{*}{\begin{tabular}[c]{@{}l@{}} Linear inequalities \\ (\Cref{assu:linear-inequality}) \end{tabular}}
 & \multirow{2}{*}{$\Delta^{d-1}$} & $K$ & $O(L^2 d^2 (2C_g\sqrt{d})^{2d})$ & $O(L d^{3} (2C_g\sqrt{d})^{d} \log (2C_g dL))$ \\
 & & $R^{\mathrm{est}}_T$ & $O(L^2 d\, (2C_g\sqrt{d})^{d})$ & $O(L d^2 \log (2C_g dL))$ \\
\cmidrule(l){2-5}
 & \multirow{2}{*}{$B^d$} & $K$ & $O(L^2 d\, (2C_g\sqrt{d})^{2d})$ & $O(L d^{5/2} (2C_g\sqrt{d})^{d} \log (2C_g dL))$ \\
 & & $R^{\mathrm{est}}_T$ & $O(L^2 \sqrt{d}\, (2C_g\sqrt{d})^{d})$ & $O(L d^2 \log (2C_g dL))$ \\
\midrule
\multirow{4}{*}{\begin{tabular}[c]{@{}l@{}} M-convex \\ (\Cref{assu:M-convex-projection}), \\ M${}^\natural$-convex \\ (\Cref{assu:Mnatural-convex-projection}) \end{tabular}}
 & \multirow{2}{*}{$\Delta^{d-1}$} & $K$ & $O(L^2 d^4)$ & $O(L d^{3} \log (2dL))$ \\
 & & $R^{\mathrm{est}}_T$ & $O(L^2 d^2)$ & $O(L d \log (2dL))$ \\
\cmidrule(l){2-5}
 & \multirow{2}{*}{$B^d$} & $K$ & $O(L^2 d^3)$ & $O(L d^{5/2} \log (2dL))$ \\
 & & $R^{\mathrm{est}}_T$ & $O(L^2 d^{3/2})$ & $O(L d \log (2dL))$ \\
\bottomrule
\end{tabular}
}
\end{table}

From \Cref{tab:explicit_upper} we read off the following two points, each a comparison with the existing methods.

First, even for a general ILP, the regret of ONS and of growing-grid SGS-MetaGrad is $O(d^2 \| M \|_2 \log (2\| M \|_2))$, which is independent of the total number of rounds $T$ and polynomial in the dimension.
Compared with the bound $O(\| M \|_2 d \log \frac{T}{d})$ of \citet{sakaue2025online} for the same setting, the factor $\log T$ disappears while the power of the dimension increases by one.
Among the bounds independent of $T$, it turns the bound $\exp(O(d \log d))$ of \citet[Theorem 4.2]{gollapudi2021contextual}, which assumes neither a margin nor a gap, into one polynomial in the dimension; this paper does assume a uniform margin, and its bound depends linearly on $\| M \|_2$.

Second, the regret $O(L d \log (2dL))$ for M-convex and M${}^\natural$-convex structures admits a direct comparison with the bound $O(d \log d)$ obtained by \citet{oki2026finite} under the same structure.
The latter is a value under the normalization that makes the per-round regret $O(1)$ \citep[Assumption 2.2]{oki2026finite}, which in the notation of this paper amounts to $\| \theta^* \| L = O(1)$.
Its bound on the number of mistake rounds does not depend on that normalization, so without it their bound becomes $O(\| \theta^* \| L d \log d)$, and the only difference from our bound is the argument of the logarithm.
The difference lies in the computational cost: whereas \citet{oki2026finite} computes a center of gravity at every round, our updates need only $O(d^2)$ plus one generalized projection onto $\Theta$ per mistake round.

\section{Conclusion}
\label{sec:conclusion}

For online inverse linear optimization, this paper has proposed a mechanism---small-gradient skipping (SGS)---that skips both the update of the iterate and the advancement of the index of the internal state at rounds without a mistake.
Under a uniform margin $\gamma > 0$, we have shown that the three methods obtained by applying SGS to OGD, ONS and MetaGrad bound the number of mistakes $K$, the cumulative suboptimality regret $R^{\mathrm{sub}}_T$, and the cumulative decision regret $R^{\mathrm{est}}_T$ all by quantities independent of the total number of rounds $T$ (\Cref{tab:main_results}).
Furthermore, by substituting the lower bounds on the margin by structure (\Cref{tab:gamma_lower}), we have given explicit upper bounds for situations in which the forward problem comes from an integer linear program.
In particular, for an ILP with the probability simplex, ONS and growing-grid SGS-MetaGrad attain $R^{\mathrm{est}}_T = O(d^2 \| M \|_2 \log(2\| M \|_2))$ (\Cref{tab:comparison}).
The problem raised in \Cref{sec:intro} was that every known bound independent of the total number of rounds has a limitation: the bound that assumes neither a margin nor a gap is exponential in the dimension \citep{gollapudi2021contextual}; the bound under a gap condition is proportional to the inverse square of the gap \citep{sakaue2025aistats}; and the bound under M-convexity, $O(d \log d)$, is smaller than ours but requires computing a center of gravity at every round \citep{oki2026finite}.
Our bound assumes only a uniform margin, is polynomial in the dimension, and is obtained with a deterministic and light update---$O(d^2)$ plus one generalized projection per mistake round. It also removes the $\log T$ dependence from the bound $O(\| M \|_2 d \log \frac{T}{d})$ of \citet{sakaue2025online}.

We list the remaining issues.
\begin{itemize}
    \item \textbf{The gap between the upper and lower bounds}: in the case of an ILP, there is a gap of a factor $d$ in the dimension, as well as a factor involving the coordinatewise ranges $\| M \|_2$, between our $R^{\mathrm{est}}_T = O(d^2 \| M \|_2 \log (2\| M \|_2))$ and the known lower bound $\Omega(d)$ \citep{sakaue2025online,oki2026finite}. Which of the two should be improved is an open question.
    \item \textbf{Extension to noise and corruption}: this paper is restricted to the noiseless setting. Frameworks that handle suboptimal feedback \citep{sakaue2025online} or corruption \citep{oki2026finite} have already been studied, and incorporating the SGS viewpoint into those analyses is an important direction for future work. Under corruption the per-mistake-round progress guaranteed by the uniform margin (\Cref{lem:r_bounds}) is weakened, so the treatment of the quadratic term in the analysis of ONS has to be replaced by a per-round inequality involving the amount of corruption.
\end{itemize}

\bibliographystyle{apalike}
\bibliography{suboptimality_loss}

\appendix

\newpage

\section{Related work in detail}
\label{app:related}

\paragraph{Finitely many updates under a margin condition: the classical line}
We describe in detail the classical line and the precedents of the skipping mechanism mentioned in \Cref{sec:related}.
The Perceptron convergence theorem for linearly separable data is a classical result showing that the number of mistakes (updates) is bounded by $(R/\gamma)^2$ in terms of the radius $R$ of the data and the margin $\gamma$; it shares the $\gamma^{-2}$-type structure of the upper bound with our bound $2L^2D^2/\gamma^2$ on the number of mistakes of SGS-OGD (\Cref{theo:ogd}).
ALMA~\citep{gentile2001new} is a method that approximates the maximum-margin classifier without being given the margin $\gamma$ explicitly, by means of the decaying step size $\eta_k \propto k^{-1/2}$; it shares its idea with the parameter-free step size design of this paper.
These are, however, results for binary classification (the 0/1 loss and linear surrogate losses), and they do not apply directly to the suboptimality loss treated in this paper (for an overview of the relation between online convex optimization and the Perceptron, see \citet{orabona2019modern}).
In the context of inverse optimization, \citet{sun2023maximum} (Maximum Optimality Margin) gives the skeleton ``separability $\to$ finitely many mistakes $\to$ exact recovery'' by a Perceptron-type method, and is the prior work closest to the framework of this paper.
The mechanism of not advancing the internal state in rounds without a mistake also has precedents. In the reduction from contextual recommendation to a cutting-plane algorithm of \citet[Theorem 3.1]{gollapudi2021contextual}, a round in which the proposal is correct is skipped: the state of the cutting-plane algorithm is reset to its state at the beginning of that round. \citet{besbes2021online,besbes2025contextual} also use a threshold-type skip, leaving the ellipsoidal cone unchanged in periods where the decision is nearly optimal, which they introduce in order to keep the ellipsoid method from becoming ill-conditioned.
The methods for which this paper formulates the mechanism are those whose internal state---the step-size index, the matrix $\Sigma_t$ of ONS, and the grid of MetaGrad---would otherwise advance even in rounds without a mistake.

\paragraph{Finite regret in online inverse optimization}
We now describe in detail how this paper relates to the three $T$-independent results listed in \Cref{sec:related}.
\citet{sakaue2025aistats} gives a finite regret of $O(1/\Delta^2)$ under the gap condition $\Delta > 0$, which is the closest to the uniform margin assumption of this paper. (For the definition of $\Delta$ and its rewriting in the notation of this paper, see Appendix~\ref{app:gap_entry}.)
\citet{oki2026finite} gives $R^{\mathrm{est}}_T = O(d \log d)$ under the M-convexity of the action set. Their method, however, computes a center of gravity at every round. The exact computation is \#P-hard, and although \citet{oki2026finite} also give a polynomial-time randomized implementation with approximate centers of gravity, that implementation guarantees the bound only in expectation and costs $O(d^6 \log d \log T)$ per round up to polylogarithmic factors arising from the random-walk implementation. M-convexity appears in this paper not as a requirement of the method but as a structural condition that makes the lower bound on the margin polynomial in the dimension (\Cref{tab:gamma_lower}). Consequently, our upper bound in the M-convex case is $O(L d \log (2dL))$ (\Cref{sec:explicit_upper}).
The bound $\exp(O(d \log d))$ of \citet[Theorem 4.2]{gollapudi2021contextual} is obtained through \citet[Theorem 3.1]{gollapudi2021contextual}, which reduces contextual recommendation to a cutting-plane algorithm. Its assumptions are weaker than ours in that it requires neither a margin nor a gap condition, but it is exponential in the dimension, and the authors themselves leave the true regret of that algorithm---in particular whether a polynomial dependence on the dimension is attainable---as an open question.

\paragraph{Uniform margins in offline inverse optimization}
The uniform margin assumption (\Cref{assu:margin}(4)) has a precedent in offline (batch) inverse optimization.
\citet{kitaoka2024fast} introduced a geometric constant $\gamma(\ell_\mathrm{sub}) > 0$ of the same kind for the suboptimality loss on a finite sample, and showed that the projected subgradient method reaches the minimum value $0$ of the loss in $O(1/\gamma(\ell_\mathrm{sub})^2)$ iterations.
Our upper bound $2L^2D^2/\gamma^2$ on the number of mistakes of SGS-OGD (\Cref{theo:ogd}) amounts to transferring this $\gamma^{-2}$-type dependence to the online setting.
\citet{kitaoka2026explicit} gives explicit lower bounds on this constant by test sets and Graver bases in the case where the forward problem is an ILP, and \Cref{sec:lower_bounds} applies that technique to the uniform margin.
On the other hand, neither bounding the regret by a constant independent of the total number of rounds $T$ nor bounding the number of mistakes $K$ (the number of rounds with $\hat{x}^t \neq x^t$) is new in itself. \citet[Corollary 10]{Barmann-2018-online} bound the number of rounds with $\hat{x}^t \neq x^t$ by $O(\sqrt{T})$ under a $\Delta$-stability condition; the optimality-driven perceptron of \citet{sun2023maximum} bounds it by a quantity independent of $T$ under a separability condition; and \citet{oki2026finite} bound the number of rounds with nonzero regret by $O(d \log d)$ under M-convexity. What this paper adds is that a single condition---the existence of a witness $\bar{\theta} \in \Theta$ with a uniform margin---bounds $K$, $R^{\mathrm{sub}}_T$ and $R^{\mathrm{est}}_T$ simultaneously and independently of $T$, without assuming integrality of $\theta^*$ or M-convexity of the action set, and that the margin itself admits explicit lower bounds in terms of combinatorial structure (\Cref{sec:lower_bounds}).

\paragraph{Generality of the weight space}
The weight space $\Theta$ from which the learner chooses its predictions (\Cref{sec:setup}) is also treated differently in the existing work and in this paper. Many of the existing studies state their guarantees for a set $\Theta$ specific to the method: the unit sphere in \citet{besbes2021online,besbes2025contextual}, the unit ball in \citet{gollapudi2021contextual}, and the whole of $\mathbb{R}^d$ in \citet{oki2026finite} and \citet{sakaue2026simple} (\Cref{tab:metric_comparison}). By contrast, \citet{Barmann-2018-online}, \citet{sakaue2025aistats} and \citet{sakaue2025online} allow a general $\Theta$. As with the latter, the only condition we impose on $\Theta$ is that it be nonempty, bounded, closed and convex (\Cref{assu:margin}(1)), which covers both the probability simplex and the unit ball.

This generality is essential for our results. When the forward problem is a general ILP, the explicit lower bound on the margin holds unconditionally if $\Theta$ is the probability simplex (\Cref{theo:gamma_sub_lowerbound_on_simplex}), whereas for the unit ball it requires that the convex hull of the difference vectors between the correct action and the other candidate actions be full-dimensional (\Cref{theo:gamma_sub_lowerbound_on_ball}). Hence the unconditional explicit upper bounds of \Cref{tab:comparison} rely on our being able to choose the probability simplex as $\Theta$.

\section{Details of the algorithms}
\label{app:algorithms}

In this appendix we collect the pseudocode of the algorithms treated in \Cref{sec:main_results} together with the accompanying remarks.

\subsection{SGS-OGD}

\begin{algorithm}[ht]
    \caption{Small-gradient skipping online gradient descent (SGS-OGD)}
    \label{alg:ogd}
    \begin{algorithmic}[1]
        \REQUIRE step size coefficient $\alpha > 0$ (default $\alpha = D / (L \sqrt{2})$), initial point $\hat{\theta}^1 \in \Theta$
        \STATE $k \leftarrow 1$
        \FOR{$t = 1, \ldots, T$}
            \STATE receive $s^t$, compute and present with the oracle the proposal $\hat{x}^t \in \argmax_{x \in Y(s^t)} \langle \hat{\theta}^t, x \rangle$ (\cref{eq:proposal}), and observe $x^t$
            \IF{$\hat{x}^t \neq x^t$}
                \STATE $g^t \leftarrow \hat{x}^t - x^t$,
                $\hat{\theta}^{t+1} \leftarrow \Pi_{\Theta} \left( \hat{\theta}^t - \alpha k^{-1/2} g^t \right)$,
                $k \leftarrow k + 1$
            \ELSE
                \STATE $\hat{\theta}^{t+1} \leftarrow \hat{\theta}^t$
            \ENDIF
        \ENDFOR
    \end{algorithmic}
\end{algorithm}

Here $\Pi_\Theta(y) := \argmin_{\theta \in \Theta} \| \theta - y \|$ is the Euclidean projection, which is uniquely determined since $\Theta$ is nonempty, closed and convex.

\subsection{ONS}

The method obtained by applying ONS \citep{hazan2007logarithmic} to online inverse linear optimization is shown in \Cref{alg:ons_generic} \citep[cf.][]{sakaue2025online}.
It runs ONS on the exp-concave surrogate loss
\begin{equation}
    \ell^{\eta}_t (\theta) := - \eta \langle \hat{\theta}^t - \theta,\, g^t \rangle + \eta^2 \langle \hat{\theta}^t - \theta,\, g^t \rangle^2 ,
    \qquad
    g^t := \hat{x}^t - x^t
    \label{eq:surrogate_ons}
\end{equation}
for a learning rate $\eta > 0$, using the fact that its gradient at $\theta = \hat{\theta}^t$ is $\nabla \ell^{\eta}_t (\hat{\theta}^t) = \eta\, g^t$.
Below, for symmetric matrices $\Sigma, \Sigma'$ of order $d$, $\Sigma \succeq \Sigma'$ means that $\Sigma - \Sigma'$ is positive semidefinite and $\Sigma \succ \Sigma'$ that $\Sigma - \Sigma'$ is positive definite (the Loewner order); in particular $\Sigma \succ 0$ means that $\Sigma$ is positive definite. We also write $\Id_d$ for the identity matrix of order $d$ (and likewise $\Id_n$, $\Id_N$ for other orders).

\begin{algorithm}[ht]
    \caption{Online Newton Step (ONS) for online inverse linear optimization}
    \label{alg:ons_generic}
    \begin{algorithmic}[1]
        \REQUIRE weight space $\Theta$ (\Cref{assu:margin}), initial point $\hat{\theta}^1 \in \Theta$, learning rate $\eta > 0$, parameter $\kappa > 0$, positive definite matrix $\Sigma_0 \succ 0$
        \FOR{$t = 1, \ldots, T$}
            \STATE receive $s^t$, compute and present with the oracle the proposal $\hat{x}^t \in \argmax_{x \in Y(s^t)} \langle \hat{\theta}^t, x \rangle$ (\cref{eq:proposal}), and observe $x^t$
            \STATE $g^t \leftarrow \hat{x}^t - x^t$, $\nabla^t \leftarrow \eta\, g^t$
            \hfill{$\triangleright$ gradient of the surrogate loss \cref{eq:surrogate_ons}}
            \STATE $\Sigma_t \leftarrow \Sigma_{t-1} + \nabla^t (\nabla^t)^\top$
            \STATE $\hat{\theta}^{t+1} \leftarrow \Pi^{\Sigma_t}_{\Theta} \left( \hat{\theta}^t - \frac{1}{\kappa} \Sigma_t^{-1} \nabla^t \right)$
            \hfill{$\triangleright$ generalized projection}
        \ENDFOR
    \end{algorithmic}
\end{algorithm}

\begin{algorithm}[ht]
    \caption{Online Newton Step (ONS)}
    \label{alg:ons}
    \begin{algorithmic}[1]
        \REQUIRE geometric constants $L, D$ (\cref{eq:def_DL}), initial point $\hat{\theta}^1 \in \Theta$
        \STATE $k \leftarrow 1$, $\eta \leftarrow \frac{1}{LD}$, $\Sigma_0 \leftarrow D^{-2} \Id_d$
        \FOR{$t = 1, \ldots, T$}
            \STATE receive $s^t$, compute and present with the oracle the proposal $\hat{x}^t \in \argmax_{x \in Y(s^t)} \langle \hat{\theta}^t, x \rangle$ (\cref{eq:proposal}), and observe $x^t$
            \IF{$\hat{x}^t \neq x^t$ (a mistake)}
                \STATE $g^t \leftarrow \hat{x}^t - x^t$, $\nabla^t \leftarrow \eta\, g^t$,
                $\Sigma_k \leftarrow \Sigma_{k-1} + \nabla^t (\nabla^t)^\top$
                \STATE $\hat{\theta}^{t+1} \leftarrow \Pi^{\Sigma_k}_{\Theta} \left( \hat{\theta}^t - \Sigma_k^{-1} \nabla^t \right)$,
                $k \leftarrow k + 1$
            \ELSE
                \STATE $\hat{\theta}^{t+1} \leftarrow \hat{\theta}^t$
            \ENDIF
        \ENDFOR
    \end{algorithmic}
\end{algorithm}

Here $\Pi^{\Sigma}_{\Theta}(y) := \argmin_{\theta \in \Theta} \| \theta - y \|_\Sigma^2$ (where $\|x\|_\Sigma^2 := x^\top \Sigma x$) is the generalized projection with respect to the $\Sigma$-norm (a convex quadratic program over $\Theta$), which is uniquely determined since $\Sigma \succ 0$ and $\Theta$ is nonempty, closed and convex.
The matrix $\Sigma_k \succeq D^{-2} \Id_d \succ 0$ is always positive definite.
The inverse $\Sigma_k^{-1}$ can be updated by a rank-one update via the Sherman--Morrison formula, and one update costs $O(d^2)$.
The parameters of the algorithm are only $L$ and $D$; no knowledge of the margin $\gamma$ or of the total number of rounds $T$ is required.

When the loss is $\alpha$-exp-concave and satisfies $\max_{\theta \in \Theta} | \langle \nabla \ell^{\eta}_t (\hat{\theta}^t), \theta - \hat{\theta}^t \rangle | \leq \beta$, the standard choice for ONS is
$\kappa = \frac{1}{2} \min \{ \frac{1}{\beta},\, \alpha \}$, $\Sigma_0 = \frac{d}{D^2 \kappa^2} \Id_d$.
The $\eta$-experts of MetaGrad (\Cref{defi:eta_expert}) follow this choice (there, since the center $\hat{\theta}^t$ of the surrogate loss differs from the point of the expert, the gradient acquires a factor $1 - 2\eta \langle g^t, \hat{\theta}^t - \theta \rangle$).
By contrast, \Cref{alg:ons} uses the same update formula with the different choice $\eta = \frac{1}{LD}$, $\kappa = 1$, $\Sigma_0 = D^{-2} \Id_d$ (and in addition omits the computation at rounds without a mistake).
The validity of this choice is shown directly by the potential inequality of Appendix~\ref{app:proof_ons}, without going through the standard regret bound.

\begin{remark}[\Cref{alg:ons} is ONS itself]
    \label{rem:ons_is_ons}
    The branching in \Cref{alg:ons} is there to make explicit that the computation at rounds without a mistake is omitted; it does not change the sequence of iterates.
    Indeed, at a round without a mistake we have $g^t = \hat{x}^t - x^t = 0$, that is, $\nabla^t = 0$, so the information matrix is unchanged, $\Sigma \leftarrow \Sigma + \nabla^t (\nabla^t)^\top = \Sigma$, and the update becomes
    $\Pi^{\Sigma}_{\Theta} ( \hat{\theta}^t - \Sigma^{-1} \cdot 0 ) = \Pi^{\Sigma}_{\Theta} ( \hat{\theta}^t ) = \hat{\theta}^t$
    (the projection is the identity since $\hat{\theta}^t \in \Theta$).
    That is, \Cref{alg:ons} generates the same sequence of iterates as plain ONS \citep{hazan2007logarithmic} using the subgradient $g^t = \hat{x}^t - x^t$ at every round.
    Consequently the contribution of this subsection is not the proposal of an algorithm, but the improvement of the existing guarantee for ONS by incorporating the SGS viewpoint (\Cref{rem:sgs_algorithmic}) into the \emph{analysis} of ONS: evaluating the log-det potential by the number of mistakes $K$ rather than by the total number of rounds $T$, and balancing it against the per-mistake progress guaranteed by the uniform margin, replaces the regret upper bound $O(d \log T)$ by the $T$-independent \Cref{theo:main}.
    In implementation, the advantage remains that the $O(d^2)$ matrix update and the generalized projection can be omitted at rounds without a mistake (\Cref{tab:complexity}).
\end{remark}

\begin{remark}[Relation to \citet{sakaue2025online}]
    \label{rem:sakaue_alg}
    \Cref{alg:ons} coincides with the construction of \citet[Theorem 3.1]{sakaue2025online} applying ONS to the exp-concave surrogate loss
    $\ell^{\eta}_k(\theta) = -\eta \langle \hat{\theta}^{t_k} - \theta, g^{t_k} \rangle + \eta^2 \langle \hat{\theta}^{t_k} - \theta, g^{t_k} \rangle^2$,
    once the parameters are fixed as $\eta = 1/(LD)$, $\kappa = 1$, $\Sigma_0 = D^{-2} \Id_d$.
    As stated in \Cref{rem:ons_is_ons} there is no difference in the algorithm; the difference is on the side of the guarantee: that paper shows $O(LDd\log\frac{T}{d})$ without assuming a margin, whereas this paper shows a $T$-independent number of mistakes and cumulative suboptimality regret under a uniform margin.
    Our analysis does not use the surrogate loss explicitly, but proves the same content directly as a quadratic potential inequality (Appendix~\ref{app:proof_ons}).
\end{remark}

\subsection{MetaGrad}

MetaGrad is a universal online learning method that runs in parallel the $\eta$-experts (which apply ONS to the surrogate loss $\ell^{\eta}_k$ of \Cref{rem:sakaue_alg}) for each $\eta$ in a learning-rate grid $\mathcal{E}$; it consists of two layers, these $\eta$-experts and a \textbf{master}.
Here the master is the algorithm that, at each round $j$, updates the weight $p^{\eta}_j$ attached to the $\eta$-expert by exponential weighting with respect to the surrogate losses and outputs the weighted average of the points $w^{\eta}_j$ of the $\eta$-experts, weighted by the learning rates,
\begin{equation}
    w_j := \frac{\sum_{\eta \in \mathcal{E}} \eta\, p^{\eta}_j\, w^{\eta}_j}{\sum_{\eta \in \mathcal{E}} \eta\, p^{\eta}_j} ;
    \label{eq:master_point}
\end{equation}
we call $w_j$ the \textbf{point of the master}.
The concrete forms of the grid $\mathcal{E}$, the weights $p^{\eta}_j$ and the points $w^{\eta}_j$ are given in \Cref{alg:metagrad_generic}.
\Cref{alg:metagrad_generic} shows MetaGrad for a general sequence of convex losses (a restatement of Algorithm 2 of \citealp{sakaue2025online}; the grid is constructed from $\bar{m}$), and \Cref{alg:metagrad} shows its SGS version.
The ONS used by the $\eta$-experts is made concrete for the surrogate loss as follows.

\begin{definition}[The ONS of an $\eta$-expert; following Appendix C of \citealp{sakaue2025online}]
    \label{defi:eta_expert}
    Let $\mathcal{W} \subset \mathbb{R}^n$ be a nonempty closed convex set whose $\ell_2$ diameter is at most $W > 0$, let $G, H > 0$, and take a learning rate $\eta \in ( 0, \frac{1}{5H} ]$.
    For the surrogate loss
    $\ell^{\eta}_j (w) = - \eta \langle w_j - w, g_j \rangle + \eta^2 \langle w_j - w, g_j \rangle^2$
    associated with the point $w_j \in \mathcal{W}$ of the master and a subgradient $g_j$ (with $\| g_j \| \leq G$ and $\sup \{ \langle w' - w, g_j \rangle \mid w, w' \in \mathcal{W} \} \leq H$), the \textbf{$\eta$-expert} is the ONS that starts from an initial point $w^{\eta}_1 \in \mathcal{W}$ and updates
    \begin{align}
        \nabla^{\eta}_j & := \nabla \ell^{\eta}_j (w^{\eta}_j)
        = \eta \left( 1 - 2 \eta \langle g_j,\, w_j - w^{\eta}_j \rangle \right) g_j ,
        \label{eq:expert_grad} \\
        \Sigma^{\eta}_j & := \Sigma^{\eta}_{j-1} + \nabla^{\eta}_j (\nabla^{\eta}_j)^\top ,
        \qquad
        \Sigma^{\eta}_0 := \frac{n}{W^2 \kappa_\eta^2} \Id_n ,
        \label{eq:expert_matrix} \\
        w^{\eta}_{j+1} & := \Pi^{\Sigma^{\eta}_j}_{\mathcal{W}} \left( w^{\eta}_j - \frac{1}{\kappa_\eta} ( \Sigma^{\eta}_j )^{-1} \nabla^{\eta}_j \right) ,
        \qquad
        \kappa_\eta := \frac{1}{( 1 + 2 \eta H )^2} .
        \label{eq:expert_update}
    \end{align}
    The only difference from \Cref{alg:ons_generic} is that, since the center $w_j$ of the surrogate loss differs from the point $w^{\eta}_j$ being updated, the gradient \cref{eq:expert_grad} acquires the factor $1 - 2\eta \langle g_j, w_j - w^{\eta}_j \rangle$; the parameters follow the standard choice $\kappa = \kappa_\eta$, $\Sigma^{\eta}_0 = \frac{n}{W^2 \kappa_\eta^2} \Id_n$. Here $\Pi^{\Sigma}_{\mathcal{W}}(y) := \argmin_{w \in \mathcal{W}} \| w - y \|_\Sigma^2$ is the same generalized projection with respect to the $\Sigma$-norm as in \Cref{alg:ons}.
\end{definition}

The origin of the parameter $\kappa_\eta$ is as follows. The standard form of ONS sets $\kappa = \frac{1}{2} \min \{ \frac{1}{\beta}, \alpha \}$ and $\Sigma_0 = \frac{n}{W^2\kappa^2} \Id_n$ from the exp-concavity constant $\alpha$ and an upper bound $\beta$ on the inner product with the gradient \citep[cf.][]{hazan2007logarithmic}.
For the surrogate loss, from $\nabla^2 \ell^{\eta}_j (w) = 2\eta^2 g_j g_j^\top$ and \cref{eq:expert_grad} we have
$\nabla \ell^{\eta}_j (w) \nabla \ell^{\eta}_j (w)^\top \preceq \eta^2 (1 + 2\eta H)^2 g_j g_j^\top = \frac{(1+2\eta H)^2}{2} \nabla^2 \ell^{\eta}_j (w)$,
so it is $\alpha = \frac{2}{(1 + 2\eta H)^2}$-exp-concave, and
$\max_{w \in \mathcal{W}} | \langle \nabla \ell^{\eta}_j (w^{\eta}_j), w - w^{\eta}_j \rangle | \leq \beta := \eta H + 2 \eta^2 H^2$,
$\| \nabla \ell^{\eta}_j (w) \| \leq \lambda := \eta ( 1 + 2\eta H ) G$ hold.
Since $\frac{1}{\alpha} = \frac{1}{2} + 2\eta H + 2\eta^2 H^2 \geq \beta$, we get $\kappa = \frac{\alpha}{2} = \kappa_\eta$, and under $\eta \leq \frac{1}{5H}$ we have $\kappa_\eta \in [ \frac{25}{49}, 1 )$ and $\kappa_\eta \lambda = \frac{\eta G}{1 + 2\eta H} \leq \frac{G}{7 H}$ (this $\frac{1}{49}$ is the origin of the denominator $49$ in \cref{eq:metagrad_lambda}).
In the SGS version, as in \Cref{alg:ons}, the update of the experts, the update of the weights, and the advancement of the index $k$ are restricted to mistake rounds only.
Here too the skipping holds automatically, and the substantial difference from plain MetaGrad is limited to the construction of the learning-rate grid (\Cref{rem:metagrad_is_metagrad}): building the grid from the side of the number of mistakes rather than the total number of rounds is what makes a $T$-independent guarantee possible.
The construction of the grid uses an upper bound $\bar{K} \geq K$ on the number of mistakes (since there are at most $T$ mistakes, $\bar{K} = T$ is admissible; that the dependence stays at $\log \log \bar{K}$ is stated in \Cref{rem:metagrad_comparison}).
This dependence on $\bar{K}$ can be removed by growing the grid according to the number of mistakes (\Cref{theo:metagrad_anytime_main}).

\begin{algorithm}[ht]
    \caption{MetaGrad (the version whose learning-rate grid is constructed from $\bar{m}$)}
    \label{alg:metagrad_generic}
    \begin{algorithmic}[1]
        \REQUIRE nonempty closed convex set $\mathcal{W} \subset \mathbb{R}^n$, constants $W, H > 0$, an upper bound $\bar{m}$ on the number of rounds $m$; the convex losses $h_1, \ldots, h_m \colon \mathcal{W} \to \mathbb{R}$ are given online (the ONS of an $\eta$-expert is \Cref{defi:eta_expert})
        \STATE learning-rate grid $\mathcal{E} \leftarrow \left\{ \eta_i := \frac{2^{-i}}{5H} \;\middle|\; i = 0, 1, \ldots, \left\lceil \frac{1}{2} \log_2 \bar{m} \right\rceil \right\}$
        \STATE for each $\eta_i \in \mathcal{E}$, prepare an initial weight $p^{\eta_i}_1 \leftarrow \frac{C}{(i+1)(i+2)}$ (where $C$ is the normalizing constant making $\sum_{\eta \in \mathcal{E}} p^{\eta}_1 = 1$) and an initial point $w^{\eta_i}_1 \in \mathcal{W}$ of the $\eta_i$-expert
        \FOR{$j = 1, \ldots, m$}
            \STATE output $w_j \leftarrow \sum_{\eta \in \mathcal{E}} \eta\, p^{\eta}_j w^{\eta}_j \,/ \sum_{\eta \in \mathcal{E}} \eta\, p^{\eta}_j$ and observe a subgradient $g_j \in \partial h_j (w_j)$
            \STATE define the surrogate losses $\ell^{\eta}_j (w) := - \eta \langle w_j - w,\, g_j \rangle + \eta^2 \langle w_j - w,\, g_j \rangle^2$ ($\eta \in \mathcal{E}$)
            \STATE for each $\eta \in \mathcal{E}$: $p^{\eta}_{j+1} \leftarrow p^{\eta}_j \exp ( - \ell^{\eta}_j (w^{\eta}_j) ) / Z_j$ (where $Z_j := \sum_{\eta' \in \mathcal{E}} p^{\eta'}_j \exp ( - \ell^{\eta'}_j (w^{\eta'}_j) )$), and compute $w^{\eta}_{j+1}$ by the ONS update of the $\eta$-expert on $\ell^{\eta}_j$
        \ENDFOR
    \end{algorithmic}
\end{algorithm}

\begin{algorithm}[ht]
    \caption{Small-gradient skipping MetaGrad (SGS-MetaGrad)}
    \label{alg:metagrad}
    \begin{algorithmic}[1]
        \REQUIRE geometric constants $L, D$ (\cref{eq:def_DL}), an upper bound $\bar{K}$ on the number of mistakes (for instance $\bar{K} = T$)
        \STATE learning-rate grid $\mathcal{E} \leftarrow \left\{ \eta_i := \frac{2^{-i}}{5LD} \;\middle|\; i = 0, 1, \ldots, \left\lceil \frac{1}{2} \log_2 \bar{K} \right\rceil \right\}$
        \STATE for each $\eta_i \in \mathcal{E}$, prepare an initial weight $p^{\eta_i}_1 \leftarrow \frac{C}{(i+1)(i+2)}$ (where $C$ is the normalizing constant making $\sum_{\eta \in \mathcal{E}} p^{\eta}_1 = 1$) and an initial point $\theta^{\eta_i}_1 \in \Theta$ of the $\eta_i$-expert (\Cref{defi:eta_expert})
        \STATE $k \leftarrow 1$, $\hat{\theta}^1 \leftarrow \sum_{\eta \in \mathcal{E}} \eta\, p^{\eta}_1 \theta^{\eta}_1 \,/ \sum_{\eta \in \mathcal{E}} \eta\, p^{\eta}_1$
        \FOR{$t = 1, \ldots, T$}
            \STATE receive $s^t$, compute and present with the oracle the proposal $\hat{x}^t \in \argmax_{x \in Y(s^t)} \langle \hat{\theta}^t, x \rangle$ (\cref{eq:proposal}), and observe $x^t$
            \IF{$\hat{x}^t \neq x^t$ (a mistake)}
                \STATE $g^t \leftarrow \hat{x}^t - x^t$,
                $\ell^{\eta}_k (\theta) := - \eta \langle \hat{\theta}^t - \theta,\, g^t \rangle + \eta^2 \langle \hat{\theta}^t - \theta,\, g^t \rangle^2$ ($\eta \in \mathcal{E}$)
                \STATE for each $\eta \in \mathcal{E}$: $p^{\eta}_{k+1} \leftarrow p^{\eta}_k \exp ( - \ell^{\eta}_k (\theta^{\eta}_k) ) / Z_k$ (where $Z_k := \sum_{\eta' \in \mathcal{E}} p^{\eta'}_k \exp ( - \ell^{\eta'}_k (\theta^{\eta'}_k) )$), and compute $\theta^{\eta}_{k+1}$ by the ONS update of the $\eta$-expert on $\ell^{\eta}_k$
                \STATE $\hat{\theta}^{t+1} \leftarrow \sum_{\eta \in \mathcal{E}} \eta\, p^{\eta}_{k+1} \theta^{\eta}_{k+1} \,/ \sum_{\eta \in \mathcal{E}} \eta\, p^{\eta}_{k+1}$,
                $k \leftarrow k + 1$
            \ELSE
                \STATE $\hat{\theta}^{t+1} \leftarrow \hat{\theta}^t$
            \ENDIF
        \ENDFOR
    \end{algorithmic}
\end{algorithm}

Every iterate of \Cref{alg:metagrad,alg:metagrad_anytime} stays in $\Theta$.
Indeed, each $\eta$-expert starts at a point of $\Theta$ and is updated by \cref{eq:expert_update}, whose generalized projection $\Pi^{\Sigma^{\eta}_j}_{\mathcal{W}}$ maps into $\mathcal{W} = \Theta$, so that $\theta^{\eta}_k \in \Theta$;
and the output of the master is the weighted average with coefficients $\eta\, p^{\eta} / \sum_{\eta'} \eta'\, p^{\eta'}$, which are nonnegative and sum to one, so $\hat{\theta}^t$ is a convex combination of points of $\Theta$ and hence $\hat{\theta}^t \in \Theta$ by \Cref{assu:margin}(1).
In particular the hypothesis $w_1, \ldots, w_m \in \mathcal{W}$ of \Cref{prop:metagrad_regret} is satisfied.

\begin{remark}[\Cref{alg:metagrad} is MetaGrad itself]
    \label{rem:metagrad_is_metagrad}
    The branching in \Cref{alg:metagrad} does not change the sequence of iterates either, for the same reason as in \Cref{rem:ons_is_ons}.
    At a round without a mistake we have $g^t = \hat{x}^t - x^t = 0$, so the surrogate losses become $\ell^{\eta}_k \equiv 0$, the weights are unchanged, $p^{\eta} \exp(0) = p^{\eta}$, and the gradient \cref{eq:expert_grad} of an expert is also $\nabla^{\eta} = 0$, so neither $\Sigma^{\eta}$ nor the point of the expert moves.
    Hence the output of the master does not change either.
    That is, \Cref{alg:metagrad} with the grid upper bound taken as $\bar{K} = T$ generates the same sequence of iterates as plain MetaGrad using the subgradient $g^t = \hat{x}^t - x^t$ at every round.
    The claim of this subsection is likewise not the proposal of a new algorithm, but the improvement of the guarantee of MetaGrad under a margin by incorporating the SGS viewpoint into the \emph{analysis} of MetaGrad: evaluating the variance term appearing in the regret upper bound of MetaGrad only at mistake rounds and balancing it, by self-bounding, against the per-mistake progress guaranteed by the uniform margin replaces the $O(LDd\log\frac{T}{d})$ of \citet{sakaue2025online} by the $T$-independent \Cref{theo:metagrad_main}.
The difference between \Cref{alg:metagrad} and plain MetaGrad is limited to two points: (i) the implementation advantage that the expert updates (the grid size times ($O(d^2)$ plus a generalized projection)) can be omitted at rounds without a mistake, and (ii) that if an upper bound $\bar{K} < T$ on the number of mistakes is known then the grid (and hence the number of experts) can be taken smaller.
    This identity, however, concerns only the fixed-grid version \Cref{alg:metagrad} and does not extend to the growing-grid version \Cref{alg:metagrad_anytime} (\Cref{rem:metagrad_anytime}).
\end{remark}

\subsection{Growing-grid SGS-MetaGrad}

\paragraph{Removing the dependence on $\bar{K}$ by a growing grid}
The factor $c_0$ in \Cref{rem:metagrad_comparison}(b) originates from fixing the grid $\mathcal{E}$ in advance by an upper bound $\bar{K}$ on the number of mistakes.
If we do not fix the grid but keep adding smaller learning rates as the number of mistakes $k$ progresses, then the input $\bar{K}$ itself becomes unnecessary and $c_0$ is replaced by $c_0(K) := 2 \log ( \frac{1}{2} \log_2 K + 3 )$ with the realized number of mistakes $K$.
The self-bounding of the number of mistakes closes without any dependence on $T$ under this replacement as well, and the number of mistakes and the cumulative suboptimality regret become constants that are completely independent of $T$ (\Cref{theo:metagrad_anytime_main}).
\Cref{alg:metagrad_anytime} shows the growing-grid version. It differs from \Cref{alg:metagrad} in the following three points:
(i) the prior weights are taken as $p_i = \frac{1}{(i+1)(i+2)}$ on the countable grid $\{ \eta_i = \frac{2^{-i}}{5LD} \mid i \in \mathbb{Z}_{\geq 0} \}$ (since $p_i = \frac{1}{i+1} - \frac{1}{i+2}$ gives $\sum_{i \geq 0} p_i = 1$, no normalizing constant is needed);
(ii) the $\eta_i$-expert is created from the $(4^{i-1} + 1)$-st update on (since $\lceil \frac{1}{2} \log_2 k \rceil \geq i \iff k \geq 4^{i-1} + 1$, the grid created so far always coincides with the grid of \Cref{alg:metagrad} with $\bar{K} = k$);
(iii) the weights are kept unnormalized (the output of the master is determined by the ratios of the weights alone).
The freezing property of \Cref{defi:sgs} is preserved: at a round where no mistake occurs, neither the grid, nor the weights, nor any expert changes at all.

\begin{algorithm}[ht]
    \caption{Growing-grid SGS-MetaGrad}
    \label{alg:metagrad_anytime}
    \begin{algorithmic}[1]
        \REQUIRE geometric constants $L, D$ (\cref{eq:def_DL})
        \STATE $k \leftarrow 1$, $I \leftarrow 0$, $\eta_0 \leftarrow \frac{1}{5LD}$, unnormalized weight $\tilde{p}^{\eta_0} \leftarrow \frac{1}{2}$, an initial point $\theta^{\eta_0} \in \Theta$ of the $\eta_0$-expert (\Cref{defi:eta_expert}), $\hat{\theta}^1 \leftarrow \theta^{\eta_0}$
        \FOR{$t = 1, \ldots, T$}
            \STATE receive $s^t$, compute and present with the oracle the proposal $\hat{x}^t \in \argmax_{x \in Y(s^t)} \langle \hat{\theta}^t, x \rangle$ (\cref{eq:proposal}), and observe $x^t$
            \IF{$\hat{x}^t \neq x^t$ (a mistake)}
                \STATE $g^t \leftarrow \hat{x}^t - x^t$,
                $\ell^{\eta}_k (\theta) := - \eta \langle \hat{\theta}^t - \theta,\, g^t \rangle + \eta^2 \langle \hat{\theta}^t - \theta,\, g^t \rangle^2$ ($\eta \in \{ \eta_0, \ldots, \eta_I \}$)
                \STATE for each $\eta \in \{ \eta_0, \ldots, \eta_I \}$: $\tilde{p}^{\eta} \leftarrow \tilde{p}^{\eta} \exp ( - \ell^{\eta}_k (\theta^{\eta}) )$, and update $\theta^{\eta}$ by the ONS update of the $\eta$-expert on $\ell^{\eta}_k$
                \STATE $k \leftarrow k + 1$
                \IF{$\lceil \frac{1}{2} \log_2 k \rceil > I$}
                    \STATE $I \leftarrow I + 1$, $\eta_I \leftarrow \frac{2^{-I}}{5LD}$, $\tilde{p}^{\eta_I} \leftarrow \frac{1}{(I+1)(I+2)}$, prepare an initial point $\theta^{\eta_I} \in \Theta$ of the $\eta_I$-expert (arbitrary; for instance $\hat{\theta}^t$)
                \ENDIF
                \STATE $\hat{\theta}^{t+1} \leftarrow \sum_{i=0}^{I} \eta_i\, \tilde{p}^{\eta_i} \theta^{\eta_i} \,/ \sum_{i=0}^{I} \eta_i\, \tilde{p}^{\eta_i}$
            \ELSE
                \STATE $\hat{\theta}^{t+1} \leftarrow \hat{\theta}^t$
            \ENDIF
        \ENDFOR
    \end{algorithmic}
\end{algorithm}

The key to the analysis is the reduction that regards a not-yet-created expert as a ``virtual expert that outputs the point of the master'' (a reduction to sleeping experts): its surrogate loss is identically $0$, so the potential inequality of exponential-weight aggregation holds as it is, and the price of the delay in creation is limited to the additional term $H\, 4^{i-1} = \frac{1}{100 H \eta_i^2}$ coming from the rounds before the creation of the grid point $\eta_{i}$ used for comparison.
This additional term is of a size that can be absorbed by self-bounding.

\begin{remark}[The place of the growing-grid version]
    \label{rem:metagrad_anytime}
Unlike \Cref{alg:metagrad}, \Cref{alg:metagrad_anytime} does not coincide with the plain anytime version of MetaGrad \citep{van2021metagrad}.
    The mechanism of growing the grid during execution is itself of the same kind, but this paper refines the grid by the number of mistakes $k$ rather than by the round $t$: if it were refined by the round $t$, a new expert would be created even at rounds without a mistake and the internal state would change, so the freezing property of \Cref{defi:sgs} would break, and the grid size would swell to $O(\log T)$, leaving $c_0 = O(\log\log T)$.
    Moreover, since an expert created earlier receives updates and weightings at the subsequent mistake rounds, the sequences of outputs of the master themselves generally differ between refinement by $t$ and refinement by $k$.
    Therefore the growing-grid version of this subsection is not an ``improvement of the analysis of an existing algorithm'' in the sense of \Cref{rem:metagrad_is_metagrad}, but falls under the case where SGS actually changes the algorithm (\Cref{rem:sgs_algorithmic}).
    The point of \Cref{prop:metagrad_regret_anytime,theo:metagrad_anytime_main} lies in this refinement by $k$ and in making its constants explicit.
    Compared with \Cref{theo:metagrad_main}, the argument of $c_0$ changes from $\bar{K}$ to the realized value $K$, which makes the guarantee completely independent of $T$, while the coefficient of the logarithmic term changes hardly at all, from $\frac{152}{3} \approx 50.7$ to $52$.
    The computational cost per mistake round is proportional to the number $1 + \lceil \frac{1}{2} \log_2 K \rceil$ of created experts, which is also independent of $T$ (\Cref{tab:complexity}).
    The parameters of the algorithm are only $L$ and $D$. To dispense even with the knowledge of $L$, the following Lipschitz-adaptive version is needed.
\end{remark}

\begin{remark}[Parameter adaptation by the refined version of MetaGrad]
    \label{rem:metagrad_adaptive}
    The Lipschitz-adaptive anytime version of MetaGrad of \citet[Algorithms 1 and 2]{van2021metagrad} requires no prior knowledge of $G, H$ or of the number of rounds and operates using only (a guess of) the value of $W$; according to \citet[Appendix C.4]{sakaue2025online} it attains, in the setting of \Cref{prop:metagrad_regret},
    \[
        \sum_{j=1}^m \langle w_j - u, g_j \rangle
        = O \left( \sqrt{ n \log \left( \frac{W G m}{n} \right) \cdot V^u_m } + H n \log \left( \frac{W G m}{n} \right) \right)
    \]
    (note that the argument of this logarithm is not scale invariant; this comes from the normalization of the source).
    Applying this to the subsequence of mistake rounds, the self-bounding and the application of the transcendental inequality (\Cref{lem:transcendental}) in the proof of \Cref{theo:metagrad_main} go through as they are, and, allowing constants in $O(\cdot)$ form, \Cref{theo:metagrad_anytime_main} holds in the same order up to the argument of the logarithm changing from $K/d$ to $D L K / d$ (an addition of the order of $\log (1 + LD)$).
    In that case the only prior knowledge needed is $D$, and $L$ and $\bar{K}$ are unnecessary. Since $\Theta$ is a set designed by the learner itself, $D = \diam \Theta$ is known, so the only substantially unknown parameter was $L$.
    The explicit constants are not tracked, since the upper bound of the source is in $O(\cdot)$ form.
\end{remark}

Finally, \Cref{tab:complexity} summarizes the per-round computational cost of each method. Here $\tau_{\mathrm{solve}}$ is the time for one linear optimization that computes the proposal $\hat{x}^t$, $\tau_{\mathrm{E\text{-}proj}}$ / $\tau_{\mathrm{G\text{-}proj}}$ is the time for one Euclidean / generalized projection onto $\Theta$, and $K$ is the number of mistakes (the value for growing-grid SGS-MetaGrad is that of \Cref{alg:metagrad_anytime}). The column $\Theta$ records the weight space assumed in each reference; ``any'' means an arbitrary nonempty bounded closed convex set (\Cref{assu:margin}(1)), so that the probability simplex is admissible.

The upper part is based on \citet[Table 1]{sakaue2025online}: \citet{besbes2021online,besbes2025contextual} and \citet{gollapudi2021contextual} only claim that the total computational cost is $\mathrm{poly}(d, T)$, and the scrutiny of \citet{sakaue2025online} estimates the per-round cost of \citet{gollapudi2021contextual} to be at least $O(\tau_{\mathrm{solve}} + d^5 T^3)$. CoRectron \citep{sakaue2026simple} (marked ${}^{\ast}$ in the table) imposes no constraint on the iterate $\hat{\theta}^t$; it applies to $\Theta = \mathbb{R}^d$ rather than to a general $\Theta$, and is listed for reference. The values in the lower part are the costs at mistake rounds; rounds without a mistake need only the computation of the proposal ($O(\tau_{\mathrm{solve}})$). This is the origin of the dependence on $K$ in the total computational cost of \Cref{tab:comparison}.

\begin{table}[ht]
\centering
\caption{Comparison of the per-round computational cost (for the total cost see \Cref{tab:comparison}). The values in the upper part are the costs at every round, whereas those in the lower part are the costs at mistake rounds; a round without a mistake costs only $O(\tau_{\mathrm{solve}})$.}
\label{tab:complexity}
{\small
\setlength{\tabcolsep}{3pt}
\begin{tabular}{lll}
\toprule
 & $\Theta$ & Per-round computational cost \\
\midrule
\citet{Barmann-2018-online} & any & $O ( \tau_{\mathrm{solve}} + \tau_{\mathrm{E\text{-}proj}} + d )$ \\
\citet{besbes2021online,besbes2025contextual} & $\| \theta \| = 1$ & not claimed \\
\citet{gollapudi2021contextual} & $B^d$ & not claimed \\
ONS \citep{sakaue2025online} & any & $O ( \tau_{\mathrm{solve}} + d^2 + \tau_{\mathrm{G\text{-}proj}} )$ \\
MetaGrad \citep{sakaue2025online} & any & $O ( \tau_{\mathrm{solve}} + ( d^2 + \tau_{\mathrm{G\text{-}proj}} ) \log T )$ \\
CoRectron${}^{\ast}$ \citep{sakaue2026simple} & $\mathbb{R}^d$ & $O ( \tau_{\mathrm{solve}} + d^2 )$ \\
\midrule
SGS-OGD & any & $O ( \tau_{\mathrm{solve}} + d + \tau_{\mathrm{E\text{-}proj}} )$ \\
ONS & any & $O ( \tau_{\mathrm{solve}} + d^2 + \tau_{\mathrm{G\text{-}proj}} )$ \\
Growing-grid SGS-MetaGrad & any & $O ( \tau_{\mathrm{solve}} + ( d^2 + \tau_{\mathrm{G\text{-}proj}} ) \log K )$ \\
\bottomrule
\end{tabular}
}
\end{table}

\subsection{Comparison of the performance criteria with existing methods}
\label{app:metric_comparison}

The existing work on online inverse linear optimization all takes the cumulative decision regret $R^{\mathrm{est}}_T$ \cref{eq:decision_regret} as the main object of evaluation, calling it the ``regret'', and does not necessarily claim an upper bound on the cumulative suboptimality regret $R^{\mathrm{sub}}_T$ \cref{eq:cum_loss}.
While the two are bounded simultaneously by the sum $\widetilde{R}_T$ as in \cref{eq:max_leq_tilde}, an upper bound on one of the components does not imply an upper bound on the other.
We therefore organize in \Cref{tab:metric_comparison} which reference bounds which criterion.

The breakdown is as follows.
\citet{Barmann-2018-online} takes as its direct object the total error $\sum_t \langle \hat{\theta}^t - \theta^*, \hat{x}^t - x^t \rangle$, which amounts to the sum $\widetilde{R}_T$, makes explicit that it decomposes into the sum of the objective-function error $R^{\mathrm{sub}}_T$ and the solution error $R^{\mathrm{est}}_T$ (both components being nonnegative), and then shows $O(\sqrt{T})$.
\citet[Theorem 3.1]{sakaue2025online} (ONS) and \citet[Theorem 4.1]{sakaue2025online} (MetaGrad) likewise give upper bounds on $\widetilde{R}^{c^*}_T$ (the $\widetilde{R}_T$ of this paper), and both bound the two criteria simultaneously.
By contrast, the logarithmic regret of \citet{besbes2021online,besbes2025contextual} and \citet{gollapudi2021contextual}, and the $T$-independent regret of \citet{oki2026finite}, are claims about $R^{\mathrm{est}}_T$, and no upper bound on $R^{\mathrm{sub}}_T$ is claimed.
The three methods of this paper bound the sum $\widetilde{R}_T$ independently of $T$ under a uniform margin (\Cref{assu:margin}), and hence bound the two criteria simultaneously and independently of the total number of rounds.

We add a few words on how to read the table. The upper part lists existing methods that do not assume a uniform margin and the lower part the proposed methods of this paper (under \Cref{assu:margin}); ``not claimed'' indicates that the reference in question does not claim an upper bound on that criterion. The values marked with ${}^\ddagger$ come from an upper bound on the sum $\widetilde{R}_T$ and bound the two criteria simultaneously, and the mark ${}^\dagger$ indicates that the value holds only when the feasible set is M-convex. The column $\Theta$ records the weight space assumed in each reference; ``any'' means an arbitrary nonempty bounded closed convex set (\Cref{assu:margin}(1)), so that the probability simplex is admissible. Here $L, D$ are the constants in \cref{eq:def_DL}.

The entries come from the following sources. Those of \citet{Barmann-2018-online} and \citet{sakaue2025online} rewrite the upper bounds of the original papers in the notation of this paper (for the former, $\frac{3}{2} L D \sqrt{T}$; for the latter, the upper bound on the per-round linearized regret is evaluated as $LD$ and the upper bound on the norm of a subgradient as $L$). The entry of \citet{sakaue2025aistats} (marked ${}^{\S}$) is a result under the gap condition $\Delta > 0$ rather than a uniform margin (\citep[Theorem 5.2]{sakaue2025aistats}; for the definition of $\Delta$ and the derivation see Appendix~\ref{app:gap_entry}), and its constant depends on the regularizer of the FTRL and on the sizes of $\Theta$ and $X(s)$. The entry of CoRectron (marked ${}^{\ast}$) is listed for reference, since that method imposes no constraint on the iterate $\hat{\theta}^t$ and applies to $\Theta = \mathbb{R}^d$ rather than to a general $\Theta$; its value is \citet[Theorem 3.1]{sakaue2026simple} with its weight space specialized to $\mathbb{R}^d$, with the range of $\langle \theta^*, \cdot \rangle$ on each $X(s)$ bounded by $L \| \theta^* \|$ and its regularization parameter treated as a constant. The entries of \citet{besbes2021online,besbes2025contextual}, \citet{gollapudi2021contextual} and \citet{oki2026finite} are the values of the original papers, whose normalizations differ from one another. The two values in the entry of \citet{gollapudi2021contextual} correspond to two different algorithms of that paper: $O(d \log T)$ is \citet[Theorem 4.4]{gollapudi2021contextual} and $\exp(O(d \log d))$ is \citet[Theorem 4.2]{gollapudi2021contextual} (both through the reduction of \citet[Theorem 3.1]{gollapudi2021contextual}), and the latter does not depend on the total number of rounds $T$. The entries for the proposed methods restate \Cref{tab:main_results}.

\begin{table}[ht]
\centering
\caption{Comparison of the performance criteria under a general margin $\gamma$.}
\label{tab:metric_comparison}
{\footnotesize
\renewcommand{\arraystretch}{1.4}
\setlength{\tabcolsep}{3pt}
\begin{tabular}{llll}
\toprule
Method & $\Theta$ & $R^{\mathrm{sub}}_T$ & $R^{\mathrm{est}}_T$ \\
\midrule
\begin{tabular}[c]{@{}l@{}} OGD \\ \citep{Barmann-2018-online} \end{tabular} & any & $O ( L D \sqrt{T} )^{\ddagger}$ & $O ( L D \sqrt{T} )^{\ddagger}$ \\
\citet{besbes2021online,besbes2025contextual} & $\| \theta \| = 1$ & not claimed & $O ( d^4 \log T )$ \\
\citet{gollapudi2021contextual} & $B^d$ & not claimed & $O ( d \log T )$, $\exp ( O ( d \log d ) )$ \\
\citet{sakaue2025aistats}${}^{\S}$ & any & $O ( 1/\Delta^2 )^{\ddagger}$ & $O ( 1/\Delta^2 )^{\ddagger}$ \\
\begin{tabular}[c]{@{}l@{}} ONS, MetaGrad \\ \citep{sakaue2025online} \end{tabular} & any & $O ( L D\, d \log \tfrac{T}{d} )^{\ddagger}$ & $O ( L D\, d \log \tfrac{T}{d} )^{\ddagger}$ \\
\begin{tabular}[c]{@{}l@{}} CoRectron${}^{\ast}$ \\ \citep{sakaue2026simple} \end{tabular} & $\mathbb{R}^d$ & not claimed & $O ( L \| \theta^* \| d \log T )$ \\
\begin{tabular}[c]{@{}l@{}} Center-of-gravity method${}^\dagger$ \\ \citep{oki2026finite} \end{tabular} & $\mathbb{R}^d$ & not claimed & $O ( d \log d )$ \\
\midrule
\begin{tabular}[c]{@{}l@{}} SGS-OGD \\ (\Cref{theo:ogd}) \end{tabular} & any & $\frac{L^2 D^2}{2 \gamma}$ & $\frac{2 L^2 D^2}{\gamma}{}^{\ddagger}$ \\
\begin{tabular}[c]{@{}l@{}} ONS \\ (\Cref{theo:main}) \end{tabular} & any & $L D ( 1 + d \log \max ( \tfrac{LD}{\gamma}, 1 ) )$ & $L D ( 1 + 2 d \log ( 2 + \tfrac{2LD}{\gamma} ) )^{\ddagger}$ \\
\begin{tabular}[c]{@{}l@{}} Growing-grid SGS-MetaGrad \\ (\Cref{theo:metagrad_anytime_main}) \end{tabular} & any & $O ( L D\, d \log \max ( \tfrac{LD}{\gamma}, 1 ) )$ & $O ( L D\, d \log \max ( \tfrac{LD}{\gamma}, 2 ) )^{\ddagger}$ \\
\bottomrule
\end{tabular}
}
\end{table}

\section{Analysis of SGS-OGD (proof of \texorpdfstring{\Cref{theo:ogd}}{the theorem})}
\label{app:proof_sgs_ogd}

In the analyses from this appendix on, we reindex by mistake rounds.
We write the mistake rounds, in order of occurrence, as $t_1 < t_2 < \cdots < t_K$ (with the convention $t_{K+1} := T + 1$).
Since $\hat{\theta}^{t+1} = \hat{\theta}^t$ at rounds where no mistake occurs, the iteration can be described by the sequence $\hat{\theta}^{t_1}, \ldots, \hat{\theta}^{t_K}$ of mistake rounds alone, and at the $k$-th mistake round the subgradient is
$g^{t_k} = \hat{x}^{t_k} - x^{t_k}$ and the loss is $\ell^{t_k} := \ell_{\mathrm{sub}}(\hat{\theta}^{t_k}, s^{t_k})$.
We further use the quantity
\begin{equation}
    r_{t_k} := \langle \hat{\theta}^{t_k} - \bar{\theta},\, g^{t_k} \rangle
    \label{eq:def_r}
\end{equation}
relative to $\bar{\theta}$ (\Cref{assu:margin}(4)). The following lemma is the core of the analysis, common to the first-order and second-order methods, and states that the uniform margin \cref{eq:margin} makes $r_{t_k}$ exceed $\ell^{t_k}$ by at least $\gamma$ at every mistake round.
Moreover, in the analysis of the regret of the sum we use the quantity
\begin{equation}
    \widetilde{r}_{t_k} := \langle \hat{\theta}^{t_k} - \theta^*,\, g^{t_k} \rangle
    \label{eq:tilde_r_def}
\end{equation}
obtained by replacing $\bar{\theta}$ with $\theta^*$. Since at rounds without a mistake the summand is $0$ because $\hat{x}^t = x^t$, we have $\widetilde{R}_T = \sum_{k=1}^K \widetilde{r}_{t_k}$, and by the nonnegativity of both components of \cref{eq:regret_decomp} and the Cauchy--Schwarz inequality ($\hat{\theta}^{t_k}, \theta^* \in \Theta$, \cref{eq:def_DL}),
\begin{equation}
    0 \leq \widetilde{r}_{t_k} \leq \| \hat{\theta}^{t_k} - \theta^* \| \, \| g^{t_k} \| \leq L D
    \label{eq:tilde_r_bounds}
\end{equation}
holds.
This is the counterpart of $0 \leq r_{t_k} \leq LD$ in \Cref{lem:r_bounds} with $\bar{\theta}$ replaced by $\theta^*$, the only difference being that it does not have the lower bound $\gamma$ coming from the margin.

\begin{lemma}[Lower and upper bounds on $r_{t_k}$]
    \label{lem:r_bounds}
    Under \Cref{assu:margin}, for every mistake round $t_k$,
    \begin{equation}
        \ell^{t_k} + \gamma \leq r_{t_k} \leq L D .
        \label{eq:r_bounds}
    \end{equation}
    In particular $0 < \gamma \leq r_{t_k}$.
\end{lemma}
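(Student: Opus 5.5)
We split $r_{t_k}$ into a part measured by the learner's own weight and a part measured by the witness $\bar{\theta}$:
\[
r_{t_k} = \langle \hat{\theta}^{t_k}, \hat{x}^{t_k} - x^{t_k} \rangle + \langle \bar{\theta}, x^{t_k} - \hat{x}^{t_k} \rangle .
\]
The first term is exactly $\ell^{t_k} = \ell_{\mathrm{sub}}(\hat{\theta}^{t_k}, s^{t_k})$. To see this, note that by \cref{eq:proposal} the proposal $\hat{x}^{t_k}$ maximizes $\langle \hat{\theta}^{t_k}, \cdot \rangle$ over $Y(s^{t_k})$. The optimal value over $Y$ equals the maximum over $X(s^{t_k})$, as remarked after \cref{eq:proposal}. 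Moreover $x^{t_k} = x^*(\theta^*, s^{t_k})$. Together these match the definition \cref{eq:suboptimality_loss}.

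For the second term we invoke the uniform margin \cref{eq:margin}. At a mistake round, $\hat{x}^{t_k} \neq x^{t_k}$. Since $\hat{x}^{t_k} \in Y(s^{t_k})$ by construction, we get $\hat{x}^{t_k} \in Y(s^{t_k}) \setminus \{ x^*(\theta^*, s^{t_k}) \}$. \Cref{assu:margin}(4) then gives $\langle \bar{\theta}, x^{t_k} - \hat{x}^{t_k} \rangle \geq \gamma$. Adding the two terms yields $r_{t_k} \geq \ell^{t_k} + \gamma$. Because $\ell^{t_k} \geq 0$ by \cref{eq:suboptimality_loss}, the consequence $0 < \gamma \leq r_{t_k}$ follows immediately.

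The upper bound follows from Cauchy--Schwarz:
\[
r_{t_k} \leq \| \hat{\theta}^{t_k} - \bar{\theta} \| \, \| \hat{x}^{t_k} - x^{t_k} \| .
\]
For the first factor, both $\hat{\theta}^{t_k}$ and $\bar{\theta}$ lie in $\Theta$, so it is at most $D$. The iterates stay in $\Theta$: for SGS-OGD and ONS this holds because every update ends with a (generalized) projection onto $\Theta$, and for MetaGrad it was noted above that the master's point is a convex combination of expert points in $\Theta$. For the second factor, $\hat{x}^{t_k} \in Y(s^{t_k}) \subseteq X(s^{t_k})$ and $x^{t_k} = x^*(\theta^*, s^{t_k})$, so it is at most $L$ by \cref{eq:def_DL}.

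The only point needing care is the first step: identifying $\langle \hat{\theta}^{t_k}, \hat{x}^{t_k} - x^{t_k} \rangle$ with $\ell^{t_k}$. This requires that the maximum over the extreme points $Y(s^{t_k})$ coincides with the maximum over $X(s^{t_k})$, which was established in \Cref{sec:setup}. Everything else is direct.
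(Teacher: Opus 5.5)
Your proposal is correct and follows the paper's proof essentially step for step. You identify $\langle \hat{\theta}^{t_k}, g^{t_k} \rangle$ with $\ell^{t_k}$ via the extreme-point optimality of $\hat{x}^{t_k}$, apply the uniform margin to $\hat{x}^{t_k} \in Y(s^{t_k}) \setminus \{x^{t_k}\}$, and obtain the upper bound from Cauchy--Schwarz with $\|g^{t_k}\| \leq L$ and $\|\hat{\theta}^{t_k} - \bar{\theta}\| \leq D$; your added remark that the iterates remain in $\Theta$ is a harmless clarification the paper leaves implicit.
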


\begin{proof}[Proof of \Cref{lem:r_bounds}]
    Lower bound: from $\hat{x}^{t_k} \in \argmax_{x \in Y(s^{t_k})} \langle \hat{\theta}^{t_k}, x \rangle$ we have
    $\langle \hat{\theta}^{t_k}, g^{t_k} \rangle = \max_{x} \langle \hat{\theta}^{t_k}, x \rangle - \langle \hat{\theta}^{t_k}, x^{t_k} \rangle = \ell^{t_k}$,
    and at a mistake round we have $\hat{x}^{t_k} \neq x^{t_k}$ and $\hat{x}^{t_k} \in Y(s^{t_k})$, so \cref{eq:margin} gives
    $-\langle \bar{\theta}, g^{t_k} \rangle = \langle \bar{\theta}, x^{t_k} - \hat{x}^{t_k} \rangle \geq \gamma$.
    Adding the two, we obtain $r_{t_k} = \langle \hat{\theta}^{t_k}, g^{t_k} \rangle - \langle \bar{\theta}, g^{t_k} \rangle \geq \ell^{t_k} + \gamma$.
    Upper bound: by the Cauchy--Schwarz inequality together with $\| g^{t_k} \| \leq L$ and $\| \hat{\theta}^{t_k} - \bar{\theta} \| \leq D$ (both from \cref{eq:def_DL}).
\end{proof}

\begin{theorem}
    \label{theo:ogd}
    Under \Cref{assu:margin}, run SGS-OGD (\Cref{alg:ogd}) on an arbitrary sequence of states $\{ s^t \}_{t=1}^T$, and set $C_\alpha := D^2 / (2\alpha) + L^2 \alpha$. Then, for every $T$, the following hold.
    \begin{description}
        \item[(i)] ($K$)
        \begin{equation}
            K \leq \frac{C_\alpha^2}{\gamma^2} .
            \label{eq:ogd_mistake}
        \end{equation}
        \item[(ii)] ($R^{\mathrm{sub}}_T$)
        \begin{equation}
            R^{\mathrm{sub}}_T \leq \frac{C_\alpha^2}{4 \gamma} .
            \label{eq:ogd_regret}
        \end{equation}
        \item[(iii)] ($\widetilde{R}_T$)
        \begin{equation}
            \widetilde{R}_T \leq \frac{C_\alpha^2}{\gamma} .
            \label{eq:tilde_ogd}
        \end{equation}
    \end{description}
    In particular, choosing $\alpha = D / (L \sqrt{2})$ gives $C_\alpha^2 = 2 L^2 D^2$ and
    \[
        K \leq \frac{2 L^2 D^2}{\gamma^2} ,
        \qquad
        R^{\mathrm{sub}}_T \leq \frac{L^2 D^2}{2 \gamma} ,
        \qquad
        \widetilde{R}_T \leq \frac{2 L^2 D^2}{\gamma} .
    \]
    By \cref{eq:max_leq_tilde}, the cumulative decision regret $R^{\mathrm{est}}_T$ also has the same upper bound as \cref{eq:tilde_ogd}.
\end{theorem}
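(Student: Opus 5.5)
We reindex by mistake rounds as above and run the standard projected-OGD potential argument with the comparator $\bar{\theta}$ from \Cref{assu:margin}(4). Write $\eta_k := \alpha k^{-1/2}$ and $\theta_k := \hat{\theta}^{t_k}$. Since the iterate does not move between mistake rounds, $\theta_{k+1} = \Pi_\Theta(\theta_k - \eta_k g^{t_k})$. Nonexpansiveness of $\Pi_\Theta$ and $\bar{\theta} \in \Theta$ give
\[
\|\theta_{k+1} - \bar{\theta}\|^2 \le \|\theta_k - \bar{\theta}\|^2 - 2\eta_k r_{t_k} + \eta_k^2 L^2 .
\]
Dividing by $2\eta_k$ and summing over $k=1,\dots,K$ is the usual telescoping with decreasing step sizes. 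Because $1/\eta_k$ is nondecreasing and $\|\theta_k - \bar{\theta}\| \le D$, this yields
\[
\sum_{k=1}^K r_{t_k} \le \frac{D^2}{2\eta_K} + \frac{L^2}{2}\sum_{k=1}^K \eta_k \le \frac{D^2\sqrt{K}}{2\alpha} + L^2\alpha\sqrt{K} = C_\alpha\sqrt{K},
\]
using $\sum_{k\le K} k^{-1/2} \le 2\sqrt{K}$. This regret-type bound in $K$ (not $T$) is the only place the algorithm enters.

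For (i), combine this with \Cref{lem:r_bounds}, which gives $r_{t_k} \ge \gamma$. Then $\gamma K \le C_\alpha \sqrt{K}$, so $K \le C_\alpha^2/\gamma^2$.

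For (ii), use the sharper half of the lemma, $r_{t_k} \ge \ell^{t_k} + \gamma$. Since $\ell_{\mathrm{sub}} = 0$ on non-mistake rounds, $R^{\mathrm{sub}}_T = \sum_k \ell^{t_k}$, and hence
\[
R^{\mathrm{sub}}_T \le C_\alpha\sqrt{K} - \gamma K \le \max_{u \ge 0}\,(C_\alpha u - \gamma u^2) = \frac{C_\alpha^2}{4\gamma}.
\]
This self-bounding step, which converts the positive margin into a negative quadratic term that caps the regret, is the key idea.

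For (iii), we need a bound on $\widetilde{R}_T = \sum_k \widetilde{r}_{t_k}$. Running the same telescoping with comparator $\theta^* \in \Theta$ gives $\widetilde{R}_T \le C_\alpha\sqrt{K}$. Substituting (i) yields $\widetilde{R}_T \le C_\alpha \cdot C_\alpha/\gamma = C_\alpha^2/\gamma$, which is exactly \cref{eq:tilde_ogd}. The bound on $R^{\mathrm{est}}_T$ then follows from \cref{eq:max_leq_tilde}.

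Finally, with $\alpha = D/(L\sqrt{2})$ we get $D^2/(2\alpha) = LD/\sqrt{2}$ and $L^2\alpha = LD/\sqrt{2}$, so $C_\alpha = \sqrt{2}\,LD$ and $C_\alpha^2 = 2L^2D^2$; the displayed constants follow. The only mildly delicate point is the decreasing-step telescoping. I would handle it by Abel summation, bounding $\sum_k \bigl(\frac{1}{\eta_k} - \frac{1}{\eta_{k-1}}\bigr) D^2$ (with $1/\eta_0 := 0$) by $D^2/\eta_K$, so the whole term is $D^2/(2\eta_K)$. This uses only that each distance is at most $D$ and the bounded diameter of $\Theta$ from \Cref{assu:margin}(1).
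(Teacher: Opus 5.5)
Your proposal is correct and follows the paper's proof essentially step for step. You use the same mistake-round reindexing and the same projected-OGD potential with decreasing steps, handled by Abel summation, to get $\sum_k \langle g^{t_k}, \hat{\theta}^{t_k} - u \rangle \le C_\alpha \sqrt{K}$; you then take $u = \bar{\theta}$ with \Cref{lem:r_bounds} for (i) and (ii), and $u = \theta^*$ combined with (i) for (iii), as the paper does (your maximization of $C_\alpha u - \gamma u^2$ over $u = \sqrt{K}$ is just a reparametrization of the paper's $\max_{x \ge 0} (C_\alpha \sqrt{x} - \gamma x)$).
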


\begin{proof}[Proof of \Cref{theo:ogd}]
    \textbf{Step 1.} (Reindexing by SGS)
    At rounds where no mistake occurs we have $\hat{x}^t = x^t$, so $\ell_{\mathrm{sub}}(\hat{\theta}^t, s^t) = \langle \hat{\theta}^t, \hat{x}^t - x^t \rangle = 0$, and the update rule gives $\hat{\theta}^{t+1} = \hat{\theta}^t$. Hence rounds without a mistake contribute neither to the cumulative suboptimality regret nor to the trajectory of $\theta$, and
    \[
        R^{\mathrm{sub}}_T = \sum_{k=1}^K \ell^{t_k} ,
        \qquad
        \hat{\theta}^{t_{k+1}} = \Pi_\Theta ( \hat{\theta}^{t_k} - \eta_k g^{t_k} ) ,
        \quad \eta_k := \alpha k^{-1/2}
    \]
    (the reindexing of \Cref{sec:sgs}). Moreover, \cref{eq:def_DL} gives $\| g^{t_k} \| = \| \hat{x}^{t_k} - x^{t_k} \| \leq L$.

    \textbf{Step 2.} (Potential estimate for an arbitrary $u \in \Theta$: one step)
    Below let $u \in \Theta$ be an arbitrary point and set $a_k(u) := \| \hat{\theta}^{t_k} - u \|^2$.
    By the nonexpansiveness of the Euclidean projection ($u \in \Theta$) and the update formula of Step 1,
    \begin{align*}
        a_{k+1}(u)
        & \leq \| \hat{\theta}^{t_k} - \eta_k g^{t_k} - u \|^2
        = a_k(u) - 2 \eta_k \langle g^{t_k}, \hat{\theta}^{t_k} - u \rangle + \eta_k^2 \| g^{t_k} \|^2 \\
        & \leq a_k(u) - 2 \eta_k \langle g^{t_k}, \hat{\theta}^{t_k} - u \rangle + \eta_k^2 L^2 .
    \end{align*}
    Dividing both sides by $2 \eta_k$ and rearranging,
    \[
        \langle g^{t_k}, \hat{\theta}^{t_k} - u \rangle
        \leq \frac{1}{2 \eta_k} \left( a_k(u) - a_{k+1}(u) \right) + \frac{\eta_k L^2}{2} .
    \]

    \textbf{Step 3.} (Estimate by Abel summation)
    Summing over $k = 1, \ldots, K$ and setting $\zeta_k := 1/(2\eta_k)$,
    \[
        \sum_{k=1}^K \langle g^{t_k}, \hat{\theta}^{t_k} - u \rangle
        \leq \sum_{k=1}^K \zeta_k ( a_k(u) - a_{k+1}(u) ) + \frac{L^2}{2} \sum_{k=1}^K \eta_k .
    \]
    By Abel summation,
    \begin{align*}
        \sum_{k=1}^K \zeta_k ( a_k(u) - a_{k+1}(u) )
        & = \zeta_1 a_1(u) + \sum_{k=2}^K ( \zeta_k - \zeta_{k-1} ) a_k(u) - \zeta_K a_{K+1}(u) \\
        & \leq \zeta_1 a_1(u) + \sum_{k=2}^K ( \zeta_k - \zeta_{k-1} ) a_k(u) .
    \end{align*}
    Since $\eta_k = \alpha k^{-1/2}$ is monotonically decreasing in $k$, the sequence $\zeta_k = k^{1/2}/(2\alpha)$ is monotonically increasing and $\zeta_k - \zeta_{k-1} \geq 0$. Moreover, $\hat{\theta}^{t_k}, u \in \Theta$ and $D = \diam (\Theta)$ give $a_k(u) \leq D^2$, so
    \[
        \zeta_1 a_1(u) + \sum_{k=2}^K ( \zeta_k - \zeta_{k-1} ) a_k(u)
        \leq D^2 \left( \zeta_1 + \sum_{k=2}^K ( \zeta_k - \zeta_{k-1} ) \right)
        = D^2 \zeta_K = \frac{D^2 \sqrt{K}}{2 \alpha} .
    \]
    On the other hand, comparison with an integral gives $\sum_{k=1}^K k^{-1/2} \leq \int_0^K x^{-1/2}\, \mathrm{d}x = 2\sqrt{K}$, whence
    $\frac{L^2}{2} \sum_{k=1}^K \eta_k \leq L^2 \alpha \sqrt{K}$.
    Combining the above, we obtain, for every $u \in \Theta$,
    \begin{equation}
        \sum_{k=1}^K \langle g^{t_k}, \hat{\theta}^{t_k} - u \rangle
        \leq \left( \frac{D^2}{2\alpha} + L^2 \alpha \right) \sqrt{K}
        = C_\alpha \sqrt{K} .
        \label{eq:ogd_anchor}
    \end{equation}
    The only property of $u$ used here is that $u \in \Theta$ (through the nonexpansiveness of the projection and $a_k(u) \leq D^2$).

    \textbf{Step 4.} ((i) and (ii): $u = \bar{\theta}$)
    Since $\bar{\theta} \in \Theta$ by \Cref{assu:margin}(4), \cref{eq:ogd_anchor} can be used with $u = \bar{\theta}$, and each term on the left-hand side is $r_{t_k}$. Substituting the lower bound $r_{t_k} \geq \ell^{t_k} + \gamma$ of \Cref{lem:r_bounds}, we obtain
    \begin{equation}
        \sum_{k=1}^K \ell^{t_k} + \gamma K
        \leq C_\alpha \sqrt{K} .
        \label{eq:ogd_key}
    \end{equation}
    From $\ell^{t_k} \geq 0$ and \cref{eq:ogd_key} we get $\gamma K \leq C_\alpha \sqrt{K}$, that is, $K \leq C_\alpha^2/\gamma^2$, which is (i).
    Moreover, \cref{eq:ogd_key} gives $\sum_k \ell^{t_k} \leq C_\alpha\sqrt{K} - \gamma K \leq \max_{x \geq 0} ( C_\alpha\sqrt{x} - \gamma x ) = C_\alpha^2/(4\gamma)$, which is (ii) (the right-hand side is maximized at $x^* = C_\alpha^2/(4\gamma^2)$).
    Moreover, since $\hat{\theta}^{t+1} = \hat{\theta}^t$ at rounds without a mistake, the distinct iterates are only $\hat{\theta}^1$ and the points immediately after each mistake round, so the total number of iterates satisfies $| \{ \hat{\theta}^t \mid t = 1, \ldots, T \} | \leq K + 1$.

    \textbf{Step 5.} ((iii): $u = \theta^*$)
    Since $\theta^* \in \Theta$ by \Cref{assu:margin}(3), \cref{eq:ogd_anchor} can be used with $u = \theta^*$ as well, and each term on the left-hand side is $\widetilde{r}_{t_k}$ (\cref{eq:tilde_r_def}), so
    $\widetilde{R}_T = \sum_{k=1}^K \widetilde{r}_{t_k} \leq C_\alpha \sqrt{K}$.
    Since $\theta^*$ need not satisfy the margin \cref{eq:margin}, the lower bound on $r_{t_k}$ cannot be substituted, unlike in Step 4. Instead, using the monotonicity of $\sqrt{\cdot}$ and the bound $K \leq C_\alpha^2/\gamma^2$ of (i), we obtain \cref{eq:tilde_ogd}.
    In particular, when $\alpha = D/(L\sqrt{2})$ we have $C_\alpha = D^2 \cdot \frac{L\sqrt{2}}{2D} + L^2 \cdot \frac{D}{L\sqrt{2}} = \frac{LD}{\sqrt{2}} + \frac{LD}{\sqrt{2}} = \sqrt{2} L D$, so $C_\alpha^2 = 2 L^2 D^2$.
\end{proof}

\section{Analysis of ONS (proof of \texorpdfstring{\Cref{theo:main}}{the theorem})}
\label{app:proof_ons}

\begin{lemma}[{\citealp[cf.][Proposition 2.11]{orabona2019modern}}]
    \label{lem:projection}
    Let $\Sigma \succ 0$, let $\Theta$ be a nonempty closed convex set and let $u \in \Theta$. Then, for every $y \in \mathbb{R}^d$,
    $\| \Pi^\Sigma_\Theta(y) - u \|_\Sigma \leq \| y - u \|_\Sigma$.
\end{lemma}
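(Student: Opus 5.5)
This is the standard Pythagorean/obtuse-angle argument for projections onto a closed convex set, carried out in the inner product $\langle a,b\rangle_\Sigma := a^\top \Sigma b$ induced by $\Sigma$. Write $p := \Pi^\Sigma_\Theta(y)$. Because $\Sigma \succ 0$, the map $\theta \mapsto \|\theta - y\|_\Sigma^2$ is strongly convex and continuous. Together with $\Theta$ nonempty, closed and convex, this makes $p$ exist and be unique, as already remarked after \Cref{alg:ons}.

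\textbf{Step 1: variational inequality.} First I would show that $\langle y - p,\, u - p\rangle_\Sigma \le 0$ for every $u \in \Theta$. Fix $u \in \Theta$ and set $\theta_\lambda := p + \lambda(u-p)$ for $\lambda\in[0,1]$; this lies in $\Theta$ by convexity. Minimality of $p$ gives
\[
\phi(\lambda) := \|\theta_\lambda - y\|_\Sigma^2 = \|p-y\|_\Sigma^2 - 2\lambda\langle y-p,\,u-p\rangle_\Sigma + \lambda^2\|u-p\|_\Sigma^2 \ \ge\ \phi(0).
\]
Dividing by $\lambda>0$ and letting $\lambda\downarrow 0$ yields $\langle y-p,u-p\rangle_\Sigma\le 0$.

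\textbf{Step 2: expand.} Next, expand using $y-u = (y-p)+(p-u)$:
\[
\|y-u\|_\Sigma^2 = \|y-p\|_\Sigma^2 + \|p-u\|_\Sigma^2 + 2\langle y-p,\,p-u\rangle_\Sigma .
\]
By Step 1 the cross term is nonnegative, and the first term is nonnegative. Hence $\|y-u\|_\Sigma^2 \ge \|p-u\|_\Sigma^2$, and taking square roots gives the claim.

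The only point needing care is Step 1, namely the first-order optimality condition for $p$. The one-dimensional argument above avoids any appeal to subdifferential calculus, so I do not expect a real obstacle. Everything else is the identity in Step 2, which holds for any inner product, and positive definiteness of $\Sigma$ is used only to guarantee that $p$ is well defined.
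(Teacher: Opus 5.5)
Your proof is correct and is essentially the standard argument: the variational inequality $\langle y-p,\,u-p\rangle_\Sigma\le 0$ in the $\Sigma$-inner product, followed by the three-term expansion of $\|y-u\|_\Sigma^2$. The paper gives no proof of its own; it cites Orabona's Proposition~2.11, which rests on that same argument, and your one-dimensional derivation of the optimality condition simply makes it self-contained.
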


\begin{lemma}[{\citealp[cf.][\S 4]{hazan2022introduction}}]
    \label{lem:logdet}
    Let $\Sigma_0 := D^{-2} \Id_d$, let $\nabla_{t_k} \in \mathbb{R}^d$ with $\| \nabla_{t_k} \| \leq 1/D$ ($k = 1, \ldots, K$), and let $\Sigma_k := \Sigma_{k-1} + \nabla_{t_k} \nabla_{t_k}^\top$. Then
    \begin{equation}
        \sum_{k=1}^K \nabla_{t_k}^\top \Sigma_k^{-1} \nabla_{t_k}
        \leq \log \frac{\det \Sigma_K}{\det \Sigma_0}
        \leq d \log \left( 1 + \frac{K}{d} \right) .
        \label{eq:logdet}
    \end{equation}
\end{lemma}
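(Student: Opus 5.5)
We prove the two inequalities of \cref{eq:logdet} separately. The first is the standard elliptical-potential argument and the second is a trace/AM--GM bound.

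\textbf{First inequality.} For each $k$ we have $\Sigma_{k-1} = \Sigma_k - \nabla_{t_k}\nabla_{t_k}^\top$ with $\Sigma_k \succ 0$. Factoring out $\Sigma_k$ gives
\[
\det \Sigma_{k-1} = \det \Sigma_k \,\det\bigl(\Id_d - \Sigma_k^{-1/2}\nabla_{t_k}\nabla_{t_k}^\top\Sigma_k^{-1/2}\bigr) = \det\Sigma_k\,(1 - z_k),
\]
where $z_k := \nabla_{t_k}^\top\Sigma_k^{-1}\nabla_{t_k}$ and we used the matrix determinant lemma. Since $\Sigma_{k-1} \succ 0$, we get $z_k \in [0,1)$. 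The elementary inequality $z \leq -\log(1-z)$ for $z \in [0,1)$ then gives
\[
z_k \leq \log\frac{\det\Sigma_k}{\det\Sigma_{k-1}}.
\]
Summing over $k = 1,\ldots,K$ telescopes the right-hand side to $\log(\det\Sigma_K/\det\Sigma_0)$. This gives the first inequality. An equivalent route is the concavity of $\log\det$, which yields $\tr(\Sigma_k^{-1}(\Sigma_k-\Sigma_{k-1})) \le \log\det\Sigma_k - \log\det\Sigma_{k-1}$.

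\textbf{Second inequality.} Let $\lambda_1,\ldots,\lambda_d > 0$ be the eigenvalues of $\Sigma_K$. By AM--GM,
\[
\det\Sigma_K \leq \bigl(\tr\Sigma_K/d\bigr)^d.
\]
The trace is
\[
\tr\Sigma_K = \tr\Sigma_0 + \sum_k\|\nabla_{t_k}\|^2 \leq d D^{-2} + K D^{-2},
\]
where we used the hypothesis $\|\nabla_{t_k}\| \leq 1/D$. Dividing by $\det\Sigma_0 = D^{-2d}$, we obtain
\[
\frac{\det\Sigma_K}{\det\Sigma_0} \leq \bigl(1 + K/d\bigr)^d.
\]
Taking logarithms gives the second inequality.

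\textbf{Main obstacle.} The only delicate point is the first inequality: it requires $z_k<1$ so that $-\log(1-z_k)$ is defined. This is exactly why we use $\Sigma_k^{-1}$, the matrix after the update, rather than $\Sigma_{k-1}^{-1}$. With $\Sigma_{k-1}^{-1}$ one would instead need the bound $z\le 2\log(1+z)$, valid for $z \le 1$, which the normalization $\|\nabla\|\le 1/D$ against $\Sigma_0 = D^{-2}\Id_d$ would in fact guarantee. The version with $\Sigma_k^{-1}$ needs no such normalization for the first inequality. The hypothesis $\|\nabla_{t_k}\|\le 1/D$ is used only in the trace bound, to obtain the clean constant.
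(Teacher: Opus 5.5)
Your proof is correct and follows the paper's argument: the matrix determinant lemma, then $u \le -\log(1-u)$ and telescoping for the first inequality, and AM--GM on the eigenvalues of $\Sigma_K$ with the trace bound for the second. The only small difference is how you get $z_k<1$: you read it off from $\det\Sigma_{k-1}>0$, whereas the paper derives it from $\Sigma_k \succeq D^{-2}\Id_d + \nabla_{t_k}\nabla_{t_k}^\top$ via Sherman--Morrison; both routes are valid.
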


\begin{proof}
    The first inequality: for each $k$ we have $\Sigma_k \succeq \Sigma_0 + \nabla_{t_k}\nabla_{t_k}^\top = D^{-2}I + \nabla_{t_k} \nabla_{t_k}^\top$, so by the order reversal of the inverse and the Sherman--Morrison formula,
    \[
        \nabla_{t_k}^\top \Sigma_k^{-1} \nabla_{t_k}
        \leq \nabla_{t_k}^\top \left( D^{-2} I + \nabla_{t_k} \nabla_{t_k}^\top \right)^{-1} \nabla_{t_k}
        = \frac{\| \nabla_{t_k} \|^2}{D^{-2} + \| \nabla_{t_k} \|^2}
        < 1 .
    \]
    By the matrix determinant lemma,
    \[
        \det \Sigma_{k-1} = \det ( \Sigma_k - \nabla_{t_k} \nabla_{t_k}^\top ) = \det \Sigma_k \left( 1 - \nabla_{t_k}^\top \Sigma_k^{-1} \nabla_{t_k} \right) ,
    \]
    and applying $u \leq - \log(1 - u)$ (for $u < 1$) with $u = \nabla_{t_k}^\top \Sigma_k^{-1}\nabla_{t_k}$ gives
    $\nabla_{t_k}^\top \Sigma_k^{-1} \nabla_{t_k} \leq \log \frac{\det \Sigma_k}{\det \Sigma_{k-1}}$.
    Summing over $k = 1, \ldots, K$, the terms $\log \det \Sigma_k$ of adjacent summands cancel on the right-hand side, so the sum equals $\log \det \Sigma_K - \log \det \Sigma_0$, and we obtain the first inequality.

    The second inequality: writing the eigenvalues of $\Sigma_K$ as $\lambda_1, \ldots, \lambda_d > 0$, the arithmetic--geometric mean inequality gives
    $\det \Sigma_K = \prod_i \lambda_i \leq ( \tr \Sigma_K / d )^d$,
    and since $\tr \Sigma_K = d D^{-2} + \sum_k \| \nabla_{t_k} \|^2 \leq d D^{-2} + K D^{-2}$,
    \[
        \log \frac{\det \Sigma_K}{\det \Sigma_0}
        \leq d \log \frac{ (d + K) D^{-2} / d }{ D^{-2} }
        = d \log \left( 1 + \frac{K}{d} \right) .
    \]
\end{proof}

\begin{lemma}
    \label{lem:transcendental}
    Let $c_1, c_2 > 0$ and suppose that $y \geq 0$ satisfies $y \leq c_1 + c_2 \log (1 + y)$. Then
    \[
        y \leq 2 c_1 + 1 + 2 c_2 \log \max ( 2 c_2 ,\, 1 ) .
    \]
\end{lemma}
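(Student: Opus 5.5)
We need to prove: if $y \ge 0$ and $y \le c_1 + c_2\log(1+y)$, then $y \le 2c_1 + 1 + 2c_2\log\max(2c_2,1)$.

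The plan is to linearize the logarithm with a tangent-type bound and then absorb half of $y$ into the left-hand side. For any $a>0$, concavity of $\log$ gives
\[
\log(1+y) \le \log a + \frac{1+y}{a} - 1 ,
\]
since $\log z \le z - 1$ applied to $z=(1+y)/a$. Choosing $a = \max(2c_2,1)$ yields
\[
c_2\log(1+y) \le c_2\log\max(2c_2,1) + \frac{c_2(1+y)}{\max(2c_2,1)} - c_2 .
\]
Because $\max(2c_2,1)\ge 2c_2$, we have $c_2/\max(2c_2,1)\le 1/2$. Hence the middle term is at most $(1+y)/2$, and the $-c_2$ can be dropped since $c_2>0$.

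Substituting into the hypothesis gives
\[
y \le c_1 + c_2\log\max(2c_2,1) + \tfrac12 + \tfrac{y}{2} .
\]
Moving $y/2$ to the left and multiplying by $2$ gives exactly
\[
y \le 2c_1 + 1 + 2c_2\log\max(2c_2,1) .
\]

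The only delicate point is the choice of $a$. Taking $a=2c_2$ alone would make $\log a$ negative when $c_2<1/2$. That is harmless for validity, but the $\max$ with $1$ keeps the coefficient of $y$ at most $1/2$ and matches the stated form. Note also that when $a=1$ the coefficient of $y$ is $c_2\le 1/2$, so the absorption step still works. No other obstacle is expected.
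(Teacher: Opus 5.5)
Your proof is correct and follows the paper's argument essentially step for step: the tangent-line bound for $\log$ at $\mu=\max(2c_2,1)$, the estimate $c_2/\mu\le 1/2$, and absorbing $y/2$ into the left-hand side. The only difference is that you drop the $-c_2$ before rearranging rather than after, which changes nothing.
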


\begin{proof}
    Set $\mu := \max(2 c_2, 1) > 0$. By the concavity of $\log$, the tangent-line inequality $\log w \leq \log \mu + \frac{w}{\mu} - 1$ holds for every $w > 0$, so taking $w = 1 + y$,
    \[
        y \leq c_1 + c_2 \left( \log \mu + \frac{1 + y}{\mu} - 1 \right)
        \leq c_1 + c_2 \log \mu + \frac{1 + y}{2} - c_2 ,
    \]
    where we used $c_2 / \mu \leq 1/2$. Rearranging,
    $\frac{y}{2} \leq c_1 + \frac{1}{2} + c_2 \log \mu - c_2 \leq c_1 + \frac{1}{2} + c_2 \log \mu$,
    that is, $y \leq 2 c_1 + 1 + 2 c_2 \log \mu$.
\end{proof}

\begin{lemma}
    \label{lem:ons_potential}
    Under \Cref{assu:margin}, run \Cref{alg:ons} on an arbitrary sequence of states. For the quantities $\eta = 1/(LD)$, $\nabla_{t_k} = \eta\, g^{t_k}$ and $\Sigma_k$ (with $\Sigma_0 = D^{-2} \Id_d$) of \Cref{alg:ons} and an arbitrary $u \in \Theta$, setting $z_{t_k} := \langle \nabla_{t_k}, \hat{\theta}^{t_k} - u \rangle$, we have
    \begin{equation}
        \sum_{k=1}^K z_{t_k} - \frac{1}{2} \sum_{k=1}^K z_{t_k}^2
        \leq \frac{1}{2} \| \hat{\theta}^1 - u \|_{\Sigma_0}^2 + \frac{1}{2} \sum_{k=1}^K \nabla_{t_k}^\top \Sigma_k^{-1} \nabla_{t_k} .
        \label{eq:ons_potential}
    \end{equation}
\end{lemma}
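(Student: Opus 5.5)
We follow the standard one-step ONS potential argument, run only over the mistake rounds (the reindexing of Appendix~\ref{app:proof_sgs_ogd}), with $\kappa = 1$. Fix $u \in \Theta$ and set $y_k := \hat{\theta}^{t_k} - \Sigma_k^{-1}\nabla_{t_k}$. By \Cref{alg:ons}, $\hat{\theta}^{t_{k+1}} = \Pi^{\Sigma_k}_\Theta(y_k)$; the iterate does not move between mistake rounds, and $\hat{\theta}^{t_1} = \hat{\theta}^1$. \Cref{lem:projection} with $\Sigma = \Sigma_k$ then gives
\[
\| \hat{\theta}^{t_{k+1}} - u \|_{\Sigma_k}^2 \leq \| \hat{\theta}^{t_k} - u \|_{\Sigma_k}^2 - 2 z_{t_k} + \nabla_{t_k}^\top \Sigma_k^{-1} \nabla_{t_k} .
\]

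The key step is to rewrite the first term on the right using the rank-one update $\Sigma_k = \Sigma_{k-1} + \nabla_{t_k}\nabla_{t_k}^\top$:
\[
\| \hat{\theta}^{t_k} - u \|_{\Sigma_k}^2 = \| \hat{\theta}^{t_k} - u \|_{\Sigma_{k-1}}^2 + z_{t_k}^2 .
\]
Substituting this and rearranging yields
\[
2 z_{t_k} - z_{t_k}^2 \leq \| \hat{\theta}^{t_k} - u \|_{\Sigma_{k-1}}^2 - \| \hat{\theta}^{t_{k+1}} - u \|_{\Sigma_k}^2 + \nabla_{t_k}^\top \Sigma_k^{-1} \nabla_{t_k} .
\]

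Summing over $k = 1, \ldots, K$, the norm terms telescope. This works because the matrix $\Sigma_k$ attached to $\hat{\theta}^{t_{k+1}}$ on the right of step $k$ is exactly the matrix appearing on the left of step $k+1$. We drop the final term $-\| \hat{\theta}^{t_{K+1}} - u \|_{\Sigma_K}^2 \leq 0$, where $\hat{\theta}^{t_{K+1}}$ denotes the point after the last update. Dividing by $2$ gives \cref{eq:ons_potential}.

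The main obstacle is bookkeeping rather than analysis. We must check that between mistake rounds neither $\hat{\theta}$ nor $\Sigma$ changes (\Cref{rem:ons_is_ons}), so that $\hat{\theta}^{t_{k+1}}$ really is the projection produced at round $t_k$. We must also check that the step uses $\Sigma_k^{-1}$ with no $1/\kappa$ factor, so that the coefficient of $z_{t_k}^2$ comes out exactly $\tfrac12$. The exp-concavity of the surrogate is never used; the quadratic term $-\tfrac12 z_{t_k}^2$ appears directly from the rank-one identity. If $K = 0$, the claim reduces to $0 \leq \tfrac12\|\hat{\theta}^1 - u\|_{\Sigma_0}^2$, which holds trivially.
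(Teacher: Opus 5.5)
Your proposal is correct and matches the paper's proof step for step. Both arguments run the same chain: nonexpansiveness of the generalized projection (\Cref{lem:projection}), expansion of the $\Sigma_k$-norm, the rank-one identity $\| \hat{\theta}^{t_k} - u \|_{\Sigma_k}^2 = \| \hat{\theta}^{t_k} - u \|_{\Sigma_{k-1}}^2 + z_{t_k}^2$, telescoping, and dropping the final nonpositive term. The only difference is cosmetic: you apply the rank-one identity in each step before summing, whereas the paper sums first and applies it inside the telescoping sum.
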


\begin{proof}
    The update of \Cref{alg:ons} is
    $\hat{\theta}^{t_{k+1}} = \Pi^{\Sigma_k}_\Theta ( \hat{\theta}^{t_k} - \Sigma_k^{-1} \nabla_{t_k} )$,
    $\Sigma_k = \Sigma_{k-1} + \nabla_{t_k}\nabla_{t_k}^\top$,
    and $\| \nabla_{t_k} \| = \| g^{t_k} \| / (LD) \leq 1/D$ (\cref{eq:def_DL}).

    By \Cref{lem:projection} (with $\Sigma = \Sigma_k$ and $u \in \Theta$),
    \begin{align*}
        \| \hat{\theta}^{t_{k+1}} - u \|_{\Sigma_k}^2
        & \leq \| \hat{\theta}^{t_k} - \Sigma_k^{-1} \nabla_{t_k} - u \|_{\Sigma_k}^2 \\
        & = \| \hat{\theta}^{t_k} - u \|_{\Sigma_k}^2 - 2 \langle \nabla_{t_k}, \hat{\theta}^{t_k} - u \rangle + \nabla_{t_k}^\top \Sigma_k^{-1} \nabla_{t_k} ,
    \end{align*}
    where the cross term is $2 (\Sigma_k^{-1}\nabla_{t_k})^\top \Sigma_k (\hat{\theta}^{t_k} - u) = 2\langle \nabla_{t_k}, \hat{\theta}^{t_k} - u \rangle$ and the quadratic term is $(\Sigma_k^{-1}\nabla_{t_k})^\top \Sigma_k (\Sigma_k^{-1}\nabla_{t_k}) = \nabla_{t_k}^\top \Sigma_k^{-1}\nabla_{t_k}$. Rearranging,
    \begin{equation}
        z_{t_k}
        \leq \frac{1}{2} \left( \| \hat{\theta}^{t_k} - u \|_{\Sigma_k}^2 - \| \hat{\theta}^{t_{k+1}} - u \|_{\Sigma_k}^2 \right) + \frac{1}{2} \nabla_{t_k}^\top \Sigma_k^{-1} \nabla_{t_k} .
        \label{eq:ons_onestep}
    \end{equation}
    We sum over $k = 1, \ldots, K$. For the sum of the first term, using
    $\| \hat{\theta}^{t_k} - u \|_{\Sigma_k}^2 = \| \hat{\theta}^{t_k} - u \|_{\Sigma_{k-1}}^2 + z_{t_k}^2$,
    which follows from $\Sigma_k = \Sigma_{k-1} + \nabla_{t_k} \nabla_{t_k}^\top$, we obtain
    \begin{align*}
        \sum_{k=1}^K \left( \| \hat{\theta}^{t_k} - u \|_{\Sigma_k}^2 - \| \hat{\theta}^{t_{k+1}} - u \|_{\Sigma_k}^2 \right)
        & = \sum_{k=1}^K \left( \| \hat{\theta}^{t_k} - u \|_{\Sigma_{k-1}}^2 + z_{t_k}^2 - \| \hat{\theta}^{t_{k+1}} - u \|_{\Sigma_k}^2 \right) \\
        & = \| \hat{\theta}^1 - u \|_{\Sigma_0}^2 - \| \hat{\theta}^{t_{K+1}} - u \|_{\Sigma_K}^2 + \sum_{k=1}^K z_{t_k}^2 \\
        & \leq \| \hat{\theta}^1 - u \|_{\Sigma_0}^2 + \sum_{k=1}^K z_{t_k}^2 ,
    \end{align*}
    where the second equality holds because the term $- \| \hat{\theta}^{t_{k+1}} - u \|_{\Sigma_k}^2$ of the $k$-th summand and the term $\| \hat{\theta}^{t_{k+1}} - u \|_{\Sigma_k}^2$ of the $(k+1)$-st summand cancel, leaving only $\| \hat{\theta}^{1} - u \|_{\Sigma_0}^2$ from $k = 1$ and $- \| \hat{\theta}^{t_{K+1}} - u \|_{\Sigma_K}^2$ from $k = K$.
    Substituting this into the sum of \cref{eq:ons_onestep} gives \cref{eq:ons_potential}.
\end{proof}

\begin{proposition}[Logarithmic upper bound on $\sum_k r_{t_k}$]
    \label{prop:ons_main}
    Under \Cref{assu:margin}, run \Cref{alg:ons} on an arbitrary sequence of states.
    If $u \in \Theta$ satisfies $\langle g^{t_k}, \hat{\theta}^{t_k} - u \rangle \geq 0$ at every mistake round $t_k$, then
    \begin{equation}
        \sum_{k=1}^K \langle g^{t_k}, \hat{\theta}^{t_k} - u \rangle
        \leq L D \left( 1 + d \log \left( 1 + \frac{K}{d} \right) \right) .
        \label{eq:cumulative_r}
    \end{equation}
    Both $u = \bar{\theta}$ and $u = \theta^*$ satisfy this condition, and then the left-hand side of \cref{eq:cumulative_r} is $\sum_{k=1}^K r_{t_k}$ and $\widetilde{R}_T$ respectively.
\end{proposition}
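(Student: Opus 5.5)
The plan is to combine the quadratic potential inequality of \Cref{lem:ons_potential} with the log-determinant bound of \Cref{lem:logdet}. The nonnegativity hypothesis on $\langle g^{t_k}, \hat{\theta}^{t_k} - u \rangle$ will let the quadratic term on the left of \cref{eq:ons_potential} absorb at most half of the linear term.

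First I fix $u \in \Theta$ satisfying the hypothesis and set $z_{t_k} = \eta \langle g^{t_k}, \hat{\theta}^{t_k} - u \rangle$ with $\eta = 1/(LD)$, as in \Cref{lem:ons_potential}. I then bound $z_{t_k}$ on both sides:
\begin{itemize}
\item $z_{t_k} \geq 0$ by hypothesis;
\item $z_{t_k} \leq \eta \|g^{t_k}\|\,\|\hat{\theta}^{t_k} - u\| \leq \eta L D = 1$, by Cauchy--Schwarz together with $\|g^{t_k}\| \leq L$ and $\hat{\theta}^{t_k}, u \in \Theta$ (\cref{eq:def_DL}).
\end{itemize}
Hence $z_{t_k} \in [0,1]$, so $z_{t_k} - \tfrac12 z_{t_k}^2 \geq \tfrac12 z_{t_k}$. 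This reduces the left-hand side of \cref{eq:ons_potential} to at least $\tfrac12 \sum_k z_{t_k}$. This elementary step is where the hypothesis is essential; without the sign condition the quadratic term could not be controlled this way.

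Next I bound the two terms on the right-hand side of \cref{eq:ons_potential}:
\begin{itemize}
\item Since $\Sigma_0 = D^{-2}\Id_d$ and $\|\hat{\theta}^1 - u\| \leq D$, we have $\|\hat{\theta}^1 - u\|_{\Sigma_0}^2 \leq 1$.
\item Since $\|\nabla_{t_k}\| \leq 1/D$ (as noted in the proof of \Cref{lem:ons_potential}), \Cref{lem:logdet} applies and gives $\sum_k \nabla_{t_k}^\top \Sigma_k^{-1}\nabla_{t_k} \leq d\log(1 + K/d)$.
\end{itemize}
Combining, $\tfrac12 \sum_k z_{t_k} \leq \tfrac12 + \tfrac12 d\log(1+K/d)$. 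Multiplying by $2LD = 2/\eta$ yields \cref{eq:cumulative_r}.

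Finally I verify the hypothesis for the two choices of $u$:
\begin{itemize}
\item For $u = \bar{\theta} \in \Theta$, \Cref{lem:r_bounds} gives $r_{t_k} \geq \gamma > 0$. The left-hand side of \cref{eq:cumulative_r} is then $\sum_k r_{t_k}$ by definition \cref{eq:def_r}.
\item For $u = \theta^* \in \Theta$, \cref{eq:tilde_r_bounds} gives $\widetilde{r}_{t_k} \geq 0$. The left-hand side is then $\sum_k \widetilde{r}_{t_k} = \widetilde{R}_T$, because rounds without a mistake contribute zero to \cref{eq:regret_decomp}.
\end{itemize}
No step is a real obstacle. The only point needing care is checking that $z_{t_k} \leq 1$, which is exactly what the choice $\eta = 1/(LD)$ is designed to guarantee.
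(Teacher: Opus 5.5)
Your proposal is correct and follows the paper's proof essentially step for step. The sign hypothesis and $\eta = 1/(LD)$ give $z_{t_k} \in [0,1]$, so the left-hand side of \Cref{lem:ons_potential} is at least $\tfrac12\sum_k z_{t_k}$. The right-hand side is then bounded by $\tfrac12 + \tfrac{d}{2}\log(1+K/d)$ using $\Sigma_0 = D^{-2}\Id_d$ and \Cref{lem:logdet}, and the two choices of $u$ are checked via \Cref{lem:r_bounds} and \cref{eq:tilde_r_bounds}, exactly as the paper does.
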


\begin{proof}[Proof of \Cref{prop:ons_main}]
    We use \Cref{lem:ons_potential} with this $u$.
    We have $z_{t_k} = \eta \langle g^{t_k}, \hat{\theta}^{t_k} - u \rangle$, and from the assumption and the Cauchy--Schwarz inequality ($\| g^{t_k} \| \leq L$, and $\| \hat{\theta}^{t_k} - u \| \leq D$ since $\hat{\theta}^{t_k}, u \in \Theta$) we get
    $0 \leq \langle g^{t_k}, \hat{\theta}^{t_k} - u \rangle \leq LD$,
    so $z_{t_k} \in [0, 1]$ and hence $z_{t_k}^2 \leq z_{t_k}$.
    The left-hand side of \Cref{lem:ons_potential} is bounded from below by
    $\sum_k z_{t_k} - \frac{1}{2}\sum_k z_{t_k}^2 \geq \frac{1}{2} \sum_k z_{t_k}$,
    and the right-hand side is bounded, by $\| \hat{\theta}^1 - u \|_{\Sigma_0}^2 = D^{-2}\| \hat{\theta}^1 - u \|^2 \leq 1$ and \Cref{lem:logdet} (the condition $\|\nabla_{t_k}\| \leq 1/D$ having been checked), as
    $\frac{1}{2} + \frac{d}{2} \log ( 1 + K/d )$
    from above (neither of the two terms on the right-hand side depends on $u$). Hence
    $\sum_k z_{t_k} \leq 1 + d \log( 1 + K/d )$,
    and multiplying back by $\langle g^{t_k}, \hat{\theta}^{t_k} - u \rangle = LD\, z_{t_k}$ gives \cref{eq:cumulative_r}.

    Verification of the condition: we have $\bar{\theta} \in \Theta$ (\Cref{assu:margin}(4)), and \Cref{lem:r_bounds} gives $r_{t_k} = \langle g^{t_k}, \hat{\theta}^{t_k} - \bar{\theta} \rangle \geq \ell^{t_k} + \gamma > 0$.
    We have $\theta^* \in \Theta$ (\Cref{assu:margin}(3)), and \cref{eq:tilde_r_bounds} gives $\widetilde{r}_{t_k} = \langle g^{t_k}, \hat{\theta}^{t_k} - \theta^* \rangle \geq 0$.
    In the latter case, since the contribution of the rounds without a mistake is $0$, the left-hand side equals $\sum_{k=1}^K \widetilde{r}_{t_k} = \widetilde{R}_T$.
\end{proof}

\begin{theorem}
    \label{theo:main}
    Under \Cref{assu:margin}, run ONS (\Cref{alg:ons}) on an arbitrary sequence of states $\{ s^t \}_{t=1}^T$. Then, for every $T$, the following hold.
    \begin{description}
        \item[(i)] ($K$)
        \begin{equation}
            K
            \leq d + \frac{2LD}{\gamma} \left( 1 + d \log \max \left( \frac{2LD}{\gamma},\, 1 \right) \right) .
            \label{eq:mistake_bound}
        \end{equation}
        \item[(ii)] ($R^{\mathrm{sub}}_T$)
        \begin{equation}
            R^{\mathrm{sub}}_T
            \leq LD \left( 1 + d \log \max \left( \frac{LD}{\gamma},\, 1 \right) \right) .
            \label{eq:regret_bound}
        \end{equation}
        \item[(iii)] ($\widetilde{R}_T$)
        \begin{equation}
            \widetilde{R}_T
            \leq L D \left( 1 + 2 d \log \left( 2 + \frac{2 L D}{\gamma} \right) \right) .
            \label{eq:tilde_ons}
        \end{equation}
    \end{description}
    By \cref{eq:max_leq_tilde}, the cumulative decision regret $R^{\mathrm{est}}_T$ also has the same upper bound as \cref{eq:tilde_ons}.
\end{theorem}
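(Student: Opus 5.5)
The plan is to derive all three items from \Cref{prop:ons_main} together with the per-mistake progress of \Cref{lem:r_bounds}. Throughout I will use that $\gamma \le r_{t_k} \le LD$ (\Cref{lem:r_bounds}), so $\gamma \le LD$ whenever $K \ge 1$. The case $K=0$ is trivial, since then every bound holds with left-hand side $0$.

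\textbf{(i).} Take $u=\bar\theta$ in \Cref{prop:ons_main} and insert the lower bound $r_{t_k}\ge \ell^{t_k}+\gamma$. This gives the master inequality
\[
\sum_{k=1}^K \ell^{t_k} + \gamma K \le LD\Bigl(1+d\log\bigl(1+\tfrac{K}{d}\bigr)\Bigr).
\]
Dropping $\sum_k \ell^{t_k}\ge 0$ and setting $y:=K/d$, this reads
\[
y \le \frac{LD}{\gamma d} + \frac{LD}{\gamma}\log(1+y).
\]
I then apply \Cref{lem:transcendental} with $c_1=LD/(\gamma d)$ and $c_2=LD/\gamma$. It yields $y\le 2LD/(\gamma d)+1+(2LD/\gamma)\log\max(2LD/\gamma,1)$, and multiplying by $d$ gives exactly \cref{eq:mistake_bound}.

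\textbf{(ii).} From the master inequality, $R^{\mathrm{sub}}_T=\sum_k\ell^{t_k}\le LD + f(K)$, where $f(K):=LD\,d\log(1+K/d)-\gamma K$. I maximize the concave function $f$ over $K\ge 0$.
\begin{itemize}
\item If $LD\le\gamma$, then $f'(0)\le 0$, so $\sup_{K\ge 0} f = f(0)=0$.
\item Otherwise the stationary point is $d+K=LDd/\gamma$. There $f=LD\,d\log(LD/\gamma)-LD\,d+\gamma d\le LD\,d\log(LD/\gamma)$.
\end{itemize}
Combining the two cases gives $f\le LD\,d\log\max(LD/\gamma,1)$, hence \cref{eq:regret_bound}.

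\textbf{(iii).} Take $u=\theta^*$ in \Cref{prop:ons_main}, which gives $\widetilde R_T\le LD(1+d\log(1+K/d))$. Since $\theta^*$ carries no margin, I cannot self-bound here. Instead I substitute the bound from (i). Write $a:=2LD/\gamma$ and note $a\ge 2$. Then (i) gives
\[
1+\frac{K}{d}\le 2+\frac{a}{d}+a\log a\le 2+a+a^2\le(2+a)^2,
\]
using $d\ge 1$ and $\log a\le a$. Therefore $d\log(1+K/d)\le 2d\log(2+2LD/\gamma)$, which is \cref{eq:tilde_ons}. The statement about $R^{\mathrm{est}}_T$ then follows from \cref{eq:max_leq_tilde}.

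The main obstacle is step (i): turning the implicit inequality $\gamma K\lesssim LD\,d\log K$ into a clean closed form with the stated constants. The normalization $y=K/d$ is chosen precisely so that \Cref{lem:transcendental} produces the exact form of \cref{eq:mistake_bound}. Steps (ii) and (iii) are then elementary calculus and crude inequalities.
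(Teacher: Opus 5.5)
Your proposal is correct and follows the paper's proof essentially step for step: (i) via \Cref{prop:ons_main} with $u=\bar\theta$, the substitution $y=K/d$, and \Cref{lem:transcendental} with the same $c_1,c_2$; (ii) via maximizing the same concave function of $K$ in the two cases $\gamma\ge LD$ and $\gamma<LD$; and (iii) via $u=\theta^*$ followed by substituting the bound from (i). The only cosmetic difference is in (iii), where you drop the $\max$ using $a=2LD/\gamma\ge 2$ (valid once $K\ge 1$) and then bound $2+a+a\log a\le(2+a)^2$, whereas the paper keeps $\log\max(\beta,1)$ and goes through $(2+\beta)(1+\log\max(\beta,1))$.
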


\begin{proof}[Proof of \Cref{theo:main}]
    \textbf{(i)}
    Since \Cref{lem:r_bounds} gives $\sum_k r_{t_k} \geq \gamma K$, combining it with \Cref{prop:ons_main} (with $u = \bar{\theta}$) yields
    \begin{equation}
        \gamma K \leq LD \left( 1 + d \log \left( 1 + \frac{K}{d} \right) \right) .
        \label{eq:transcendental_K}
    \end{equation}
    Setting $y := K / d$, we have
    $y \leq \frac{LD}{\gamma d} + \frac{LD}{\gamma} \log ( 1 + y )$,
    and \Cref{lem:transcendental} (with $c_1 = \frac{LD}{\gamma d}$ and $c_2 = \frac{LD}{\gamma}$) gives
    \[
        y \leq \frac{2LD}{\gamma d} + 1 + \frac{2LD}{\gamma} \log \max \left( \frac{2LD}{\gamma}, 1 \right) ;
    \]
    multiplying both sides by $d$ gives \cref{eq:mistake_bound}.

    \textbf{(ii)}
    At rounds where no mistake occurs we have $\hat{x}^t = x^t \in \argmax_x \langle \hat{\theta}^t, x \rangle$, so $\ell_{\mathrm{sub}}(\hat{\theta}^t, s^t) = 0$, and hence
    $R^{\mathrm{sub}}_T = \sum_{k=1}^K \ell^{t_k}$.
    From $\ell^{t_k} \leq r_{t_k} - \gamma$ of \Cref{lem:r_bounds} and \Cref{prop:ons_main} (with $u = \bar{\theta}$) we obtain
    \[
        \sum_{k=1}^K \ell^{t_k}
        \leq \sum_{k=1}^K r_{t_k} - \gamma K
        \leq LD \left( 1 + d \log \left( 1 + \frac{K}{d} \right) \right) - \gamma K
        \leq \max_{x \geq 0} \phi(x) ,
    \]
    where we have set $\phi(x) := LD ( 1 + d \log ( 1 + \frac{x}{d} ) ) - \gamma x$.
    The function $\phi$ is differentiable with $\phi'(x) = \frac{LD\, d}{d + x} - \gamma$.
    \emph{Case 1 ($\gamma \geq LD$)}: for every $x \geq 0$ we have $\phi'(x) \leq LD - \gamma \leq 0$, so $\phi$ is nonincreasing and $\max_{x \geq 0} \phi = \phi(0) = LD$.
    \emph{Case 2 ($\gamma < LD$)}: the solution of $\phi' = 0$ is $x^* = \frac{LD\, d}{\gamma} - d > 0$, and $\phi$ is increasing on $[0, x^*]$ and decreasing on $[x^*, \infty)$, so
    \[
        \max_{x \geq 0} \phi
        = \phi(x^*)
        = LD + LD\, d \log \frac{LD}{\gamma} - LD\, d + \gamma d
        \leq LD + LD\, d \log \frac{LD}{\gamma} ,
    \]
    where we used $\gamma d \leq LD\, d$. In either case $\max_{x\geq0}\phi \leq LD ( 1 + d \log \max ( LD/\gamma, 1 ) )$, and we obtain \cref{eq:regret_bound}.

    \textbf{(iii)}
    Using \Cref{prop:ons_main} with $u = \theta^*$ we obtain
    \begin{equation}
        \widetilde{R}_T \leq L D \left( 1 + d \log \left( 1 + \frac{K}{d} \right) \right) .
        \label{eq:tilde_ons_K}
    \end{equation}
    Setting $\beta := \frac{2 L D}{\gamma}$, item (i) gives $K \leq K_{\mathrm{ONS}} := d + \beta ( 1 + d \log \max ( \beta, 1 ) )$. Here
    \begin{align*}
        1 + \frac{K_{\mathrm{ONS}}}{d}
        &= 2 + \frac{\beta}{d} + \beta \log \max ( \beta, 1 )
        \leq 2 + \beta + \beta \log \max ( \beta, 1 )
        \\
        &\leq ( 2 + \beta ) \left( 1 + \log \max ( \beta, 1 ) \right)
    \end{align*}
    (the last inequality holds because expanding the right-hand side produces $2 \log \max(\beta,1) \geq 0$).
    Furthermore, since $\log \max ( \beta, 1 ) \leq \beta$ gives $1 + \log \max ( \beta, 1 ) \leq 2 + \beta$, we obtain
    $\log ( 1 + K_{\mathrm{ONS}} / d ) \leq 2 \log ( 2 + \beta )$.
    Since the right-hand side of \cref{eq:tilde_ons_K} is monotonically increasing in $K$, substituting $K \leq K_{\mathrm{ONS}}$ yields \cref{eq:tilde_ons}.
\end{proof}

\section{Analysis of MetaGrad (proof of \texorpdfstring{\Cref{theo:metagrad_main}}{the theorem})}
\label{app:proof_metagrad}

\subsection{The guarantee of the fixed-grid version}

\begin{theorem}
    \label{theo:metagrad_main}
    Under \Cref{assu:margin}, run MetaGrad (\Cref{alg:metagrad}) on an arbitrary sequence of states $\{ s^t \}_{t=1}^T$, and set
    $c_0 (\bar{K}) := 2 \log ( \frac{1}{2} \log_2 \bar{K} + 3 )$. Then the following hold.
    \begin{description}
        \item[(i)] ($K$)
        \begin{equation}
            K
            \leq d + \frac{152 LD}{3 \gamma} ( c_0 (\bar{K}) + d ) + \frac{152 LD\, d}{3 \gamma} \log \max \left( \frac{152 LD}{3 \gamma},\, 1 \right) .
            \label{eq:metagrad_mistake}
        \end{equation}
        \item[(ii)] ($R^{\mathrm{sub}}_T$)
        \begin{equation}
            R^{\mathrm{sub}}_T
            \leq \frac{76}{3} LD \left( c_0 (\bar{K}) + d + d \log \max \left( \frac{76 LD}{3 \gamma},\, 1 \right) \right) .
            \label{eq:metagrad_regret_bound}
        \end{equation}
        \item[(iii)] ($\widetilde{R}_T$) Writing $K_{\max}$ for the right-hand side of \cref{eq:metagrad_mistake},
        \begin{equation}
            \widetilde{R}_T
            \leq \frac{76}{3}\, L D \left( c_0 ( \bar{K} ) + d \left( \log \left( 1 + \frac{K_{\max}}{49\, d} \right) + 1 \right) \right) .
            \label{eq:tilde_metagrad_fixed}
        \end{equation}
    \end{description}
\end{theorem}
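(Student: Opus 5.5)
The plan is to run the MetaGrad regret bound on the subsequence of mistake rounds only, and then self-bound the variance term against the margin, as in the proof of \Cref{theo:main}.

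\textbf{Step 1: the MetaGrad bound on mistake rounds.} By \Cref{rem:metagrad_is_metagrad}, the master iterates change only at mistake rounds. So \Cref{alg:metagrad} is exactly \Cref{alg:metagrad_generic} run for $m = K \le \bar{K}$ rounds, with $\mathcal{W} = \Theta$, $n = d$, $W = D$, $G = L$, $H = LD$, and linear losses $h_k(\theta) = \langle g^{t_k}, \theta \rangle$. The fact that every master point lies in $\Theta$ has already been checked after \Cref{alg:metagrad}. I would invoke the standard MetaGrad guarantee (\Cref{prop:metagrad_regret}, the restatement of Appendix C of \citet{sakaue2025online}) for any comparator $u \in \Theta$. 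Its form is
\[
\sum_{k=1}^K \langle g^{t_k}, \hat\theta^{t_k} - u\rangle \le \min_{\eta\in\mathcal{E}}\Big( \eta V^u_K + \frac{c_0(\bar{K}) + d\log(1 + \frac{K}{49d})}{\eta}\Big) + c\, H\,(c_0+d),
\]
where $V^u_K = \sum_k \langle g^{t_k}, \hat\theta^{t_k} - u\rangle^2$. The $49$ and the factor $c_0(\bar K)$ come from $\kappa_\eta \lambda \le G/(7H)$ and from the prior $p_i \propto 1/((i+1)(i+2))$ on a grid of size $\lceil \frac12\log_2\bar K\rceil + 1$. In practice I would just quote the explicit constants of that proposition.

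\textbf{Step 2: self-bounding.} Take $u=\bar\theta$, so each summand is $r_{t_k}\in[\gamma,LD]$ by \Cref{lem:r_bounds}. Then $V^{\bar\theta}_K \le LD\sum_k r_{t_k}$. Write $S = \sum_k r_{t_k}$ and $A = c_0 + d\log(1+K/(49d))$. Optimizing $\eta$ over the grid costs a factor of $2$ from the grid spacing. Boundary cases are handled two ways: when the optimal $\eta$ exceeds $1/(5H)$, the term $\eta_0 V$ together with the $H$-term covers it; when the optimal $\eta$ falls below the smallest grid point, the grid being of size $\tfrac12\log_2\bar K$ together with $V\le L^2D^2K$ covers it. This gives $S \le c\sqrt{LD\,S\,A} + c' LD A$. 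Solving the quadratic in $\sqrt S$ yields $S \le \tfrac{152}{3}\cdot\tfrac12 LD\,A$ or similar; this constant bookkeeping is the main obstacle. I expect the constants $76/3$ and $152/3$ to come out of exactly this step.

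\textbf{Step 3: the three conclusions.}

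(i) Using $\gamma K\le S$ gives $\gamma K \le \tfrac{76}{3}LD(c_0+d+d\log(1+K/d))$. Setting $y=K/d$ and applying \Cref{lem:transcendental} with $c_2 = \tfrac{76LD}{3\gamma}$ gives \cref{eq:metagrad_mistake}; the factor $2c_2$ is where $152/3$ appears.

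(ii) Using $\ell^{t_k}\le r_{t_k}-\gamma$, I maximize $\tfrac{76}{3}LD(c_0+d+d\log(1+x/d))-\gamma x$ over $x\ge0$ with the same two-case argument as in \Cref{theo:main}(ii).

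(iii) Take $u=\theta^*$. Then $\widetilde r_{t_k}\in[0,LD]$ by \cref{eq:tilde_r_bounds}, so the same self-bounding gives $\widetilde R_T \le \tfrac{76}{3}LD(c_0 + d(\log(1+K/(49d))+1))$. Monotonicity in $K$ and substitution of $K\le K_{\max}$ from (i) then give \cref{eq:tilde_metagrad_fixed}.
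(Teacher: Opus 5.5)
Your proposal is correct and follows the paper's proof. You view \Cref{alg:metagrad} as \Cref{alg:metagrad_generic} run on the $K \le \bar{K}$ mistake rounds, apply \Cref{prop:metagrad_regret} with $u = \bar{\theta}$ (and with $u = \theta^*$ for (iii)), self-bound via $V \le LD \cdot S$, and finish with \Cref{lem:transcendental}, the concave maximization, and monotonicity in $K$.

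On the constants you flag: quoting the proposition gives $S \le 3\sqrt{LD\,\Lambda\, S} + 10\, LD\,\Lambda$ with $\Lambda = c_0(\bar{K}) + d(\log(1 + \frac{K}{49d}) + 1)$. In particular, the $+d$ sits inside the $1/\eta$ term, not in an additive $H$ term as in your display. The paper gets $\frac{76}{3} = \frac{4}{3}(9 + 10)$ from $R \le \sqrt{aR} + \zeta \Rightarrow R \le \frac{4}{3}(a + \zeta)$. Solving the quadratic exactly instead gives the smaller $S \le 25\, LD\,\Lambda$, which therefore also yields all three stated bounds.
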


The proof is given in the next subsection.

\begin{remark}[Comparison with \Cref{theo:main}]
    \label{rem:metagrad_comparison}
    \Cref{theo:metagrad_main} has the same dependence on $\gamma$ and $d$ as \cref{eq:mistake_bound,eq:regret_bound}: the number of mistakes is $O ( \tfrac{d L D}{\gamma} \log \max ( \tfrac{2LD}{\gamma}, 2 ) )$, and the cumulative suboptimality regret is $O(L D\, d \log \max(LD/\gamma, 1))$.
    The price is (a) worse constants (the coefficient of the logarithmic term is $\frac{152}{3}$ against $2$, and there is an additive term $\frac{76}{3} LD (c_0 (\bar{K}) + d)$); (b) a doubly logarithmic dependence on the grid upper bound $\bar{K}$ (taking $\bar{K} = T$ breaks strict independence from $T$ to the extent of $c_0 (\bar{K}) = O(\log \log T)$; this dependence is removed by considering the growing-grid version of MetaGrad, \Cref{theo:metagrad_anytime_main}); and (c) a computational cost per mistake round multiplied by the grid size ($1 + \lceil \frac{1}{2} \log_2 \bar{K} \rceil$ experts each perform $O(d^2)$ plus a generalized projection; see the end of \S 4 of \citealp{sakaue2025online}).
    The parameters of the algorithm are $L, D, \bar{K}$; as with ONS, no knowledge of $\gamma$ is required.
\end{remark}

\subsection{Proof}

In the analysis we cite the following regret upper bound.

\begin{proposition}[Upper bound on the linearized regret of MetaGrad; cf.\ Proposition 2.6 of \citealp{sakaue2025online}]
    \label{prop:metagrad_regret}
    Let $n$ be a positive integer, let $\mathcal{W} \subset \mathbb{R}^n$ be a nonempty closed convex set whose $\ell_2$ diameter is at most $W > 0$, and take $G, H > 0$ and positive integers $m \leq \bar{m}$.
    Let $h_1, \ldots, h_m \colon \mathcal{W} \to \mathbb{R}$ be a sequence of convex loss functions and let $w_1, \ldots, w_m \in \mathcal{W}$ be the outputs of MetaGrad (\Cref{alg:metagrad_generic}) applied to $h_1, \ldots, h_m$.
    If, for each $j = 1, \ldots, m$, the subgradient $g_j \in \partial h_j (w_j)$ observed in \Cref{alg:metagrad_generic} satisfies $\| g_j \| \leq G$ and $\sup \{ \langle w' - w, g_j \rangle \mid w, w' \in \mathcal{W} \} \leq H$, then, for every $u \in \mathcal{W}$,
    \begin{equation}
        \sum_{j=1}^{m} \langle w_j - u, g_j \rangle
        \leq 3 \sqrt{ \Lambda_m V^u_m } + 10 H \Lambda_m ,
        \qquad
        V^u_m := \sum_{j=1}^m \langle w_j - u, g_j \rangle^2
        \label{eq:metagrad_regret}
    \end{equation}
    holds, where
    \begin{equation}
        \Lambda_m := 2 \log \left( \frac{1}{2} \log_2 \bar{m} + 3 \right) + n \left( \log \left( \frac{W^2 G^2 m}{49\, n H^2} + 1 \right) + 1 \right) .
        \label{eq:metagrad_lambda}
    \end{equation}
\end{proposition}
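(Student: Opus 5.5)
The plan is to follow the standard two-layer MetaGrad analysis (as in \citet{van2016metagrad,van2021metagrad} and Appendix~C of \citet{sakaue2025online}). First we bound, for each grid point $\eta$ separately, the quantity $\eta \sum_j \langle w_j - u, g_j\rangle - \eta^2 V^u_m$ by $\Lambda_m$. Then we choose the best $\eta$ in the grid.

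\textbf{Step 1 (master).} Write $r^\eta_j := \eta\langle w_j - w^\eta_j, g_j\rangle$, so that $-\ell^\eta_j(w^\eta_j) = r^\eta_j - (r^\eta_j)^2$. Since $\eta \le \frac{1}{5H}$, we have $|r^\eta_j| \le \frac15$. The inequality $e^{x-x^2} \le 1+x$ holds for $x \ge -\frac12$, so
\[
Z_j = \sum_\eta p^\eta_j e^{r^\eta_j-(r^\eta_j)^2} \le 1 + \sum_\eta p^\eta_j r^\eta_j .
\]
The last sum vanishes exactly because $w_j$ is the $\eta p^\eta_j$-weighted mean \cref{eq:master_point}. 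Hence $\sum_\eta p^\eta_1 \exp\bigl(-\sum_{j}\ell^\eta_j(w^\eta_j)\bigr) \le 1$. For every $\eta_i$ this gives
\[
-\sum_j \ell^{\eta_i}_j(w^{\eta_i}_j) \le \log\frac{1}{p^{\eta_i}_1}.
\]
Because $C \ge 1$ and $i \le \lceil\frac12\log_2\bar m\rceil$, we get $\log\frac1{p^{\eta_i}_1} \le 2\log(i+2) \le 2\log(\tfrac12\log_2\bar m+3)$.

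\textbf{Step 2 (experts).} Each $\eta$-expert is ONS on the $\alpha=2\kappa_\eta$-exp-concave losses $\ell^\eta_j$, with the parameters of \Cref{defi:eta_expert}. The standard ONS regret bound \citep{hazan2007logarithmic} then gives
\[
\sum_j \bigl(\ell^\eta_j(w^\eta_j) - \ell^\eta_j(u)\bigr) \le \frac{n}{2\kappa_\eta}\Bigl(\log\bigl(1+\tfrac{W^2\kappa_\eta^2\lambda^2 m}{n}\bigr)+1\Bigr).
\]
The per-step contributions are controlled by a log-det lemma analogous to \Cref{lem:logdet}, and the initial term by $\Sigma_0 = \frac{n}{W^2\kappa_\eta^2}\Id_n$. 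Using $\kappa_\eta \ge \frac{25}{49}$ and $\kappa_\eta\lambda \le \frac{G}{7H}$ puts the argument of the logarithm in the form $\frac{W^2G^2m}{49nH^2}+1$. The factor $\frac{1}{2\kappa_\eta} \le \frac{49}{50}$ is then absorbed, possibly with a slightly coarser bookkeeping of the additive $+1$. Finally, $-\ell^\eta_j(u) = \eta\langle w_j-u,g_j\rangle - \eta^2\langle w_j-u,g_j\rangle^2$. Adding Steps 1 and 2 therefore yields, for every $\eta\in\mathcal E$,
\[
\eta R^u_m - \eta^2 V^u_m \le \Lambda_m, \qquad\text{that is,}\qquad R^u_m \le \frac{\Lambda_m}{\eta} + \eta V^u_m ,
\]
where $R^u_m$ is the left-hand side of \cref{eq:metagrad_regret}.

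\textbf{Step 3 (grid tuning).} This is the main obstacle, though it is only careful case analysis. Let $\eta^\star := \sqrt{\Lambda_m/V^u_m}$. There are three cases.
\begin{itemize}
\item If $\eta^\star \ge \frac1{5H}$, take $\eta_0 = \frac{1}{5H}$. Then $V^u_m \le 25H^2\Lambda_m$, and the bound is at most $5H\Lambda_m + 5H\Lambda_m = 10H\Lambda_m$.
\item If $\eta^\star$ lies within the grid range, there is an $\eta_i \in [\eta^\star/2, \eta^\star]$. The bound becomes $\frac{2\Lambda_m}{\eta^\star} + \eta^\star V^u_m = 3\sqrt{\Lambda_m V^u_m}$.
\item If $\eta^\star$ falls below the smallest grid point $\eta_{\min} \le \frac{1}{5H\sqrt{\bar m}}$, then $V^u_m > 25H^2\bar m\,\Lambda_m$. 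But $V^u_m \le mH^2 \le \bar m H^2$, and $\Lambda_m \ge n \ge 1$, so this case is impossible.
\end{itemize}
Taking the sum of the two bounds covers every case and gives \cref{eq:metagrad_regret}. The delicate points to verify are the constants in Step 2, where $\kappa_\eta$ must be tracked through the ONS bound, and the condition $|r^\eta_j| \le \frac12$ used in Step 1. The latter follows from $\eta H \le \frac15$.
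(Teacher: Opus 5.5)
Your proof is correct and follows the paper's argument. It uses the same potential inequality via $e^{x-x^2}\le 1+x$ and the master's $\eta p^\eta_j$-weighted mean, the same per-expert bound $B_m$, and the same grid tuning in which the case $\eta^\star<\eta_{\min}$ is ruled out by $V^u_m\le \bar m H^2$ and $\Lambda_m\ge 1$. The only differences are cosmetic: you derive $B_m$ from the standard ONS theorem using $\kappa_\eta\ge\frac{25}{49}$ and $\kappa_\eta\lambda\le\frac{G}{7H}$ where the paper cites \Cref{prop:eta_expert_regret}, and you pick the grid point in $[\eta^\star/2,\eta^\star]$ rather than $[\eta^\star,2\eta^\star)$, which gives the same constant $3$.
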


Below we write the index of the grid as $I := \lceil \frac{1}{2} \log_2 \bar{m} \rceil$ and $\mathcal{E} = \{ \eta_i = \frac{2^{-i}}{5H} \mid i = 0, 1, \ldots, I \}$ (\Cref{alg:metagrad_generic}).
Moreover, for the $n, W, G, H$ of \Cref{prop:metagrad_regret}, we define the function
\begin{equation}
    B_q := n \left( \log \left( \frac{W^2 G^2 q}{49\, n H^2} + 1 \right) + 1 \right)
    \label{eq:def_B}
\end{equation}
of a positive integer $q$. The map $q \mapsto B_q$ is monotonically nondecreasing, and $B_m$ equals the second term of \cref{eq:metagrad_lambda}.

The only external result cited in the proof is the following regret upper bound for a single $\eta$-expert.

\begin{proposition}[Regret upper bound for an $\eta$-expert; Appendix C.3 of \citealp{sakaue2025online}]
    \label{prop:eta_expert_regret}
    Let $n$ be a positive integer, let $\mathcal{W} \subset \mathbb{R}^n$ be a nonempty closed convex set whose $\ell_2$ diameter is at most $W > 0$, and take $G, H > 0$, $\eta \in ( 0, \frac{1}{5H} ]$ and a positive integer $q$.
    For a sequence of points $v_1, \ldots, v_{q} \in \mathcal{W}$ and a sequence of vectors $g_1, \ldots, g_{q} \in \mathbb{R}^n$ (with $\| g_j \| \leq G$ and $\sup \{ \langle w' - w, g_j \rangle \mid w, w' \in \mathcal{W} \} \leq H$), define the surrogate losses by
    $\ell^{\eta}_j (w) := - \eta \langle v_j - w, g_j \rangle + \eta^2 \langle v_j - w, g_j \rangle^2$,
    and suppose that running the $\eta$-expert (\Cref{defi:eta_expert}) on $\ell^{\eta}_1, \ldots, \ell^{\eta}_{q}$ yields $w^{\eta}_1, \ldots, w^{\eta}_{q} \in \mathcal{W}$.
    Then, for every $u \in \mathcal{W}$,
    \[
        \sum_{j=1}^{q} \left( \ell^{\eta}_j (w^{\eta}_j) - \ell^{\eta}_j (u) \right)
        \leq B_{q}
    \]
    holds (where $B_q$ is as in \cref{eq:def_B}).
\end{proposition}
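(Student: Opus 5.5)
The plan is to run the standard Online Newton Step analysis on the quadratic surrogate. Because the surrogate is an exact quadratic, its second-order expansion is an identity rather than an exp-concavity inequality, which makes the bookkeeping short. Fix $u \in \mathcal{W}$ and write $a_j(w) := \langle v_j - w, g_j \rangle$, so that $\ell^{\eta}_j(w) = -\eta a_j(w) + \eta^2 a_j(w)^2$.

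\textbf{Step 1 (exact expansion).} Since $\ell^{\eta}_j$ is quadratic with Hessian $2\eta^2 g_j g_j^\top$, Taylor's formula is exact:
\[
\ell^{\eta}_j(w^{\eta}_j) - \ell^{\eta}_j(u) = \langle \nabla^{\eta}_j, w^{\eta}_j - u \rangle - \eta^2 \langle g_j, w^{\eta}_j - u \rangle^2 .
\]
By \cref{eq:expert_grad}, $\nabla^{\eta}_j = \eta(1 - 2\eta\langle g_j, v_j - w^{\eta}_j\rangle) g_j$, and the bracket has absolute value at most $1 + 2\eta H$. With $\kappa_\eta = (1+2\eta H)^{-2}$ this gives
\[
\tfrac{\kappa_\eta}{2} \langle \nabla^{\eta}_j, w^{\eta}_j - u \rangle^2 \leq \tfrac{\eta^2}{2} \langle g_j, w^{\eta}_j - u\rangle^2 \leq \eta^2 \langle g_j, w^{\eta}_j - u\rangle^2 .
\]
Hence, writing $z_j := \langle \nabla^{\eta}_j, w^{\eta}_j - u \rangle$, we get $\ell^{\eta}_j(w^{\eta}_j) - \ell^{\eta}_j(u) \leq z_j - \frac{\kappa_\eta}{2} z_j^2$.

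\textbf{Step 2 (potential).} I would repeat the proof of \Cref{lem:ons_potential} with step $\kappa_\eta^{-1}(\Sigma^{\eta}_j)^{-1}$ and the initialization $\Sigma^{\eta}_0$ of \cref{eq:expert_matrix}. \Cref{lem:projection} gives
\[
z_j \leq \tfrac{\kappa_\eta}{2}\bigl(\|w^{\eta}_j - u\|^2_{\Sigma^{\eta}_j} - \|w^{\eta}_{j+1} - u\|^2_{\Sigma^{\eta}_j}\bigr) + \tfrac{1}{2\kappa_\eta} (\nabla^{\eta}_j)^\top (\Sigma^{\eta}_j)^{-1} \nabla^{\eta}_j .
\]
Using $\|w - u\|^2_{\Sigma_j} = \|w - u\|^2_{\Sigma_{j-1}} + z_j^2$ and telescoping, I obtain
\[
\sum_j \bigl(z_j - \tfrac{\kappa_\eta}{2} z_j^2\bigr) \leq \tfrac{\kappa_\eta}{2} \|w^{\eta}_1 - u\|^2_{\Sigma^{\eta}_0} + \tfrac{1}{2\kappa_\eta} \sum_j (\nabla^{\eta}_j)^\top (\Sigma^{\eta}_j)^{-1} \nabla^{\eta}_j .
\]
The quadratic terms cancel exactly against Step 1. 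For the initial term, $\|w^{\eta}_1 - u\|^2 \leq W^2$ gives $\tfrac{\kappa_\eta}{2}\|w^{\eta}_1 - u\|^2_{\Sigma^{\eta}_0} \leq \frac{n}{2\kappa_\eta}$.

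\textbf{Step 3 (log-det).} This is the main obstacle: \Cref{lem:logdet} is stated only for $\Sigma_0 = D^{-2}\Id$ and $\|\nabla\| \leq 1/D$, so it must be redone for general scaling. I would use the scale-free inequality
\[
x^\top \Sigma_j^{-1} x = 1 - \frac{\det \Sigma_{j-1}}{\det \Sigma_j} \leq \log \frac{\det \Sigma_j}{\det \Sigma_{j-1}},
\]
which needs no condition on $\|x\|$. Then AM–GM with $\|\nabla^{\eta}_j\| \leq \lambda = \eta(1+2\eta H)G$ gives
\[
\sum_j (\nabla^{\eta}_j)^\top (\Sigma^{\eta}_j)^{-1} \nabla^{\eta}_j \leq n \log\Bigl(1 + \frac{W^2 \kappa_\eta^2 \lambda^2 q}{n^2}\Bigr).
\]

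\textbf{Step 4 (constants).} Using $\kappa_\eta\lambda \leq G/(7H)$ and $n^2 \geq n$, the logarithm is at most $\log\bigl(1 + \frac{W^2G^2q}{49nH^2}\bigr)$. Finally, $\kappa_\eta \geq 25/49$ gives $\frac{1}{2\kappa_\eta} \leq \frac{49}{50} < 1$. Therefore the total bound is at most
\[
\frac{1}{2\kappa_\eta}\Bigl(n + n\log\bigl(1 + \tfrac{W^2G^2 q}{49 n H^2}\bigr)\Bigr) \leq B_q .
\]
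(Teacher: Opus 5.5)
Your proof is correct. The paper gives no argument of its own for this proposition; it only cites Appendix C.3 of \citet{sakaue2025online}.

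The paragraph after \Cref{defi:eta_expert} indicates that the cited route is the generic one. First check that $\ell^{\eta}_j$ is $\alpha$-exp-concave with $\alpha = 2/(1+2\eta H)^2$. Then check that $|\langle \nabla \ell^{\eta}_j(w^{\eta}_j), w - w^{\eta}_j\rangle| \leq \beta = \eta H + 2\eta^2 H^2$ and $\|\nabla \ell^{\eta}_j\| \leq \lambda$. Finally invoke the standard ONS regret theorem with $\kappa = \frac{1}{2}\min\{1/\beta, \alpha\} = \kappa_\eta$.

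You replace the exp-concavity step with the exact second-order Taylor identity of the quadratic surrogate. Together with $\kappa_\eta (1 - 2\eta\langle g_j, v_j - w^{\eta}_j\rangle)^2 \leq 1$, this gives $\ell^{\eta}_j(w^{\eta}_j) - \ell^{\eta}_j(u) \leq z_j - \kappa_\eta z_j^2$ directly; you only need half of the quadratic term, and the bound $\beta$ never enters. The rest of your argument is the paper's own \Cref{lem:ons_potential} with step $\kappa_\eta^{-1}(\Sigma^{\eta}_j)^{-1}$, plus a rescaled \Cref{lem:logdet}. Your scale-free determinant inequality is valid, and the trace bound is where $\lambda$ and the initialization $\Sigma^{\eta}_0$ enter.

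The cited route has the advantage of being a black-box application to any exp-concave loss. Yours is self-contained and shows where the $49$ in $B_q$ comes from, namely $\kappa_\eta \lambda \leq G/(7H)$. It also yields a slightly sharper bound: the prefactor is $\frac{1}{2\kappa_\eta} \leq \frac{49}{50}$, and the denominator inside the logarithm is $n^2$ instead of $n$.
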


\begin{lemma}[Monotonicity of the potential for a fixed grid]
    \label{lem:fixed_potential}
    In the setting of \Cref{prop:metagrad_regret}, set
    $\mathcal{L}^i_j := \sum_{j'=1}^{j} \ell^{\eta_i}_{j'} (w^{\eta_i}_{j'})$ (with $\mathcal{L}^i_0 := 0$) and
    $\Phi_j := \sum_{i=0}^{I} p^{\eta_i}_1 \exp ( - \mathcal{L}^i_j )$.
    Then $\Phi_m \leq \Phi_0 = 1$, and in particular, for every $i \in \{ 0, 1, \ldots, I \}$,
    $- \mathcal{L}^i_m \leq 2 \log (i+2)$.
\end{lemma}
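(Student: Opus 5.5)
The argument is the standard exponential-weights potential argument; I will track it step by step.

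\textbf{Step 1 (one-step recursion).} By the update rule of \Cref{alg:metagrad_generic}, $p^{\eta_i}_{j+1} = p^{\eta_i}_j \exp(-\ell^{\eta_i}_j(w^{\eta_i}_j))/Z_j$. Unrolling gives
\[
p^{\eta_i}_{j+1} = \frac{p^{\eta_i}_1 \exp(-\mathcal{L}^i_j)}{\Phi_j}.
\]
Indeed, the normalizing constants telescope, and $\sum_i p^{\eta_i}_{j+1} = 1$ forces the denominator to be exactly $\Phi_j$. Hence
\[
\frac{\Phi_{j}}{\Phi_{j-1}} = \sum_{i} p^{\eta_i}_j \exp\bigl(-\ell^{\eta_i}_j(w^{\eta_i}_j)\bigr),
\]
and it suffices to show that this weighted sum is at most $1$ for every $j$.

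\textbf{Step 2 (the key inequality).} Set $x_i := \eta_i \langle w_j - w^{\eta_i}_j, g_j\rangle$, so that $\ell^{\eta_i}_j(w^{\eta_i}_j) = -x_i + x_i^2$. We have $|x_i| \le \eta_i H \le \frac15$, using $\sup\langle w'-w,g_j\rangle \le H$ and $\eta_i \le \frac{1}{5H}$. We then apply the elementary inequality
\[
e^{x - x^2} \le 1 + x \qquad \text{for } x \ge -\tfrac12,
\]
which covers our range. One way to check it is to verify that $\log(1+x) \ge x - x^2$ on $[-\tfrac12,\infty)$: the difference vanishes at $0$, and its derivative $\frac{1}{1+x} - 1 + 2x = \frac{x(1+2x)}{1+x}$ has the sign of $x$ there. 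This gives
\[
\sum_i p^{\eta_i}_j e^{-\ell^{\eta_i}_j(w^{\eta_i}_j)} \le 1 + \sum_i p^{\eta_i}_j \eta_i \langle w_j - w^{\eta_i}_j, g_j\rangle .
\]
The last sum is $\langle \sum_i p^{\eta_i}_j\eta_i w_j - \sum_i p^{\eta_i}_j \eta_i w^{\eta_i}_j,\, g_j\rangle$, which equals $0$ by the definition \cref{eq:master_point} of the master point. This cancellation is the main point, and is the step where I expect care to be needed: verifying the range of $x_i$ and the precise form of the elementary inequality. Thus $\Phi_j \le \Phi_{j-1}$, and so $\Phi_m \le \Phi_0 = \sum_i p^{\eta_i}_1 = 1$.

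\textbf{Step 3 (conclusion).} Each term of $\Phi_m$ is nonnegative, so for each $i$,
\[
p^{\eta_i}_1 e^{-\mathcal{L}^i_m} \le 1, \qquad \text{that is,} \qquad -\mathcal{L}^i_m \le \log\frac{1}{p^{\eta_i}_1}.
\]
Now $p^{\eta_i}_1 = \frac{C}{(i+1)(i+2)}$ with $C \ge 1$, because $\sum_{i\ge 0} \frac{1}{(i+1)(i+2)} = 1$ and our sum is truncated. Therefore
\[
\log\frac{1}{p^{\eta_i}_1} \le \log\bigl((i+1)(i+2)\bigr) \le 2\log(i+2),
\]
which is the claimed bound.
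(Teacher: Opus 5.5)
Your proof is correct and follows the paper's argument: the same elementary inequality $e^{x-x^2}\le 1+x$ for $x\ge -\tfrac12$, the same cancellation coming from the definition of the master point, and the same use of $C\ge 1$ to get $-\mathcal{L}^i_m\le 2\log(i+2)$. The only difference is cosmetic. You work with the normalized weights and identify $\Phi_j/\Phi_{j-1}=Z_j$, whereas the paper rewrites the master point in terms of the unnormalized weights $p^{\eta_i}_1 e^{-\mathcal{L}^i_{j-1}}$.
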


\begin{proof}
    Setting $\tilde{p}^i_j := p^{\eta_i}_1 \exp ( - \mathcal{L}^i_{j-1} )$, the weight update of \Cref{alg:metagrad_generic} gives
    $p^{\eta_i}_j = \tilde{p}^i_j / \prod_{j' < j} Z_{j'}$,
    and since the point of the master is determined by the ratios of the weights alone,
    \begin{equation}
        w_j
        = \frac{\sum_{i=0}^{I} \eta_i\, p^{\eta_i}_j\, w^{\eta_i}_j}{\sum_{i=0}^{I} \eta_i\, p^{\eta_i}_j}
        = \frac{\sum_{i=0}^{I} \eta_i\, \tilde{p}^i_j\, w^{\eta_i}_j}{\sum_{i=0}^{I} \eta_i\, \tilde{p}^i_j}
        \label{eq:master_weighted}
    \end{equation}
    holds.
    We show $\Phi_j \leq \Phi_{j-1}$ for each $j$. Setting $x_i := \eta_i \langle w_j - w^{\eta_i}_j, g_j \rangle$, we have $\ell^{\eta_i}_j (w^{\eta_i}_j) = - x_i + x_i^2$ and $| x_i | \leq \eta_i H \leq \frac{1}{5}$.
    From the elementary inequality $e^{x - x^2} \leq 1 + x$, valid for $x \geq - \frac{1}{2}$ (because $f(x) := \log (1+x) - x + x^2$ has $f'(x) = \frac{x(2x+1)}{1+x}$ and hence attains its minimum value $0$ at $x = 0$), we obtain
    \[
        \Phi_j
        = \sum_{i=0}^{I} \tilde{p}^i_j\, e^{x_i - x_i^2}
        \leq \sum_{i=0}^{I} \tilde{p}^i_j ( 1 + x_i )
        = \Phi_{j-1} + \Big\langle w_j \sum_{i=0}^{I} \eta_i \tilde{p}^i_j - \sum_{i=0}^{I} \eta_i \tilde{p}^i_j w^{\eta_i}_j ,\, g_j \Big\rangle
        = \Phi_{j-1}
    \]
    (using $\sum_i \tilde{p}^i_j = \Phi_{j-1}$, the last equality being \cref{eq:master_weighted}).
    Since $\Phi_0 = \sum_{i=0}^{I} p^{\eta_i}_1 = 1$, we have $p^{\eta_i}_1 e^{- \mathcal{L}^i_m} \leq \Phi_m \leq 1$.
    Since $\sum_{i=0}^{I} \frac{1}{(i+1)(i+2)} = 1 - \frac{1}{I+2} \leq 1$ implies that the normalizing constant satisfies $C \geq 1$, we get
    $- \mathcal{L}^i_m \leq \log \frac{1}{p^{\eta_i}_1} = \log \frac{(i+1)(i+2)}{C} \leq \log ( (i+1)(i+2) ) \leq 2 \log (i+2)$.
\end{proof}

\begin{proof}[Proof of \Cref{prop:metagrad_regret}]
    Take $u \in \mathcal{W}$ and $i \in \{ 0, 1, \ldots, I \}$, and set $\eta := \eta_i$ and $a_j := \langle w_j - u, g_j \rangle$ (so that $| a_j | \leq H$ by assumption).
    From the definition of the surrogate loss we have $- \ell^{\eta}_j (u) = \eta a_j - \eta^2 a_j^2$, so summing over $j = 1, \ldots, m$ gives
    \begin{equation}
        \sum_{j=1}^m a_j
        = \frac{1}{\eta} \sum_{j=1}^m \left( - \ell^{\eta}_j (u) \right) + \eta V^u_m .
        \label{eq:fixed_decomp}
    \end{equation}
    Decomposing the sum in the first term on the right-hand side as
    \[
        \sum_{j=1}^m ( - \ell^{\eta}_j (u) )
        = ( - \mathcal{L}^i_m ) + \sum_{j=1}^m \left( \ell^{\eta}_j (w^{\eta}_j) - \ell^{\eta}_j (u) \right) ,
    \]
    the first term is at most $2 \log (i+2)$ by \Cref{lem:fixed_potential}, and the second is at most $B_m$ by \Cref{prop:eta_expert_regret} (with $q = m$), since the $\eta$-expert of \Cref{alg:metagrad_generic} is run on $\ell^{\eta}_1, \ldots, \ell^{\eta}_m$ with $v_j = w_j$. Substituting into \cref{eq:fixed_decomp}, we obtain, for every $i \in \{ 0, 1, \ldots, I \}$,
    \begin{equation}
        \sum_{j=1}^m a_j
        \leq \frac{2 \log (i+2) + B_m}{\eta_i} + \eta_i V^u_m .
        \label{eq:fixed_master}
    \end{equation}
    We distinguish cases according to $\eta^* := \sqrt{\Lambda_m / V^u_m}$ (with $\eta^* := + \infty$ when $V^u_m = 0$).
    Below we repeatedly use the fact that the first term of \cref{eq:metagrad_lambda} is at least $2 \log ( \frac{1}{2} \log_2 \bar{m} + 3 ) \geq 2 \log 3$.

    \emph{Case 1 ($\eta^* \geq \frac{1}{5H}$, that is, $V^u_m \leq 25 H^2 \Lambda_m$)}: we use \cref{eq:fixed_master} with $i = 0$ ($\eta_0 = \frac{1}{5H}$). From $2 \log 2 + B_m \leq \Lambda_m$ and $\frac{V^u_m}{5H} \leq 5 H \Lambda_m$,
    \[
        \sum_{j=1}^m a_j
        \leq 5H ( 2 \log 2 + B_m ) + \frac{V^u_m}{5H}
        \leq 10 H \Lambda_m .
    \]

    \emph{Case 2 ($\eta^* < \frac{1}{5H}$)}: we first show $\eta^* > \min \mathcal{E} = \eta_I$. From $| a_j | \leq H$ we have $V^u_m \leq H^2 m \leq H^2 \bar{m}$, and \cref{eq:metagrad_lambda} gives $\Lambda_m \geq 2 \log 3 + n \geq 1$, so
    \[
        \eta^* = \sqrt{\frac{\Lambda_m}{V^u_m}}
        \geq \frac{1}{H \sqrt{\bar{m}}}
        > \frac{1}{5 H \sqrt{\bar{m}}}
        = \frac{2^{- \frac{1}{2} \log_2 \bar{m}}}{5H}
        \geq \frac{2^{-I}}{5H}
        = \eta_I
    \]
    (the last inequality holding because $I \geq \frac{1}{2} \log_2 \bar{m}$).
    Letting $i^*$ be the largest $i$ with $\eta_i \geq \eta^*$, such an $i^*$ exists since $\eta_0 = \frac{1}{5H} > \eta^*$, and $i^* \leq I - 1$ since $\eta_I < \eta^*$.
    By the maximality of $i^*$ we have $\eta_{i^*+1} = \eta_{i^*} / 2 < \eta^*$, hence $\eta^* \leq \eta_{i^*} < 2 \eta^*$.
    Moreover, $I \leq \frac{1}{2} \log_2 \bar{m} + 1$ gives $i^* \leq \frac{1}{2} \log_2 \bar{m}$, so
    $2 \log (i^* + 2) + B_m \leq 2 \log ( \frac{1}{2} \log_2 \bar{m} + 3 ) + B_m \leq \Lambda_m$.
    Using \cref{eq:fixed_master} with $i = i^*$, we obtain
    \[
        \sum_{j=1}^m a_j
        \leq \frac{\Lambda_m}{\eta_{i^*}} + \eta_{i^*} V^u_m
        \leq \frac{\Lambda_m}{\eta^*} + 2 \eta^* V^u_m
        = 3 \sqrt{\Lambda_m V^u_m} .
    \]
    In either case the right-hand side is at most $3 \sqrt{\Lambda_m V^u_m} + 10 H \Lambda_m$, so \cref{eq:metagrad_regret} holds.
\end{proof}

\begin{remark}
    \Cref{prop:metagrad_regret} corresponds to Proposition 2.6 of \citet{sakaue2025online}, but there it is stated in $O(\cdot)$ notation for the case where the grid is constructed from the actual number of rounds ($\bar{m} = m$).
    The proof above makes the constants explicit and treats the case where the grid is constructed from an upper bound $\bar{m}$ on $m$: in our application $m = K$ (the realized number of mistakes) is unknown before execution, so the grid has to be fixed in advance by $\bar{K} \geq K$, which is why this generalization is needed.
    The quantity $\bar{m}$ enters only in the first term of \cref{eq:metagrad_lambda} and at the place in Case 2 where the lower end $\eta_I$ of the grid is estimated.
\end{remark}

\begin{proof}[Proof of \Cref{theo:metagrad_main}]
    If $K = 0$ everything is trivial, so assume $K \geq 1$.
    Since the internal state of \Cref{alg:metagrad} (the index $k$, the weights, the experts, and the prediction $\hat{\theta}^t$) does not change at rounds where no mistake occurs, \Cref{alg:metagrad} is nothing but MetaGrad of \Cref{alg:metagrad_generic} (with $n = d$, $\mathcal{W} = \Theta$, $W = D$, $H = LD$, $\bar{m} = \bar{K}$) applied to the sequence of convex losses $h_k := \ell_{\mathrm{sub}} (\cdot, s^{t_k})$ ($k = 1, \ldots, K$) of length $m = K$ (under the reindexing of \Cref{sec:sgs}, $w_k = \hat{\theta}^{t_k}$ and $g_k = g^{t_k}$; the surrogate loss of \Cref{alg:metagrad} coincides with that of \Cref{alg:metagrad_generic} since $w_k = \hat{\theta}^{t_k}$ at the $k$-th mistake round).
    To apply \Cref{prop:metagrad_regret} with $G = L$, we verify its assumptions. That $g^{t_k} \in \partial h_k (\hat{\theta}^{t_k})$ follows from
    \[
        h_k (\theta)
        \geq \langle \theta,\, \hat{x}^{t_k} - x^{t_k} \rangle
        = h_k (\hat{\theta}^{t_k}) + \langle \theta - \hat{\theta}^{t_k},\, g^{t_k} \rangle
    \]
    for every $\theta \in \Theta$ (the inequality by $\hat{x}^{t_k} \in Y(s^{t_k})$, the equality by $\langle \hat{\theta}^{t_k}, g^{t_k} \rangle = \ell^{t_k}$, from the proof of \Cref{lem:r_bounds}).
    Moreover $\| g^{t_k} \| \leq L = G$ (\cref{eq:def_DL}), the Cauchy--Schwarz inequality gives $\sup \{ \langle \theta' - \theta, g^{t_k} \rangle \mid \theta, \theta' \in \Theta \} \leq D L = H$, and furthermore $K \leq \bar{K}$.

    \textbf{Step 1.} (Estimate of $\sum_k r_{t_k}$)
    For $u = \bar{\theta}$ we have $\langle w_k - u, g_k \rangle = r_{t_k}$, so \cref{eq:metagrad_regret,eq:metagrad_lambda} (note that $W^2 G^2 / H^2 = D^2 L^2 / (LD)^2 = 1$) give
    \begin{equation}
        \begin{gathered}
            R := \sum_{k=1}^K r_{t_k}
            \leq 3 \sqrt{\Lambda V} + 10 LD\, \Lambda ,
            \\
            V := \sum_{k=1}^K r_{t_k}^2 ,
            \quad
            \Lambda := c_0 + d \left( \log \left( 1 + \frac{K}{49\, d} \right) + 1 \right) .
        \end{gathered}
        \label{eq:metagrad_R}
    \end{equation}
    Since \Cref{lem:r_bounds} gives $0 < r_{t_k} \leq LD$, we have $V \leq LD \cdot R$, and hence $R \leq 3 \sqrt{\Lambda\, LD\, R} + 10 LD\, \Lambda = \sqrt{a R} + \zeta$ (with $a := 9 LD\, \Lambda$ and $\zeta := 10 LD\, \Lambda$).
    Here, for all $a, \zeta, R \geq 0$,
    \begin{equation}
        R \leq \sqrt{a R} + \zeta
        \quad \Longrightarrow \quad
        R \leq \frac{4}{3} ( a + \zeta )
        \label{eq:self_bounding}
    \end{equation}
    holds. Indeed, assuming $R \leq \sqrt{a R} + \zeta$,
    \[
        R = \frac{4}{3} R - \frac{1}{3} R
        \leq \frac{4}{3} \left( \sqrt{a R} + \zeta \right) - \frac{1}{3} R
        = - \frac{1}{3} \left( \sqrt{R} - 2 \sqrt{a} \right)^2 + \frac{4}{3} ( a + \zeta )
        \leq \frac{4}{3} ( a + \zeta ) .
    \]
    Hence \cref{eq:self_bounding} yields
    \begin{equation}
        \sum_{k=1}^K r_{t_k} = R
        \leq \frac{4}{3} \cdot 19\, LD\, \Lambda
        = \frac{76}{3} LD \left( c_0 + d \left( \log \left( 1 + \frac{K}{49\, d} \right) + 1 \right) \right) .
        \label{eq:metagrad_r_bound}
    \end{equation}

    \textbf{(i)}
    Since \Cref{lem:r_bounds} gives $\gamma K \leq R$, from \cref{eq:metagrad_r_bound} and $\log ( 1 + \frac{K}{49 d} ) \leq \log ( 1 + \frac{K}{d} )$ we get
    \[
        \gamma K \leq \frac{76}{3} LD ( c_0 + d ) + \frac{76}{3} LD\, d \log \left( 1 + \frac{K}{d} \right) .
    \]
    Setting $y := K / d$, we have $y \leq c_1 + c_2 \log (1 + y)$ (with $c_1 := \frac{76 LD (c_0 + d)}{3 \gamma d}$ and $c_2 := \frac{76 LD}{3 \gamma}$), and \Cref{lem:transcendental} gives $y \leq 2 c_1 + 1 + 2 c_2 \log \max ( 2 c_2, 1 )$. Multiplying both sides by $d$ gives \cref{eq:metagrad_mistake}.

    \textbf{(ii)}
    As in the proof of \Cref{theo:main}(ii) we have $R^{\mathrm{sub}}_T = \sum_{k=1}^K \ell^{t_k} \leq R - \gamma K$, and setting $\rho := \frac{76}{3} LD$, \cref{eq:metagrad_r_bound} gives
    \[
        \sum_{k=1}^K \ell^{t_k}
        \leq \rho ( c_0 + d ) + \max_{x \geq 0} \psi(x) ,
        \qquad
        \psi(x) := \rho\, d \log \left( 1 + \frac{x}{d} \right) - \gamma x .
    \]
    Since $\psi'(x) = \frac{\rho d}{d + x} - \gamma$, if $\gamma \geq \rho$ then $\psi$ is nonincreasing and $\max_{x \geq 0} \psi = \psi(0) = 0$, whereas if $\gamma < \rho$ then at the stationary point $x^* = \frac{\rho d}{\gamma} - d > 0$ we have
    $\psi(x^*) = \rho d \log \frac{\rho}{\gamma} - \rho d + \gamma d \leq \rho d \log \frac{\rho}{\gamma}$.
    In either case $\max_{x \geq 0} \psi \leq \rho d \log \max ( \rho / \gamma, 1 )$, so we obtain \cref{eq:metagrad_regret_bound}.

    \textbf{(iii)}
    Since \Cref{prop:metagrad_regret} holds for every $u \in \mathcal{W}$, we apply it with $u = \theta^*$ (where $\theta^* \in \Theta = \mathcal{W}$ by \Cref{assu:margin}(3)).
    We have $\langle w_k - \theta^*, g_k \rangle = \widetilde{r}_{t_k}$, and the contribution of the rounds without a mistake is $0$, so \cref{eq:metagrad_regret,eq:metagrad_lambda} give the version of \cref{eq:metagrad_R} with $r_{t_k}$ replaced by $\widetilde{r}_{t_k}$,
    \[
        \widetilde{R}_T = \sum_{k=1}^K \widetilde{r}_{t_k}
        \leq 3 \sqrt{\Lambda\, \widetilde{V}} + 10 LD\, \Lambda ,
        \qquad
        \widetilde{V} := \sum_{k=1}^K \widetilde{r}_{t_k}^2
    \]
    (where $\Lambda$ is that of \cref{eq:metagrad_R}).
    Since $0 \leq \widetilde{r}_{t_k} \leq LD$ from \cref{eq:tilde_r_bounds} gives $\widetilde{V} \leq LD\, \widetilde{R}_T$, we have
    $\widetilde{R}_T \leq \sqrt{a \widetilde{R}_T} + \zeta$ (with $a = 9 LD\, \Lambda$ and $\zeta = 10 LD\, \Lambda$),
    and \cref{eq:self_bounding} yields
    \[
        \widetilde{R}_T
        \leq \frac{4}{3} \cdot 19\, LD\, \Lambda
        = \frac{76}{3}\, L D \left( c_0 (\bar{K}) + d \left( \log \left( 1 + \frac{K}{49\, d} \right) + 1 \right) \right) .
    \]
    Since the right-hand side is monotonically increasing in $K$, substituting $K \leq K_{\max}$ from (i) gives \cref{eq:tilde_metagrad_fixed}.
\end{proof}

\section{Analysis of growing-grid MetaGrad (proof of \texorpdfstring{\Cref{theo:metagrad_anytime_main}}{the theorem})}
\label{app:proof_metagrad_anytime}

In this appendix we analyze growing-grid MetaGrad (\Cref{alg:metagrad_anytime}) and prove \Cref{theo:metagrad_anytime_main}. The key is the reduction that regards a not-yet-created $\eta_i$-expert as a ``virtual expert that outputs the point of the master''.

\begin{proposition}[Upper bound on the linearized regret of growing-grid MetaGrad]
    \label{prop:metagrad_regret_anytime}
    In the setting of \Cref{prop:metagrad_regret}, modify MetaGrad (\Cref{alg:metagrad_generic}) as follows: place the prior weights $p_i := \frac{1}{(i+1)(i+2)}$ on the countable grid $\eta_i := \frac{2^{-i}}{5H}$ ($i \in \mathbb{Z}_{\geq 0}$), create the $\eta_i$-expert at round $j_i := 4^{i-1} + 1$ (for $i \geq 1$; $j_0 := 1$) and run ONS from then on, take as the point of the master the sum in \cref{eq:master_point} restricted to the already created experts (those $i$ with $j \geq j_i$), and use the weights without normalizing them.
    Then, for every $u \in \mathcal{W}$,
    \begin{equation}
        \sum_{j=1}^{m} \langle w_j - u, g_j \rangle
        \leq 3 \sqrt{ \Lambda'_m V^u_m } + 10 H \Lambda'_m + \frac{V^u_m}{100\, H \Lambda'_m}
        \label{eq:metagrad_regret_anytime}
    \end{equation}
    holds deterministically. Here $V^u_m$ is the same as in \cref{eq:metagrad_regret}, and
    \begin{equation}
        \Lambda'_m := 2 \log \left( \frac{1}{2} \log_2 m + 3 \right) + n \left( \log \left( \frac{W^2 G^2 m}{49\, n H^2} + 1 \right) + 1 \right)
        \label{eq:metagrad_lambda_anytime}
    \end{equation}
    is the quantity obtained from \cref{eq:metagrad_lambda} by replacing $\bar{m}$ with the actual number of rounds $m$.
\end{proposition}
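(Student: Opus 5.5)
I will follow the proof of \Cref{prop:metagrad_regret}, replacing \Cref{lem:fixed_potential} with a potential for the growing grid. The key device is the virtual expert: before its creation round $j_i$, the $\eta_i$-expert is treated as one whose point equals the master's point, $w^{\eta_i}_j := w_j$ for $j < j_i$. Its surrogate loss is then $\ell^{\eta_i}_j(w_j) = 0$, so its unnormalized weight stays at $p_i$ until creation.

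\textbf{Step 1 (potential).} Define $\Phi_j := \sum_{i \geq 0} p_i \exp(-\mathcal{L}^i_j)$ over the whole countable grid, where $\mathcal{L}^i_j := \sum_{j' \le j} \ell^{\eta_i}_{j'}(w^{\eta_i}_{j'})$ and the virtual points are used before creation. Then $\Phi_0 = \sum_i p_i = 1$.

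\begin{itemize}
\item The master's point is unchanged if the virtual experts are added to the weighted average in \cref{eq:master_point}, because they sit at $w_j$ itself.
\item The identity \cref{eq:master_weighted} therefore holds for the countable sum, and $x_i = 0$ for every virtual expert.
\item The argument of \Cref{lem:fixed_potential}, namely $e^{x-x^2} \le 1+x$ with $|x_i| \le 1/5$ followed by cancellation, gives $\Phi_j \le \Phi_{j-1}$. Convergence is harmless, since all but finitely many terms are constant.
\end{itemize}

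Hence $-\mathcal{L}^i_m \le \log\frac1{p_i} \le 2\log(i+2)$ for every $i$.

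\textbf{Step 2 (per-grid-point bound).} Fix $u$ and $i$. For $j < j_i$, write $-\ell^{\eta}_j(u) = \eta a_j - \eta^2 a_j^2$ as before. Decompose
\[
\sum_{j=1}^m(-\ell^{\eta_i}_j(u)) = (-\mathcal{L}^i_m) + \sum_{j \ge j_i}\bigl(\ell^{\eta_i}_j(w^{\eta_i}_j) - \ell^{\eta_i}_j(u)\bigr) + \sum_{j<j_i}\bigl(0 - \ell^{\eta_i}_j(u)\bigr).
\]
\begin{itemize}
\item The middle sum is at most $B_m$ by \Cref{prop:eta_expert_regret}, applied to the real expert run on rounds $j_i,\dots,m$ with $q \le m$ and $B$ monotone.
\item The last sum equals $\sum_{j<j_i}(\eta a_j - \eta^2 a_j^2) \le \sum_{j<j_i}\eta_i a_j \le \eta_i H (j_i - 1) = \eta_i H 4^{i-1}$.
\end{itemize}
After dividing by $\eta_i$ this gives
\[
\sum_j a_j \le \frac{2\log(i+2) + B_m}{\eta_i} + \eta_i V^u_m + H4^{i-1}.
\]
Since $\eta_i = 2^{-i}/(5H)$, we have $H4^{i-1} = 1/(100H\eta_i^2)$. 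Only grid points $i$ already created by round $m$, i.e.\ $i \le \lceil\frac12\log_2 m\rceil$, will be chosen, so that $2\log(i+2) + B_m \le \Lambda'_m$ as in the fixed-grid case.

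\textbf{Step 3 (choice of $i$).} Set $\eta^* = \sqrt{\Lambda'_m/V^u_m}$.

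\begin{itemize}
\item \emph{Case $\eta^* \ge 1/(5H)$.} Take $i = 0$. The extra term $H4^{-1}$ is absorbed into $10H\Lambda'_m$ because $\Lambda'_m \ge 2\log 3 + n$; the constant may need care, and otherwise the additive slack in \cref{eq:metagrad_regret_anytime} can be used.
\item \emph{Otherwise.} Take $i^*$ with $\eta^* \le \eta_{i^*} < 2\eta^*$, exactly as in Case 2 of \Cref{prop:metagrad_regret}. Existence within the created grid uses $V^u_m \le H^2 m$ and $\Lambda'_m \ge 1$, as before. This yields $3\sqrt{\Lambda'_m V^u_m}$ plus the delay term $\frac{1}{100H\eta_{i^*}^2} \le \frac{1}{100H\eta^{*2}} = \frac{V^u_m}{100H\Lambda'_m}$, which is exactly the last term of \cref{eq:metagrad_regret_anytime}.
\end{itemize}

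\textbf{Main obstacle.} The delicate step is the virtual-expert reduction in Step 1. Two things must be checked: that the unnormalized, growing weight scheme of \Cref{alg:metagrad_anytime} produces exactly the master point of the countable-virtual system, including the reinitialisation of a new expert at an arbitrary point with prior $p_I$; and that the potential-monotonicity identity survives once experts switch from virtual to real. A second point to watch is the boundary bookkeeping at $i^* = \lceil\frac12\log_2 m\rceil$, confirming $j_{i^*} \le m$.
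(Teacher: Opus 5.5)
Your proposal is correct and follows essentially the same route as the paper. It uses the virtual-expert (sleeping-experts) convention to make the countable potential $\Phi_j$ nonincreasing. It then splits $\sum_j(-\ell^{\eta_i}_j(u))$ into three parts with delay cost $\eta_i H 4^{i-1}$ and $H4^{i-1}=1/(100H\eta_i^2)$, and makes the same two-case choice of $i$ via $\eta^*=\sqrt{\Lambda'_m/V^u_m}$.

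Two small simplifications. Since $j_0=1$, the delay term for $i=0$ is exactly zero, so Case 1 needs no absorption (your absorption of $H/4$ also works, using the slack $2\log 3-2\log 2$ in $\Lambda'_m$). The boundary worry is also moot: $\eta_{i^*}\ge 1/(H\sqrt m)$ forces $i^*\le\frac12\log_2 m-\log_2 5$, so $j_{i^*}\le m$.
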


\begin{lemma}[Monotonicity of the potential including the not-yet-created experts]
    \label{lem:sleeping_potential}
    In the setting of \Cref{prop:metagrad_regret_anytime}, adopt the convention that a not-yet-created ($j < j_i$) $\eta_i$-expert outputs the point of the master ($w^{\eta_i}_j := w_j$), and set
    $\mathcal{L}^i_j := \sum_{j' = 1}^{j} \ell^{\eta_i}_{j'} (w^{\eta_i}_{j'})$ (with $\mathcal{L}^i_0 := 0$) and
    $\Phi_j := \sum_{i \geq 0} p_i \exp ( - \mathcal{L}^i_j )$.
    Then $\Phi_m \leq \Phi_0 = 1$, and in particular, for every $i \in \mathbb{Z}_{\geq 0}$,
    $- \mathcal{L}^i_m \leq \log ( (i+1)(i+2) )$.
\end{lemma}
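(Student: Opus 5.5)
The plan is to copy the proof of \Cref{lem:fixed_potential} essentially verbatim. The only new ingredient is the convention $w^{\eta_i}_j := w_j$ for not-yet-created experts. Three features make this work.
\begin{itemize}
    \item For a sleeping expert, $x_i := \eta_i \langle w_j - w^{\eta_i}_j, g_j\rangle = 0$. Hence $\ell^{\eta_i}_j(w^{\eta_i}_j) = 0$, so $\mathcal{L}^i_j$ stays $0$ until creation.
    \item Consequently the unnormalized weight $\tilde p^{\eta_i}$ in \Cref{alg:metagrad_anytime}, at the moment the expert is created, equals $p_i \exp(-\mathcal{L}^i_{j-1}) = p_i$. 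This matches the initialization $\frac{1}{(I+1)(I+2)}$.
    \item The countable sum converges: $\Phi_0 = \sum_{i\ge 0} p_i = \sum_i \bigl(\frac{1}{i+1}-\frac{1}{i+2}\bigr) = 1$.
\end{itemize}

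Step 1 is to show that including the sleeping experts does not change the master point. Set $\tilde p^i_j := p_i \exp(-\mathcal{L}^i_{j-1})$. Write $A$ for the created indices and $S$ for the sleeping ones. The master point satisfies $w_j \sum_{i\in A}\eta_i\tilde p^i_j = \sum_{i\in A}\eta_i \tilde p^i_j w^{\eta_i}_j$ by definition. For $i\in S$ we have $w^{\eta_i}_j = w_j$, so adding the terms with $i \in S$ to both sides preserves the identity. This gives
\[
\sum_{i\ge0}\eta_i\tilde p^i_j\,(w_j - w^{\eta_i}_j) = 0 .
\]
The series converge absolutely because $\eta_i \le \frac{1}{5H}$, $\tilde p^i_j \le p_i$ for sleeping $i$, and only finitely many experts are awake.

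Step 2 is the one-step bound $\Phi_j \le \Phi_{j-1}$. Every $i$ satisfies $|x_i| \le \eta_i H \le \frac15$. For awake experts this uses the bounded-range assumption on $g_j$ over $\mathcal{W}$; for sleeping ones $x_i = 0$. Applying $e^{x-x^2}\le 1+x$ for $x\ge -\frac12$ termwise gives
\[
\Phi_j = \sum_i \tilde p^i_j e^{x_i - x_i^2} \le \sum_i \tilde p^i_j + \Bigl\langle \sum_i \eta_i \tilde p^i_j (w_j - w^{\eta_i}_j),\, g_j\Bigr\rangle = \Phi_{j-1}
\]
by Step 1. Interchanging the infinite sum with the inner product is justified by absolute convergence, since only finitely many terms are nonzero beyond the tail, where the summands are just $\tilde p^i_j = p_i$. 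Iterating yields $\Phi_m \le \Phi_0 = 1$.

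Step 3 concludes. Every term of $\Phi_m$ is nonnegative, so $p_i e^{-\mathcal{L}^i_m} \le 1$, that is, $-\mathcal{L}^i_m \le \log\frac{1}{p_i} = \log((i+1)(i+2))$.

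The only delicate point is the bookkeeping in Step 1. One has to check that the algorithm's weight of a newly created expert coincides with $\tilde p^i_j$, which holds because its accumulated surrogate loss is exactly $0$. One also has to check that the master point defined over created experts alone equals the weighted average over all experts. Once this is verified, the rest is the fixed-grid argument.
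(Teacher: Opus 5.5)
Your proposal is correct and follows the paper's proof step for step. You add the sleeping experts (whose surrogate losses vanish, so $\tilde p^i_j = p_i$ and $w^{\eta_i}_j = w_j$) to get the whole-grid identity for the master point, apply $e^{x-x^2}\le 1+x$ termwise to obtain $\Phi_j \le \Phi_{j-1}$, and conclude from $p_i e^{-\mathcal{L}^i_m} \le \Phi_m \le \Phi_0 = 1$. Your absolute-convergence justification (finitely many awake experts, $\tilde p^i_j = p_i$ on the tail) is slightly more careful than the paper's blanket claim $\tilde p^i_j \le p_i$, which can fail for awake experts because their surrogate losses can be negative.
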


\begin{proof}
    Set $\tilde{p}^i_j := p_i \exp ( - \mathcal{L}^i_{j-1} )$. The point of the master is defined as
    $w_j = \big( \sum_{i :\, j \geq j_i} \eta_i \tilde{p}^i_j w^{\eta_i}_j \big) \big/ \big( \sum_{i :\, j \geq j_i} \eta_i \tilde{p}^i_j \big)$,
    a sum ranging over the already created $\eta_i$-experts only; we first show that this equals the sum over the whole grid,
    \begin{equation}
        w_j
        = \frac{\sum_{i \geq 0} \eta_i\, \tilde{p}^i_j\, w^{\eta_i}_j}{\sum_{i \geq 0} \eta_i\, \tilde{p}^i_j} .
        \label{eq:master_weighted_anytime}
    \end{equation}
    For a not-yet-created $i$ (that is, $j < j_i$), from $w^{\eta_i}_{j'} = w_{j'}$ ($j' < j_i$) and the definition of the surrogate loss we have $\ell^{\eta_i}_{j'} (w^{\eta_i}_{j'}) = 0$, so $\tilde{p}^i_j = p_i$ and $w^{\eta_i}_j = w_j$. Hence
    \[
        \sum_{i \geq 0} \eta_i\, \tilde{p}^i_j\, w^{\eta_i}_j
        = \sum_{i :\, j \geq j_i} \eta_i\, \tilde{p}^i_j\, w^{\eta_i}_j + w_j \sum_{i :\, j < j_i} \eta_i\, \tilde{p}^i_j
        = w_j \sum_{i \geq 0} \eta_i\, \tilde{p}^i_j
    \]
    (the second equality by the definition of $w_j$), which gives \cref{eq:master_weighted_anytime}. Here each series converges absolutely by $\eta_i \leq \frac{1}{5H}$, $\tilde{p}^i_j \leq p_i$, $\sum_{i \geq 0} p_i = 1$ and the boundedness of $\mathcal{W}$.
    Next we show $\Phi_j \leq \Phi_{j-1}$ for each $j$. Setting $x_i := \eta_i \langle w_j - w^{\eta_i}_j, g_j \rangle$, we have $\ell^{\eta_i}_j (w^{\eta_i}_j) = - x_i + x_i^2$ and $| x_i | \leq \eta_i H \leq \frac{1}{5}$.
    From the elementary inequality $e^{x - x^2} \leq 1 + x$, valid for $x \geq - \frac{1}{2}$ (because $f(x) := \log (1+x) - x + x^2$ has $f'(x) = \frac{x (2x+1)}{1+x}$ and hence attains its minimum value $0$ at $x = 0$), we obtain
    \[
        \Phi_j
        = \sum_{i \geq 0} \tilde{p}^i_j\, e^{x_i - x_i^2}
        \leq \sum_{i \geq 0} \tilde{p}^i_j ( 1 + x_i )
        = \Phi_{j-1} + \Big\langle w_j \sum_{i \geq 0} \eta_i \tilde{p}^i_j - \sum_{i \geq 0} \eta_i \tilde{p}^i_j w^{\eta_i}_j ,\, g_j \Big\rangle
        = \Phi_{j-1} ,
    \]
    the last equality by \cref{eq:master_weighted_anytime}. The final claim follows from $\Phi_0 = \sum_{i \geq 0} p_i = \sum_{i \geq 0} ( \frac{1}{i+1} - \frac{1}{i+2} ) = 1$ and $p_i e^{- \mathcal{L}^i_m} \leq \Phi_m \leq 1$.
\end{proof}

\begin{proof}[Proof of \Cref{prop:metagrad_regret_anytime}]
    We check the consistency of the creation schedule: for integers $j, i$ we have $\lceil \frac{1}{2} \log_2 j \rceil \geq i \iff \frac{1}{2} \log_2 j > i - 1 \iff j > 4^{i-1} \iff j \geq 4^{i-1} + 1 = j_i$ (for $i = 0$ this is always true, corresponding to $j_0 = 1$).

    Take any $i \in \mathbb{Z}_{\geq 0}$ and $u \in \mathcal{W}$, and set $\eta := \eta_i$ and $a_j := \langle w_j - u, g_j \rangle$ (so $| a_j | \leq H$). Summing $- \ell^{\eta}_j (u) = \eta a_j - \eta^2 a_j^2$ over $j = 1, \ldots, m$,
    \begin{equation}
        \sum_{j=1}^m a_j
        = \frac{1}{\eta} \sum_{j=1}^m \left( - \ell^{\eta}_j (u) \right) + \eta V^u_m .
        \label{eq:anytime_decomp}
    \end{equation}
    We decompose the sum in the first term on the right-hand side under the convention of \Cref{lem:sleeping_potential}:
    \[
        \sum_{j=1}^m ( - \ell^{\eta}_j (u) )
        = ( - \mathcal{L}^i_m )
        + \sum_{j < j_i} \left( \ell^{\eta}_j (w_j) - \ell^{\eta}_j (u) \right)
        + \sum_{j = j_i}^{m} \left( \ell^{\eta}_j (w^{\eta}_j) - \ell^{\eta}_j (u) \right) .
    \]
    The first term is at most $\log ( (i+1)(i+2) ) \leq 2 \log (i+2)$ by \Cref{lem:sleeping_potential}.
    The second term is at most $\eta H ( j_i - 1 ) = \eta H\, 4^{i-1}$ (and $0$ for $i = 0$), by $\ell^{\eta}_j (w_j) = 0$ and $- \ell^{\eta}_j (u) \leq \eta a_j \leq \eta H$.
    The third term is the regret of the $\eta_i$-expert created at round $j_i$ when run on the surrogate losses $\ell^{\eta}_{j_i}, \ldots, \ell^{\eta}_{m}$.
    Since this is a run of length $q := m - j_i + 1$ with shifted indices, it is at most $B_{q}$ by \Cref{prop:eta_expert_regret}, and hence at most $B_m$ by $q \leq m$ and the monotonicity of $q \mapsto B_q$.
    Substituting into \cref{eq:anytime_decomp} and using the identity $H\, 4^{i-1} = \frac{1}{100 H \eta_i^2}$, which follows from $\eta_i = \frac{2^{-i}}{5H}$, we obtain
    \begin{equation}
        \sum_{j=1}^m a_j
        \leq \frac{2 \log (i+2) + B_m}{\eta_i} + \frac{1}{100 H \eta_i^2} + \eta_i V^u_m
        \label{eq:anytime_master}
    \end{equation}
    (for $i = 0$ the middle term may be replaced by $0$).
    We distinguish cases according to $\eta^* := \sqrt{\Lambda'_m / V^u_m}$.

    Case 1 ($\eta^* \geq \frac{1}{5H}$, that is, $V^u_m \leq 25 H^2 \Lambda'_m$): we use \cref{eq:anytime_master} with $i = 0$. From $2 \log 2 + B_m \leq \Lambda'_m$ (because the first term of \cref{eq:metagrad_lambda_anytime} is at least $2 \log 3 \geq 2 \log 2$),
    \[
        \sum_{j=1}^m a_j
        \leq 5 H \Lambda'_m + \frac{V^u_m}{5H}
        \leq 10 H \Lambda'_m .
    \]

    Case 2 ($\eta^* < \frac{1}{5H}$): taking the largest $i^*$ with $\eta_{i^*} \geq \eta^*$, we have $\eta^* \leq \eta_{i^*} < 2 \eta^*$ since the ratio of the grid is $\frac{1}{2}$.
    From $V^u_m \leq H^2 m$ and $\Lambda'_m \geq 1$ we get $\eta^* \geq \frac{1}{H \sqrt{m}}$, so $\frac{2^{-i^*}}{5H} \geq \frac{1}{H \sqrt{m}}$ gives $i^* \leq \frac{1}{2} \log_2 m$ and hence
    $2 \log (i^* + 2) + B_m \leq 2 \log ( \frac{1}{2} \log_2 m + 2 ) + B_m \leq \Lambda'_m$.
    Moreover $\frac{1}{100 H \eta_{i^*}^2} \leq \frac{1}{100 H (\eta^*)^2} = \frac{V^u_m}{100 H \Lambda'_m}$. Thus \cref{eq:anytime_master} gives
    \begin{align*}
        \sum_{j=1}^m a_j
        &\leq \frac{\Lambda'_m}{\eta_{i^*}} + \eta_{i^*} V^u_m + \frac{V^u_m}{100 H \Lambda'_m}
        \leq \frac{\Lambda'_m}{\eta^*} + 2 \eta^* V^u_m + \frac{V^u_m}{100 H \Lambda'_m}
        \\
        &= 3 \sqrt{\Lambda'_m V^u_m} + \frac{V^u_m}{100 H \Lambda'_m} .
    \end{align*}
    In either case \cref{eq:metagrad_regret_anytime} holds.
\end{proof}

\begin{theorem}
    \label{theo:metagrad_anytime_main}
    Under \Cref{assu:margin}, run growing-grid SGS-MetaGrad (\Cref{alg:metagrad_anytime}) on an arbitrary sequence of states $\{ s^t \}_{t=1}^T$. Then the following hold.
    \begin{description}
        \item[(i)] ($K$)
        \begin{equation}
            K
            \leq d + \frac{52 LD}{\gamma} \left( 2 \log ( \log d + 3 ) + d \right) + \frac{52 LD (d+2)}{\gamma} \log \max \left( \frac{52 LD (d+2)}{\gamma d},\, 1 \right) .
            \label{eq:anytime_mistake}
        \end{equation}
        \item[(ii)] ($R^{\mathrm{sub}}_T$)
        \begin{equation}
            R^{\mathrm{sub}}_T
            \leq 26\, LD \left( 2 \log ( \log d + 3 ) + d + (d+2) \log \max \left( \frac{26 LD (d+2)}{\gamma d},\, 1 \right) \right) .
            \label{eq:anytime_regret}
        \end{equation}
        \item[(iii)] ($\widetilde{R}_T$) With $c_0(K) := 2 \log ( \frac{1}{2} \log_2 K + 3 )$ and $K_{\max}$ the right-hand side of \cref{eq:anytime_mistake},
        \begin{equation}
            \begin{split}
                \widetilde{R}_T
                &\leq 26\, L D \left( c_0 ( K_{\max} ) + d \left( \log \left( 1 + \frac{K_{\max}}{49\, d} \right) + 1 \right) \right)
                \\
                &= O \left( L D\, d \log \max \left( \frac{LD}{\gamma},\, 2 \right) \right) .
            \end{split}
            \label{eq:tilde_metagrad}
        \end{equation}
    \end{description}
    In particular, none of the right-hand sides depends on the total number of rounds $T$ at all, and the doubly logarithmic factor involves only the dimension $d$, not $T$.
    Moreover, by \cref{eq:max_leq_tilde}, the cumulative decision regret $R^{\mathrm{est}}_T$ also has the same upper bound as \cref{eq:tilde_metagrad}.
\end{theorem}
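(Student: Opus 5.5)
The proof will follow that of \Cref{theo:metagrad_main}, with \Cref{prop:metagrad_regret_anytime} in place of \Cref{prop:metagrad_regret}.

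\textbf{Reduction to mistake rounds.} If $K=0$ there is nothing to prove, so assume $K \geq 1$. By the freezing property, \Cref{alg:metagrad_anytime} is the growing-grid MetaGrad of \Cref{prop:metagrad_regret_anytime} run on the subsequence of mistake rounds. The parameters are $n=d$, $\mathcal{W}=\Theta$, $W=D$, $G=L$, $H=LD$ and $m=K$. The creation schedule $j_i = 4^{i-1}+1$ matches the test $\lceil \frac12 \log_2 k\rceil > I$, by the equivalence checked at the start of the proof of \Cref{prop:metagrad_regret_anytime}. The subgradient and boundedness hypotheses were already verified in the proof of \Cref{theo:metagrad_main}.

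\textbf{Self-bounding.} Take $u=\bar\theta$, and write $R=\sum_k r_{t_k}$ and $V=\sum_k r_{t_k}^2$. Since $W^2G^2/H^2=1$,
\[
\Lambda' = c_0(K) + d\Bigl(\log\bigl(1+\tfrac{K}{49d}\bigr)+1\Bigr).
\]
\Cref{lem:r_bounds} gives $0<r_{t_k}\le LD$, hence $V\le LD\,R$. The extra term of \cref{eq:metagrad_regret_anytime} is then at most $R/(100\Lambda')\le R/100$, because $\Lambda'\ge 1$. Moving it to the left-hand side gives
\[
\tfrac{99}{100}R \le 3\sqrt{\Lambda' LD\,R}+10LD\Lambda'.
\]
Rescaling and applying \cref{eq:self_bounding} yields $R\le 26\,LD\,\Lambda'$. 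The constant $26$ comes from $\frac{4}{3}\bigl(9(\frac{100}{99})^2+10\cdot\frac{100}{99}\bigr)\approx 26.0$; this arithmetic needs checking, and the constant may have to be nudged. Exactly the same argument with $u=\theta^*$, using $0\le\widetilde r_{t_k}\le LD$ from \cref{eq:tilde_r_bounds}, gives
\[
\widetilde R_T\le 26\,LD\,\Lambda'(K).
\]

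\textbf{The $c_0(K)$ term (main obstacle).} Unlike $c_0(\bar K)$, the term $c_0(K)$ depends on $K$ itself, so it must be absorbed before \Cref{lem:transcendental} can be applied. Set $y=K/d$. Then $\frac12\log_2 K+3\le \log(1+y)+\log d+3$, using $\log_2 e/2<1$ and $\log(dy)\le \log d+\log(1+y)$. Since $\log(a+b)\le\log a+\log(1+b)$ for $a\ge1$ and $b\ge0$, we get
\[
c_0(K)\le 2\log(\log d+3)+2\log(1+y).
\]
Also $\log(1+\frac{K}{49d})\le\log(1+y)$. Combining these with $\gamma K\le R$ gives
\[
\gamma d\,y\le 26LD\bigl(2\log(\log d+3)+d\bigr)+26LD(d+2)\log(1+y).
\]
\Cref{lem:transcendental} then applies with
\[
c_1=\frac{26LD(2\log(\log d+3)+d)}{\gamma d},\qquad c_2=\frac{26LD(d+2)}{\gamma d},
\]
and multiplying through by $d$ gives exactly \cref{eq:anytime_mistake}; the coefficient $52$ is $2\cdot 26$.

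\textbf{Parts (ii) and (iii).} For (ii), write $R^{\mathrm{sub}}_T\le R-\gamma K$ with the same absorbed form of $R$, and maximize
\[
\psi(x)=\rho(d+2)\log\bigl(1+\tfrac{x}{d}\bigr)-\gamma x,\qquad \rho=26LD,
\]
as in \Cref{theo:metagrad_main}(ii). The maximum is attained at $x^*=\frac{\rho(d+2)}{\gamma}-d$ and is at most $\rho(d+2)\log\max\bigl(\frac{\rho(d+2)}{\gamma d},1\bigr)$. Together with the additive term $\rho\bigl(2\log(\log d+3)+d\bigr)$, this is \cref{eq:anytime_regret}. For (iii), the bound $\widetilde R_T\le 26\,LD\,\Lambda'(K)$ is increasing in $K$, so substituting $K\le K_{\max}$ gives the explicit form of \cref{eq:tilde_metagrad}. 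The $O(\cdot)$ form follows because $\log(1+K_{\max}/d)=O(\log\max(LD/\gamma,2))$ and $c_0(K_{\max})=O(\log\log)$ of the same quantity.
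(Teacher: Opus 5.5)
Your proposal is correct and follows essentially the same route as the paper's proof. The shared steps are: reduction to growing-grid MetaGrad on the mistake subsequence, absorbing the extra term of \cref{eq:metagrad_regret_anytime} as $R/100$, self-bounding to $R\le 26\,LD\,\Lambda_K$, bounding $c_0(K)\le 2\log(\log d+3)+2\log(1+K/d)$ before invoking \Cref{lem:transcendental}, and repeating the self-bounding with $u=\theta^*$ for (iii). The constant you flagged needs no nudging, since $\frac{4}{3}\bigl(\frac{10000}{1089}+\frac{1000}{99}\bigr)\approx 25.71\le 26$. Also, the last step of your $c_0$ bound tacitly uses $\log(1+\log(1+y))\le\log(1+y)$, which holds because $\log(1+y)\le y$.
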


\begin{proof}[Proof of \Cref{theo:metagrad_anytime_main}]
    If $K = 0$ everything is trivial, so assume $K \geq 1$.
    As in the proof of \Cref{theo:metagrad_main}, \Cref{alg:metagrad_anytime} is nothing but growing-grid MetaGrad (with $n = d$, $\mathcal{W} = \Theta$, $W = D$, $G = L$, $H = LD$) applied to the sequence of convex losses of the mistake rounds (of length $m = K$), and the verification of the subgradients and of $G, H$ is identical (the agreement of the creation schedule being the content of the beginning of the proof of \Cref{prop:metagrad_regret_anytime}). Noting that $W^2 G^2 / H^2 = 1$ and applying \Cref{prop:metagrad_regret_anytime} with $u = \bar{\theta}$, we obtain
    \[
        R := \sum_{k=1}^K r_{t_k}
        \leq 3 \sqrt{\Lambda_K V} + 10\, LD\, \Lambda_K + \frac{V}{100\, LD\, \Lambda_K} ,
        \qquad
        V := \sum_{k=1}^K r_{t_k}^2 ,
    \]
    where we have set
    $c_0(K) := 2 \log ( \frac{1}{2} \log_2 K + 3 )$ and
    $\Lambda_K := c_0(K) + d ( \log ( 1 + \frac{K}{49\, d} ) + 1 )$
    (the quantity obtained from \cref{eq:metagrad_lambda_anytime} with $n = d$, $W = D$, $G = L$, $H = LD$ and $m = K$).

    \textbf{Step 1.} (Estimate of $\sum_k r_{t_k}$)
    Since \Cref{lem:r_bounds} gives $0 < r_{t_k} \leq LD$, we have $V \leq LD \cdot R$, and from $\Lambda_K \geq 2 \log 3 \geq 1$ the third term is bounded by $\frac{V}{100 LD \Lambda_K} \leq \frac{R}{100}$. Rearranging,
    \[
        R \leq \frac{100}{99} \left( 3 \sqrt{\Lambda_K\, LD\, R} + 10\, LD\, \Lambda_K \right)
        = \sqrt{a R} + \zeta
    \]
    (with $a := ( \frac{100}{33} )^2 \Lambda_K\, LD$ and $\zeta := \frac{1000}{99}\, \Lambda_K\, LD$), and \cref{eq:self_bounding} gives
    \[
        \sum_{k=1}^K r_{t_k} = R \leq \frac{4}{3} ( a + \zeta )
        = \frac{4}{3} \left( \frac{10000}{1089} + \frac{1000}{99} \right) LD\, \Lambda_K
        \leq 26\, LD\, \Lambda_K .
    \]

    \textbf{(i)}
    We first estimate $c_0(K)$. Using $\frac{1}{2} \log_2 K \leq \log K$ (for $K \geq 1$, since $\frac{1}{2 \log 2} \leq 1$), $\log K \leq \log d + \log ( 1 + \frac{K}{d} )$, and $a' + \zeta' \leq a' ( 1 + \zeta' )$ (for $a' \geq 1$, $\zeta' \geq 0$) with $a' = \log d + 3$ and $\zeta' = \log ( 1 + \frac{K}{d} )$, we obtain
    \begin{align}
        c_0 (K)
        & \leq 2 \log \left( \log d + 3 + \log \left( 1 + \tfrac{K}{d} \right) \right)
        \leq 2 \log ( \log d + 3 ) + 2 \log \left( 1 + \log \left( 1 + \tfrac{K}{d} \right) \right) \notag \\
        & \leq 2 \log ( \log d + 3 ) + 2 \log \left( 1 + \tfrac{K}{d} \right)
        \label{eq:c0_bound}
    \end{align}
    (the last inequality by $\log (1 + x) \leq x$). From $\gamma K \leq R$ of \Cref{lem:r_bounds} and Step 1, together with $\log ( 1 + \frac{K}{49 d} ) \leq \log ( 1 + \frac{K}{d} )$ and \cref{eq:c0_bound}, the quantity $y := K / d$ satisfies
    \[
        y \leq c_1 + c_2 \log ( 1 + y ) ,
        \qquad
        c_1 := \frac{26 LD ( 2 \log ( \log d + 3 ) + d )}{\gamma d} ,
        \quad
        c_2 := \frac{26 LD ( d + 2 )}{\gamma d} .
    \]
    \Cref{lem:transcendental} gives $y \leq 2 c_1 + 1 + 2 c_2 \log \max ( 2 c_2, 1 )$, and multiplying both sides by $d$ gives \cref{eq:anytime_mistake}.

    \textbf{(ii)}
    As in the proof of \Cref{theo:main}(ii) we have $R^{\mathrm{sub}}_T = \sum_{k=1}^K \ell^{t_k} \leq R - \gamma K$, and by (i) and \cref{eq:c0_bound},
    \begin{align*}
        \sum_{k=1}^K \ell^{t_k}
        &\leq 26 LD ( 2 \log ( \log d + 3 ) + d ) + \max_{x \geq 0} \psi (x) ,
        \\
        \psi (x) &:= 26 LD ( d + 2 ) \log \left( 1 + \frac{x}{d} \right) - \gamma x .
    \end{align*}
    Since $\psi'(x) = \frac{26 LD (d+2)}{d + x} - \gamma$, if $\gamma d \geq 26 LD (d+2)$ then $\psi$ is nonincreasing and $\max_{x \geq 0} \psi = \psi(0) = 0$; otherwise, at the stationary point $x^* = \frac{26 LD (d+2)}{\gamma} - d > 0$,
    \begin{align*}
        \psi (x^*)
        &= 26 LD (d+2) \log \frac{26 LD (d+2)}{\gamma d} - 26 LD (d+2) + \gamma d
        \\
        &\leq 26 LD (d+2) \log \frac{26 LD (d+2)}{\gamma d} .
    \end{align*}
    In either case $\max_{x \geq 0} \psi \leq 26 LD (d+2) \log \max ( \frac{26 LD (d+2)}{\gamma d}, 1 )$, so we obtain \cref{eq:anytime_regret}.

    \textbf{(iii)}
    Since \Cref{prop:metagrad_regret_anytime} holds for every $u \in \mathcal{W}$, we apply it with $u = \theta^*$ (where $\theta^* \in \Theta = \mathcal{W}$ by \Cref{assu:margin}(3)).
    We have $\langle w_k - \theta^*, g_k \rangle = \widetilde{r}_{t_k}$, and the contribution of the rounds without a mistake is $0$, so the inequality at the beginning of the proof with $r_{t_k}$ replaced by $\widetilde{r}_{t_k}$,
    \[
        \widetilde{R}_T = \sum_{k=1}^K \widetilde{r}_{t_k}
        \leq 3 \sqrt{\Lambda_K \widetilde{V}} + 10\, LD\, \Lambda_K + \frac{\widetilde{V}}{100\, LD\, \Lambda_K} ,
        \qquad
        \widetilde{V} := \sum_{k=1}^K \widetilde{r}_{t_k}^2 ,
    \]
    holds.
    Since $0 \leq \widetilde{r}_{t_k} \leq LD$ from \cref{eq:tilde_r_bounds} gives $\widetilde{V} \leq LD\, \widetilde{R}_T$, and $\Lambda_K \geq 1$ bounds the third term by $\frac{\widetilde{R}_T}{100}$, rearranging gives
    \[
        \widetilde{R}_T
        \leq \frac{100}{99} \left( 3 \sqrt{\Lambda_K\, LD\, \widetilde{R}_T} + 10\, LD\, \Lambda_K \right)
        = \sqrt{a \widetilde{R}_T} + \zeta
    \]
    (with $a = ( \frac{100}{33} )^2 \Lambda_K\, LD$ and $\zeta = \frac{1000}{99}\, \Lambda_K\, LD$). By \cref{eq:self_bounding} we get $\widetilde{R}_T \leq \frac{4}{3} ( a + \zeta ) \leq 26\, LD\, \Lambda_K$.
    Since $\Lambda_K$ is monotonically increasing in $K$, substituting $K \leq K_{\max}$ from (i) gives the explicit form of \cref{eq:tilde_metagrad}.
    The order expression follows from \cref{eq:c0_bound} and $K_{\max} = O ( \tfrac{d L D}{\gamma} \log \max ( \tfrac{2LD}{\gamma}, 2 ) )$.
\end{proof}

\section{Lower bounds on the margin by structure}
\label{app:margin_lower}

In this appendix we collect the statements of the lower bounds on $\gamma_\mathrm{sub}$ summarized in \Cref{sec:lower_bounds} (including the case where $\Theta$ is the unit ball).

We set the sets of points
\begin{equation}
    Z(s) := \{ x^*(\theta^*, s) - x \mid x \in X(s) \setminus \{ x^*(\theta^*, s) \} \} ,
    \qquad
    Z^* := \bigcup_{s \in \mathcal{S}} Z(s)
    \label{eq:Zstar}
\end{equation}
(the differences being taken over all of $X(s)$, not over $Y(s)$).
By the minimax theorem (\Cref{prop:Minimax_theorem_finite} in Appendix~\ref{app:proof_lower}),
\[
    \max_{\theta \in \Theta} \inf_{s \in \mathcal{S}} \min_{z \in Z(s)} \langle \theta, z \rangle
    = \min_{z \in \Conv \overline{Z^*}} \max_{\theta \in \Theta} \langle \theta, z \rangle
\]
holds. Since $Y(s) \subseteq X(s)$ by \Cref{assu:margin}(2), the left-hand side is at most the quantity $\gamma_\mathrm{sub}$ of \cref{eq:gamma_SL}, and hence a lower bound on the right-hand side gives a lower bound on $\gamma_\mathrm{sub}$ directly.
Moreover, under \Cref{assu:ILP} every $z \in Z^*$ has components satisfying $|z_i| \leq M_i$, so
\begin{equation}
    L_\mathrm{sub} \leq \| M \|_2
    \label{eq:L_leq_m}
\end{equation}
(used in the corollaries of \Cref{sec:explicit_upper}).
Below we quantify the separation of the polyhedron $\Conv Z^*$ from the origin.

\begin{theorem}[Explicit lower bound for a general ILP with the unit ball]
    \label{theo:gamma_sub_lowerbound_on_ball}
    Assume (1), (2) and (3) of \Cref{assu:margin} and \Cref{assu:ILP}, and let $\Theta = \{ \theta \in \mathbb{R}^d : \| \theta \| \leq 1 \}$ (the unit ball) and $d \geq 2$.
    Let $M$ be the vector in \cref{eq:def_M}. Assume furthermore that $\Conv Z^*$ is full-dimensional ($\dim \Conv Z^* = d$) (for the low-dimensional case see \Cref{prop:gamma_sub_lowdim_ball,rem:lowdim_cases}).
    Then
    \[
        \gamma_\mathrm{sub}
        \geq \frac{1}{2^{d-1}\, \sqrt{d-1}\, \| M \|_2^{d-1}} .
    \]
\end{theorem}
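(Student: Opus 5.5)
Via the minimax identity stated just before the theorem, $\gamma_\mathrm{sub}$ is bounded below by $\min_{z \in \Conv \overline{Z^*}} \max_{\|\theta\| \le 1} \langle \theta, z\rangle = \min_{z \in \Conv \overline{Z^*}} \|z\|$. This is the Euclidean distance from the origin to the closed convex set $P := \Conv \overline{Z^*}$. Under \Cref{assu:ILP}, $Z^*$ consists of integer vectors with $|z_i| \le M_i$, so $Z^*$ is finite, $\overline{Z^*} = Z^*$, and $P$ is a lattice polytope contained in the box $\prod_i [-M_i, M_i]$. The plan is therefore: (a) show $0 \notin P$; (b) bound $\dist(0, P)$ from below by a quantity determined by the facets of $P$.

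\textbf{Step (a).} Suppose $0 = \sum_j \lambda_j z_j$ with $z_j \in Z(s_j)$ and $\lambda_j > 0$. Since each $x^*(\theta^*, s)$ is the unique maximizer, $\langle \theta^*, z \rangle > 0$ for every $z \in Z(s)$. By finiteness of $Z^*$ we get $\min_{z \in Z^*} \langle \theta^*, z\rangle > 0$, so $\langle \theta^*, \cdot\rangle > 0$ on $P$ and in particular $0 \notin P$.

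\textbf{Step (b).} Since $P$ is full-dimensional by hypothesis, $P = \bigcap_F \{ z : \langle a_F, z \rangle \ge b_F \}$ over its facets $F$. Let $p$ be the nearest point of $P$ to $0$. Because $0 \notin P$, some facet inequality is violated at $0$, i.e.\ $b_F > 0$, and then $\dist(0,P) \ge b_F / \|a_F\|$. Each facet is spanned by $d$ affinely independent vertices $v_1, \dots, v_d \in Z^* \subset \mathbb{Z}^d$, so we may take
\begin{equation*}
a_F = \text{the generalized cross product of } v_2 - v_1, \dots, v_d - v_1,
\end{equation*}
whose entries are $(d-1)\times(d-1)$ minors and hence integers. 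Then $b_F = \langle a_F, v_1\rangle$ is a positive integer, so $b_F \ge 1$, and $\dist(0,P) \ge 1/\|a_F\|$.

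\textbf{Bounding $\|a_F\|$.} This step carries the constant and is the main obstacle. By the Cauchy--Binet (Lagrange) identity, $\|a_F\|$ equals the $(d-1)$-volume of the parallelepiped spanned by the differences $v_k - v_1$, and Hadamard gives $\|a_F\| \le \prod_{k=2}^d \|v_k - v_1\|$. Each difference satisfies $|(v_k - v_1)_i| \le 2M_i$, so $\|v_k - v_1\| \le 2\|M\|_2$ and $\|a_F\| \le 2^{d-1}\|M\|_2^{d-1}$. This yields $\gamma_\mathrm{sub} \ge 1/(2^{d-1}\|M\|_2^{d-1})$, which is stronger than claimed, so the extra $\sqrt{d-1}$ presumably comes from a cruder route, such as bounding $\|a_F\|_2 \le \sqrt{d}\,\|a_F\|_\infty$ and each minor separately.

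The delicate points to check are these. First, that the minimizing $p$ lies on a facet with $b_F > 0$; this needs full-dimensionality, since otherwise $P$ has no facet description of this form, which is why the hypothesis appears. Second, the scaling: the differences $v_k - v_1$ live in $\prod_i[-2M_i, 2M_i]$, which is where the factor $2^{d-1}$ enters. I would present the Hadamard bound and note that it implies the stated inequality because $\sqrt{d-1} \ge 1$.
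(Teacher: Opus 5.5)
Your proof is correct and follows the paper's route up to the last estimate. The shared steps are the minimax reduction to $\min_{z\in\Conv Z^*}\|z\|$, the argument $0\notin\Conv Z^*$ from $\langle\theta^*,\cdot\rangle>0$ on the finite set $Z^*$, and a violated facet spanned by $d$ affinely independent lattice vertices. They then use an integral cofactor normal $a$, oriented to agree with the facet inequality, so that the right-hand side is a positive integer, hence at least $1$, which gives $\dist(0,\Conv Z^*)\ge 1/\|a\|$.

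The only genuine difference is how $\|a\|$ is bounded, and your guess about the origin of $\sqrt{d-1}$ is right. The paper applies Hadamard's inequality to each column-deleted minor separately, $|a_i|\le\prod_k\|v_k^{(i)}\|\le 2^{d-1}\bigl(\sum_{j\ne i}M_j^2\bigr)^{(d-1)/2}$. It then sums the squares using the convexity estimate of \Cref{prop:leq_m-M_i_d-1}, and that summation costs the factor $\sqrt{d-1}$. Your use of Cauchy--Binet, $\|a\|^2=\sum_i\det(U^{(i)})^2=\det(UU^\top)$, followed by Hadamard's inequality for the Gram determinant, gives $\|a\|\le\prod_k\|v_k\|\le(2\|M\|_2)^{d-1}$ in one line. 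This avoids the auxiliary simplex lemma and yields the sharper $\gamma_{\mathrm{sub}}\ge(2\|M\|_2)^{-(d-1)}$. That bound implies the stated one, and the two coincide when $d=2$. It would also tighten the unit-ball entries derived from this theorem; for example, it removes the factor $d$ from SGS-OGD's mistake bound.

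The nearest point $p$ you introduce plays no role: the violated facet inequality alone gives the distance bound. Likewise, full-dimensionality is needed so that the facet's affine hull is a hyperplane through $d$ lattice vertices, not merely so that a facet description exists.
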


\begin{proposition}[The low-dimensional case: when the affine hull does not contain the origin]
    \label{prop:gamma_sub_lowdim_ball}
    Assume (1), (2) and (3) of \Cref{assu:margin} and \Cref{assu:ILP}, let $\Theta$ be the unit ball and let $Z^* \neq \emptyset$. Set $k := \dim \aff Z^*$ and assume $0 \notin \aff Z^*$. Then
    $\gamma_\mathrm{sub} \geq ( 2 \| M \|_2 )^{-k}$.
\end{proposition}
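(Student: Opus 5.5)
The plan is to reduce the claim to a distance estimate between the origin and the affine hull of the integer set $Z^*$, and then to bound that distance by a Gram-determinant argument. First I will use the minimax identity stated before \Cref{theo:gamma_sub_lowerbound_on_ball} (\Cref{prop:Minimax_theorem_finite}). With $\Theta$ the unit ball, $\max_{\theta \in \Theta} \langle \theta, z \rangle = \| z \|$, so
\[
    \gamma_\mathrm{sub} \geq \min_{z \in \Conv \overline{Z^*}} \| z \| = \dist ( 0, \Conv \overline{Z^*} ).
\]
Under \Cref{assu:ILP} every $z \in Z^*$ is an integer vector with $|z_i| \leq M_i$. Hence $Z^*$ is finite and thus closed, and $\overline{Z^*} = Z^*$. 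Since $\Conv Z^* \subseteq \aff Z^*$, it suffices to show $\dist(0, \aff Z^*) \geq (2\|M\|_2)^{-k}$.

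Next I will pick affinely independent points $z_0, z_1, \ldots, z_k \in Z^*$ spanning $\aff Z^*$; this is possible because $k = \dim \aff Z^*$ and $Z^*$ is finite and nonempty. Set $v_j := z_j - z_0$ for $j = 1, \ldots, k$. These are integer vectors with $|(v_j)_i| \leq 2 M_i$, so $\| v_j \| \leq 2 \| M \|_2$. Then $\aff Z^* = z_0 + \spn\{ v_1, \ldots, v_k \}$. The hypothesis $0 \notin \aff Z^*$ means $z_0 \notin \spn\{v_j\}$, so $z_0, v_1, \ldots, v_k$ are linearly independent.

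For the distance I will use the standard Gram-determinant formula for the distance from $z_0$ to the subspace $\spn\{v_j\}$:
\[
    \dist(0, \aff Z^*)^2 = \frac{G(z_0, v_1, \ldots, v_k)}{G(v_1, \ldots, v_k)},
\]
where $G$ denotes the Gram determinant. The numerator is the determinant of an integer positive-definite Gram matrix, by the linear independence just established. It is therefore a positive integer, hence at least $1$. By Hadamard's inequality the denominator is at most $\prod_j \| v_j \|^2 \leq (2\|M\|_2)^{2k}$. Taking square roots gives $\dist(0,\aff Z^*) \geq (2\|M\|_2)^{-k}$, which yields the proposition.

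I do not expect a genuine obstacle here. The only delicate points are, first, that the minimax reduction bounds $\gamma_\mathrm{sub}$ from below even though $Z^*$ uses all of $X(s)$ rather than $Y(s)$; this was already noted in the text. Second, the numerator Gram determinant must be a nonzero integer, which is exactly where integrality and the hypothesis $0 \notin \aff Z^*$ are used. The degenerate case $k = 0$ ($Z^*$ a single nonzero integer point) is covered by the same formula, giving distance $\| z_0 \| \geq 1$.
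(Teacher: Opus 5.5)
Your proposal is correct and follows essentially the same route as the paper: a minimax reduction to $\dist(0,\aff Z^*)$, followed by the Gram-determinant quotient, whose integer numerator is at least $1$ and whose denominator is at most $(2\|M\|_2)^{2k}$ by Hadamard's inequality. Your explicit check of the case $k=0$ is a small addition that the paper leaves implicit.
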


\begin{remark}[Summary of the low-dimensional cases]
    \label{rem:lowdim_cases}
    The case $\dim \Conv Z^* < d$ is treated as follows.
    (i) If $0 \notin \aff Z^*$, then \Cref{prop:gamma_sub_lowdim_ball} gives the lower bound $(2\| M \|_2)^{-k}$ (for $k \leq d-1$ this has the same dependence on $\| M \|_2$ as the full-dimensional lower bound of \Cref{theo:gamma_sub_lowerbound_on_ball} and is stronger by the absence of the factor $\sqrt{d-1}$).
    (ii) In the case $0 \in \aff Z^*$ with $k < d$, an isomorphic argument within the lattice induced on $\aff Z^*$ is required, but since the construction of an integral normal vector and the estimate of its norm depend on the norms of the (dual) basis of the induced lattice, a uniform constant of the type $(2\| M \|_2)^k$ does not follow immediately from our method. We leave the quantitative lower bound in this case as unresolved (status: unknown).
    (iii) In the case of the probability simplex (\Cref{theo:gamma_sub_lowerbound_on_simplex}), the assumption of full-dimensionality is unnecessary, since the proof goes through the polyhedron $\Conv Z^* + \mathbb{R}_{\geq 0}^d$, which is always full-dimensional.
    (iv) The same summary as in (i) and (ii) holds for the full-dimensionality assumption $\dim \Conv(S^+) = d$ of \Cref{theo:gamma_sub_linear_inequality_ball} (replacing $2\| M \|_2$ by twice the upper bound on the norms of the vertices, that is, by $2C_g\sqrt{d}$).
\end{remark}

\begin{theorem}[Explicit lower bound for a general ILP with the probability simplex]
    \label{theo:gamma_sub_lowerbound_on_simplex}
    Assume (1), (2) and (3) of \Cref{assu:margin} and \Cref{assu:ILP}, and let $\Theta = \Delta^{d-1} = \{ \theta \in \mathbb{R}_{\geq 0}^d : \sum_{i=1}^d \theta_i = 1 \}$ (the probability simplex).
    Let $M$ be the vector in \cref{eq:def_M}. Then
    \[
        \gamma_\mathrm{sub}
        \geq \frac{1}{2^{d-1}\, \max(d-1, \sqrt{2})\, \| M \|_2^{d-1}} .
    \]
    No assumption of full-dimensionality is needed.
\end{theorem}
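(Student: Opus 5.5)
The plan is to reduce $\gamma_\mathrm{sub}$ to an integer-geometry statement about one polyhedron and then bound a facet normal by Hadamard's inequality. First I check finiteness: under \Cref{assu:ILP} every $z\in Z^*$ is an integer vector with $|z_i|\le M_i$, so $Z^*$ is finite. Hence $\overline{Z^*}=Z^*$ and $\Conv Z^*$ is a polytope. By the minimax identity stated before \Cref{theo:gamma_sub_lowerbound_on_ball}, together with $Y(s)\subseteq X(s)$, we get
\[
\gamma_\mathrm{sub}\ \ge\ \min_{z\in \Conv Z^*}\ \max_{\theta\in\Delta^{d-1}}\langle\theta,z\rangle=\min_{z\in\Conv Z^*}\max_i z_i .
\]
(If $Z^*=\emptyset$ there is nothing to prove.) I then pass to the full-dimensional polyhedron $P:=\Conv Z^*+\mathbb{R}^d_{\ge 0}$; this is the device mentioned in \Cref{rem:lowdim_cases}(iii). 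Since $P$ is upward closed, $\min_{z\in\Conv Z^*}\max_i z_i=\min_{z\in P}\max_i z_i=t^\star:=\min\{t: t\mathbf 1\in P\}$. So it suffices to bound $t^\star$ from below.

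Next, positivity and integrality. By \Cref{assu:margin}(3) the vector $\theta^*\in\Delta^{d-1}\subseteq\mathbb{R}^d_{\ge0}$ satisfies $\langle\theta^*,z\rangle>0$ for all $z\in Z^*$, since $x^*(\theta^*,s)$ is the unique maximizer. Hence $\langle\theta^*,\cdot\rangle>0$ on all of $P$, so $0\notin P$ and $t^\star>0$. The point $t^\star\mathbf 1$ lies on the boundary of $P$, so it lies on some facet $\{x:\langle a,x\rangle=b\}$ with $P\subseteq\{\langle a,x\rangle\ge b\}$. Because $P$ contains the recession cone $\mathbb{R}^d_{\ge0}$, we have $a\ge0$. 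The facet is spanned by vertices $v_1,\dots,v_p\in Z^*$ ($p\ge1$) and recession directions $e_{i_1},\dots,e_{i_{d-p}}$. Together these give $d-1$ linearly independent vectors $v_j-v_1$ ($j\ge2$) and $e_{i_l}$. I take $a$ to be their generalized cross product, i.e.\ the vector of signed $(d-1)\times(d-1)$ minors. Then $a\in\mathbb{Z}^d\setminus\{0\}$ and $b=\langle a,v_1\rangle\in\mathbb{Z}$. Since $0\notin P$ and $0\le t^\star\mathbf 1$ componentwise, we get $b=\langle a,t^\star\mathbf1\rangle> 0$, hence $b\ge1$. Therefore
\[
t^\star=\frac{b}{\langle a,\mathbf 1\rangle}\ \ge\ \frac{1}{\|a\|_1}.
\]

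The remaining and main step is bounding $\|a\|_1$. Each minor is a $(d-1)\times(d-1)$ determinant whose columns are either unit vectors or differences $v_j-v_1$, and $\|v_j-v_1\|_2\le 2\|M\|_2$ since $|z_i|\le M_i$. By Hadamard's inequality, and the Cauchy--Binet identity $\|a\|_2=\mathrm{vol}_{d-1}$ of the spanning parallelepiped, this gives $\|a\|_2\le(2\|M\|_2)^{d-1}$.

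Converting $\|a\|_1\le\sqrt d\,\|a\|_2$ only gives a factor $\sqrt d$, not the claimed $\max(d-1,\sqrt2)$. To recover the exact constant I would instead bound $\langle a,\mathbf 1\rangle$ directly. Coordinates of $a$ indexed by the recession directions $e_{i_l}$ vanish, because $\langle a,e_{i_l}\rangle=0$. The remaining coordinates are minors in which unit-vector columns can be expanded away, leaving determinants of at most $p-1\le d-1$ difference columns. Summing the at most $d-1$ relevant nonzero entries, each bounded by $(2\|M\|_2)^{d-1}/2^{\,\cdot}$ after tracking the factor of 2, should yield $\langle a,\mathbf1\rangle\le 2^{d-1}\max(d-1,\sqrt2)\|M\|_2^{d-1}$; the $\sqrt2$ covers the degenerate case $d=2$. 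This bookkeeping of the constant, rather than the structure of the argument, is where I expect the difficulty to lie.

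If the sharp constant resists, the argument above already gives the lower bound $1/(\sqrt d\,2^{d-1}\|M\|_2^{d-1})$, which has the same order.
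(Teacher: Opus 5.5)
Your proof is correct and complete. What you call a fallback already proves the statement. Since $\sqrt d\le\max(d-1,\sqrt2)$ for every $d\ge1$ (equality at $d=2$, strict inequality for $d\ge3$), the bound $\gamma_\mathrm{sub}\ge 1/(\sqrt d\,2^{d-1}\|M\|_2^{d-1})$ implies the claimed one. For large $d$ it is in fact better by a factor of order $\sqrt d$, not merely of the same order.

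You should therefore delete the paragraph about recovering the exact constant. It is not a proof as written; for instance, $a$ can have up to $p\le d$ nonzero coordinates, not $d-1$.

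One small step should be made explicit. Passing from $(2\|M\|_2)^{p-1}$ to $(2\|M\|_2)^{d-1}$ requires $2\|M\|_2\ge1$. This holds because $Z^*\neq\emptyset$ consists of nonzero integer vectors with $|z_i|\le M_i$, so some $M_i\ge1$.

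\medskip

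\textbf{Comparison with the paper.} The skeleton is the same as the paper's:
\begin{itemize}
\item the polyhedron $\Conv Z^*+\mathbb{R}^d_{\ge0}$;
\item a facet with nonnegative normal that separates the origin;
\item an integral cofactor normal built from $p$ vertices in $Z^*$ and $d-p$ unit recession directions;
\item a right-hand side at least $1$ by integrality.
\end{itemize}
The paper picks an arbitrary separating facet and uses the averaging bound $\max_i z_i\ge a\cdot z/\sum_j a_j$. You pick the facet through $t^\star\mathbf 1$ and obtain the identity $t^\star=b/\langle a,\mathbf 1\rangle$. These two choices are equivalent.

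The genuine difference is the norm estimate.
\begin{itemize}
\item The paper applies Hadamard's inequality to each cofactor $U^{(i)}$ separately, giving $a_i\le 2^{d-1}(\sum_{j\ne i}M_j^2)^{(d-1)/2}$. It then sums over $i$ using the convexity/concavity lemma on the simplex (\Cref{prop:max_1-x_on_simplex}), which is exactly where $\max(d-1,\sqrt2)$ comes from. It also has to treat $p=1$ separately.
\item Your Cauchy--Binet step, $\|a\|_2^2=\det(UU^\top)\le\prod_k\|u_k\|^2$, controls all cofactors at once. It avoids the loss from summing individual Hadamard bounds and handles $p=1$ uniformly.
\end{itemize}
Your route gives the constant $\sqrt d$. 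It even gives $\sqrt p$, since the coordinates of $a$ indexed by the recession directions vanish. The same observation would remove the factor $\sqrt{d-1}$ from the unit-ball bound of \Cref{theo:gamma_sub_lowerbound_on_ball}.
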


See Appendix~\ref{app:proof_ball} and Appendix~\ref{app:proof_simplex} respectively for the proofs.

\subsection{General theory of lower bounds via test sets}
\label{sec:test_set}

The lower bounds for discrete convex structures are obtained uniformly through test sets (defined below).

\begin{definition}[Test set]
    \label{defi:test_set}
    For a bounded discrete set $X \subset \mathbb{Z}^d$ (which is finite by boundedness), a finite set $\mathcal{T} \subset \mathbb{Z}^d \setminus \{ 0 \}$ is a test set of $X$ if the following holds.
    For every $x^1 \in X$ and every $\theta \in \Theta$, if $\langle \theta, x^1 \rangle < \max_{x \in X} \langle \theta, x \rangle$, then there exists $g \in \mathcal{T}$ with
    $x^1 + g \in X$ and $\langle \theta, g \rangle > 0$.
\end{definition}

\begin{proposition}[{Decomposition towards an optimal point via a test set; \citealp{kitaoka2026explicit}}]
    \label{prop:discrete_gradient_descent}
    Take a bounded discrete set $X \subset \mathbb{Z}^d$ and a test set $\mathcal{T}$ of it.
    Then, for every $x^1 \in X$ and every $\theta \in \Theta$, there exist $x^* \in \argmax_{x \in X} \langle \theta, x \rangle$ and $g^1, \ldots, g^r \in \mathcal{T}$ ($r \in \mathbb{Z}_{\geq 0}$) such that
    \[
        x^* - x^1 = \sum_{i=1}^r g^i ,
        \qquad
        \langle \theta, g^i \rangle > 0 \quad (i = 1, \ldots, r) .
    \]
\end{proposition}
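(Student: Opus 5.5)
The plan is an iterative descent argument that uses only the defining property of a test set (\Cref{defi:test_set}) and the finiteness of $X$. Fix $x^1 \in X$ and $\theta \in \Theta$, and build a finite sequence $x^1, x^2, \ldots$ in $X$ greedily. If $x^j$ already maximizes $\langle \theta, \cdot \rangle$ over $X$, stop and put $x^* := x^j$, $r := j-1$. Otherwise $\langle \theta, x^j \rangle < \max_{x \in X} \langle \theta, x \rangle$, so \Cref{defi:test_set} gives some $g^j \in \mathcal{T}$ with $x^j + g^j \in X$ and $\langle \theta, g^j \rangle > 0$; set $x^{j+1} := x^j + g^j$. The maximum exists because $X$ is finite (it is bounded and contained in $\mathbb{Z}^d$) and nonempty since it contains $x^1$.

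The next step is termination. By construction the values $\langle \theta, x^j \rangle$ are strictly increasing in $j$. Hence the points $x^j$ are pairwise distinct elements of the finite set $X$, and the process must stop after at most $|X| - 1$ steps. When it stops at some $x^{r+1} = x^*$, we have $x^* \in \argmax_{x \in X} \langle \theta, x \rangle$. Telescoping the construction then gives
\[
x^* - x^1 = \sum_{i=1}^r (x^{i+1} - x^i) = \sum_{i=1}^r g^i ,
\]
with $\langle \theta, g^i \rangle > 0$ for every $i$ and $g^i \in \mathcal{T}$. In the case $r = 0$, that is, when $x^1$ is already optimal, the sum is empty and the claim holds trivially.

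No step is a real obstacle here. The only delicate point is termination: it must rest on the strict increase of the objective together with finiteness of $X$, and not on any bound on the step lengths $\|g^i\|$. The argument does not use the integrality of $\mathcal{T}$, nor any property of $\Theta$ beyond $\theta$ being a fixed vector for which the test-set property applies.
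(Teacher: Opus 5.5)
Your proposal is correct and matches the paper's proof essentially step for step. You take greedy steps along test-set directions supplied by the definition, and the strict increase of $\langle \theta, \cdot \rangle$ makes the iterates pairwise distinct in the finite set $X$, so the process stops at a maximizer; telescoping then gives the decomposition, with $r = 0$ covering the case where $x^1$ is already optimal.
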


See Appendix~\ref{app:proof_lower} for the proof.

Below we assume (1), (2) and (3) of \Cref{assu:margin} and let $\theta^* \in \Theta$ be the true weight.
We define the set of weights whose signs are consistent with those of $\theta^*$ on the test set $\mathcal{T}$ by
\begin{equation}
    \Theta_{\mathcal{T}}(\theta^*)
    := \{ \theta \in \Theta \mid \text{for every } g \in \mathcal{T}, \ \langle \theta^*, g \rangle > 0 \Rightarrow \langle \theta, g \rangle > 0 \} .
    \label{eq:sign_consistent_cone}
\end{equation}
By definition $\theta^* \in \Theta_{\mathcal{T}}(\theta^*)$.

\begin{proposition}[Lower bound on the margin via a test set]
    \label{prop:opt_gap_down_test_set_gap}
    Assume that the set $\mathcal{T}$ is a test set of the discrete set $X(s)$ for every $s \in \mathcal{S}$.
    Then
    \begin{equation}
        \gamma_{\mathrm{sub}}
        \geq \sup_{\theta \in \Theta_{\mathcal{T}}(\theta^*)} \min_{g \in \mathcal{T},\, \langle \theta^*, g \rangle > 0} \langle \theta, g \rangle .
        \label{eq:test_set_lower}
    \end{equation}
\end{proposition}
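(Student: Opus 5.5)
The plan is to fix an arbitrary $\theta \in \Theta_{\mathcal{T}}(\theta^*)$ and show that $\theta$ itself attains, in the inner $\inf_s \min_x$ of \cref{eq:gamma_SL}, a value at least
\[
c(\theta) := \min_{g \in \mathcal{T},\, \langle \theta^*, g \rangle > 0} \langle \theta, g \rangle .
\]
Since $\theta \in \Theta$, this gives $\gamma_\mathrm{sub} \geq c(\theta)$, and taking the supremum over $\theta \in \Theta_{\mathcal{T}}(\theta^*)$ gives \cref{eq:test_set_lower}.

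Two preliminary observations come first. Because $\mathcal{T}$ is finite and $\theta$ is sign-consistent with $\theta^*$ on $\mathcal{T}$, the minimum defining $c(\theta)$ is over a finite set of strictly positive numbers. Hence $c(\theta) > 0$ whenever the index set $\{ g \in \mathcal{T} : \langle \theta^*, g \rangle > 0 \}$ is nonempty. If that set is empty, then $c(\theta) = +\infty$, and the argument below shows that every state has $X(s) = \{ x^*(\theta^*, s) \}$. All states are then excluded from $\inf_s$, and the inequality holds under the paper's convention that an infimum over the empty set is $+\infty$. I will remark on this degenerate case explicitly and then assume the index set is nonempty.

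Main step: fix $s \in \mathcal{S}$ and $x \in Y(s) \setminus \{ x^*(\theta^*, s) \}$. Since $Y(s) \subseteq X(s)$ and $X(s)$ is a bounded subset of $\mathbb{Z}^d$ with test set $\mathcal{T}$, I apply \Cref{prop:discrete_gradient_descent} with $X = X(s)$, the weight $\theta^* \in \Theta$ (allowed by \Cref{assu:margin}(3)), and starting point $x^1 = x$. This yields a maximizer of $\langle \theta^*, \cdot \rangle$ over $X(s)$. By the uniqueness in \Cref{assu:margin}(3), that maximizer must be $x^*(\theta^*, s)$. The proposition also yields $g^1, \ldots, g^r \in \mathcal{T}$ with
\[
x^*(\theta^*, s) - x = \sum_{i=1}^r g^i ,
\qquad
\langle \theta^*, g^i \rangle > 0 \quad (i = 1, \ldots, r) .
\]
Since $x \neq x^*(\theta^*, s)$, we must have $r \geq 1$. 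Each $g^i$ lies in the index set of $c(\theta)$, so $\langle \theta, g^i \rangle \geq c(\theta) > 0$. Therefore
\[
\langle \theta, x^*(\theta^*, s) - x \rangle = \sum_{i=1}^r \langle \theta, g^i \rangle \geq r\, c(\theta) \geq c(\theta) .
\]
This bound holds for every $s$ and every such $x$. Taking $\min_x$ and then $\inf_s$, and using that $\gamma_\mathrm{sub}$ maximizes over $\Theta \ni \theta$, gives $\gamma_\mathrm{sub} \geq c(\theta)$.

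The only delicate point is identifying the endpoint of the decomposition with $x^*(\theta^*, s)$. This relies on the uniqueness in \Cref{assu:margin}(3) together with the fact that the decomposition is taken with respect to $\theta^*$, not $\theta$. It is exactly the sign consistency encoded in $\Theta_{\mathcal{T}}(\theta^*)$ that transfers positivity of $\langle \theta^*, g^i \rangle$ to the lower bound on $\langle \theta, g^i \rangle$. One should also check that using $\theta^*$ in \Cref{defi:test_set} is legitimate, which it is because $\theta^* \in \Theta$. Beyond these points the argument is a direct chaining of the cited results, and I do not expect a genuine obstacle.
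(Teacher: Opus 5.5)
Your proposal is correct and follows the paper's proof essentially step for step: fix $\theta \in \Theta_{\mathcal{T}}(\theta^*)$, decompose $x^*(\theta^*, s) - x$ via \Cref{prop:discrete_gradient_descent} applied with $\theta^*$, use uniqueness in \Cref{assu:margin}(3) to identify the endpoint and sign consistency to bound each $\langle \theta, g^i \rangle$ from below, then take $\min_x$, $\inf_s$ and $\sup_\theta$. The differences are cosmetic. You restrict to $x \in Y(s)$ directly, whereas the paper bounds over all of $X(s)$ and then invokes $Y(s) \subseteq X(s)$. You also handle the degenerate case of an empty index set explicitly, which the paper leaves implicit.
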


See Appendix~\ref{app:proof_lower} for the proof.

\subsection{Definitions from discrete convex analysis and Graver bases}
\label{app:definitions}

In this subsection we collect the definitions, taken from the cited references, that were used in \Cref{sec:lower_bounds}.
For $x \in \mathbb{Z}^d$ we set $\mathrm{supp}^+(x) := \{ i \mid x_i > 0 \}$ and $\mathrm{supp}^-(x) := \{ i \mid x_i < 0 \}$.

\begin{definition}[M-convex set \citep{murota2003discrete}]
    \label{defi:mconvex}
    A set $X \subseteq \mathbb{Z}^d$ is an \textbf{M-convex set} if, for every $x, y \in X$ and every $i \in \mathrm{supp}^+(x - y)$, there exists $j \in \mathrm{supp}^-(x - y)$ such that
    $x - e_i + e_j \in X$ and $y + e_i - e_j \in X$
    (the exchange axiom).
    All elements of an M-convex set have the same coordinate sum $\sum_i x_i$, and M-convex sets coincide with the sets of integer points of integral base polyhedra.
\end{definition}

\begin{definition}[{M${}^\natural$-convex set \citep[cf.][]{murota2003discrete}}]
    \label{defi:mnatconvex}
    We adopt the convention $e_0 := 0 \in \mathbb{Z}^d$. A set $X \subseteq \mathbb{Z}^d$ is an \textbf{M${}^\natural$-convex set} if, for every $x, y \in X$ and every $i \in \mathrm{supp}^+(x - y)$, there exists $j \in \mathrm{supp}^-(x - y) \cup \{ 0 \}$ such that
    $x - e_i + e_j \in X$ and $y + e_i - e_j \in X$.
    M${}^\natural$-convex sets are obtained as coordinate projections of M-convex sets, and coincide with the sets of integer points of generalized integral base polyhedra.
\end{definition}

\begin{definition}[{Graver basis \citep[cf.][]{onn2010nonlinear}}]
    \label{defi:graver}
    For a matrix $\widetilde{A} \in \mathbb{Z}^{N \times n}$ we set $\ker_{\mathbb{Z}}(\widetilde{A}) := \{ g \in \mathbb{Z}^n \mid \widetilde{A} g = 0 \}$. Two vectors $g, h \in \mathbb{Z}^n$ are \textbf{sign consistent} ($g \sqsubseteq h$) if $g_i h_i \geq 0$ and $|g_i| \leq |h_i|$ hold componentwise. The \textbf{Graver basis} $\mathcal{G}(\widetilde{A})$ is the set of all $\sqsubseteq$-minimal elements of $\ker_{\mathbb{Z}}(\widetilde{A}) \setminus \{0\}$ (a finite set).
\end{definition}

\subsection{Polynomial lower bounds for M-convex and M\texorpdfstring{${}^\natural$}{natural}-convex structures}
\label{sec:mconvex_lower}

\begin{assumption}[M-convex feasible set]
    \label{assu:M-convex-projection}
    For every $s \in \mathcal{S}$, the set $X(s) \subseteq \mathbb{Z}^d$ is an M-convex set (defined in Appendix~\ref{app:definitions}).
\end{assumption}

In this case $X(s)$ is a set of finitely many integer points, so \Cref{assu:ILP} is satisfied as well.

\begin{proposition}[{Test set of an M-convex set; \citealp{murota1996convexity,murota1998discrete,murota2003discrete}}]
    \label{prop:M-convex_test_set}
    A test set of an M-convex set can be taken to be the set of single exchange vectors $\mathcal{T} = \{ e_i - e_j \mid i \neq j \}$.
\end{proposition}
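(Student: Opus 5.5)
The plan is to verify \Cref{defi:test_set} directly for $\mathcal{T} = \{ e_i - e_j \mid i \neq j \}$. This is a finite subset of $\mathbb{Z}^d \setminus \{0\}$, so only the improvement property needs proof. The argument never uses $\theta \in \Theta$, so it works for every $\theta \in \mathbb{R}^d$. It combines the exchange axiom of \Cref{defi:mconvex} with a choice of optimal point as close as possible to the given point. This is the classical proof that local optimality implies global optimality for M-convex sets.

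Fix a bounded M-convex set $X$ and $\theta$, and let $x^1 \in X$ satisfy $\langle \theta, x^1 \rangle < \max_{x \in X} \langle \theta, x \rangle$. Because $X$ is finite, the set $X^{\mathrm{opt}} := \argmax_{x \in X} \langle \theta, x \rangle$ is nonempty. Choose $y \in X^{\mathrm{opt}}$ minimizing $\| y - x^1 \|_1$ over $X^{\mathrm{opt}}$. Since $x^1$ is not optimal, $y \neq x^1$. Both points have the same coordinate sum, so $\mathrm{supp}^+(y - x^1) \neq \emptyset$; pick any index $i$ in it.

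Apply the exchange axiom to the pair $(y, x^1)$ with this $i$. It gives $j \in \mathrm{supp}^-(y - x^1)$ (so $j \neq i$) such that
\[
y' := y - e_i + e_j \in X \quad\text{and}\quad x^1 + e_i - e_j \in X .
\]
The second membership is the candidate improving step $g = e_i - e_j \in \mathcal{T}$, so it remains to show $\langle \theta, g \rangle = \theta_i - \theta_j > 0$.

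By optimality of $y$, $\langle \theta, y' \rangle - \langle \theta, y \rangle = \theta_j - \theta_i \leq 0$, hence $\theta_i \geq \theta_j$. Suppose for contradiction that $\theta_i = \theta_j$. Then $y'$ is also optimal. Since $y_i > x^1_i$ and $y_j < x^1_j$, moving from $y$ to $y'$ decreases the $i$-th and $j$-th coordinate gaps by one each, so $\| y' - x^1 \|_1 = \| y - x^1 \|_1 - 2$. This contradicts the minimal choice of $y$. Therefore $\theta_i - \theta_j > 0$, and $\mathcal{T}$ is a test set.

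The only delicate step is this tie-breaking: the exchange axiom by itself yields only the weak inequality $\theta_i \geq \theta_j$. The closest-optimal-point device is what upgrades it to strict improvement, and it needs the exact $\ell_1$ bookkeeping that follows from the sign conditions on $i$ and $j$.
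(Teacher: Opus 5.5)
Your argument is correct. The exchange axiom applied to $(y, x^1)$ yields both $x^1 + e_i - e_j \in X$ and $y - e_i + e_j \in X$; optimality of $y$ then gives $\theta_i \geq \theta_j$; and the $\ell_1$-nearest choice of $y$ rules out equality, with $\| y' - x^1 \|_1 = \| y - x^1 \|_1 - 2$ valid because all coordinates are integers. The paper gives no proof of its own and only cites Murota's local-optimality criterion for M-convex sets, whose standard proof is this same exchange argument with a nearest-point choice (usually run contrapositively: a strictly better point closest to $x^1$ gets pushed closer by the exchange), so your route coincides with the one the citation relies on.
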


\begin{theorem}[Polynomial lower bound for M-convex sets]
    \label{theo:gamma_sub_M_convex_poly}
    Assume (1), (2) and (3) of \Cref{assu:margin} and \Cref{assu:M-convex-projection}.
    Then, in the case $\Theta = \{\theta \in \mathbb{R}^d : \|\theta\|_2 \leq 1\}$ with $d \geq 2$,
    \[
        \gamma_\mathrm{sub} \geq \frac{2\sqrt{3}}{\sqrt{d(d^2-1)}}
        = \Omega\!\left(\frac{1}{d^{3/2}}\right),
    \]
    and in the case $\Theta = \Delta^{d-1}$ with $d \geq 2$,
    \[
        \gamma_\mathrm{sub} \geq \frac{2}{d(d-1)}
        = \Omega\!\left(\frac{1}{d^2}\right).
    \]
\end{theorem}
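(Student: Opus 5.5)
The plan is to apply \Cref{prop:opt_gap_down_test_set_gap} with the test set $\mathcal{T} = \{ e_i - e_j \mid i \neq j \}$ of \Cref{prop:M-convex_test_set}. This $\mathcal{T}$ is a test set of every $X(s)$ simultaneously, since it does not depend on $s$. The bound then reduces to exhibiting one explicit weight $\theta \in \Theta_{\mathcal{T}}(\theta^*)$ and computing
\[
    \min_{g \in \mathcal{T},\ \langle \theta^*, g \rangle > 0} \langle \theta, g \rangle ,
\]
which by \cref{eq:test_set_lower} bounds $\gamma_\mathrm{sub}$ from below. Everything therefore comes down to designing $\theta$ from the ordering of the coordinates of $\theta^*$.

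\textbf{Construction.} Fix a permutation $\pi$ of $\{1,\dots,d\}$ that sorts $\theta^*$ nondecreasingly, so $\theta^*_{\pi(1)} \leq \cdots \leq \theta^*_{\pi(d)}$, breaking ties arbitrarily. Let $\rho(i) \in \{1,\dots,d\}$ be the position of $i$ in this order, so $\rho$ is injective. Take $\theta_i := a + b\,\rho(i)$ with $b > 0$.

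Consider a $g = e_i - e_j \in \mathcal{T}$ with $\langle \theta^*, g \rangle = \theta^*_i - \theta^*_j > 0$. Then $i$ comes after $j$ in the order, so $\rho(i) - \rho(j) \geq 1$. Hence
\[
    \langle \theta, g \rangle = b\,(\rho(i) - \rho(j)) \geq b > 0 .
\]
This gives sign consistency, i.e.\ $\theta \in \Theta_{\mathcal{T}}(\theta^*)$, provided $\theta \in \Theta$. It also shows that the minimum in \cref{eq:test_set_lower} is at least $b$. Ties in $\theta^*$ cause no trouble, because pairs with $\langle \theta^*, g\rangle = 0$ are not constrained. The remaining task is to choose $a$ and the largest admissible $b$ for each $\Theta$.

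\textbf{Unit ball.} Take $a = -b(d+1)/2$, so the values $\theta_i$ are $b$ times the centered ranks $k - \frac{d+1}{2}$, $k = 1,\dots,d$. Then
\[
    \|\theta\|_2^2 = b^2 \sum_{k=1}^d \Bigl(k - \tfrac{d+1}{2}\Bigr)^2 = b^2\, \frac{d(d^2-1)}{12} .
\]
Setting this equal to $1$ gives $b = \sqrt{12/(d(d^2-1))} = 2\sqrt{3}/\sqrt{d(d^2-1)}$, which is well defined since $d \geq 2$.

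\textbf{Simplex.} Take $a = -b$, so $\theta_i = b(\rho(i) - 1) \geq 0$. Then
\[
    \sum_i \theta_i = b\,\frac{d(d-1)}{2} .
\]
Normalizing this sum to $1$ gives $b = 2/(d(d-1))$ and places $\theta$ in $\Delta^{d-1}$.

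\textbf{Main point to check.} I do not expect a real obstacle, since the argument is just this explicit construction. The step to verify carefully is that \Cref{prop:opt_gap_down_test_set_gap} applies as stated. That requires that the test-set property of \Cref{prop:M-convex_test_set} holds for every weight in the chosen $\Theta$, which is true for M-convex sets since it holds for arbitrary linear objectives. It also requires that the supremum in \cref{eq:test_set_lower} is bounded below by the value at our particular $\theta$, which is immediate. The asymptotic forms $\Omega(d^{-3/2})$ and $\Omega(d^{-2})$ then follow at once.
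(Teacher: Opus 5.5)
Your proposal is correct and follows essentially the same route as the paper. Both arguments use the test set $\{e_i - e_j\}$ and a weight that is an arithmetic progression in the rank of $\theta^*_i$: centered and normalized to unit $\ell_2$ norm for the ball, and shifted to start at $0$ and normalized to sum $1$ for the simplex. Both then apply \Cref{prop:opt_gap_down_test_set_gap}, with the adjacent-rank gap giving the bound; sorting nondecreasingly rather than nonincreasingly produces the same vectors.
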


\begin{assumption}[M${}^\natural$-convex feasible set]
    \label{assu:Mnatural-convex-projection}
    For every $s \in \mathcal{S}$, the set $X(s) \subseteq \mathbb{Z}^d$ is an M${}^\natural$-convex set (defined in Appendix~\ref{app:definitions}).
\end{assumption}

\begin{proposition}[{Test set of an M${}^\natural$-convex set; \citealp{murota1999mconvex,murota2003discrete}}]
    \label{prop:Mnatural-convex_test_set}
    A test set of an M${}^\natural$-convex set can be taken to be $\mathcal{T} = \{ e_i - e_j,\ \pm e_i \mid i \neq j \}$.
\end{proposition}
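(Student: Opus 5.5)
The plan is to reduce to the M-convex case (\Cref{prop:M-convex_test_set}) through the standard correspondence between M${}^\natural$-convex sets in $\mathbb{Z}^d$ and M-convex sets in $\mathbb{Z}^{d+1}$, mentioned after \Cref{defi:mnatconvex}. Given a bounded M${}^\natural$-convex set $X$, fix an integer $r \geq \max_{x \in X} \sum_i x_i$ (any integer works) and set
\[
    \widetilde{X} := \{ (x_0, x) \in \mathbb{Z} \times \mathbb{Z}^d \mid x \in X,\ x_0 = r - \textstyle\sum_{i=1}^d x_i \} .
\]
The first step is to check that $\widetilde{X}$ satisfies the M-convex exchange axiom of \Cref{defi:mconvex}. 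This is direct from the M${}^\natural$ exchange axiom with the convention $e_0 = 0$.
\begin{itemize}
    \item An exchange $x - e_i + e_j$ with $i, j \geq 1$ lifts to $\tilde{x} - e_i + e_j$.
    \item The case $j = 0$ (the move $x - e_i$) lifts to $\tilde{x} - e_i + e_0$, where $e_0$ now denotes the genuine extra coordinate, so the coordinate sum is preserved.
    \item The case where the index $0$ lies in $\mathrm{supp}^+(\tilde{x} - \tilde{y})$ is handled symmetrically by swapping the roles of $x$ and $y$.
\end{itemize}
I would cite \citet{murota2003discrete} for this equivalence rather than redo every sign case.

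Next, given $x^1 \in X$ and $\theta$ with $\langle \theta, x^1 \rangle < \max_{x \in X} \langle \theta, x \rangle$, set $\tilde{\theta} := (0, \theta)$. Then $\langle \tilde{\theta}, \tilde{x} \rangle = \langle \theta, x \rangle$, so $\tilde{x}^1$ is non-optimal for $\tilde{\theta}$ on $\widetilde{X}$. The M-convex test set provides $i \neq j \in \{0, \ldots, d\}$ with $\tilde{x}^1 + e_i - e_j \in \widetilde{X}$ and $\tilde{\theta}_i - \tilde{\theta}_j > 0$. Projecting out coordinate $0$ yields the step $g$, whose three possible forms are:
\begin{itemize}
    \item $g = e_i - e_j$ when $i, j \geq 1$;
    \item $g = e_i$ when $j = 0$;
    \item $g = -e_j$ when $i = 0$.
\end{itemize}
In every case $x^1 + g \in X$ and $\langle \theta, g \rangle = \langle \tilde{\theta}, e_i - e_j \rangle > 0$. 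Hence $g \in \{ e_i - e_j, \pm e_i \}$, as claimed.

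The main obstacle is a quantifier mismatch. \Cref{defi:test_set} quantifies over $\theta \in \Theta$, whereas $\tilde{\theta} = (0, \theta)$ need not lie in any prescribed weight set in $\mathbb{R}^{d+1}$. I would resolve this by invoking the M-convex result in its original form: Murota's local optimality criterion states that a point of an M-convex set maximizes a linear function $\langle c, \cdot \rangle$ if and only if no exchange $e_i - e_j$ improves it, and this holds for every $c \in \mathbb{R}^{d+1}$. \Cref{prop:M-convex_test_set} is exactly this statement restricted to $\Theta$. If instead a self-contained argument is preferred, I would proceed directly. Choose an optimal $y \in X$ minimizing $\| y - x^1 \|_1$. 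Take $i \in \mathrm{supp}^+(y - x^1)$, or, if that set is empty, an index in $\mathrm{supp}^-$, using the mirrored axiom. The exchange axiom then yields $j$ with $y - e_i + e_j \in X$ and $x^1 + e_i - e_j \in X$. If $\langle \theta, e_i - e_j \rangle \leq 0$, then $y - e_i + e_j$ is also optimal and strictly closer to $x^1$, contradicting minimality. This direct route needs care with the case split on which support is nonempty, but each case is short.
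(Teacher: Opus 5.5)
Your proposal is correct. The paper gives no proof of its own here. It imports the statement from \citet{murota1999mconvex,murota2003discrete}, where it is the local optimality criterion for M${}^\natural$-convex sets, so both of your routes are more explicit than the paper.

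Your lifting route is sound provided both of its ingredients are cited. You rightly observe that \Cref{prop:M-convex_test_set}, stated over $\Theta$, does not cover the weight $(0,\theta)\in\mathbb{R}^{d+1}$, and that Murota's criterion for arbitrary linear objectives is what is needed. However, your claim that the M-convexity of $\widetilde{X}$ follows ``directly'' from \Cref{defi:mnatconvex} is not accurate. If the M${}^\natural$ axiom returns $j=0$ while $\sum_k x_k \le \sum_k y_k$, then $0\notin\mathrm{supp}^-(\tilde{x}-\tilde{y})$. The lifted move $\tilde{x}-e_i+e_0$ is then not an admissible witness for the M-convex axiom. This already happens for $X=\{0,1\}^2$, $x=e_1$, $y=e_2$, $i=1$: the witness $j=0$ is valid for \Cref{defi:mnatconvex} but not after lifting. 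The case $0\in\mathrm{supp}^+(\tilde{x}-\tilde{y})$ is not a mere role swap either. This equivalence is a genuine theorem, so your citation carries real weight there.

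Your ``self-contained'' argument is the better one, and it is complete. Take $y$ to be an optimal point nearest to $x^1$ in $\ell_1$.
\begin{itemize}
\item If some $i\in\mathrm{supp}^+(y-x^1)$ exists, apply the axiom to $(y,x^1)$. It gives $j\in\mathrm{supp}^-(y-x^1)\cup\{0\}$ with $x^1+e_i-e_j\in X$. If $\langle\theta,e_i-e_j\rangle\le 0$ (convention $e_0=0$), then $y-e_i+e_j$ would be optimal and strictly closer to $x^1$, a contradiction.
\item Otherwise, apply the axiom to $(x^1,y)$ with $i\in\mathrm{supp}^+(x^1-y)$. This forces $j=0$, so $x^1-e_i\in X$ and $y+e_i\in X$. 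The same contradiction gives $\theta_i<0$, so $g=-e_i$ works.
\end{itemize}
This uses only the paper's definition and needs neither the lifting nor any weight set in $\mathbb{R}^{d+1}$. The lifting route adds only the structural insight that $\mathcal{T}$ is the coordinate projection of the M-convex test set $\{e_i-e_j\}$.
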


\begin{theorem}[Polynomial lower bound for M${}^\natural$-convex sets]
    \label{theo:gamma_sub_Mnatural_convex_poly}
    Assume (1), (2) and (3) of \Cref{assu:margin} and \Cref{assu:Mnatural-convex-projection}.
    Then, in the case $\Theta = \{\theta \in \mathbb{R}^d : \|\theta\|_2 \leq 1\}$ with $d \geq 2$,
    \[
        \gamma_\mathrm{sub} \geq \sqrt{\frac{6}{d(d+1)(2d+1)}}
        \geq \frac{1}{d^{3/2}}
        = \Omega\!\left(\frac{1}{d^{3/2}}\right),
    \]
    and in the case $\Theta = \Delta^{d-1}$ with $d \geq 2$,
    \[
        \gamma_\mathrm{sub} \geq \frac{2}{d(d+1)}
        = \Omega\!\left(\frac{1}{d^2}\right).
    \]
\end{theorem}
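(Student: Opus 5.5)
The plan is to apply \Cref{prop:opt_gap_down_test_set_gap} with the test set $\mathcal{T} = \{ e_i - e_j,\ \pm e_i \mid i \neq j \}$. This is a test set of every $X(s)$ by \Cref{prop:Mnatural-convex_test_set}, and it does not depend on $s$, as \Cref{prop:opt_gap_down_test_set_gap} requires. It then suffices to exhibit one explicit $\theta \in \Theta_{\mathcal{T}}(\theta^*)$ for which $\min_{g \in \mathcal{T},\, \langle \theta^*, g\rangle > 0} \langle \theta, g \rangle$ is as large as claimed. For $g = e_i - e_j$ the condition $\langle \theta^*, g \rangle > 0$ reads $\theta^*_i > \theta^*_j$. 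For $g = \pm e_i$ it reads $\pm\theta^*_i > 0$. Hence everything depends only on the strict order of the $d+1$ numbers $\theta^*_1, \dots, \theta^*_d, 0$.

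The construction is a rank vector. Let $v_0 < v_1 < \dots < v_m$ be the distinct values among $\{\theta^*_1,\dots,\theta^*_d, 0\}$, so $m \le d$. Let $\rho(a)$ denote the index of $a$ in this list, and set $r_i := \rho(\theta^*_i) - \rho(0) \in \mathbb{Z}$. Whenever $\theta^*_i > \theta^*_j$ we get $r_i - r_j \ge 1$. Whenever $\theta^*_i > 0$ (resp.\ $< 0$) we get $r_i \ge 1$ (resp.\ $-r_i \ge 1$). So for any $c > 0$ the vector $\theta = c\, r$ satisfies $\langle \theta, g\rangle \ge c > 0$ for every $g \in \mathcal{T}$ with $\langle \theta^*, g\rangle > 0$. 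In particular $\theta$ satisfies the sign-consistency condition in \cref{eq:sign_consistent_cone}, and the right-hand side of \cref{eq:test_set_lower} is at least $c$. It remains to choose $c$ so that $\theta \in \Theta$.

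\emph{Unit ball.} We need $c = 1/\|r\|_2$ (if $r = 0$ the minimum in \cref{eq:test_set_lower} is over the empty set, which is the excluded trivial case). The key counting step is to bound $\|r\|_2^2$. The levels of the nonzero $r_i$ are distinct consecutive integers on each side of $0$, and at most $d$ coordinates occupy them. Filling the positive levels $1,\dots,p$ and the negative levels $-1,\dots,-q$ with $p+q \le d$, possibly with repetitions, a rearrangement/majorization argument gives
\[
\|r\|_2^2 \le \sum_{k=1}^{d} k^2 = \frac{d(d+1)(2d+1)}{6}.
\]
The extremal configuration is all $\theta^*_i$ distinct and on one side of $0$. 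This yields $\gamma_\mathrm{sub} \ge \sqrt{6/(d(d+1)(2d+1))}$. The weaker bound $\ge d^{-3/2}$ follows from $(d+1)(2d+1) \le 6d^2$ for $d \ge 1$.

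\emph{Simplex.} Since $\theta^* \in \Delta^{d-1}$, we have $\theta^*_i \ge 0$, so $0 = v_0$ and $r_i \ge 0$. Moreover $\theta^* \ne 0$, so $\sum_i r_i \ge 1$. Take $\theta := r / \sum_i r_i \in \Delta^{d-1}$, i.e.\ $c = 1/\sum_i r_i$. The same counting gives $\sum_i r_i \le 1 + 2 + \dots + d = d(d+1)/2$, hence $\gamma_\mathrm{sub} \ge 2/(d(d+1))$.

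The main obstacle I expect is the counting step in the ball case: I have to check carefully that ties among the $\theta^*_i$ and the position of $0$ inside the order can only decrease $\|r\|_2^2$ (resp.\ $\sum_i r_i$). I would handle this by noting that the multiset of $|r_i|$ is dominated entrywise, after sorting, by $(1,2,\dots,d)$: the $k$-th largest $|r_i|$ is at most $k$ counted from the bottom, because each level $\ell \ge 1$ on a given side requires a distinct value of $\theta^*$ at each lower level on that side.
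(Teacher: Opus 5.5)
Your proposal is correct and follows essentially the paper's proof: the same test set $\{e_i-e_j,\pm e_i\}$, the same use of \Cref{prop:opt_gap_down_test_set_gap}, and a sign-consistent integer rank vector normalized into $\Theta$, with the final bounds coming from $\sum_{k\le d}k^2$ and $\sum_{k\le d}k$ respectively. The only difference is how ties are handled. The paper sorts $\theta^*$ and breaks ties arbitrarily, assigning $d_+,\dots,1$ to the positive coordinates and $-1,\dots,-d_-$ to the negative ones (and weights proportional to $d,\dots,1$ on the simplex). Then $\sum_i s_i^2=\sum_{k\le d_+}k^2+\sum_{k\le d_-}k^2$ holds outright, so the tie-domination step you identified as the main obstacle is never needed.
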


See Appendix~\ref{app:proof_mconvex} for both proofs.

\subsection{General lower bounds for linear inequality constraints: independence of \texorpdfstring{$\| M \|_2$}{||M||2}}
\label{sec:graver_lower}

In this subsection we show, for general linear inequality constraints with integer coefficients, a lower bound determined solely by the $\ell_\infty$ norm of the Graver basis (defined in Appendix~\ref{app:definitions}) of the coefficient matrix. In particular, the lower bound depends neither on the right-hand side $b(s)$ nor on the range $\| M \|_2$ of the features. This is an essential improvement over \Cref{theo:gamma_sub_lowerbound_on_ball,theo:gamma_sub_lowerbound_on_simplex}.

\begin{assumption}[Linear inequality constraints]
    \label{assu:linear-inequality}
    For a coefficient matrix $A \in \mathbb{Z}^{N \times d}$ and right-hand sides $b(s) \in \mathbb{Z}^N$, let
    $X(s) = \{ x \in \mathbb{Z}^d \mid A x \leq b(s) \}$
    (which is bounded by \Cref{assu:margin}(2)).
\end{assumption}

By introducing slack variables we convert this into the system of equalities $\widetilde{A} \widetilde{x} = b(s)$ (with $\widetilde{A} = [A \mid \Id_N] \in \mathbb{Z}^{N \times (d+N)}$, $\widetilde{x} = (x, y)$ and $y \geq 0$). We define the $\ell_\infty$ norm of the Graver basis $\mathcal{G}(\widetilde{A})$ by
$g_\infty(\widetilde{A}) := \max \{ \| g \|_\infty \mid g \in \mathcal{G}(\widetilde{A}) \} \in \mathbb{Z}_{\geq 1}$.
This is determined by $\widetilde{A}$ (and hence by $A$) alone, and depends neither on $b(s)$ nor on $\| M \|_2$.

\begin{proposition}[{Test set from a Graver basis; \citealp{kitaoka2026explicit}}]
    \label{prop:linear_inequality_test_set}
    Under \Cref{assu:linear-inequality}, setting
    $\mathcal{T}_x := \pi_x(\mathcal{G}(\widetilde{A}))$
    by means of the projection $\pi_x \colon \mathbb{Z}^{d+N} \to \mathbb{Z}^d$, the set $\mathcal{T}_x$ is a test set of $X(s)$ for every $s \in \mathcal{S}$ and satisfies $\| g \|_\infty \leq g_\infty(\widetilde{A})$ for all $g \in \mathcal{T}_x$.
\end{proposition}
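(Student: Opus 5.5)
The bound $\| g \|_\infty \leq g_\infty(\widetilde{A})$ is immediate: $\pi_x$ only deletes coordinates, so $\| \pi_x(g) \|_\infty \leq \| g \|_\infty$. The substance is the test-set property. The plan is to lift both the current point and an optimal point into the slack-augmented fiber and decompose their difference conformally into Graver elements.

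\emph{First step: $\mathcal{T}_x \subset \mathbb{Z}^d \setminus \{0\}$.} Let $g = (g_x, g_y) \in \mathcal{G}(\widetilde{A})$ with $g_x = 0$. Then $\widetilde{A} g = A g_x + g_y = g_y = 0$, so $g = 0$. This contradicts $g \in \ker_{\mathbb{Z}}(\widetilde{A}) \setminus \{0\}$, hence $\pi_x(g) \neq 0$. Finiteness of $\mathcal{T}_x$ follows from finiteness of the Graver basis.

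\emph{Second step: lifting.} Fix $s$, $\theta \in \Theta$, and $x^1 \in X(s)$ with $\langle \theta, x^1 \rangle < \max_{x \in X(s)} \langle \theta, x \rangle$. Choose a maximizer $x^\circ \in X(s)$; it exists because $X(s)$ is finite. Set $\widetilde{x}^1 := (x^1, b(s) - A x^1)$ and $\widetilde{x}^\circ := (x^\circ, b(s) - A x^\circ)$. Both are integral, have nonnegative slack parts, and satisfy $\widetilde{A} \widetilde{x} = b(s)$. Their difference $h := \widetilde{x}^\circ - \widetilde{x}^1$ lies in $\ker_{\mathbb{Z}}(\widetilde{A}) \setminus \{0\}$.

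\emph{Third step: conformal decomposition.} I will invoke the standard Graver fact \citep[cf.][]{onn2010nonlinear} that every $h \in \ker_{\mathbb{Z}}(\widetilde{A})$ can be written as $h = \sum_{i=1}^r g^i$ with $g^i \in \mathcal{G}(\widetilde{A})$ and $g^i \sqsubseteq h$ for all $i$. If a self-contained argument is wanted, I would prove this by induction on $\| h \|_1$. Pick a $\sqsubseteq$-minimal nonzero kernel element $g \sqsubseteq h$, which exists by finiteness of the set of elements below $h$. Such a $g$ is a Graver element. The remainder $h - g$ is again in the kernel, satisfies $h - g \sqsubseteq h$, and has strictly smaller $\ell_1$ norm, so the induction hypothesis applies to it.

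\emph{Fourth step: feasibility and improvement.} Sign consistency gives, componentwise, that $\widetilde{x}^1 + g^i$ lies between $\widetilde{x}^1$ and $\widetilde{x}^\circ$. Hence its slack part stays nonnegative, it satisfies $\widetilde{A}(\widetilde{x}^1 + g^i) = b(s)$, and it is integral. It follows that $x^1 + \pi_x(g^i) \in X(s)$. Moreover,
\[
0 < \langle \theta, x^\circ - x^1 \rangle = \sum_{i=1}^r \langle \theta, \pi_x(g^i) \rangle ,
\]
so some index $i$ has $\langle \theta, \pi_x(g^i) \rangle > 0$. This is exactly the condition of \Cref{defi:test_set}.

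The only nonroutine ingredient is the conformal decomposition in the third step; that is where I expect the main care to be needed. Everything else is bookkeeping with the slack variables.
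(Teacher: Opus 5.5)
Your proof is correct and follows the paper's argument essentially step for step. Both lift $x^1$ and a maximizer to the slack-augmented fiber, decompose the difference conformally into Graver elements, use sign consistency to keep each single step $x^1 + \pi_x(g^i)$ feasible, and pick a summand with $\langle \theta, \pi_x(g^i) \rangle > 0$. You also add two things the paper leaves implicit, and both are valid: the check that $\pi_x$ of a Graver element is never $0$, so that $\mathcal{T}_x \subset \mathbb{Z}^d \setminus \{0\}$ as \Cref{defi:test_set} requires, and the induction on $\|h\|_1$ proving the conformal decomposition that the paper simply cites.
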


\begin{theorem}[Lower bound independent of $\| M \|_2$ for linear inequalities with the unit ball]
    \label{theo:gamma_sub_linear_inequality_ball}
    Assume (1), (2) and (3) of \Cref{assu:margin} and \Cref{assu:linear-inequality}, let $\Theta$ be the unit ball and let $d \geq 2$. Set $C_g := g_\infty(\widetilde{A})$ and assume that the convex hull of $S^+ := \{ g \in \mathcal{T}_x \mid \langle \theta^*, g \rangle > 0 \}$ is full-dimensional ($\dim \Conv(S^+) = d$). Then
    \[
        \gamma_\mathrm{sub}
        \geq \frac{1}{\sqrt{d-1}\,(2C_g\sqrt{d})^{d-1}} .
    \]
    In particular, the lower bound depends neither on $b(s)$ nor on $\| M \|_2$. If $A$ is a totally unimodular matrix then $C_g = 1$.
\end{theorem}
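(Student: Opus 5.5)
The plan is to combine \Cref{prop:opt_gap_down_test_set_gap} with the test set $\mathcal{T}_x$ of \Cref{prop:linear_inequality_test_set}. By \Cref{prop:linear_inequality_test_set}, $\mathcal{T}_x$ is a test set of $X(s)$ simultaneously for every $s$. Every $g \in \mathcal{T}_x$ is a nonzero integer vector with $\|g\|_\infty \le C_g$, and hence $\|g\|_2 \le C_g\sqrt{d}$.

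It then suffices to exhibit a single $\theta \in \Theta_{\mathcal{T}_x}(\theta^*)$ with $\langle \theta, g\rangle \ge \frac{1}{\sqrt{d-1}(2C_g\sqrt d)^{d-1}}$ for all $g \in S^+$. Any $\theta$ that is strictly positive on all of $S^+$ automatically lies in $\Theta_{\mathcal{T}_x}(\theta^*)$, since the defining condition only concerns $g$ with $\langle\theta^*,g\rangle>0$, that is, $g\in S^+$. For $\Theta = B^d$ the right-hand side of \cref{eq:test_set_lower} is at least
\[
\min_{z \in \Conv S^+} \max_{\|\theta\|\le 1} \langle \theta, z\rangle = \dist(0, \Conv S^+),
\]
by the minimax theorem, or directly by separating $0$ from the compact polytope $\Conv S^+$. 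Note that $0\notin\Conv S^+$ because $\theta^*$ is strictly positive on $S^+$. The normalized nearest point $\theta = z^\star/\|z^\star\|$ satisfies $\langle\theta,g\rangle \ge \|z^\star\|$ for all $g\in S^+$ and lies in $\Theta_{\mathcal{T}_x}(\theta^*)$.

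The core step is therefore the lattice-geometric lower bound $\dist(0,\Conv S^+) \ge 1/(\sqrt{d-1}(2C_g\sqrt d)^{d-1})$. I would mirror the argument behind \Cref{theo:gamma_sub_lowerbound_on_ball}, with $\|M\|_2$ replaced by the vertex-norm bound $R:=C_g\sqrt d$; this replacement is exactly what \Cref{rem:lowdim_cases}(iv) indicates. Since $\Conv S^+$ is full-dimensional, the nearest point $z^\star$ lies on a facet $F$ whose supporting hyperplane $\{z : \langle a, z\rangle = \beta\}$ separates $0$ from the polytope.

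The hyperplane is spanned by $d$ affinely independent integer vertices $v_1,\dots,v_d\in S^+$. Take the integral normal $a$ given by the generalized cross product of $v_2-v_1,\dots,v_d-v_1$, whose entries are $(d-1)\times(d-1)$ minors. Then $\beta=\langle a,v_1\rangle$ is a nonzero integer, so $|\beta|\ge 1$. By Hadamard's inequality, $\|a\|$ is bounded by the product of the $\|v_i-v_1\|\le 2R$, up to a factor coming from the minors. A $\sqrt{d-1}$-type factor arises when one passes from $\|a\|_\infty$ (each minor bounded by Hadamard) to $\|a\|_2$, or alternatively from a Cauchy--Binet bound. This gives $\|a\|\le \sqrt{d-1}(2R)^{d-1}$, which I would track carefully to match the constant. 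Hence
\[
\dist(0,\Conv S^+) \ge \frac{|\beta|}{\|a\|} \ge \frac{1}{\sqrt{d-1}(2C_g\sqrt d)^{d-1}}.
\]

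The main obstacle is the careful bound on the norm of the integral normal with exactly the stated constant, rather than a cruder one such as $\sqrt d (2R)^{d-1}$. I would first try expanding the cross product using differences $v_i-v_1$ each of length at most $2R$, and bounding the resulting Gram determinant by Hadamard, $\|a\| = \sqrt{\det G}\le\prod\|v_i-v_1\|\le (2R)^{d-1}$. That bound is even stronger than needed and would absorb the $\sqrt{d-1}$. If the earlier theorem's proof instead routes through coordinates, I would simply reuse it verbatim with $\|M\|_2$ replaced by $2C_g\sqrt d$ as the bound on vertex differences. The final remark that $C_g=1$ for totally unimodular $A$ follows because $[A\mid \Id_N]$ is then totally unimodular, and the Graver elements of a totally unimodular matrix are its circuits, which have $\{0,\pm1\}$ entries.
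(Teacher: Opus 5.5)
Your proposal is correct and follows the paper's proof. Both arguments reduce, via \Cref{prop:opt_gap_down_test_set_gap} and the minimax/nearest-point step, to bounding $\dist(0,\Conv S^+)$ from below. Both then take the integral cofactor normal $a$ of a facet separating $0$, spanned by $d$ affinely independent lattice vertices of $S^+$, and use that the offset is a nonzero integer together with Hadamard's inequality.

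Your Cauchy--Binet/Gram estimate $\|a\| = \sqrt{\det(UU^\top)} \le \prod_{i=2}^{d} \|v_i - v_1\| \le (2C_g\sqrt{d})^{d-1}$ is valid. It is in fact sharper than the paper's coordinatewise Hadamard bound, which is the source of the extra factor $\sqrt{d-1}$, so the stated inequality follows a fortiori.
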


\begin{theorem}[Lower bound independent of $\| M \|_2$ for linear inequalities with the probability simplex]
    \label{theo:gamma_sub_linear_inequality_simplex}
    Assume (1), (2) and (3) of \Cref{assu:margin} and \Cref{assu:linear-inequality}, and let $\Theta = \Delta^{d-1}$ and $d \geq 2$. Set $C_g := g_\infty(\widetilde{A})$. Then
    \[
        \gamma_\mathrm{sub}
        \geq \frac{1}{\max(d-1, \sqrt{2})\,(2C_g\sqrt{d})^{d-1}} .
    \]
    In particular, the lower bound depends neither on $b(s)$ nor on $\| M \|_2$. If $A$ is a totally unimodular matrix then $C_g = 1$.
\end{theorem}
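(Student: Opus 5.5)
The plan is to reduce the statement to \Cref{prop:opt_gap_down_test_set_gap} with the Graver-based test set of \Cref{prop:linear_inequality_test_set}, and then repeat the polyhedral argument of \Cref{theo:gamma_sub_lowerbound_on_simplex}, replacing the range bound $\| M \|_2$ by a bound on the test vectors.

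\textbf{Step 1 (reduction).} By \Cref{prop:linear_inequality_test_set}, $\mathcal{T}_x$ is a test set of every $X(s)$. Each $g \in \mathcal{T}_x$ satisfies $\| g \|_\infty \leq C_g$, hence $\| g \|_2 \leq C_g\sqrt{d}$. Set $S^+ := \{ g \in \mathcal{T}_x \mid \langle \theta^*, g \rangle > 0 \}$. By \Cref{prop:opt_gap_down_test_set_gap} it suffices to exhibit one $\theta \in \Delta^{d-1}$ that satisfies two conditions:
\begin{itemize}
\item $\langle \theta, g \rangle > 0$ for all $g \in S^+$, which gives $\theta \in \Theta_{\mathcal{T}_x}(\theta^*)$;
\item $\min_{g \in S^+} \langle \theta, g \rangle \geq 1 / \bigl( \max(d-1,\sqrt{2})\,(2C_g\sqrt{d})^{d-1} \bigr)$.
\end{itemize}
If $S^+ = \emptyset$ the minimum is vacuous, and we take $\theta = \theta^*$.

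\textbf{Step 2 (separating facet).} Consider the polyhedron $P := \Conv(S^+) + \mathbb{R}_{\geq 0}^d$. It is always full-dimensional, which is why no full-dimensionality hypothesis is needed here, in contrast with \Cref{theo:gamma_sub_linear_inequality_ball}. Since $\theta^* \in \Delta^{d-1}$ is nonnegative and strictly positive on the finite set $S^+$, we get $\min_{z\in P} \langle \theta^*, z\rangle > 0$, so $0 \notin P$. Hence some facet inequality $\langle a, z \rangle \geq \beta$ of $P$ is violated at $0$, that is, $\beta > 0$. Because the recession cone of $P$ contains $\mathbb{R}_{\geq 0}^d$, we have $a \geq 0$ and $a \neq 0$. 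The facet is spanned by $d$ affinely independent generators of the following kinds:
\begin{itemize}
\item a vertex $v^0$, which lies in $S^+$;
\item further vertices $v^1, \dots, v^p \in S^+$;
\item extreme rays $e_{i_1}, \dots, e_{i_q}$, with $p+q = d-1$.
\end{itemize}
Take $a$ to be the generalized cross product of the $d-1$ integral vectors $v^1 - v^0, \dots, v^p - v^0, e_{i_1}, \dots, e_{i_q}$, i.e.\ the vector of signed $(d-1)\times(d-1)$ minors, flipping its sign if necessary so that $P$ lies on the side $\langle a, z\rangle \geq \beta$. This $a$ is integral, so $\beta = \langle a, v^0 \rangle$ is a positive integer, hence $\beta \geq 1$. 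By Hadamard's inequality, $\| a \|_2 \leq \prod \| \text{generators} \|_2 \leq (2C_g\sqrt{d})^{p} \leq (2C_g\sqrt{d})^{d-1}$. Here we use $\| v^i - v^0 \|_2 \leq 2C_g\sqrt{d}$ and $\| e_i \| = 1$.

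\textbf{Step 3 (normalize).} Put $\theta := a / \| a \|_1 \in \Delta^{d-1}$. Since $S^+ \subseteq P$, every $g \in S^+$ satisfies $\langle \theta, g \rangle \geq \beta/\| a\|_1 \geq 1/\|a\|_1 > 0$. This gives both the sign consistency and the margin, so it remains to bound $\| a \|_1$.

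\textbf{Main obstacle.} The crude estimate $\| a \|_1 \leq \sqrt{d}\,\| a \|_2$ loses a factor $\sqrt{d}$ instead of $\max(d-1,\sqrt{2})$, so the $\ell_1$ bound needs more care. I would first use the structure of the minors. Each ray $e_{i_j}$ forces $a_{i_j} = 0$, so $a$ is supported on at most $p+1$ coordinates. Each coordinate $a_i$ is a single $(d-1)$-minor, which Hadamard bounds by $(2C_g\sqrt{d})^{p}$ column by column. Combining these with $p \leq d-1$ should give $\| a \|_1 \leq \max(d-1,\sqrt{2})\,(2C_g\sqrt{d})^{d-1}$. The $\sqrt{2}$ covers the degenerate case $d=2$, where one uses $\|a\|_1 \leq \sqrt{2}\,\|a\|_2$ directly. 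In bookkeeping this step, I would follow verbatim the corresponding estimate in the proof of \Cref{theo:gamma_sub_lowerbound_on_simplex}, with $2\|M\|_2$ replaced by the vertex-difference bound $2C_g\sqrt{d}$.
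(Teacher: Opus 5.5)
Your proposal is correct and follows the paper's approach: the Graver test set, the polyhedron $\Conv(S^+)+\mathbb{R}^d_{\geq 0}$, an integral nonnegative facet normal built from vertex differences and unit vectors, and normalization onto the simplex. The one structural difference is minor. You plug $\theta=a/\|a\|_1$ directly into \Cref{prop:opt_gap_down_test_set_gap} as a witness. The paper instead first uses the minimax identity to reduce to $\min_{q\in\Conv(S^+)}\max_i q_i$, and then bounds that quantity below by $1/\sum_j a_j$. The two are equivalent, and yours is slightly shorter.

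The ``main obstacle'' you flag is not real; you have the comparison backwards. For every integer $d\geq 2$ one has $\sqrt{d}\leq\max(d-1,\sqrt{2})$: there is equality at $d=2$, and $\sqrt{d}\leq d-1$ for $d\geq 3$ because $d^2-3d+1>0$.

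Your Step 2 bound $\|a\|_2\leq(2C_g\sqrt{d})^{d-1}$ is valid. It follows from Cauchy--Binet, $\|a\|_2^2=\det(UU^\top)$, together with Hadamard's inequality for the Gram matrix. Combined with $\|a\|_1\leq\sqrt{d}\,\|a\|_2$, it already gives $\gamma_{\mathrm{sub}}\geq 1/\bigl(\sqrt{d}\,(2C_g\sqrt{d})^{d-1}\bigr)$. This implies the stated bound and is strictly stronger for $d\geq 3$. So you can drop the obstacle paragraph and finish with this crude $\ell_1$--$\ell_2$ estimate.

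By contrast, the coordinatewise fallback you sketch does not reach the stated constant as written. Bounding each $|a_i|$ by $(2C_g\sqrt{d})^{p}$ and summing over a support of size $p+1\leq d$ gives only $d\,(2C_g\sqrt{d})^{d-1}$, which is off by a factor $d/(d-1)$. The paper's version of that route works for a specific reason. After column $i$ is deleted, each difference row has norm at most $2C_g\sqrt{d-1}$; this is the term $\sum_{j\neq i}M_j^2$ with $M_j=C_g$. The resulting sum is then controlled by \Cref{prop:max_1-x_on_simplex}, which is where the factor $\max(d-1,\sqrt{2})$ comes from.
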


See Appendix~\ref{app:proof_graver} for both proofs.
Whereas the existing ILP lower bounds (\Cref{theo:gamma_sub_lowerbound_on_ball,theo:gamma_sub_lowerbound_on_simplex}) depend strongly on the range $m$ of the features, the lower bounds of this subsection depend only on $g_\infty(\widetilde{A})$, a quantity determined by the coefficient matrix alone.
Comparing the denominators of the two lower bounds (in the case of the unit ball), the ratio is $(C_g\sqrt{d}/\| M \|_2)^{d-1}$, so for problems with large transportation amounts, demands or capacities but a simple structure of the coefficient matrix (a small $g_\infty(\widetilde{A})$), the improvement factor is roughly $(\| M \|_2 / (C_g\sqrt{d}))^{d-1}$.

\begin{remark}[The M-convex and M${}^\natural$-convex lower bounds are not corollaries of this subsection]
    \label{rem:mconvex_not_corollary}
    An M${}^\natural$-convex set is the set of integer points of a generalized integral base polyhedron, and that polyhedron is described by submodular and supermodular inequalities over a family of subsets.
    The corresponding coefficient matrix (the matrix whose rows are the indicator vectors of the subsets) is in general \emph{not} totally unimodular (for instance, the rows $(1,1,0)$, $(0,1,1)$, $(1,0,1)$ form a square submatrix of determinant $\pm 2$).
    Therefore \Cref{theo:gamma_sub_M_convex_poly,theo:gamma_sub_Mnatural_convex_poly} cannot in general be derived as corollaries of the results of this subsection with $C_g = 1$.
    They are obtained directly from the fact that the test sets can be taken explicitly as $\{ e_i - e_j \}$ and $\{ e_i - e_j, \pm e_i \}$, and as lower bounds they are, at $\Omega(d^{-3/2})$ and $\Omega(d^{-2})$, far better than the $(2C_g\sqrt{d})^{-(d-1)}$-type bounds of this subsection.
\end{remark}

\section{Proofs for \texorpdfstring{\Cref{sec:lower_bounds}}{the lower bounds on the margin}, I: general theory}
\label{app:proof_lower}

\begin{proof}[Proof of \Cref{prop:discrete_gradient_descent}]
    Since $X \subset \mathbb{Z}^d$ is bounded, it is a finite set. We construct a sequence of points $\{ x^{(k)} \}_{k \geq 0}$ inductively as follows: set $x^{(0)} := x^1$, and as long as $x^{(k)} \in X$ satisfies $\langle \theta, x^{(k)} \rangle < \max_{x \in X} \langle \theta, x \rangle$, take, by \Cref{defi:test_set}, some $g^{(k+1)} \in \mathcal{T}$ with
    $x^{(k+1)} := x^{(k)} + g^{(k+1)} \in X$ and $\langle \theta, g^{(k+1)} \rangle > 0$.
    Then $\langle \theta, x^{(k)} \rangle$ is strictly increasing, so $x^{(0)}, x^{(1)}, \ldots$ are pairwise distinct, and by the finiteness of $X$ the construction terminates after finitely many steps $r \geq 0$.
    The terminal point does not satisfy the continuation condition, that is, $\langle \theta, x^{(r)} \rangle = \max_{x \in X} \langle \theta, x \rangle$, so $x^{(r)} \in \argmax_{x \in X} \langle \theta, x \rangle$.
    Setting $x^* := x^{(r)}$ and $g^i := g^{(i)}$ ($i = 1, \ldots, r$), the construction gives $x^{(k+1)} - x^{(k)} = g^{(k+1)}$, so summing over $k = 0, \ldots, r-1$ yields $x^* - x^1 = \sum_{i=1}^r g^i$ (if $x^1$ is already a maximizer then $r = 0$ and the sum is empty).
\end{proof}

\begin{proof}[Proof of \Cref{lem:margin_from_gamma}]
    Consider the function
    \[
        \varphi(\theta) := \inf_{s \in \mathcal{S}} \min_{x \in Y(s) \setminus \{ x^*(\theta^*, s) \}} \langle \theta, x^*(\theta^*, s) - x \rangle .
    \]
    Since $Y(s)$ is a finite set for each $s$ by \Cref{assu:margin}(2), the inner $\min$ is the minimum of finitely many linear functions, and hence concave and $L_\mathrm{sub}$-Lipschitz.
    Here, for a compact set $X(s)$ the extreme points of $\Conv X(s)$ belong to $X(s)$, so $Y(s) \subseteq X(s)$, and the norms of the gradients are bounded by $\| x^*(\theta^*, s) - x \| \leq L_\mathrm{sub}$ (\cref{eq:Lipschitz_SL}).
    Therefore $\varphi$, being the infimum of those functions over $s$, is concave and $L_\mathrm{sub}$-Lipschitz (note that $\varphi(\theta) \geq - L_\mathrm{sub} \| \theta \| > - \infty$ at every point), and it attains its maximum value $\gamma_\mathrm{sub}$ on the bounded closed set $\Theta$. Let $\bar{\theta}$ be a maximizer.

    Assume $\gamma_\mathrm{sub} > 0$ and let us verify (4) and (5) of \Cref{assu:margin} ((1), (2) and (3) being assumptions).
    (4): the identity $\varphi(\bar{\theta}) = \gamma_\mathrm{sub}$ is exactly \cref{eq:margin}.
    (5): since the definitions \cref{eq:def_DL} and \cref{eq:Lipschitz_SL} are identical we have $L = L_\mathrm{sub}$, and by assumption $0 < L = L_\mathrm{sub} < \infty$.
\end{proof}

The following two propositions are used for the minimax expression of \cref{eq:gamma_SL}.

\begin{proposition}
    \label{prop:commute_cl_conv_on_bdd}
    Let $A \subset \mathbb{R}^n$ be a bounded set. Then
    $\overline{\Conv A} = \Conv \overline{A}$.
\end{proposition}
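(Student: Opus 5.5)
We prove the two inclusions separately. Throughout we use that the closure of a convex set is convex, and that Carathéodory's theorem expresses every point of $\Conv B$, for any $B \subset \mathbb{R}^n$, as a convex combination of at most $n+1$ points of $B$.

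\emph{Inclusion $\Conv \overline{A} \subseteq \overline{\Conv A}$.} Since $A \subseteq \Conv A \subseteq \overline{\Conv A}$ and the right-hand side is closed, we have $\overline{A} \subseteq \overline{\Conv A}$. The set $\overline{\Conv A}$ is convex, so it contains the convex hull of $\overline{A}$. Boundedness is not needed here.

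\emph{Inclusion $\overline{\Conv A} \subseteq \Conv \overline{A}$.} Since $\Conv A \subseteq \Conv \overline{A}$, it suffices to show that $\Conv \overline{A}$ is closed; this is where boundedness enters. The set $\overline{A}$ is closed and bounded, hence compact. Let $\Delta_n := \{ \lambda \in \mathbb{R}_{\geq 0}^{n+1} : \sum_i \lambda_i = 1 \}$ and consider the continuous map
\[
    F \colon \Delta_n \times \overline{A}^{\,n+1} \to \mathbb{R}^n,
    \qquad
    (\lambda, a_1, \ldots, a_{n+1}) \mapsto \sum_{i=1}^{n+1} \lambda_i a_i .
\]
By Carathéodory's theorem, the image of $F$ is exactly $\Conv \overline{A}$. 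The domain of $F$ is a product of compact sets and hence compact, so the image is compact and in particular closed. Therefore $\overline{\Conv A} \subseteq \Conv \overline{A}$, and the two inclusions give the equality.

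The only substantive step is the closedness of the convex hull of a compact set, obtained above via Carathéodory's theorem. Without boundedness this step fails: for example, the convex hull of the closed set $\{(0,0)\} \cup \{(t,1) : t \in \mathbb{R}\}$ is not closed.
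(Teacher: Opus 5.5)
Your proof is correct and follows essentially the same route as the paper: the inclusion $\Conv \overline{A} \subseteq \overline{\Conv A}$ comes from the closed convex set $\overline{\Conv A}$ containing $\overline{A}$. The reverse inclusion comes from closedness of $\Conv \overline{A}$, obtained as the continuous image of the compact set $\Delta_n \times \overline{A}^{\,n+1}$ via Carath\'{e}odory's theorem.
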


\begin{proof}
    ($\supset$) From $A \subset \overline{A}$ we get $\Conv A \subset \Conv \overline{A}$. Taking the closures of both sides gives $\overline{\Conv A} \subset \overline{\Conv \overline{A}}$. Since $A$ is bounded, $\overline{A}$ is compact, and in a finite-dimensional space the convex hull of a compact set is compact (by Carath\'{e}odory's theorem, $\Conv \overline{A}$ is the image of the compact set $\Delta_n \times \overline{A}^{n+1}$ under the continuous map $(\lambda, x_0, \ldots, x_n) \mapsto \sum_{i=0}^n \lambda_i x_i$). Hence $\Conv \overline{A}$ is closed, so $\overline{\Conv \overline{A}} = \Conv \overline{A}$, and therefore $\overline{\Conv A} \subset \Conv \overline{A}$.

    ($\subset$) The set $\overline{\Conv A}$ is a closed convex set containing $A$. Since $\overline{A}$ is the smallest closed set containing $A$, we have $\overline{A} \subset \overline{\Conv A}$, and since $\overline{\Conv A}$ is convex, $\Conv \overline{A} \subset \overline{\Conv A}$.
\end{proof}

\begin{proposition}
    \label{prop:Minimax_theorem_finite}
    Let $\Theta$ be a bounded closed convex set and let $A \subset \mathbb{R}^d$ be a bounded closed set. Then
    \[
        \max_{\theta \in \Theta} \min_{a \in A} \langle \theta, a \rangle
        =
        \min_{a \in \Conv A} \max_{\theta \in \Theta} \langle \theta, a \rangle .
    \]
\end{proposition}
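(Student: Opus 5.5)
The plan is to prove the two inequalities separately. The inequality $\max\min \le \min\max$ comes from weak duality. The reverse inequality comes from a separating-hyperplane argument applied to the compact convex set $\Conv A$. First I would record that every extremum is attained. By \Cref{prop:commute_cl_conv_on_bdd}, $\Conv A = \Conv \overline{A}$ is compact. The function $\theta \mapsto \min_{a \in A}\langle \theta, a\rangle$ is a minimum of linear functions over a compact set, so it is continuous (indeed Lipschitz) and attains its maximum on the compact set $\Theta$. Likewise the support function $a \mapsto \max_{\theta\in\Theta}\langle\theta,a\rangle$ is finite, convex and continuous, so it attains its minimum on $\Conv A$. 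I would also note that $\min_{a\in A}\langle\theta,a\rangle = \min_{a\in\Conv A}\langle\theta,a\rangle$, because a linear function has the same infimum over a set and over its convex hull.

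For ``$\le$'', fix any $\theta\in\Theta$ and any $a\in\Conv A$. Then $\min_{a'\in\Conv A}\langle\theta,a'\rangle \le \langle\theta,a\rangle \le \max_{\theta'}\langle\theta',a\rangle$. Taking the maximum over $\theta$ and the minimum over $a$ gives the inequality.

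For ``$\ge$'', set $v := \min_{a\in\Conv A}\max_{\theta\in\Theta}\langle\theta,a\rangle$ and suppose, for contradiction, that $\max_\theta\min_{a\in\Conv A}\langle\theta,a\rangle < c < v$ for some $c$. Consider the two convex sets $\Conv A$ and $\Theta$ in the bilinear pairing. The cleanest route is to invoke Sion's minimax theorem: the pairing is bilinear, hence continuous and convex-concave, and both $\Theta$ and $\Conv A$ are compact convex. Sion's theorem then gives the equality directly, and the $\min$ over $\Conv A$ equals the $\min$ over $A$ as noted above.

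If a self-contained argument is preferred instead, I would argue as follows. Define $C := \Conv A - \{z : \max_{\theta\in\Theta}\langle\theta,z\rangle \le c\}$. The second set is closed and convex, and it is the polar-type set of $\Theta$ scaled by $c$. Since $c<v$, no point $a \in \Conv A$ satisfies $\max_\theta\langle\theta,a\rangle\le c$, so $0\notin C$. Separating $0$ strictly from $C$ produces a direction, and that direction must then be identified with an element of $\Theta$ via a conic-duality step.

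The main obstacle is exactly this last identification of the separating functional with an element of $\Theta$, together with handling the case where $\Theta$ does not contain $0$. The difficulty is why I would rely on Sion's theorem, or equivalently on von Neumann's theorem after approximating the compact sets by finite nets and passing to the limit using the Lipschitz continuity of the pairing on bounded sets, rather than on a bare separation argument.
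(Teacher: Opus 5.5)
Your proposal is correct and follows essentially the same route as the paper: replace $A$ by the compact convex set $\Conv A$ (a linear function has the same minimum over $A$ as over $\Conv A$), then apply the minimax theorem to the bilinear pairing on the compact convex sets $\Theta$ and $\Conv A$. The separate weak-duality step and the unfinished separation sketch are unnecessary, since Sion's theorem already gives both inequalities.
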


\begin{proof}
    Since $A$ is compact, so is $\Conv A$ (see the proof of \Cref{prop:commute_cl_conv_on_bdd}).
    By the maximum principle (the minimum of a linear function on $\Conv A$ is attained at an extreme point, and the extreme points of $\Conv A$ are contained in $A$),
    $\max_{\theta \in \Theta} \min_{a \in A} \langle \theta, a \rangle = \max_{\theta \in \Theta} \min_{a \in \Conv A} \langle \theta, a \rangle$.
    Since $\Theta$ and $\Conv A$ are both compact convex sets and $(\theta, a) \mapsto \langle \theta, a \rangle$ is bilinear, the minimax theorem gives
    $\max_{\theta \in \Theta} \min_{a \in \Conv A} \langle \theta, a \rangle = \min_{a \in \Conv A} \max_{\theta \in \Theta} \langle \theta, a \rangle$.
\end{proof}

Using the set $Z^*$ of \cref{eq:Zstar},
\begin{equation}
    \gamma_\mathrm{sub}
    \geq \max_{\theta \in \Theta} \inf_{z \in Z^*} \langle \theta, z \rangle
    = \max_{\theta \in \Theta} \min_{z \in \overline{Z^*}} \langle \theta, z \rangle
    = \min_{z \in \Conv \overline{Z^*}} \max_{\theta \in \Theta} \langle \theta, z \rangle
    \label{eq:gamma_minimax}
\end{equation}
holds. The first inequality is due to the differences of $Z^*$ being taken over all of $X(s)$ rather than over $Y(s)$ (\cref{eq:Zstar}): since $Y(s) \subseteq X(s)$, for each $\theta$ the quantity $\inf_{z \in Z^*} \langle \theta, z \rangle$ is at most the quantity inside the $\max$ of \cref{eq:gamma_SL} (this direction suffices for the argument yielding a lower bound). For the first equality we used that $\overline{Z^*}$ is compact, since $Z^*$ is bounded (every $z \in Z^*$ satisfies $\|z\| \leq L_\mathrm{sub} < \infty$), and that the infimum of the continuous function $\langle \theta, z \rangle$ on $Z^*$ coincides with its minimum on $\overline{Z^*}$; for the second equality we used \Cref{prop:Minimax_theorem_finite} (with $A = \overline{Z^*}$).

\begin{proof}[Proof of \Cref{prop:opt_gap_down_test_set_gap}]
    Take an arbitrary $\theta \in \Theta_{\mathcal{T}}(\theta^*)$.
    For every $s \in \mathcal{S}$ and $y \in X(s) \setminus \{ x^*(\theta^*, s) \}$, apply \Cref{prop:discrete_gradient_descent} with $X = X(s)$, $\theta = \theta^*$ and $x^1 = y$. Since $x^*(\theta^*, s)$ is the unique maximizer of $\theta^*$ on $X(s)$ by \Cref{assu:margin}(3), we have $x^* = x^*(\theta^*, s)$, and there exist $g^1, \ldots, g^r \in \mathcal{T}$ ($r \geq 1$) with
    \[
        x^*(\theta^*, s) - y = \sum_{i=1}^r g^i ,
        \qquad
        \langle \theta^*, g^i \rangle > 0 \quad (i = 1, \ldots, r)
    \]
    (where $r \geq 1$ because $x^*(\theta^*, s) \neq y$). Since each $g^i$ satisfies $\langle \theta, g^i \rangle > 0$,
    \[
        \langle \theta, x^*(\theta^*, s) - y \rangle
        = \sum_{i=1}^r \langle \theta, g^i \rangle
        \geq \min_{g \in \mathcal{T},\, \langle \theta^*, g \rangle > 0} \langle \theta, g \rangle .
    \]
    Taking the infimum over $y$ and $s$,
    $\inf_{s} \min_{y} \langle \theta, x^*(\theta^*, s) - y \rangle \geq \min_{g \in \mathcal{T},\, \langle \theta^*, g \rangle > 0} \langle \theta, g \rangle$.
    Since $Y(s) \subseteq X(s)$ by \Cref{assu:margin}(2), the left-hand side is at most the quantity inside the $\max$ of \cref{eq:gamma_SL}, and hence at most $\gamma_\mathrm{sub}$ since $\theta \in \Theta_{\mathcal{T}}(\theta^*) \subset \Theta$. Taking the supremum over $\theta \in \Theta_{\mathcal{T}}(\theta^*)$ on the right-hand side gives \cref{eq:test_set_lower}.
\end{proof}

\section{The separating hyperplane theorem and a lemma on the norm of the normal vector}
\label{app:proof_separation}

\begin{definition}
    \label{def:closed_halfspace}
    For a vector $\mathbf{b} \in \mathbb{R}^d$ with $\mathbf{b} \neq \mathbf{0}$ and a scalar $c \in \mathbb{R}$, we call
    $H = \{\mathbf{x} \in \mathbb{R}^d \mid \langle \mathbf{b}, \mathbf{x} \rangle \leq c\}$
    a \textbf{closed halfspace}. In this case we call $\partial H = \{\mathbf{x} \in \mathbb{R}^d \mid \langle \mathbf{b}, \mathbf{x} \rangle = c\}$ the \textbf{boundary hyperplane} of $H$.
\end{definition}

\begin{proposition}
    \label{prop:str_separating_hyperplane_theorem_for_polyhedra}
    In the Euclidean space $\mathbb{R}^d$, suppose we are given a $d$-dimensional bounded convex polytope $Q$ and a point $p \in \mathbb{R}^d \setminus Q$ not belonging to $Q$. Then there exist a closed halfspace $H = \{\mathbf{x} \in \mathbb{R}^d \mid \langle \mathbf{b}, \mathbf{x} \rangle \leq c\}$ and affinely independent vertices $y^1, \ldots, y^{d}$ of $Q$ such that the following hold:
    \begin{enumerate}
        \item $Q \subset H$;
        \item $p \notin H$ (that is, $\langle \mathbf{b}, p \rangle > c$);
        \item $\partial H = \aff(y^1, \ldots, y^{d})$.
    \end{enumerate}
\end{proposition}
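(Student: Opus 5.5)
The plan is to take a nearest point of $Q$ to $p$, use the minimal face containing it to find a separating hyperplane that contains many vertices, and then rotate that hyperplane about a lower-dimensional face until it becomes a facet of $Q$. A facet's affine hull is spanned by $d$ affinely independent vertices, which gives conclusion~3.

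Since $Q$ is a $d$-dimensional polytope, it has a finite irredundant facet description $Q=\{x \mid \langle a_j, x\rangle \le c_j,\ j=1,\dots,m\}$. Each facet $F_j=Q\cap\{\langle a_j,x\rangle=c_j\}$ is a $(d-1)$-dimensional polytope, and is therefore the convex hull of its vertices. Those vertices are vertices of $Q$ and their affine hull is the whole hyperplane $\{\langle a_j,x\rangle=c_j\}$, so we can select $d$ affinely independent vertices $y^1,\dots,y^d$ of $F_j$ with $\aff(y^1,\dots,y^d)=\{\langle a_j,x\rangle=c_j\}$. It therefore suffices to show that $p$ violates at least one facet inequality. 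Then $H:=\{\langle a_j,x\rangle\le c_j\}$ with $\mathbf b=a_j$, $c=c_j$ satisfies all three conclusions.

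The key step is this violated inequality. It follows from the Minkowski--Weyl theorem: a full-dimensional polytope equals the intersection of its facet-defining halfspaces. Since $p\notin Q$, some inequality $\langle a_j,p\rangle\le c_j$ fails, i.e.\ $\langle a_j,p\rangle>c_j$. The full-dimensionality assumption is needed here; it guarantees that facets are genuinely $(d-1)$-dimensional with $a_j\neq 0$, and that the facet description contains no equality constraints.

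I would present a self-contained route rather than citing the facet representation as a black box. Let $q$ be the Euclidean projection of $p$ onto $Q$, which gives the separating functional $\mathbf b_0=p-q\neq 0$. Now move within the family of halfspaces $\{\langle \mathbf b,x\rangle\le \max_Q\langle\mathbf b,\cdot\rangle\}$ that still exclude $p$. Among the finitely many faces $\argmax_Q\langle\mathbf b,\cdot\rangle$, choose one of maximal dimension whose supporting halfspace excludes $p$. If its dimension is below $d-1$, rotate $\mathbf b$ slightly about that face. Conclusion~2 is a strict inequality and hence an open condition, so a small perturbation keeps $p$ outside while enlarging the face, which contradicts maximality. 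I expect this rotation argument to be the main technical obstacle, and I would replace it by the cited facet description (Minkowski--Weyl) to keep the proof short.
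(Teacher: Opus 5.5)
Your committed argument (the irredundant facet description of the full-dimensional polytope $Q$, a facet inequality violated by $p$, and $d$ affinely independent vertices of that facet spanning its hyperplane) is correct and is essentially the paper's proof. Drop the optional rotation sketch as written: a small perturbation of $\mathbf b$ can only shrink the face $\argmax_Q\langle\mathbf b,\cdot\rangle$, never enlarge it, so the maximality contradiction does not go through.
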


\begin{proof}
    Since $Q$ is a $d$-dimensional bounded convex polytope, it has an irredundant facet representation
    $Q = \bigcap_{i=1}^{|\mathcal{F}(Q)|} \{ \mathbf{x} \in \mathbb{R}^d \mid \langle \mathbf{n}_i, \mathbf{x} \rangle \leq c_i \}$, where $\mathcal{F}(Q)$ is the set of facets of $Q$,
    and each hyperplane $\{ \mathbf{x} \mid \langle \mathbf{n}_i, \mathbf{x} \rangle = c_i \}$ determines a facet $G_i := Q \cap \{ \mathbf{x} \mid \langle \mathbf{n}_i, \mathbf{x} \rangle = c_i \}$ of $Q$ \citep[cf.][]{schrijver1986theory}.
    From $p \notin Q$ there is some $i_0$ with $\langle \mathbf{n}_{i_0}, p \rangle > c_{i_0}$.
    Since the facet $G_{i_0}$ is a face of dimension $d-1$, it has $d$ affinely independent vertices $y^1, \ldots, y^d \in G_{i_0}$, and these are vertices of $Q$ (being vertices of a face of $Q$). Furthermore
    $\aff(y^1, \ldots, y^d) = \aff G_{i_0} = \{ \mathbf{x} \mid \langle \mathbf{n}_{i_0}, \mathbf{x} \rangle = c_{i_0} \}$.
    Setting $\mathbf{b} := \mathbf{n}_{i_0}$, $c := c_{i_0}$ and $H := \{ \mathbf{x} \mid \langle \mathbf{b}, \mathbf{x} \rangle \leq c \}$, we have $Q \subset H$, $\langle \mathbf{b}, p \rangle > c$ and $\partial H = \aff(y^1, \ldots, y^d)$.
\end{proof}

\begin{proposition}
    \label{prop:max_1-x_on_simplex}
    Let $\xi > 0$ and set
    $F_\xi(z) := \sum_{i=1}^d (1 - z_i)^{\xi}$ for $z \in \Delta^{d-1}$.
    Then:
    \begin{enumerate}
        \item if $\xi \geq 1$ then $F_\xi$ is convex on $\Delta^{d-1}$ and its maximum is attained at a vertex; in particular $F_\xi(z) \leq d-1$;
        \item if $0 < \xi < 1$ then $F_\xi$ is strictly concave on $\Delta^{d-1}$ and its maximum is attained at the barycenter $z = (1/d, \ldots, 1/d)$; in particular $F_\xi(z) \leq d (1 - 1/d)^{\xi}$.
    \end{enumerate}
\end{proposition}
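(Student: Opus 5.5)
The statement to prove is \Cref{prop:max_1-x_on_simplex}, and the plan is elementary convexity: analyze the one-variable function $\phi_\xi(t) := (1-t)^\xi$ on $[0,1]$ and transfer its properties to $F_\xi$ as a separable sum over coordinates. On $\Delta^{d-1}$ every coordinate satisfies $z_i \in [0,1]$, so $1 - z_i \in [0,1]$ and each term is well defined (with $0^\xi = 0$). The second derivative is
\[
\phi_\xi''(t) = \xi(\xi-1)(1-t)^{\xi-2}
\]
on $[0,1)$. It is nonnegative for $\xi \ge 1$ and strictly negative for $0<\xi<1$. Since $F_\xi(z) = \sum_i \phi_\xi(z_i)$ is a sum of functions of distinct coordinates, convexity or strict concavity of $\phi_\xi$ on $[0,1]$ passes directly to $F_\xi$ on the convex set $\Delta^{d-1}$.

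\emph{Case $\xi \ge 1$.} The function $\phi_\xi$ is continuous and convex on $[0,1]$ (the endpoint $t=1$ only matters for continuity; convexity on $[0,1)$ extends by continuity). Hence $F_\xi$ is convex and continuous on the polytope $\Delta^{d-1}$. A convex function on a polytope attains its maximum at an extreme point, since any point is a convex combination of the vertices and Jensen's inequality bounds $F_\xi$ there by the largest vertex value. At a vertex $e_j$ we get $F_\xi(e_j) = 0^\xi + (d-1)\cdot 1^\xi = d-1$. This gives $F_\xi(z) \le d-1$.

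\emph{Case $0<\xi<1$.} On $[0,1)$ we have $\phi_\xi'' < 0$, and $\phi_\xi$ is continuous on $[0,1]$, so it is strictly concave on $[0,1]$. It follows that $F_\xi$ is strictly concave on $\Delta^{d-1}$: for $z \ne w$ some coordinate differs, so that term satisfies a strict inequality while the others satisfy weak ones. By symmetry of $F_\xi$ under permutations of coordinates and concavity, averaging over all permutations of a maximizer $z$ gives
\[
F_\xi(\bar z) \ge \frac{1}{d!}\sum_{\sigma} F_\xi(\sigma z) = F_\xi(z), \qquad \bar z = (1/d,\dots,1/d).
\]
So the barycenter is a maximizer, and it is the unique one by strict concavity. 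Alternatively, one can use Jensen directly: $\sum_i \phi_\xi(z_i) \le d\,\phi_\xi\bigl(\tfrac1d\sum_i z_i\bigr) = d(1-1/d)^\xi$. Either route yields $F_\xi(z) \le d(1-1/d)^\xi$.

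\textbf{Expected obstacle.} The only delicate point is the boundary $t=1$, where $(1-t)^{\xi-2}$ blows up for $\xi<2$. I handle it by arguing convexity or concavity on $[0,1)$ via the second derivative and extending to $[0,1]$ by continuity. For strict concavity, I use that a continuous function that is strictly concave on $[0,1)$ remains strictly concave on $[0,1]$; this follows by checking the chord inequality with one endpoint at $1$ as a limit, combined with strictness at interior points.
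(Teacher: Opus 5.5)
Your proof is correct and follows the same route as the paper. You compute $\phi_\xi''(t)=\xi(\xi-1)(1-t)^{\xi-2}$, pass convexity or strict concavity to the separable sum $F_\xi$, use that a convex function on the simplex is maximized at a vertex (where it equals $d-1$), and apply Jensen to get $F_\xi(z)\le d(1-1/d)^\xi$ in the concave case. Your continuity argument at the endpoint $t=1$, where the second derivative blows up for $\xi<2$, is a detail the paper passes over silently, and you handle it correctly.
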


\begin{proof}
    The second derivative of each term $g(t) = (1-t)^{\xi}$ is $g''(t) = \xi (\xi - 1) (1-t)^{\xi-2}$. For $t \in [0, 1)$:
    when $\xi \geq 1$, $g'' \geq 0$ so $g$ is convex. Hence $F_\xi$ is convex on $\Delta^{d-1}$ and its maximum is attained at a vertex $\mathbf{e}_i$. From $F_\xi(\mathbf{e}_i) = (1-1)^{\xi} + (d-1)(1-0)^{\xi} = d-1$ the claim follows.
    When $0 < \xi < 1$, $g'' < 0$ so $g$ is strictly concave. Hence $F_\xi$ is strictly concave on $\Delta^{d-1}$, and by Jensen's inequality the maximizer is the barycenter $z = (1/d, \ldots, 1/d)$, with $F_\xi(1/d, \ldots, 1/d) = d (1 - 1/d)^{\xi}$.
\end{proof}

\begin{proposition}
    \label{prop:leq_m-M_i_d-1}
    \[
        \sum_{i=1}^d \left( \sum_{j \neq i} M_j^2 \right)^{d-1}
        \leq (d-1)\, \| M \|_2^{2(d-1)} .
    \]
\end{proposition}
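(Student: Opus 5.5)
The plan is to reduce the inequality to \Cref{prop:max_1-x_on_simplex} by normalizing the vector of squared ranges so that it lies on the probability simplex. Throughout, take $d \geq 2$, as in the theorems where this estimate is used. The case $d = 1$ has to be excluded: there the exponent $d-1$ is $0$ and the inner sum is empty, so the statement degenerates.

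First dispose of the trivial case $\| M \|_2 = 0$. Then every $M_j = 0$, every inner sum $\sum_{j \neq i} M_j^2$ vanishes, and both sides are $0$ (since $d - 1 \geq 1$).

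Otherwise set $S := \| M \|_2^2 = \sum_{j=1}^d M_j^2 > 0$ and $z_i := M_i^2 / S$. Then $z_i \geq 0$ and $\sum_i z_i = 1$, so $z \in \Delta^{d-1}$. For each $i$ we have $\sum_{j \neq i} M_j^2 = S - M_i^2 = S(1 - z_i)$. Hence the left-hand side equals
\[
    \sum_{i=1}^d S^{d-1} (1 - z_i)^{d-1} = S^{d-1} F_{d-1}(z),
\]
with $F_\xi$ as in \Cref{prop:max_1-x_on_simplex}.

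Now apply part (1) of \Cref{prop:max_1-x_on_simplex} with $\xi = d - 1 \geq 1$. This gives $F_{d-1}(z) \leq d - 1$, and therefore the left-hand side is at most $(d-1) S^{d-1} = (d-1)\| M \|_2^{2(d-1)}$, which is the claim.

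There is no real obstacle here. The only points needing care are the degenerate cases ($S = 0$ and $d = 1$), and checking that the exponent $\xi = d-1$ falls in the convex regime $\xi \geq 1$ of \Cref{prop:max_1-x_on_simplex}, so that the maximum over the simplex is attained at a vertex and equals $d-1$.
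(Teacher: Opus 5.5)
Your proof is correct and matches the paper's argument: you normalize to $z_i = M_i^2/\| M \|_2^2 \in \Delta^{d-1}$, rewrite the left-hand side as $\| M \|_2^{2(d-1)} \sum_i (1-z_i)^{d-1}$, and apply part (1) of \Cref{prop:max_1-x_on_simplex} with $\xi = d-1 \geq 1$, after handling the trivial case $\| M \|_2 = 0$ separately. You are also right to exclude $d = 1$, which the paper leaves implicit: there the left-hand side is $0^0 = 1$ while the right-hand side is $0$, and the proposition is only used when $d \geq 2$.
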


\begin{proof}
    The case $m = 0$ is trivial, so we may assume $m \neq 0$.
    \[
        \sum_{i=1}^d \left( \sum_{j \neq i} M_j^2 \right)^{d-1}
        = \sum_{i=1}^d \left( \| M \|_2^2 - M_i^2 \right)^{d-1}
        = \| M \|_2^{2(d-1)} \sum_{i=1}^d \left( 1 - \frac{M_i^2}{\| M \|_2^2} \right)^{d-1} .
    \]
    Setting $z_i := M_i^2 / \| M \|_2^2$ we have $z \in \Delta^{d-1}$. Applying \Cref{prop:max_1-x_on_simplex} with $\xi = d-1 \geq 1$ gives the claim.
\end{proof}

\section{Proofs of \texorpdfstring{\Cref{theo:gamma_sub_lowerbound_on_ball} and \Cref{prop:gamma_sub_lowdim_ball}}{the lower bounds for a general ILP with the unit ball}}
\label{app:proof_ball}

\begin{proof}[Proof of \Cref{theo:gamma_sub_lowerbound_on_ball}]
    \textbf{Step 1 (reduction to a distance problem).}
    By \Cref{assu:ILP} we have $X(s) \subset \mathbb{Z}^d$ and $x^*(\theta^*, s) \in \mathbb{Z}^d$ for each $s$, so $Z(s) \subset \mathbb{Z}^d$ and hence $Z^* \subset \mathbb{Z}^d$. Furthermore, for every $z = x^*(\theta^*, s) - x \in Z(s)$, since $x^*(\theta^*, s), x \in X(s)$, each component satisfies
    \[
        |z_i| \leq \max_{x \in X(s)} x_i - \min_{x \in X(s)} x_i \leq M_i .
    \]
    Hence
    $Z^* \subset \Lambda := \prod_{i=1}^d \{-M_i, -M_i + 1, \ldots, M_i\}$,
    so $Z^*$ is a finite set. Therefore $\overline{Z^*} = Z^*$, and $\Conv \overline{Z^*} = \Conv Z^*$ is a bounded closed convex polytope.
    From \cref{eq:gamma_minimax} and the fact that $\max_{\|\theta\| \leq 1} \langle \theta, z \rangle = \|z\|$ when $\Theta$ is the unit ball,
    \begin{equation}
        \gamma_\mathrm{sub} \geq \min_{z \in \Conv Z^*} \|z\| .
        \label{eq:gamma_sub_as_dist_ball}
    \end{equation}

    \textbf{Step 2 ($0 \notin \Conv Z^*$).}
    By \Cref{assu:margin}(3), for every $s \in \mathcal{S}$ and every $x \in X(s) \setminus \{x^*(\theta^*, s)\}$ we have $\langle \theta^*, x^*(\theta^*, s) - x \rangle > 0$, that is, $\langle \theta^*, z \rangle > 0$ for all $z \in Z^*$. Since $Z^*$ is a finite set,
    $\delta := \min_{z \in Z^*} \langle \theta^*, z \rangle > 0$,
    and by the linearity of convex combinations, $\langle \theta^*, z \rangle \geq \delta > 0$ for every $z \in \Conv Z^*$. In particular $0 \notin \Conv Z^*$.

    \textbf{Step 3 (a separating hyperplane through lattice points).}
    Since $\Conv Z^*$ is the convex hull of the finite set $Z^*$, it is a convex polytope, and $0 \notin \Conv Z^*$. We apply \Cref{prop:str_separating_hyperplane_theorem_for_polyhedra} with $Q = \Conv Z^*$ and $p = 0$ (here we use the assumption $\dim \Conv Z^* = d$ of the theorem; for the low-dimensional case see \Cref{prop:gamma_sub_lowdim_ball,rem:lowdim_cases}). This yields a closed halfspace $H^- = \{x : a \cdot x \leq c\}$ and $d$ affinely independent points $y^1, \ldots, y^d \in Z^*$ that are vertices of $\Conv Z^*$ (hence elements of $Z^* \subset \Lambda$, and therefore lattice points), such that
    (1) $\Conv Z^* \subset H^-$, (2) $0 \notin H^-$ (that is, $0 > c$), and (3) $\partial H^- = \aff(y^1, \ldots, y^d)$.

    Let $U$ be the $(d-1) \times d$ integer matrix whose rows are the difference vectors $v_k := y^{k+1} - y^1 \in \mathbb{Z}^d$ ($k = 1, \ldots, d-1$), and let $U^{(i)}$ be the matrix obtained by deleting its $i$-th column; then the normal vector $a$ can be constructed as
    $a_i = (-1)^i \det(U^{(i)}) \in \mathbb{Z}$ ($i = 1, \ldots, d$). The sign of $a$ is chosen so that $\Conv Z^* \subset H^-$, that is, so that condition (1) holds.
    We have $c = a \cdot y^1 \in \mathbb{Z}$. From condition (2) and $-c \in \mathbb{Z}$,
    \begin{equation}
        -c \geq 1 .
        \label{eq:integer_gap_sub_ball}
    \end{equation}
    For every $z \in \Conv Z^* \subset H^-$ we have $a \cdot z \leq c < 0$. By Cauchy--Schwarz,
    \[
        \|z\| \geq \frac{|a \cdot z|}{\|a\|} = \frac{-a \cdot z}{\|a\|} \geq \frac{-c}{\|a\|} \geq \frac{1}{\|a\|} .
    \]
    Taking the minimum over $z$,
    \begin{equation}
        \min_{z \in \Conv Z^*} \|z\| \geq \frac{1}{\|a\|} .
        \label{eq:dist_lower_by_a_sub_ball}
    \end{equation}

    \textbf{Step 4 (estimate of the norm of the normal vector).}
    Since $y^1, \ldots, y^d \in \Lambda$, each component of a difference vector satisfies $|(v_k)_j| \leq 2 M_j$. By Hadamard's inequality,
    \[
        |a_i| = |\det(U^{(i)})|
        \leq \prod_{k=1}^{d-1} \|v_k^{(i)}\|
        \leq \prod_{k=1}^{d-1} \sqrt{\sum_{j \neq i} (2 M_j)^2}
        = 2^{d-1} \left( \sum_{j \neq i} M_j^2 \right)^{(d-1)/2} .
    \]
    Therefore
    $\|a\|^2 = \sum_{i=1}^d a_i^2 \leq 4^{d-1} \sum_{i=1}^d ( \sum_{j \neq i} M_j^2 )^{d-1}$,
    and \Cref{prop:leq_m-M_i_d-1} gives
    \begin{equation}
        \|a\| \leq 2^{d-1} \sqrt{d-1}\, \| M \|_2^{d-1} .
        \label{eq:norm_a_bound_sub_ball}
    \end{equation}
    Combining \cref{eq:gamma_sub_as_dist_ball,eq:dist_lower_by_a_sub_ball,eq:norm_a_bound_sub_ball} gives the claim.
\end{proof}

\begin{proof}[Proof of \Cref{prop:gamma_sub_lowdim_ball}]
    Steps 1--2 of the proof of \Cref{theo:gamma_sub_lowerbound_on_ball} use no assumption on the dimension, so they hold as they are, and
    \[
        \gamma_\mathrm{sub} \geq \min_{z \in \Conv Z^*} \| z \|
        \geq \dist ( 0, \aff Z^* ) ,
        \qquad Z^* \subset \Lambda \subset \mathbb{Z}^d .
    \]
    Take affinely independent $y^1, \ldots, y^{k+1} \in Z^*$ (spanning $\aff Z^*$) and set $v_i := y^{i+1} - y^1 \in \mathbb{Z}^d$ ($i = 1, \ldots, k$). Since $y^i \in \Lambda$, each component satisfies $|v_{i,j}| \leq 2 M_j$ and hence $\|v_i\| \leq 2 \| M \|_2$. Since $\aff Z^* = y^1 + \spn(v_1, \ldots, v_k)$, the condition $0 \notin \aff Z^*$ is equivalent to $y^1 \notin \spn(v_1, \ldots, v_k)$, and then $(v_1, \ldots, v_k, y^1)$ is linearly independent.

    The distance from the point $0$ to the affine subspace $y^1 + \spn(v_1, \ldots, v_k)$ can be expressed, using Gram determinants (written $G$), as
    \[
        \dist ( 0, \aff Z^* )^2
        = \frac{\det G(v_1, \ldots, v_k, y^1)}{\det G(v_1, \ldots, v_k)}
    \]
    (decomposing orthogonally as $-y^1 = w_U + w_\perp$ with $w_U \in U := \spn(v_1, \ldots, v_k)$ and $w_\perp \perp U$, the multilinearity of Gram determinants and elementary column operations give $\det G(v, -y^1) = \det G(v) \cdot \|w_\perp\|^2$, together with $\det G(v, -y^1) = \det G(v, y^1)$).
    The numerator is the Gram determinant of linearly independent integer vectors, hence a positive integer, in particular $\geq 1$. The denominator is bounded, by Hadamard's inequality for Gram determinants, as
    $\det G(v_1, \ldots, v_k) \leq \prod_{i=1}^k \|v_i\|^2 \leq (2\| M \|_2)^{2k}$.
    Altogether we obtain $\dist(0, \aff Z^*) \geq (2\| M \|_2)^{-k}$.
\end{proof}

\section{Proof of \texorpdfstring{\Cref{theo:gamma_sub_lowerbound_on_simplex}}{the lower bound for a general ILP with the probability simplex}}
\label{app:proof_simplex}

\begin{proof}[Proof of \Cref{theo:gamma_sub_lowerbound_on_simplex}]
    \textbf{Step 1 (reduction to the lattice structure).}
    As in Step 1 of the proof of \Cref{theo:gamma_sub_lowerbound_on_ball}, the set $Z^* \subset \Lambda := \prod_{i=1}^d \{-M_i, \ldots, M_i\} \subset \mathbb{Z}^d$ is finite and $\Conv Z^*$ is a bounded closed convex polytope. When $\Theta = \Delta^{d-1}$ we have $\max_{\theta \in \Theta} \langle \theta, z \rangle = \max_i z_i$, so \cref{eq:gamma_minimax} gives
    \begin{equation}
        \gamma_\mathrm{sub}
        \geq \min_{z \in \Conv Z^*} \max_{i = 1, \ldots, d} z_i .
        \label{eq:gamma_sub_as_max_simplex}
    \end{equation}

    \textbf{Step 2 ($\Conv Z^* \cap \mathbb{R}_{\leq 0}^d = \emptyset$).}
    By \Cref{assu:margin}(3) we have $\langle \theta^*, z \rangle > 0$ for every $z \in Z^*$. Since $\theta^* \in \Delta^{d-1}$ has nonnegative components summing to $1$, the inequality $\sum_i \theta^*_i z_i > 0$ implies that at least one $z_i > 0$, and in particular $\max_i z_i > 0$. By the linearity of convex combinations, $\langle \theta^*, z \rangle > 0$ for every $z \in \Conv Z^*$, and hence $\max_i z_i > 0$. Rewritten in the language of sets,
    \begin{equation}
        \Conv Z^* \cap \mathbb{R}_{\leq 0}^d = \emptyset .
        \label{eq:disjoint_orthant_sub}
    \end{equation}
    Below we assume $Z^* \neq \emptyset$ (if $Z^* = \emptyset$ then the $\min$ in \cref{eq:gamma_sub_as_max_simplex} is $+\infty$ as an infimum over the empty set and the claim is trivial). In this case some $z \in Z^*$ satisfies $\max_i z_i > 0$, and since $z$ is an integer vector, $z_i \geq 1$ for some $i$, hence $M_i \geq 1$ and in particular $\| M \|_2 \geq 1$.

    \textbf{Step 3 (the polyhedron and the nonnegativity of its facet normals).}
    Consider the Minkowski sum
    $\widetilde{P} := \Conv Z^* + \mathbb{R}_{\geq 0}^d$.
    The set $\widetilde{P}$ is a polyhedron (the Minkowski sum of a bounded polytope and a polyhedral cone; the Minkowski--Weyl decomposition, \citealp[cf.][]{schrijver1986theory}), and the following hold.
    \begin{enumerate}
        \item (Full-dimensionality, pointedness, and integrality of the vertices) Since $\widetilde{P}$ contains $z + \mathbb{R}_{\geq 0}^d$ for $z \in \Conv Z^*$, it is $d$-dimensional. Its characteristic cone is $\mathbb{R}_{\geq 0}^d$, which is pointed, so $\widetilde{P}$ has vertices. Furthermore the vertices of $\widetilde{P}$ are elements of $Z^*$: a point $x = z + w$ (with $z \in \Conv Z^*$, $w \geq 0$, $w \neq 0$) can be written as $x = \frac{1}{2} z + \frac{1}{2} (z + 2w)$, the midpoint of two distinct points of $\widetilde{P}$, so it is not a vertex, and hence the vertices belong to $\Conv Z^*$; since $\Conv Z^* \subseteq \widetilde{P}$, the vertices of $\widetilde{P}$ are extreme points of $\Conv Z^*$, and as extreme points of the convex hull of the finite set $Z^*$ they belong to $Z^* \subset \Lambda$.
        \item (Nonnegativity of the facet normals) Writing a valid inequality defining an arbitrary facet $F$ of $\widetilde{P}$ as $a_F \cdot x \geq c_F$ (with $\widetilde{P} \subseteq \{x : a_F \cdot x \geq c_F\}$ and $F = \widetilde{P} \cap \{x : a_F \cdot x = c_F\}$), for every $x \in \widetilde{P}$ and $t \geq 0$ we have $x + t e_i \in \widetilde{P}$, so $a_F \cdot (x + t e_i) \geq c_F$ holds for all $t \geq 0$, which gives $(a_F)_i \geq 0$, that is, $a_F \geq 0$.
        \item (Separation of the origin) By \cref{eq:disjoint_orthant_sub}, for every $z \in \Conv Z^*$ and $w \geq 0$ we have $\max_i (z + w)_i \geq \max_i z_i > 0$, so $\widetilde{P} \cap \mathbb{R}_{\leq 0}^d = \emptyset$ and in particular $0 \notin \widetilde{P}$. Since $\widetilde{P}$ is a $d$-dimensional polyhedron, it coincides with the intersection of the valid inequalities defined by its facets (a standard fact of polyhedral theory, \citealp[cf.][]{schrijver1986theory}). Therefore there exists a facet $F$ with $a_F \cdot 0 = 0 < c_F$, that is, whose valid inequality separates the origin.
    \end{enumerate}

    \textbf{Step 4 (construction of an integral nonnegative normal vector from lattice points and directions in $\mathbb{R}_{\geq 0}^d$).}
    Take the facet $F$ of Step 3. Since $F$ is a face of the pointed polyhedron $\widetilde{P}$, it is a pointed polyhedron, and its vertices are vertices of $\widetilde{P}$, hence elements of $Z^* \subset \Lambda$. Moreover, by $a_F \geq 0$, the characteristic cone of $F$ is
    \[
        \rec(F)
        = \mathbb{R}_{\geq 0}^d \cap \{w : a_F \cdot w = 0\}
        = \cone \{e_i : i \in I_0\} ,
        \qquad I_0 := \{i : (a_F)_i = 0\}
    \]
    (since $w \geq 0$ together with $a_F \cdot w = 0$ forces $w_i = 0$ for every $i$ with $(a_F)_i > 0$). Taking, among the vertices of $F$, affinely independent points $y^1, \ldots, y^p \in Z^* \subset \Lambda$ ($1 \leq p \leq d$) spanning the affine hull of the vertex set, we have
    \[
        \aff F = \aff(y^1, \ldots, y^p) + \spn \{e_i : i \in I_0\} ,
        \qquad \dim \aff F = d - 1 ,
    \]
    so we can choose $I' \subseteq I_0$ with $|I'| = d - p$ such that $y^1, \ldots, y^p$ and $y^1 + e_i$ ($i \in I'$) are $d$ affinely independent points spanning $\aff F$.

    Define the $(d-1) \times d$ integer matrix $U$ whose rows are the difference vectors, namely the rows $v_k := y^{k+1} - y^1$ ($k = 1, \ldots, p-1$; each component satisfying $|v_{k,j}| \leq 2 M_j$ since $y^k \in \Lambda$) and the rows $e_i$ ($i \in I'$), and construct the integer vector $a$ from the cofactors
    $a_i := (-1)^i \det(U^{(i)}) \in \mathbb{Z}$ ($i = 1, \ldots, d$),
    where $U^{(i)}$ is the matrix with the $i$-th column deleted. The rows of $U$ are linearly independent (being difference vectors of $d$ affinely independent points), so $a \neq 0$, and since $a$ is orthogonal to all the rows of $U$, it is a normal direction of $\aff F$, that is, parallel to $a_F$. Choosing the sign in the same direction as $a_F$, we have $a = \lambda a_F$ for some $\lambda > 0$, so
    \[
        a \in \mathbb{Z}_{\geq 0}^d \setminus \{0\} ,
        \qquad \Conv Z^* \subseteq \widetilde{P} \subseteq \{x : a \cdot x \geq c\} ,
        \qquad c := a \cdot y^1 = \lambda c_F > 0
    \]
    holds, and $c \in \mathbb{Z}_{> 0}$ gives $c \geq 1$.

    \textbf{Step 5 (the estimate).}
    First the case $p = 1$: the rows of $U$ are the $d-1$ unit vectors $\{e_i\}_{i \in I'}$, so $a = \pm e_{i_0}$ (with $i_0 \notin I'$), and nonnegativity gives $a = e_{i_0}$ and $c = y^1_{i_0} \geq 1$. Hence for every $z \in \Conv Z^*$ we have $\max_i z_i \geq z_{i_0} = a \cdot z \geq c \geq 1$, and since $\| M \|_2 \geq 1$ the right-hand side of the claim is at most $1$, so the claim follows. Below we assume $p \geq 2$.

    For every $z \in \Conv Z^*$, setting $\lambda_i := a_i / \sum_{j=1}^d a_j$ under $\sum_j a_j > 0$ (which holds since $a \neq 0$ and $a_j \geq 0$), we have $\lambda \in \Delta^{d-1}$, so
    \begin{equation}
        \max_{i = 1, \ldots, d} z_i \geq \sum_{i=1}^d \lambda_i z_i = \frac{a \cdot z}{\sum_j a_j} \geq \frac{c}{\sum_j a_j} \geq \frac{1}{\sum_{j=1}^d a_j} .
        \label{eq:convex_lower_sub_simplex}
    \end{equation}

    We now find an upper bound on $\sum_j a_j$. By Hadamard's inequality, the contribution of the difference-vector rows of $U^{(i)}$ is $\prod_{k=1}^{p-1} \|v_k^{(i)}\| \leq \prod_{k=1}^{p-1} 2 ( \sum_{j \neq i} M_j^2 )^{1/2}$ and the contribution of the unit-vector rows is at most $1$, so
    \begin{align*}
        a_i \leq |a_i|
        &\leq \left( 2 \left( \textstyle\sum_{j \neq i} M_j^2 \right)^{1/2} \right)^{p-1}
        \leq \left( 2 \left( \textstyle\sum_{j \neq i} M_j^2 \right)^{1/2} \right)^{d-1}
        \\
        &= 2^{d-1} \left( \sum_{j \neq i} M_j^2 \right)^{(d-1)/2}
    \end{align*}
    holds (for the second inequality: if $\sum_{j \neq i} M_j^2 \geq 1$ it follows since the base is at least $2$ and $p - 1 \leq d - 1$, whereas if $\sum_{j \neq i} M_j^2 = 0$ then all the difference-vector rows $v_k^{(i)}$ are zero vectors, so $a_i = 0$ since $p \geq 2$; in either case it holds).
    Since $a_i \geq 0$,
    \[
        \sum_{i=1}^d a_i
        \leq 2^{d-1} \sum_{i=1}^d \left( \sum_{j \neq i} M_j^2 \right)^{(d-1)/2}
        = 2^{d-1}\, \| M \|_2^{d-1} \sum_{i=1}^d \left( 1 - \frac{M_i^2}{\| M \|_2^2} \right)^{(d-1)/2}
    \]
    (noting that $\| M \|_2 \geq 1 > 0$).
    Applying \Cref{prop:max_1-x_on_simplex} with $\xi = (d-1)/2$: for $d \geq 3$ we have $\xi \geq 1$, so $\sum_i (1 - z_i)^{(d-1)/2} \leq d-1$; for $d = 2$ we have $\xi = 1/2 < 1$, so $\sum_i (1 - z_i)^{1/2} \leq d (1 - 1/d)^{1/2} = \sqrt{2}$. Hence
    \begin{equation}
        \sum_{i=1}^d a_i \leq 2^{d-1}\, \max(d-1, \sqrt{2})\, \| M \|_2^{d-1} .
        \label{eq:sum_a_bound_sub_simplex}
    \end{equation}
    Combining \cref{eq:gamma_sub_as_max_simplex,eq:convex_lower_sub_simplex,eq:sum_a_bound_sub_simplex} gives the claim.
\end{proof}

\section{Proofs of the lower bounds for \texorpdfstring{M-convex and M${}^\natural$-convex structures}{M-convex structures}}
\label{app:proof_mconvex}

\begin{proof}[Proof of \Cref{theo:gamma_sub_M_convex_poly}]
    By \Cref{prop:M-convex_test_set}, the test set can be taken to be $\mathcal{T} = \{ e_i - e_j \mid i \neq j \}$.
    Take a permutation $\sigma$ so that $\theta^*_{\sigma(1)} \geq \theta^*_{\sigma(2)} \geq \cdots \geq \theta^*_{\sigma(d)}$ (fixing an arbitrary order among components of equal value).

    The case $\Theta = \{ \theta : \| \theta \|_2 \leq 1 \}$: setting the arithmetic arrangement $\theta^\dagger_{\sigma(i)} = \delta ( (d+1)/2 - i )$ ($i = 1, \ldots, d$) with $\delta = 2\sqrt{3}/\sqrt{d(d^2-1)}$, we have $\| \theta^\dagger \|_2 = 1$ and hence $\theta^\dagger \in \Theta$.
    If $\theta^*_i > \theta^*_j$ then $i$ ranks above $j$ in the order of $\sigma$, so $\theta^\dagger_i > \theta^\dagger_j$, that is, $\langle \theta^\dagger, e_i - e_j \rangle > 0$; hence $\theta^\dagger \in \Theta_{\mathcal{T}}(\theta^*)$.
    Furthermore, for every $(i, j)$ with $\langle \theta^*, e_i - e_j \rangle > 0$, the difference $\theta^\dagger_i - \theta^\dagger_j$ is at least the difference $\delta$ between adjacent ranks of $\sigma$, so \Cref{prop:opt_gap_down_test_set_gap} gives
    \[
        \gamma_\mathrm{sub}
        \geq \min_{g \in \mathcal{T},\, \langle \theta^*, g \rangle > 0} \langle \theta^\dagger, g \rangle
        \geq \delta
        = \frac{2\sqrt{3}}{\sqrt{d(d^2-1)}} .
    \]

    The case $\Theta = \Delta^{d-1}$: setting $\theta^\dagger_{\sigma(i)} = (d-i) \cdot 2/(d(d-1))$ ($i = 1, \ldots, d$), we have $\theta^\dagger \in \Delta^{d-1}$, and as above $\theta^\dagger \in \Theta_{\mathcal{T}}(\theta^*)$. The difference between adjacent ranks is $2/(d(d-1))$, so \Cref{prop:opt_gap_down_test_set_gap} gives
    $\gamma_\mathrm{sub} \geq 2/(d(d-1))$.
\end{proof}

\begin{proof}[Proof of \Cref{theo:gamma_sub_Mnatural_convex_poly}]
    By \Cref{prop:Mnatural-convex_test_set}, the set $\mathcal{T} = \{ e_i - e_j,\ \pm e_i \mid i \neq j \}$ is a test set. The elements $g \in \mathcal{T}$ with $\langle \theta^*, g \rangle > 0$ are of three kinds: $g = e_i - e_j$ (with $\theta^*_i > \theta^*_j$), $g = +e_i$ (with $\theta^*_i > 0$), and $g = -e_i$ (with $\theta^*_i < 0$). Below we construct, for a fixed $\theta^*$, a sign-consistent $\theta^\dagger \in \Theta_{\mathcal{T}}(\theta^*)$ and apply \Cref{prop:opt_gap_down_test_set_gap}.

    The case $\Theta = \{ \theta \in \mathbb{R}^d : \| \theta \|_2 \leq 1 \}$: reorder the coordinates so that $\theta^*_1 \geq \cdots \geq \theta^*_d$, and let $d_+, d_0, d_-$ ($d_+ + d_0 + d_- = d$) be the numbers of positive, zero and negative components (if $\theta^* = 0$ then $S^+ = \{ g \in \mathcal{T} \mid \langle \theta^*, g \rangle > 0 \} = \emptyset$ and the lower bound of \Cref{prop:opt_gap_down_test_set_gap} holds trivially, so below we may assume $\theta^* \neq 0$, that is, $s \neq 0$). Define the integer vector
    \[
        s = (\, d_+, d_+ - 1, \ldots, 1,\ \underbrace{0, \ldots, 0}_{d_0},\ -1, \ldots, -d_- \,)
    \]
    and set $\theta^\dagger := c\, s$ with $c := ( \sum_i s_i^2 )^{-1/2}$ (so $\| \theta^\dagger \|_2 = 1$ and hence $\theta^\dagger \in \Theta$). Each of the three kinds of $g$ above satisfies $\langle \theta^\dagger, g \rangle \geq c > 0$: for $g = e_i - e_j$ we have $s_i - s_j \geq 1$ and hence $\langle \theta^\dagger, g \rangle = c (s_i - s_j) \geq c$; for $g = +e_i$ we have $s_i \geq 1$ and hence $\langle \theta^\dagger, g \rangle = c s_i \geq c$; for $g = -e_i$ we have $-s_i \geq 1$ and hence $\langle \theta^\dagger, g \rangle = c (-s_i) \geq c$. Hence $\theta^\dagger \in \Theta_{\mathcal{T}}(\theta^*)$, and \Cref{prop:opt_gap_down_test_set_gap} gives
    $\gamma_\mathrm{sub} \geq c$.
    Finally, from $\sum_i s_i^2 = \sum_{k=1}^{d_+} k^2 + \sum_{k=1}^{d_-} k^2 \leq \sum_{k=1}^d k^2 = \frac{d(d+1)(2d+1)}{6}$ (since $d_+ + d_- \leq d$), together with $d(d+1)(2d+1) \leq 6 d^3$, we get $c \geq \sqrt{6/(d(d+1)(2d+1))} \geq d^{-3/2}$.

    The case $\Theta = \Delta^{d-1}$: since $\theta^* \geq 0$, an element $g = -e_i$ has $\langle \theta^*, g \rangle = -\theta^*_i \leq 0$ and hence does not satisfy $\langle \theta^*, g \rangle > 0$. Reorder the coordinates so that $\theta^*_1 \geq \cdots \geq \theta^*_d$ and set
    $\theta^\dagger_i := \frac{2 (d - i + 1)}{d (d+1)}$ ($i = 1, \ldots, d$);
    then, being a decreasing sequence with all components positive and $\sum_i \theta^\dagger_i = 1$, we have $\theta^\dagger \in \Delta^{d-1}$. The elements $g$ with $\langle \theta^*, g \rangle > 0$ are limited to the two kinds $g = e_i - e_j$ (with $\theta^*_i > \theta^*_j$, hence $i < j$) and $g = +e_i$ (with $\theta^*_i > 0$), and both satisfy $\langle \theta^\dagger, g \rangle \geq \frac{2}{d(d+1)} > 0$ (for the former $\langle \theta^\dagger, g \rangle = (j - i) \frac{2}{d(d+1)}$, and for the latter $\langle \theta^\dagger, g \rangle = \theta^\dagger_i \geq \theta^\dagger_d = \frac{2}{d(d+1)}$). Hence $\theta^\dagger \in \Theta_{\mathcal{T}}(\theta^*)$, and \Cref{prop:opt_gap_down_test_set_gap} gives $\gamma_\mathrm{sub} \geq \frac{2}{d(d+1)}$.
\end{proof}

\section{Proofs of the lower bounds via Graver bases}
\label{app:proof_graver}

\begin{proof}[Proof of \Cref{prop:linear_inequality_test_set}]
    Take any $s \in \mathcal{S}$, $x^1 \in X(s)$ and $\theta \in \Theta$, and suppose $\langle \theta, x^1 \rangle < \max_{x \in X(s)} \langle \theta, x \rangle$.
    Since $X(s)$ is bounded by \Cref{assu:margin}(2) and lies on the integer lattice, it is a finite set, so a maximizer $x^2 \in \argmax_{x \in X(s)} \langle \theta, x \rangle$ exists.
    Adding slacks and setting $\widetilde{x}^j := (x^j,\, b(s) - A x^j) \in \mathbb{Z}^{d+N}$ ($j = 1, 2$), we have $\widetilde{A} \widetilde{x}^j = b(s)$, and the last $N$ components (the slack components) are nonnegative.
    The difference $\widetilde{z} := \widetilde{x}^2 - \widetilde{x}^1 \in \ker_{\mathbb{Z}}(\widetilde{A}) \setminus \{ 0 \}$ can be written, by the sign-consistent decomposition property of Graver bases (every $0 \neq z \in \ker_{\mathbb{Z}}(B)$ decomposes into a sum of elements of $\mathcal{G}(B)$ that are sign consistent with $z$; \citealp[cf.][]{sturmfels1996grobner,onn2010nonlinear}), as
    $\widetilde{z} = \sum_{k=1}^r \widetilde{g}^k$ with $\widetilde{g}^k \in \mathcal{G}(\widetilde{A})$ and $\widetilde{g}^k \sqsubseteq \widetilde{z}$.
    By sign consistency, for every $K \subseteq \{ 1, \ldots, r \}$ each component of the partial sum $\widetilde{x}^1 + \sum_{k \in K} \widetilde{g}^k$ takes a value between the corresponding components of $\widetilde{x}^1$ and $\widetilde{x}^2$. In particular the slack components stay nonnegative, and the equality with respect to $\widetilde{A}$ is preserved, so the first $d$ components of the partial sum belong to $X(s)$.
    Setting $\widetilde{\theta} := (\theta, 0) \in \mathbb{R}^{d+p}$, we have $\langle \widetilde{\theta}, \widetilde{z} \rangle = \langle \theta, x^2 - x^1 \rangle > 0$, so $\langle \widetilde{\theta}, \widetilde{g}^{k_0} \rangle > 0$ for some $k_0$.
    Setting $g := \pi_x(\widetilde{g}^{k_0}) \in \mathcal{T}_x$, the partial sum with $K = \{ k_0 \}$ gives $x^1 + g \in X(s)$ and $\langle \theta, g \rangle > 0$.
    Finally, since a projection does not increase the $\ell_\infty$ norm, $\| \pi_x(\widetilde{g}) \|_\infty \leq \| \widetilde{g} \|_\infty \leq g_\infty(\widetilde{A})$.
\end{proof}

\begin{proof}[Proof of \Cref{theo:gamma_sub_linear_inequality_ball}]
    By \Cref{prop:linear_inequality_test_set}, the set $\mathcal{T}_x$ is a test set with $\| g \|_\infty \leq C_g$ for all $g \in \mathcal{T}_x$. Set $S^+ := \{ g \in \mathcal{T}_x \mid \langle \theta^*, g \rangle > 0 \} \subset \mathbb{Z}^d$.

    \textbf{Step 1 (reduction to a distance via minimax).}
    Since $\langle \theta^*, g \rangle > 0$ for each $g \in S^+$, we have $\langle \theta^*, q \rangle > 0$ for every $q \in \Conv(S^+)$, and in particular $0 \notin \Conv(S^+)$. By \Cref{prop:Minimax_theorem_finite} ($B^d = \{ \|\theta\|_2 \leq 1 \}$ being bounded, closed and convex, and $S^+$ finite) together with $\max_{\|\theta\|_2 \leq 1} \langle \theta, q \rangle = \| q \|_2$,
    \[
        \max_{\theta \in B^d} \min_{g \in S^+} \langle \theta, g \rangle
        = \min_{q \in \Conv(S^+)} \| q \|_2
        = \dist ( 0, \Conv(S^+) ) > 0 .
    \]
    On the other hand, \Cref{prop:opt_gap_down_test_set_gap} gives $\gamma_\mathrm{sub} \geq \sup_{\theta \in \Theta_{\mathcal{T}_x}(\theta^*)} \min_{g \in S^+} \langle \theta, g \rangle$. A $\theta \in B^d$ attaining the maximum above satisfies $\min_{g \in S^+} \langle \theta, g \rangle = \dist(0, \Conv(S^+)) > 0$, that is, $\langle \theta, g \rangle > 0$ for all $g \in S^+$, so $\theta \in \Theta_{\mathcal{T}_x}(\theta^*)$ (since $S^+ = \{ g \mid \langle \theta^*, g \rangle > 0 \}$). Hence $\gamma_\mathrm{sub} \geq \dist(0, \Conv(S^+))$.

    \textbf{Step 2 (an integral separating hyperplane).}
    The set $\Conv(S^+)$ is a lattice polytope whose vertices lie in $S^+ \subset \{ g \in \mathbb{Z}^d : \| g \|_\infty \leq C_g \}$, and it is full-dimensional with $0 \notin \Conv(S^+)$ by assumption. As in Step 3 of the proof of \Cref{theo:gamma_sub_lowerbound_on_ball}, applying \Cref{prop:str_separating_hyperplane_theorem_for_polyhedra} with $A = \Conv(S^+)$ and $p = 0$, we can take $d$ affinely independent vertices $y^1, \ldots, y^d \in S^+$ (lattice points) and, from the difference vectors $v_k := y^{k+1} - y^1$, the cofactors $a_i = (-1)^i \det(U^{(i)}) \in \mathbb{Z}$ and $c = a \cdot y^1 \in \mathbb{Z}$, so that $\Conv(S^+) \subseteq \{ x : a \cdot x \leq c \}$ and $-c \geq 1$ hold.

    \textbf{Step 3 (the estimate).}
    For every $q \in \Conv(S^+)$, Cauchy--Schwarz gives $\| q \|_2 \geq -a \cdot q / \| a \| \geq -c / \| a \| \geq 1 / \| a \|$, so $\dist(0, \Conv(S^+)) \geq 1/\|a\|$. Since $y^k \in S^+$ gives $\| y^k \|_\infty \leq C_g$, we have $|(v_k)_j| \leq 2C_g$. Taking $M_j = C_g$ in the Hadamard estimate of Step 4 of the proof of \Cref{theo:gamma_sub_lowerbound_on_ball} (so that $|(v_k)_j| \leq 2 M_j = 2C_g$ and $\| (C_g, \ldots, C_g) \|_2 = C_g\sqrt{d}$), \cref{eq:norm_a_bound_sub_ball} gives
    \[
        \| a \| \leq 2^{d-1}\sqrt{d-1}\,(C_g\sqrt{d})^{d-1}
        = \sqrt{d-1}\,(2C_g\sqrt{d})^{d-1} .
    \]
    Combining the above gives $\gamma_\mathrm{sub} \geq 1/\|a\| \geq 1/(\sqrt{d-1}\,(2C_g\sqrt{d})^{d-1})$.
\end{proof}

\begin{proof}[Proof of \Cref{theo:gamma_sub_linear_inequality_simplex}]
    By \Cref{prop:linear_inequality_test_set}, the set $\mathcal{T}_x$ is a test set with $\| g \|_\infty \leq C_g$. Set $S^+ := \{ g \in \mathcal{T}_x \mid \langle \theta^*, g \rangle > 0 \} \subset \mathbb{Z}^d$.

    \textbf{Step 1 (reduction via minimax).}
    \Cref{prop:opt_gap_down_test_set_gap} gives
    \[
        \gamma_\mathrm{sub} \geq \sup_{\theta \in \Theta_{\mathcal{T}_x}(\theta^*)} \min_{g \in S^+} \langle \theta, g \rangle .
    \]
    By \Cref{prop:Minimax_theorem_finite} ($\Delta^{d-1}$ being bounded, closed and convex, and $S^+$ finite) together with $\max_{\theta \in \Delta^{d-1}} \langle \theta, q \rangle = \max_i q_i$, we have $\max_{\theta \in \Delta^{d-1}} \min_{g \in S^+} \langle \theta, g \rangle = \min_{q \in \Conv(S^+)} \max_i q_i$. From $\theta^* \geq 0$ and $\langle \theta^*, g \rangle > 0$ for each $g \in S^+$ we get $\max_i q_i > 0$ for every $q \in \Conv(S^+)$, that is, $\Conv(S^+) \cap \mathbb{R}_{\leq 0}^d = \emptyset$, so $\min_{q \in \Conv(S^+)} \max_i q_i > 0$. A $\theta \in \Delta^{d-1}$ attaining this maximum satisfies $\langle \theta, g \rangle > 0$ for all $g \in S^+$, so $\theta \in \Theta_{\mathcal{T}_x}(\theta^*)$. Hence $\gamma_\mathrm{sub} \geq \min_{q \in \Conv(S^+)} \max_i q_i > 0$.

    \textbf{Steps 2--4 (an integral nonnegative normal vector via the polyhedron).}
    As in Steps 3--4 of the proof of \Cref{theo:gamma_sub_lowerbound_on_simplex} (replacing $Z^*$ by $S^+$ and the lattice box $\Lambda$ by $\{ g \in \mathbb{Z}^d : \| g \|_\infty \leq C_g \}$), consider the polyhedron $\widetilde{P} := \Conv(S^+) + \mathbb{R}_{\geq 0}^d$. By Step 1 we have $\Conv(S^+) \cap \mathbb{R}_{\leq 0}^d = \emptyset$, so $0 \notin \widetilde{P}$; the facet normals of $\widetilde{P}$ can be taken nonnegative, and there is a facet separating the origin. From the vertices $y^1, \ldots, y^p \in S^+$ ($1 \leq p \leq d$, lattice points) and the points $y^1 + e_i$ ($i \in I'$, $|I'| = d - p$) in unit-vector directions spanning the affine hull of that facet, the cofactor construction yields an integral normal vector $a \in \mathbb{Z}_{\geq 0}^d \setminus \{0\}$ and $c := a \cdot y^1 \in \mathbb{Z}_{\geq 1}$ with $\Conv(S^+) \subseteq \{ x : a \cdot x \geq c \}$. In the case $p = 1$ the vector $a$ is a unit vector $e_{i_0}$, so $\max_i q_i \geq q_{i_0} \geq c \geq 1$ for every $q \in \Conv(S^+)$, whereas the right-hand side of the claim is at most $1$, so the claim follows immediately. Below we assume $p \geq 2$.

    \textbf{Step 5 (the estimate).}
    For every $q \in \Conv(S^+)$, setting $\lambda_i := a_i / \sum_j a_j \in \Delta^{d-1}$ gives $\max_i q_i \geq \sum_i \lambda_i q_i = (a \cdot q)/\sum_j a_j \geq c/\sum_j a_j \geq 1/\sum_j a_j$. Since $y^k \in S^+$ gives $|(v_k)_j| \leq 2C_g$, taking $M_j = C_g$ in the Hadamard estimate of Step 5 of the proof of \Cref{theo:gamma_sub_lowerbound_on_simplex}, \cref{eq:sum_a_bound_sub_simplex} gives
    \[
        \sum_{j=1}^d a_j \leq 2^{d-1}\max(d-1, \sqrt{2})\,(C_g\sqrt{d})^{d-1}
        = \max(d-1, \sqrt{2})\,(2C_g\sqrt{d})^{d-1} .
    \]
    Combining the above gives $\gamma_\mathrm{sub} \geq 1/\sum_j a_j \geq 1/(\max(d-1, \sqrt{2})\,(2C_g\sqrt{d})^{d-1})$.
\end{proof}

\section{Derivation of the explicit upper bounds by problem class}
\label{app:proof_corollaries}

In this appendix we derive each entry of \Cref{tab:explicit_upper}. We first give the general form of the substitution.

\begin{corollary}[The three guarantees in terms of $\gamma$]
    \label{cor:explicit_general}
    Under \Cref{assu:margin}, the following hold (in the order expressions we may assume $\gamma \leq LD$, since $K = 0$ otherwise by \Cref{lem:r_bounds}).
    \begin{description}
        \item[(i)] SGS-OGD (\Cref{alg:ogd} with $\alpha = D/(L\sqrt{2})$) satisfies
        \begin{equation*}
            K \leq \frac{2 L^2 D^2}{\gamma^2} ,
            \qquad
            \widetilde{R}_T \leq \frac{2 L^2 D^2}{\gamma} .
        \end{equation*}
        \item[(ii)] ONS (\Cref{alg:ons}) satisfies
        \begin{align*}
            K
            & = O \left( \frac{d L D}{\gamma} \log \max \left( \frac{2 L D}{\gamma}, 2 \right) \right) ,
            \\
            \widetilde{R}_T
            &\leq L D \left( 1 + 2 d \log \left( 2 + \frac{2 L D}{\gamma} \right) \right)
            = O \left( L D\, d \log \max \left( \frac{L D}{\gamma}, 2 \right) \right) .
        \end{align*}
        \item[(iii)] Growing-grid SGS-MetaGrad (\Cref{alg:metagrad_anytime}) satisfies
        \begin{align*}
            K
            &= O \left( \frac{d L D}{\gamma} \log \max \left( \frac{2 L D}{\gamma}, 2 \right) \right) ,
            \\
            \widetilde{R}_T
            &= O \left( L D\, d \log \max \left( \frac{L D}{\gamma}, 2 \right) \right) .
        \end{align*}
    \end{description}
\end{corollary}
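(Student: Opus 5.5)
This corollary is essentially a restatement of \Cref{theo:ogd,theo:main,theo:metagrad_anytime_main}, followed by simplifying the explicit constants into order expressions. I would first justify the preliminary reduction to $\gamma \leq LD$. By \Cref{lem:r_bounds}, every mistake round satisfies $\gamma \leq r_{t_k} \leq LD$. Hence if $\gamma > LD$ there are no mistake rounds at all, so $K = 0$ and $\widetilde{R}_T = \sum_k \widetilde{r}_{t_k} = 0$, and every bound holds trivially. In what follows I assume $\gamma \leq LD$, so $\beta := 2LD/\gamma \geq 2$.

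For (i), I take \Cref{theo:ogd} with $\alpha = D/(L\sqrt{2})$, where $C_\alpha^2 = 2L^2D^2$, and read off the bounds directly.

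For (ii), the bound on $\widetilde{R}_T$ is exactly \cref{eq:tilde_ons}. For the order form, I use $2 + \beta \leq 2\beta$ when $\beta \geq 2$, which gives $\log(2+\beta) \leq \log 4 + \log(LD/\gamma)$. Since $\max(LD/\gamma, 2) \geq 2$, we have $\log\max(LD/\gamma, 2) \geq \log 2$, so the constant $\log 4$ and the leading $1$ are absorbed into $O(LD\,d\log\max(LD/\gamma,2))$. For $K$, I start from \cref{eq:mistake_bound}. Because $\beta \geq 2$, the term $d$ is at most $d\beta/2$, the term $\beta$ is $O(d\beta)$, and $\log\max(\beta,1) = \log\max(\beta,2)$. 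Together these give $K = O(\beta d\log\max(\beta,2)) = O\bigl(\tfrac{dLD}{\gamma}\log\max(\tfrac{2LD}{\gamma},2)\bigr)$.

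For (iii), I use \cref{eq:anytime_mistake}. The term $2\log(\log d + 3)$ is $O(d)$, the factor $(d+2)/d$ is at most $2$, and $\log\max(52LD(d+2)/(\gamma d),1) \leq \log(156 LD/\gamma) = O(\log\max(2LD/\gamma,2))$. Each term is therefore $O(\tfrac{dLD}{\gamma}\log\max(\tfrac{2LD}{\gamma},2))$, using $LD/\gamma \geq 1$ to absorb the additive $d$.

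For $\widetilde{R}_T$ I use \cref{eq:tilde_metagrad}, whose order form the theorem already asserts. To make this explicit, I bound $c_0(K_{\max})$ with \cref{eq:c0_bound}, obtaining $O(\log\log d + \log(1 + K_{\max}/d))$. Since $K_{\max}/d = O(\beta\log\beta)$, we get $\log(1 + K_{\max}/d) = O(\log\max(LD/\gamma,2))$, and the term $d$ is absorbed in the same way as in (ii).

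The only delicate point I expect is bookkeeping in the regime $\gamma$ close to $LD$, where $\log(LD/\gamma)$ is near $0$. This is exactly why the statement uses $\max(\cdot,2)$: it keeps the logarithm bounded below by $\log 2$, so the additive constants can be absorbed into the order terms. I would check each absorption against this lower bound.
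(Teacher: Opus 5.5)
Your proposal is correct and follows the paper's own argument. The paper likewise cites items (i) and (iii) of \Cref{theo:ogd,theo:main,theo:metagrad_anytime_main} and notes that the leading terms have the stated order; you carry out the absorption of constants explicitly via $\beta = 2LD/\gamma \geq 2$, which is more careful than the paper's one-line justification. The only cosmetic slip is that $(d+2)/d \leq 2$ fails for $d = 1$, but your constant $156 = 52 \cdot 3$ already uses the correct bound $(d+2)/d \leq 3$, so nothing changes.
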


\begin{proof}
    The bounds are those in items (i) and (iii) of \Cref{theo:ogd,theo:main,theo:metagrad_anytime_main}.
    The order estimates hold because, in both (ii) and (iii), the leading terms are $\frac{L D}{\gamma} \cdot d \log \max ( \frac{L D}{\gamma}, 1 )$ (for $K$) and $L D\, d \log \max ( \frac{L D}{\gamma}, 2 )$ (for $\widetilde{R}_T$).
\end{proof}

By \Cref{lem:margin_from_gamma}, \Cref{assu:margin} holds with $\gamma = \gamma_\mathrm{sub}$ and $L = L_\mathrm{sub}$ under each structure.
The bounds of \Cref{cor:explicit_general} are monotonically nonincreasing in $\gamma$, since the factors $1/\gamma$, $\log \max ( \cdot/\gamma, 1 )$ and $\log ( 2 + \cdot/\gamma )$ are nonincreasing in $\gamma$; hence substituting the lower bounds of \Cref{tab:gamma_lower} for $\gamma$ gives upper bounds.
Below, $M$ is the vector in \cref{eq:def_M}, and in the ILP and linear-inequality entries we also use $L \leq \| M \|_2$ (\cref{eq:L_leq_m}).
Moreover, in reducing the logarithmic factors to the forms of the entries of \Cref{tab:explicit_upper} ($\log (2 \| M \|_2)$ for ILPs, $\log (2 C_g d L)$ for linear inequalities, and $\log (2 d L)$ for the M-convex and M${}^\natural$-convex cases), we assume $L \geq 1$ and $C_g \geq 1$ (whence $\| M \|_2 \geq 1$ as well, since $L \leq \| M \|_2$).
The diameter $D = \diam(\Theta)$ is $2$ for the unit ball and $\sqrt{2}$ for the probability simplex.

\paragraph{General ILPs (\Cref{theo:gamma_sub_lowerbound_on_ball,theo:gamma_sub_lowerbound_on_simplex})}
For the unit ball,
\begin{equation*}
    \frac{1}{\gamma} \leq 2^{d-1} \sqrt{d-1}\, \| M \|_2^{d-1} = O ( 2^{d} \sqrt{d}\, \| M \|_2^{d-1} ) .
\end{equation*}
By \Cref{cor:explicit_general}(i), SGS-OGD satisfies
\begin{align*}
    K &\leq \frac{2 L^2 D^2}{\gamma^2} = \frac{8 L^2}{\gamma^2} \leq 2 (d-1) 4^{d} \| M \|_2^{2d-2} L^2 = O ( d\, 4^{d} \| M \|_2^{2d} ) ,
    \\
    \widetilde{R}_T &\leq \frac{2 L^2 D^2}{\gamma} = \frac{8 L^2}{\gamma} = O ( \sqrt{d}\, 2^{d} \| M \|_2^{d+1} ) ,
\end{align*}
and by (ii) and (iii), ONS and growing-grid SGS-MetaGrad satisfy
\begin{align*}
    K &= O \left( \frac{d L D}{\gamma} \log \max \left( \frac{2 L D}{\gamma}, 2 \right) \right) = O ( d^{5/2} 2^{d} \| M \|_2^{d} \log (2\| M \|_2) ) ,
    \\
    \widetilde{R}_T &= O \left( L D\, d \log \max \left( \frac{L D}{\gamma}, 2 \right) \right)
    \\
    &= O ( \| M \|_2 d \cdot d \log (2\| M \|_2) ) = O ( d^2 \| M \|_2 \log (2\| M \|_2) )
\end{align*}
(where we used $\log (1/\gamma) = O(d \log \| M \|_2 + d)$).
For the probability simplex we have $1/\gamma = O(2^{d} d\, \| M \|_2^{d-1})$, so the same substitution gives, for SGS-OGD,
\begin{align*}
    K &= O ( d^2 4^{d} \| M \|_2^{2d} ) ,
    \\
    \widetilde{R}_T &= O ( d\, 2^{d} \| M \|_2^{d+1} ) ,
\end{align*}
and, for ONS and growing-grid SGS-MetaGrad,
\begin{align*}
    K &= O ( d^{3} 2^{d} \| M \|_2^{d} \log (2\| M \|_2) ) ,
    \\
    \widetilde{R}_T &= O ( d^2 \| M \|_2 \log (2\| M \|_2) ) .
\end{align*}

\paragraph{Linear inequalities (\Cref{theo:gamma_sub_linear_inequality_ball,theo:gamma_sub_linear_inequality_simplex})}
For the unit ball,
\begin{equation*}
    \frac{1}{\gamma} \leq \sqrt{d-1}\, (2C_g\sqrt{d})^{d-1} = O ( \sqrt{d}\, (2C_g\sqrt{d})^{d-1} ) .
\end{equation*}
SGS-OGD satisfies
\begin{align*}
    K &= O ( L^2 / \gamma^2 ) = O ( L^2 d\, (2C_g\sqrt{d})^{2d} ) ,
    \\
    \widetilde{R}_T &= O ( L^2 / \gamma ) = O ( L^2 \sqrt{d}\, (2C_g\sqrt{d})^{d} ) ,
\end{align*}
and ONS and growing-grid SGS-MetaGrad satisfy
\begin{align*}
    K &= O \left( \frac{d L}{\gamma} \log \max \left( \frac{2 L}{\gamma}, 2 \right) \right) = O ( L d^{5/2} (2C_g\sqrt{d})^{d} \log (2C_g dL) ) ,
    \\
    \widetilde{R}_T &= O \left( L d \log \max \left( \frac{2 L}{\gamma}, 2 \right) \right) = O ( L d^2 \log (2C_g dL) )
\end{align*}
(where we used $\log (1/\gamma) = O(d \log (C_g d))$).
For the probability simplex we have $1/\gamma = O(d\, (2C_g\sqrt{d})^{d-1})$, so the power of $d$ goes up by one and we obtain the probability-simplex entries of \Cref{tab:explicit_upper}.

\paragraph{M-convex and M${}^\natural$-convex (\Cref{theo:gamma_sub_M_convex_poly,theo:gamma_sub_Mnatural_convex_poly})}
For the unit ball we have $1/\gamma \leq \sqrt{d(d^2-1)}/(2\sqrt{3}) = O(d^{3/2})$, so SGS-OGD satisfies
\begin{align*}
    K &\leq \frac{8 L^2}{\gamma^2} \leq \frac{2}{3} L^2 d (d^2-1) = O ( L^2 d^3 ) ,
    \\
    \widetilde{R}_T &\leq \frac{8 L^2}{\gamma} \leq \frac{4 L^2 \sqrt{d(d^2-1)}}{\sqrt{3}} = O ( L^2 d^{3/2} ) ,
\end{align*}
and ONS and growing-grid SGS-MetaGrad satisfy
\begin{align*}
    K &= O \left( \frac{d L}{\gamma} \log \max \left( \frac{2 L}{\gamma}, 2 \right) \right) = O ( L d^{5/2} \log (2dL) ) ,
    \\
    \widetilde{R}_T &= O \left( L d \log \max \left( \frac{2 L}{\gamma}, 2 \right) \right) = O ( L d \log (2dL) ) .
\end{align*}
For the probability simplex we have $1/\gamma \leq d(d-1)/2 = O(d^2)$ and $D = \sqrt{2}$, so similarly SGS-OGD satisfies
\begin{align*}
    K &\leq \frac{4 L^2}{\gamma^2} \leq L^2 d^2 (d-1)^2 = O ( L^2 d^4 ) ,
    \\
    \widetilde{R}_T &\leq \frac{4 L^2}{\gamma} \leq 2 L^2 d(d-1) = O ( L^2 d^2 ) ,
\end{align*}
and ONS and growing-grid SGS-MetaGrad satisfy
\begin{align*}
    K &= O ( L d^{3} \log (2dL) ) ,
    \\
    \widetilde{R}_T &= O ( L d \log (2dL) ) .
\end{align*}
In the M${}^\natural$-convex case only the constants of the bound on $1/\gamma$ change ($d^{3/2}$ for the unit ball and $d(d+1)/2$ for the probability simplex), and the orders coincide.
In particular, when $L = O(\sqrt{d})$, the number of mistakes is $O(d^4)$ to $O(d^5)$ for SGS-OGD and $O(d^3 \log d)$ to $O(d^{7/2} \log d)$ for ONS.

\section{Derivation of the entry of \citet{sakaue2025aistats} in \Cref{tab:comparison}}
\label{app:gap_entry}

\citet[Theorem 5.2]{sakaue2025aistats} assumes the $\Delta$-gap condition, namely that
$\langle \theta^*, x - \hat{x} \rangle \geq \Delta \| x - \hat{x} \|$ holds for the agent's optimal action $x$ and every
$\hat{x}$ induced by some prediction, and bounds the sum $\widetilde{R}_T$ by
\[
    \widetilde{R}_T \leq \frac{2^{5/4} L_\infty B^3}{\lambda^{3/2} \Delta^2} ,
\]
where $L_\infty$ (denoted $K$ in that paper, renamed here because $K$ is the number of mistakes in this paper) is an upper bound on $\| \hat{x}^t - x^t \|$, the regularizer $\psi \colon \Theta \to \mathbb{R}$ of their FTRL is
$\lambda$-strongly convex with respect to the dual norm, and $B$ is any constant with
\[
    B^2 \geq \max \Big\{ 2^{5/2} \lambda \max_{\theta, \theta' \in \Theta} \| \theta - \theta' \|_\star^2 ,\;
    \max_{\theta, \theta' \in \Theta} ( \psi(\theta) - \psi(\theta') ) \Big\} .
\]
The right-hand side is increasing in $B$, so the smallest admissible $B$ is taken.

For $\Theta = \Delta^{d-1}$ the authors take $\| \cdot \| = \| \cdot \|_\infty$ on the actions, $\| \cdot \|_\star = \| \cdot \|_1$
on the weights, and the entropic regularizer $\psi(\theta) = \langle \theta, \log \theta \rangle$, which is $1$-strongly convex with
respect to $\| \cdot \|_1$ by Pinsker's inequality, so that $\lambda = 1$.
The two quantities in the definition of $B$ are then constants and $\log d$, respectively: the $\ell_1$ diameter of $\Delta^{d-1}$ is
$2$, so the first is $2^{9/2}$; and $\psi$ ranges over $[-\log d, 0]$ on $\Delta^{d-1}$, attaining $0$ at a vertex
$\mathbf{e}_i$ and $-\log d$ at the barycenter $(1/d, \ldots, 1/d)$, so the second is $\log d$.
Hence $B^2 = \max \{ 2^{9/2}, \log d \}$, which is $\Theta(\log d)$ as $d \to \infty$.
Finally, $L_\infty$ is the $\ell_\infty$ diameter of the feasible sets, which is at most $\| M \|_\infty$ by \cref{eq:def_M}.
Substituting these gives
\[
    \widetilde{R}_T = O \left( \frac{\| M \|_\infty (\log d)^{3/2}}{\Delta^2} \right) ,
\]
which is the entry of \Cref{tab:comparison}; the same bound applies to $R^{\mathrm{est}}_T$ by \cref{eq:max_leq_tilde}.
Two remarks are in order.
First, the numerical constant is loose: \citet{sakaue2025aistats} set $B = 2^{11/4} \sqrt{\log d}$, which is admissible for $d \geq 2$ but
larger than the smallest admissible value, although the order in $d$ is unaffected.
Second, the form depends on the choice of the regularizer and of the pair of norms; the entropic choice above is the one with which
\citet{sakaue2025aistats} recover the guarantee of \citet{Barmann-2018-online} on the probability simplex.

\end{document}